 \renewcommand*{\backrefalt}[4]{%
    \ifcase #1%
     \or (Cited on page~#2.)%
     \else (Cited on pages~#2.)%
    \fi%
    }
    \let\Cref\crtCref
    \let\cref\crtcref
\newtheorem{principle}{Principle}
\newtheorem{desideratum}{Desideratum}
\crefname{assumption}{assumption}{assumptions}
\crefname{desideratum}{desideratum}{desiderata}
\crefname{principle}{principle}{Principle}
\newif\ifsubmissionready 
\newcommand{\red}[1]{#1}
\renewcommand{\xiang}[1]{}
\newcommand{\added}[1]{#1}
\newcommand{\oursde}{\texttt{HA-SME}}
\newcommand{\oursdefull}{Hessian-Aware Stochastic Modified Equation}
\newcommand{\oursdefullh}{Hessian-Aware Stochastic Modified Equation}
\newcommand{\sme}{\texttt{SME}}
\newcommand{\pfs}{\texttt{SPF}}
\newcommand{\pfsfull}{Stochastic Principal Flow}
\title{\bf 
A Hessian-Aware Stochastic Differential Equation for Modelling SGD
}
\author{
Xiang Li\thanks{Department of Computer Science, ETH Zurich, Switzerland.
\texttt{xiang.li@inf.ethz.ch},
\texttt{zebang.shen@inf.ethz.ch},
\texttt{liang.zhang@inf.ethz.ch},
\texttt{niao.he@inf.ethz.ch}.}
\and
Zebang Shen\footnotemark[2]
\and
Liang Zhang\footnotemark[2]
\and
Niao He\footnotemark[2]
}
\date{}
\begin{document}

\maketitle

\begin{abstract}
    Continuous-time approximation of Stochastic Gradient Descent (SGD) is a crucial tool to study its escaping behaviors from stationary points.
    However, existing stochastic differential equation (SDE) models fail to fully capture these behaviors, even for simple quadratic objectives.
    Built on a novel stochastic backward error analysis framework, we derive the \oursdefull{} (\oursde), an SDE that incorporates Hessian information of the objective function into both its drift and diffusion terms.  
    Our analysis shows that \oursde{} achieves the order-best approximation error guarantee among existing SDE models in the literature, 
    while significantly reducing the dependence on the smoothness parameter of the objective.
    \added{Empirical experiments on neural network–based loss functions further validate
    this improvement.}
    Further, for quadratic objectives, under mild conditions, \oursde\ is proved to be the first SDE model that recovers exactly the SGD dynamics in the distributional sense.
    Consequently, when the local landscape near a stationary point can be approximated by quadratics,
    \oursde{} provides a more precise characterization of the local escaping behaviors of SGD.
    \added{ With the enhanced approximation guarantee, we further conduct an escape
      time analysis using \oursde{}, showcasing how it can be employed to
      analytically study the escaping behavior of SGD for general function
      classes. }
  \end{abstract}

\section{Introduction}
\label{sec:intro}
We consider unconstrained stochastic minimization problems of the form
\begin{align} \label{eq:min}
  \min_{x} f(x) \coloneqq \Ep[\xi]{F(x; \xi)},
\end{align}
where $f: \mathbb{R}^d \to \mathbb{R}$ is a smooth function and $\xi$ is a
random vector.
In this work, we study Stochastic Gradient Descent
(SGD)~\citep{10.1214/aoms/1177729586}, the most popular approach for solving such
problems, which updates iteratively as follows:
\begin{align} \label{eq:sgd}
  x_{k+1} = x_k - \eta_k \nabla F(x_k; \xi_k),
\end{align}
where $\eta_k > 0$ is the stepsize and $\nabla F(x_k; \xi_k)$ is an unbiased stochastic gradient at $x_k$.
In the absence of noise, SGD reduces to Gradient Descent~(GD).
The choice of the stepsize $\eta_k$ is crucial, affecting aspects such as convergence speed, training
stability~\citep{nar2018step,bengio2012practical}, and generalization error~\citep{wilson2017marginal,Zhou2022}.

Existing non-convex optimization theory have established the non-asymptotic convergence of SGD
towards a \emph{stationary point} using a \emph{diminishing} stepsize; see e.g., 
\citet{ghadimi2013stochastic,khaled2022better,10.5555/3524938.3525187,pmlr-v99-fang19a,sebbouh2021almost,li2020high}
and \citet{yang2024two}, to list just a few. 
Intuitively, the diminishing stepsize reduces the fluctuation caused by
SGD's inherent randomness,
ensuring the objective descends in expectation.
On the other hand, in practice a \emph{constant} stepsize, i.e. $\eta_k \equiv
\eta$, is often preferred and yields better performance. This presents a
challenge to classical theory, as it fails to capture the non-diminishing role
of noise in SGD dynamics. Additionally, it falls short in
differentiating between saddle points and local minima, as well as 
in identifying favorable stationary points --- those that not only are local
minima but also exhibit good generalization.

The above limitations underscore the need to understand a critical aspect
of SGD in nonconvex optimization: \emph{the transitions between different stationary
points}, beyond the traditional focus on convergence towards a stationary point.
The selection of stepsize plays a crucial role in SGD's transition behaviors:
\begin{itemize}
    \item \textbf{Escaping from stationary points.} At a stationary point, either a saddle point or a local minimum, increasing the stepsize raises the likelihood of SGD to escape from it.
    \item \textbf{Minima selection.}
Given a constant stepsize, SGD can rapidly exit local minima with a
large Hessian norm, also known as sharp minima, but dwell longer at flat minima where the norm is small.
\end{itemize}
The latter relates to the notable generalization capabilities of SGD,
which in prevailing belief, tends to favor flat minima over sharp ones~\citep{keskar2016large,neyshabur2017exploring,neyshabur2017pac}.
These phenomena have been empirically verified~\citep{zhang2021understanding} and theoretically
investigated~\citep{pemantle1990nonconvergence,brandiere1996algorithmes,mertikopoulos2020almost,liu2023almost}.
Indeed, analyzing the link between stepsize and the escaping dynamics of SGD has emerged as an important topic in non-convex
optimization.
In particular,
there is growing interest in using continuous-time dynamics, specifically
Stochastic Differential Equations (SDEs), to understand the mechanisms behind
escaping from saddle points and minima selection~\citep{hu2017diffusion,xie2022adaptive}. This involves quantifying the interplay between escaping speed, stepsizes, and
local sharpness~\citep{zhu2019anisotropic,hu2017diffusion,nguyen2019first,xie2020diffusion,ibayashi2023does}.
A more detailed discussion of these works is provided in 
\Cref{sec:related}.

\begin{figure}[t]
  \centering
  \begin{subfigure}{0.4\textwidth}
    \includegraphics[width=\linewidth]{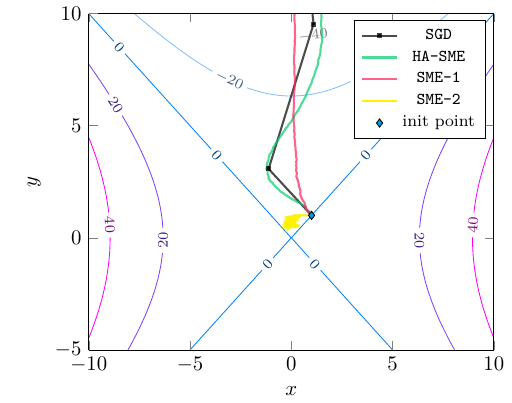}
    \caption{$f\left([x, y]^\T\right) = \frac{1}{2} \left(x^2 - y^2\right)$}
      \label{fig:quad_saddle}
  \end{subfigure}
  \hspace{1cm}
  \begin{subfigure}{0.4\textwidth}
    \includegraphics[width=\linewidth]{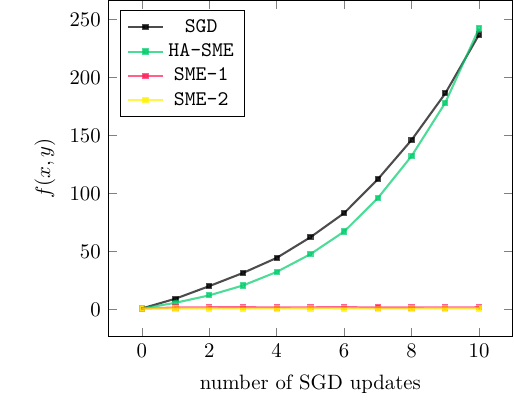}
      \caption{
      $f\left([x, y]^\T\right) = \frac{1}{2} \left(x^2 + y^2\right)$
    }
      \label{fig:quad_minimum}
  \end{subfigure}
  \caption{Escaping behaviors of SGD and SDEs on quadratic functions.
    The left sub-figure demonstrates the trajectories around a saddle point $(0,0)$,
    while the right sub-figure represents the evolution of the function value around a minimum. 
    We set $\eta=2.1$, the initial point
    to $(1, 1)$, and the covariance of the additive noise to $I$ for both cases.
    We conducted the experiments 1000 times to estimate the expectation.
    For the right sub-figure, we evaluated the SDEs at time stamps corresponding to
    $\eta$ times the number of SGD updates.
    In these two cases, the proposed \oursde\ behaves similarly as SGD. However, \sme-2 fails to escape
    the saddle point in case (a), and both \sme-1 and \sme-2 fail to escape the minimum in case (b).
    We further provide in \Cref{sec:exist_quad}
  mathematical proofs for the failure cases of these two existing \sme{s} in approximating SGD.}
  \label{fig:illustrate_escape}
\end{figure}

\paragraph{Existing continuous-time approximations of SGD and their failure modes.} 
Previous study on the escaping behavior of constant-stepsize SGD through its continuous-time
approximation typically entails two steps: (1) Proposing a proxy SDE model that
approximately tracks the distributional evolution of the SGD dynamics; (2)
Analyzing the escaping behavior of the proposed SDE model to predict or
interpret the behavior of SGD.
Evidently, whether the behavior of SGD can be precisely captured hinges on the accuracy of the proxy models.
Among existing SDE models, the second-order stochastic modified
equation
(\sme-2)~\citep{li2017stochastic} provides the order-best approximation guarantee:

\begin{align} \label{eq:SDE2}
dX_t = - \nabla \left(f(X_t) + \frac{\eta}{4} \|\nabla f(X_t)\|^2\right) dt + \sqrt{\eta \Sigma(X_t)} dW_t,
\end{align}
where $\Sigma(X_t)$ denotes the covariance matrix of the stochastic gradient at
$X_t$, $W_t$ represents the standard Wiener process, and $\eta$ denotes the constant stepsize of SGD.

However, we note that even for simple quadratic objectives, \sme-2 and SGD show different behaviors in escaping from saddle
points and minima: \sme-2 exhibits locally stabilizing behavior
while SGD with constant stepsize quickly escapes from the same local region, as illustrated in
\Cref{fig:illustrate_escape}.
\added{We further extend this observation to general objectives in \Cref{sec:escape_saddle}.}
Our study also identifies similar failure modes
for other existing SDE models, detailed in \Cref{sec:exist_quad}. This
disparity in behavior suggests the inadequacy of existing SDEs in
modeling the escaping dynamics of SGD, even with high-order approximation
such as \sme-2.
Consequently, this brings us to a central question:
\begin{quote} \emph{
How to accurately model SGD in continuous time while preserving its escaping behaviors?
} 
\end{quote}

\paragraph{Our contributions.} 
We develop a new Stochastic Backward Error Analysis (SBEA) to approximate the
dynamics of constant-stepsize SGD. 
Our analysis
yields an SDE, termed ``\oursdefullh'' (\oursde), that integrates Hessian 
information into both its drift and diffusion terms.
By incorporating local
curvature information, \oursde\ effectively tracks
the escape phenomena of SGD. The well-posedness of \oursde{} is
assured under mild regularity assumptions. Moreover, under similar
assumptions, the distribution evolution of \oursde\ \emph{exactly} matches that
of SGD on quadratic functions.

Our contributions are summarized as follows:
\begin{itemize}
  \item \textbf{Failure modes of existing SDE proxies.} 
    In this study, we 
    identify several shortcomings of current SDE approximations in capturing the escaping behaviors of SGD. We
    present concrete examples of optimization problems, 
    where SGD exhibits escaping behavior, while existing SDE
    approximations tend to stabilize around the stationary points. We further pinpoint that this failure mode stems from
    the negligence of higher-order terms involving Hessians in their derivations.
  \item \textbf{A new Hessian-aware SDE proxy.} 
    We propose the SBEA framework, which enables
    us to design \oursde{} capable of encompassing more error terms involving gradients and Hessians than existing SDEs in the error
    analysis.
   Specifically, the drift term and diffusion coefficient of \oursde{} are defined by two power series, respectively. Utilizing a
    generating function approach, we demonstrate that these power series
    converge when the stepsize is below a certain constant threshold, and
    their limits possess analytical expressions. We provide sufficient
    conditions to ensure \oursde{} is well-posed.
  \item \textbf{Approximation guarantees on general functions.}
    We establish that \oursde{} adheres to an order-2 weak approximation
    error guarantee, aligning with the best-known results among existing SDE
    approximations. To further differentiate \oursde{} from existing \sme{s}, we
    conduct a fine-grained analysis of the weak approximation error,
    taking into account of the smoothness parameter of the objective, which we denote as $\lambda$.
    For convex functions, our findings indicate that \oursde{} admits an
    approximation error of order $\mathcal{O}\left(\eta^2\right)$.
    In stark contrast, the error for \sme-1 model is $\mathcal{O}\left(\eta \lambda\right)$,
    and for \sme-2, it is $\mathcal{O}\left(\eta^2 \lambda^2\right)$. Our
    results mark a notable improvement over the
    dependence on the smoothness parameter compared to existing SDE approximations.
    \added{To validate these theoretical findings in practical scenarios,
      we conduct experiments on a classification task using neural networks.
      The result shows that \oursde{} aligns more closely with SGD, outperforming existing
      SDE models.
    }
  \item \textbf{Exact recovery on quadratic functions.}
  \added{
    We further examine quadratic functions, which serve as a crucial benchmark
    for SDE approximations.
  }
   We show that accurately mirroring the SGD dynamics (linear in this setting) with a linear SDE (OU process) is fundamentally impossible, due to a potential mismatch between the ranks of their covariance matrices.
   \added{
     Nonetheless, we show that under mild conditions---specifically, when the
     eigenvectors of the Hessian and noise covariance matrices align or when a
     small stepsize is used---\oursde{} is well-founded and \emph{exactly} replicates constant-stepsize
     SGD dynamics in a distributional sense.
     To the best of our knowledge, these findings represent the first instance
     of exact recovery of the distribution of SGD iterates using continuous-time
     models. }

  \item \added{\textbf{Escape Time Analysis.}
To showcase the potential of using \oursde{} as an analytical tool for
studying SGD, we provide an analysis of the expected hitting time at
stationary points. In the small stepsize regime, our analysis recovers
classical results about dynamical systems with perturbations.
Furthermore, in the large
stepsize regime, where the stepsize is inversely proportional to the local
Hessian eigenvalues, we demonstrate that \sme-2 exhibits exponentially long
escape times from saddle points, whereas \oursde{} escapes sub-exponentially,
better aligning with the behavior of SGD.
}

\end{itemize}

\subsection{Related Work}
\label{sec:related}

\paragraph{Continuous-time approximation of GD and SGD.}
The connection between differential equations and gradient-based methods has
been extensively studied previously, such as in~\citep{helmke2012optimization,schropp2000dynamical,fiori2005quasi,
su2016differential, muehlebach2019dynamical, muehlebach2021optimization,
shi2021understanding, attouch2022first,krichene2015accelerated,brown1989some,bloch1994hamiltonian}.
For GD, the most straightforward
continuous-time approximation is the gradient flow 
\begin{align} \label{eq:GF}
  dX_t = -\nabla f(X_t) dt.
\end{align}
While this flow approximates GD accurately as the stepsize
approaches zero, the approximation error becomes critical when the stepsize
is large. \citet{barrett2020implicit} took a step further by
employing backward error analysis (detailed in \Cref{sec:pre_bea}) to derive a second-order continuous-time
approximation for GD:
\begin{align} \label{eq:GF2}
  dX_t = - \nabla \left(f(X_t) + \frac{\eta}{4}\|\nabla f(X_t)\|^2\right) dt. 
\end{align}
However, this second-order model still has limitations, particularly in
capturing certain critical behaviors of GD in discrete-time, such as
overshooting, i.e., the iterates oscillate around the minimum or even diverge because of large stepsize.  
To tackle this issue,
\citet{rosca2022continuous} introduced the Principal Flow (PF),
a continuous-time approximation for GD that operates in complex space and can
exactly match the discrete-time dynamics of GD on quadratics. This work is closely
related to ours, and we will dive deeper in \Cref{sec:pre_pf}.

When taking noises into account, \citet{mandt2015continuous,mandt2016variational} introduced the following SDE dynamics for approximating SGD (\sme-1):
\begin{align} \label{eq:SDE1}
d X_t = -\nabla f(X_t) dt + \sqrt{\eta \Sigma(X_t)} dW_t.
\end{align}
Later, \citet{li2017stochastic} introduced \sme-2 (described in \Cref{eq:SDE2}) and established 
rigorous weak approximation for these SDEs.
These SDEs are named according to the order of their weak
approximation accuracy (see precise definition in \Cref{def:weak_approx}).
In validating the accuracy of \sme-1 with non-infinitesimal stepsize,
\citet{li2021validity} proposed SVAG, which has been empirically demonstrated to
closely resemble SGD and theoretically proven to approximate \sme-1.
Besides the two \sme{s}, \citet{orvieto2019continuous} and
\citet{fontaine2021convergence} examined SDEs where the drift and diffusion
terms are scaled by varying stepsizes. Their SDEs reduce to \sme{s} when
adopting constant stepsizes.
Another line of work has considered Homogenized SGD, which replaces the
diffusion term in \sme-1 with $\sqrt{\eta f(X_t) \nabla^2 f(X_t) /
B}$~\citep{paquette2021dynamics, paquette2022homogenization} or $\sqrt{\eta
f(X_t) \nabla^2 f(X^*) / B}$~\citep{mori2021power},
where $B$ denotes the batch size and $X^*$ the local minimum.
However, their analyses are
limited to the least square loss, lacking approximation guarantees for general
non-convex objectives.

\added{
  Continuous-time tools have also been used to study SGD with adaptive stepsizes.
  \citet{barakat2021convergence} modeled Adam with a non-autonomous ODE, proving
  convergence. \citet{zhou2020towards} and \citet{xie2022adaptive} used
  SDEs to explain Adam's poor generalization due to sharp minima preference.
  \citet{ma2022qualitative} linked Adam’s continuous-time limit to sign gradient
  flow, explaining fast early convergence and instabilities.
  \citet{malladi2022sdes} further used SDEs to propose a scaling
  rule for stepsize and batchsize tuning.
  Additionally, \citet{compagnoni2025adaptive} provided theoretical
  insights into the role of noise in optimization dynamics of SDEs for adaptive stepsizes.
}

\paragraph{Escaping Behaviors Analysis}
Back in the 90s, \citet{pemantle1990nonconvergence} and \citet{brandiere1996algorithmes}
showed that SGD with a small stepsize can avoid hyperbolic saddle points,
i.e., $\lambda_{\min}\left(\nabla^2 f(x)\right) < 0$ and 
$\det\left(\nabla^2 f(x)\right) \neq 0$. Recent studies~\citep{mertikopoulos2020almost,liu2023almost} extended this understanding
by showing SGD can escape strict saddle points ($\lambda_{\min}\left(\nabla^2
f(x)\right) < 0$).
Additionally, with slight modifications to the algorithm, such as adding
artificial noises,
\citet{ge2015escaping} and \citet{jin2017escape} established convergence of 
noisy SGD to approximate local minima.
\added{
These discrete-time studies primarily focus on convergence analysis to local
minima and show the escape from saddle points through function value descent
analysis. However, these analyses, compared to continuous-time approaches: (1)
fail to address escaping from local minima or minima selection, (2) lack a
systematic characterization of the escape trajectory, including the escaping
direction, and (3) are often case-specific, requiring modifications to SGD.
}

In terms of minima escaping, leveraging continuous-time frameworks,
\citet{jastrzebski2017three} highlighted three crucial factors that influence
the minima found by SGD: step-size, batchsize, and gradient covariance.
\citet{zhu2019anisotropic} focused on
the role of anisotropic noise and concluded that such noise enables SGD to evade
sharp minima more effectively.
Quantitatively, the time required for escaping has been theoretically shown to
exponentially depend on the inverse of the stepsize, using theory for random
perturbations of dynamical systems~\citep{hu2017diffusion}.
Subsequent work expanded on these ideas, considering heavy-tail noise and
utilizing L\'evy-driven SDEs to examine SGD's stability around minima~\citep{nguyen2019first}.
While the aforementioned research largely focused on parameter-independent gradient noise,
\citet{xie2020diffusion} analyzed parameter-dependent anisotropic noise. Using the analysis for the 
classic Kramers' escape problem, they showed that compared to
Stochastic Gradient Langevin Dynamics,
SGD can quickly escape sharp minima,
with escape speed exponentially depending on the determinant of the Hessian.
Their analysis assumes that the system first forms a stationary
distribution around a basin before escaping. However, SGD typically selects
minima dynamically. New insights
into the exponential escape time have been developed by employing Large
Deviation Theory in non-stationary settings~\citep{ibayashi2023does}.

\subsection{Paper Structure}
\added{
The paper is structured as follows. \Cref{sec:intro} introduces the problem of
modelling SGD with continuous-time tools, highlighting challenges and limitations
of existing SDEs.
\Cref{sec:prelim} provides preliminaries, including
notation, an overview of continuous-time approximations of GD and SGD,
and a discussion on BEA and the PF approach.
\Cref{sec:SBEA} develops SBEA as a framework for deriving our improved SDE models,
and identifying challenges in exact matching at higher orders. 
\Cref{sec:def_SPF}
derives our \oursde{}, demonstrating how Hessian information is integrated into its
drift and diffusion terms, ensuring better tracking of SGD’s escaping behaviors,
and establishing conditions for its well-posedness.
\Cref{sec:approx} analyzes the
approximation error of \oursde{}, proving that it achieves an order-2 weak
approximation guarantee and showing its advantages over existing models in terms
of smoothness dependence.
These theoretical results are then validated through experiments on neural networks.
\Cref{sec:quad} evaluates the exact recovery of SGD dynamics
by \oursde{} on quadratic objectives, demonstrating failure cases of existing SDE
proxies and proving that \oursde{} can match SGD distributions under mild
conditions. 
In \Cref{sec:escape_analysis}, we analyze the expected hitting time of \oursde{}
near stationary points, demonstrating its utility as an analytical tool.
Furthermore, we compare \oursde{} with \sme-2 in the large stepsize regime,
showing that \oursde{} more accurately reflects the behavior of SGD.
In the end, \Cref{sec:conclusion} concludes with a summary of contributions and
potential directions for future research.
}

\section{Preliminaries}
\label{sec:prelim}

In this section, we first introduce the notations
used throughout our analysis, and then review the Backward Error
Analysis (BEA), a technique instrumental to the development of our stochastic
backward error analysis. Additionally, we discuss the Principal Flow
(PF) introduced by \citet{rosca2022continuous}, which inspires the derivation of our \oursde\ model.

\subsection{Notations}
Uppercase variables, such as $X$, denote continuous-time variables, while
lowercase variables like $x$ represent discrete-time variables. The
function $X(x_0, t)$ indicates the solution of a continuous-time process
starting from the initial point $x_0$ after a time duration $t$. For matrix
notation, we use $[A]_{i, j}$ to specify the element at the $(i, j)$-th position
in matrix $A$. For two matrices, $A$ and $B$, of the same size, we denote $A\!:\!B = \tr(A^\top B)$.
Applying the logarithmic function to a matrix and dividing between two matrices
are by default element-wise. We use the notation $I$ to represent the identity
matrix and $\odot$ to denote the Hadamard product between matrices.
The square root of a matrix, i.e., $\sqrt{A}$, is the matrix $B$ such that
$BB^\T = A$. In the context of functions, $\|f\|_{C^m}$ is defined as
$\sum_{|\alpha| \leq m} |D^\alpha f|_{\infty}$, and we define the set
$C_b^m\left(\mathbb{R}^d\right) = \{f \in C^m\left(\mathbb{R}^d\right) \mid
\|f\|_{C^m} < \infty \}$, where $C^m\left(\mathbb{R}^d\right)$ is the set of all
$m$-th differentiable functions on $\mathbb{R}^d$. For complex number $x$,
$\Re(x)$ and $\Im(x)$ extract the real and imaginary parts, respectively, and
$\overline{x}$ denotes the complex conjugate of $x$. For random variables $x$ and $y$,
the covariance is represented as $\Cov{x, y} = \mathbb{E}\left[\left(x -
\mathbb{E}(x)\right)\left(y - \mathbb{E}(y)\right)^\T\right]$. Lastly, we
use $W_t$ to represent the standard Wiener process.

\subsection{Backward Error Analysis (BEA)}
\label{sec:pre_bea}

Consider the GD dynamics\footnote{Similar arguments can be made for other discrete-time dynamics.}
\begin{equation}\label{eq:bea_discrete}
    x_{k+1} = x_{k} - \eta \nabla f(x_{k}),
\end{equation}
with constant stepsize $\eta$. 
Let $\widetilde{X}:[0, \infty) \rightarrow \mathbb{R}^d$ be a continuous trajectory.
The idea of BEA is to identify a series of functions, denoted as $g_i:
\mathbb{R}^d \to \mathbb{R}^d$ for $i\in\{0, 1,\ldots\}$, which constitute a
modified equation as:
\begin{align} \label{eq:bea}
  \frac{d }{d t}\widetilde{X} = \widetilde{G}_\eta (\widetilde{X}) := \sum_{p=0}^\infty \eta^p g_p(\widetilde{X}),
\end{align}
such that $\widetilde{X}((k+1)\eta)$ closely approximates $x_{k+1}$, given $\widetilde{X}(k\eta) = x_k$.
To achieve this goal, we consider the Taylor expansion of the above continuous-time process at time $k\eta$,
\ifdefined\mpversion
\begin{align}
  \begin{split}
  \label{eq:bea_continu}
  \widetilde{X}((k+1)\eta)
  &= \sum_{j=0}^\infty \frac{\eta^j}{j!} \left. \frac{d^j\widetilde{X}}{(dt)^j} \right|_{\widetilde{X}= x_k} \\
  &=
  x_k + \eta \sum_{p=0}^\infty \eta^p g_p(x_k) + \frac{\eta^2}{2} \sum_{p=0}^\infty \eta^p \nabla g_p(x_k) \sum_{p=0}^\infty \eta^p g_p(x_k) + \cdots.
  \end{split}
\end{align}
\else
\begin{align}
  \begin{split}
  \label{eq:bea_continu}
  \widetilde{X}((k+1)\eta)
  &= \sum_{j=0}^\infty \frac{\eta^j}{j!} \left. \frac{d^j\widetilde{X}}{(dt)^j} \right|_{\widetilde{X}= x_k} =
  x_k + \eta \sum_{p=0}^\infty \eta^p g_p(x_k) + \frac{\eta^2}{2} \sum_{p=0}^\infty \eta^p \nabla g_p(x_k) \sum_{p=0}^\infty \eta^p g_p(x_k) + \cdots.
  \end{split}
\end{align}
\fi
By matching the powers of $\eta$ in \Cref{eq:bea_discrete} and \Cref{eq:bea_continu} up to the $(p+1)$-order, we solve for $g_i$ for $i=0, ..., p$. This procedure ensures that the leading error term between the
continuous-time variable {$\widetilde{X} ((k+1)\eta)$} and the discrete-time
variable $x_{k+1}$, given $\widetilde{X}(k\eta) = x_k$, is of the order $\cO\left( \eta^{p+2}\right)$. For a
more comprehensive exploration of BEA, please refer to
\citet[Section~IX]{hairer2006geometric}.

When $p=0$, the system reduces to gradient flow~(\Cref{eq:GF}) and
then $p=1$, we recover \Cref{eq:GF2}.
Increasing the order $p$ in the above BEA procedure improves the
accuracy of the approximation. The higher order modified flows
generated by BEA involve higher-order gradients of the objective
function $f$. For example, for $p=2$, we obtain the third-order Gradient Flow~\citep{rosca2022continuous}:
\ifdefined\mpversion
\begin{align*}
  dX_t/dt 
  &= - \nabla f(X_t) - \frac{\eta}{2}\nabla^2 f(X_t) \nabla f(X_t) \\
  &\fakeeq - \eta^2 \left( \frac{1}{3} \nabla^2 f(X_t) \nabla f(X_t) +
 \frac{1}{12} \nabla f(X_t)^\T \nabla^3 f(X_t) \nabla f(X_t)\right) .
\end{align*}
\else
\begin{align*}
  dX_t/dt = - \nabla f(X_t) - \frac{\eta}{2}\nabla^2 f(X_t) \nabla f(X_t)
 - \eta^2 \left( \frac{1}{3} \nabla^2 f(X_t) \nabla f(X_t) +
 \frac{1}{12} \nabla f(X_t)^\T \nabla^3 f(X_t) \nabla f(X_t)\right) .
\end{align*}
\fi

\subsection{Principal Flow}
\label{sec:pre_pf}
While theoretically, one could solve for all component function $g_i$'s to
achieve an arbitrary target error order, the resulting modified flow quickly
becomes cumbersome due to the involvement of higher-order gradients. Moreover,
the resulting power series (w.r.t. the exponent of $\eta$) may not even be
convergent \citep{hairer2006geometric}. In light of this
limitation of BEA, \citet{rosca2022continuous} suggested the following term
selection strategy:
\begin{principle}[Term selection of PF, BEA] \label{principle_PF}
    In BEA, when solving $g_i$'s in \Cref{eq:bea}, derivatives of $f(x)$ up to
    the second order, i.e., terms containing $\nabla^p f(x)$ for $p \geq 3$,
    are omitted.
\end{principle}

With this term selection scheme, the resulting power series not only converges
(with convergence radius $\eta\|\nabla^2 f(x)\| = 1$) but admits a concise form.
The corresponding modified equation is:
    \begin{align} \label{eq:PF}
    dX_t = 
    U(X_t) \frac{\log(I - \eta \Lambda(X_t))}{\eta \Lambda(X_t)} U(X_t)^\T \nabla f(X_t) d t,
    \end{align}
where for any fixed $x$, we denote $\nabla^2 f(x) = U(x) \Lambda(x)
U(x)^\T$ the eigen-decomposition of $\nabla^2 f(x)$.
When the stepsize is large, the logarithmic
function could result in complex numbers, as elements of $I-\eta \Lambda$ might
be negative. Operating in the complex space is beneficial for modeling the
divergent and oscillatory behaviors of gradient descent~\citet{rosca2022continuous}.
As only higher-order gradients are omitted in \Cref{principle_PF}, PF provides
an \emph{exact} characterization of the discrete GD dynamics on quadratic
objectives.

\section{Stochastic Backward Error Analysis}
\label{sec:SBEA}

We extend BEA to the stochastic domain, termed Stochastic Backward Error Analysis (SBEA),
providing a systematic approach for constructing SDEs that approximate stochastic discrete-time processes. 
While we focus on the analysis for SGD, similar argument can be generalized to other dynamics.
\subsection{Differences between BEA and SBEA}
Compared with BEA, there are two major modifications in
SBEA. First, SBEA includes an additional Brownian motion term in its ansatz, compared to the one in \Cref{eq:bea}, with the diffusion
coefficients to be determined; Second, to evaluate the quality of the
approximation, SBEA adopts the weak approximation error (see
\Cref{def:weak_approx}) as a metric in the distributional sense.

\paragraph{(1) Ansatz with Brownian motion.} In BEA,
we define the modified equation with a series of functions
$\{g_i\}_{i\in\mathbb{N}}$. To further model the randomness in SGD, we introduce another series $\{h_i\}_{i\in\mathbb{N}_{>0}}$, $h_i:
\mathbb{R}^d \to \mathbb{R}^{d \times d}$, to represent the diffusion coefficients
of the Brownian motion. Specifically, we consider a hypothesis continuous-time
dynamics as follows:
\begin{align} \label{eq:SBEA_unknown}
    d X =&\ \left(g_0(X) + \eta g_1(X) + \eta^2 g_2(X) + \cdots\right) dt
         + \sqrt{\eta h_1(X) + \eta^2 h_2 (X) + \cdots} dW_t.
\end{align}
The rationale stems from the observation that the noise introduced in the ansatz
undergoes continuous modification by the drift term. 
Consequently, it is natural to employ a
similar power series representation for the diffusion coefficient\footnote{We do not include $h_0$ in the ansatz as SGD degenerates to gradient flow when $\eta \rightarrow 0$, which implies $h_0\equiv 0$.}. 
While one can truncate the series $\{h_i\}_{i\in\mathbb{N}}$ to simplify the
analysis, as we illustrate in \Cref{fig:two_mode} and theoretically justify in
\Cref{sec:exist_quad}, such truncation can compromise the approximation quality
and lead to misaligned escaping behaviors.

\paragraph{(2) Weak approximation error as quality metric.}
Our goal is to solve for the functions $g_i$'s and $h_i$'s so that the ansatz in \Cref{eq:SBEA_unknown} approximates the SGD dynamics accurately. Instead of seeking a path-wise alignment like BEA, we focus
on aligning the continuous-time and discrete-time dynamics in a distributional
sense, characterized by the following weak approximation error.
\begin{definition}[Weak Approximation Error] \label{def:weak_approx}
  Fixing $T > 0$, we say a continuous-time process $X(t)_{t\in[0, T]}$ is an order $p$ weak approximation 
  to the sequence $\{x_k\}$ if for any $u \in C_b^{2(p+1)}\left(\mathbb{R}^d\right)$,
  there exists $C > 0$ and $\eta_0 > 0$ that are independent of $\eta$ (but may
  depend on $T$, $u$ and its derivatives) such that for all $x\in\mathbb{R}^d$,
  \begin{align} \label{eq:weak approximation error}
    \Big|\mathbb{E} [u\left(x_k\right) | x_0 = x] - \mathbb{E}[u\left(X(k \eta)\right) | X(0) = x]\Big|
    \leq C \eta^p,
  \end{align}
  for all $k = 1, \ldots, \floor{T/\eta}$ and all $0 < \eta < \eta_0$.
\end{definition}
This metric is commonly used for analyzing discretization errors of SDE, as referenced
in textbooks~\citep{milstein2013numerical,kloeden2011numerical} and
various studies on modeling SGD behavior with SDEs~\citep{li2017stochastic,
hu2017diffusion, feng2019uniform, li2022uniform}.
\added{
Specifically, we note that the error bound is uniform across all SGD iterations within
$\lfloor T/\eta \rfloor$ steps, rather than merely bounding the last iterate.
A small weak approximation error means that SGD and \oursde{} are
indistinguishable when evaluated with a broad class of test functions along
the entire trajectory.
}

\begin{remark}
We highlight the difference between SBEA and the weak backward error analysis
introduced by \citet{doi:10.1137/110831544}. The latter provides a
framework for constructing a sequence of modified Kolmogorov generators with
increasing orders to approximate discrete-time processes. However, since the high-order Kolmogorov
generators involve high-order gradients, it \emph{does not} yield an
equivalent SDE representation (recall that the Kolmogorov generator corresponding to an Itô
SDE contains only second-order gradients). Consequently, their framework is
fundamentally different from our approach in terms of the final outcome.
\end{remark}

\subsection{Procedures in SBEA}
We determine the components $\{h_i\}_{i\in\mathbb{N}}$ and
$\{g_i\}_{i\in\mathbb{N}}$ by first calculating the semi-group expansions of the
two conditional expectations involved in \Cref{eq:weak approximation error}
after one step, i.e. $k=1$, and then match the resulting terms according to the
power of $\eta$. Such a one-step approximation analysis can be translated to the weak approximation error guarantee via a
martingale argument, as we will elaborate in \Cref{sec:approx}.

\paragraph{Semi-group expansions.}
We expand the conditional expectation of the test function $u$ after one SGD step as:
\begin{align} \label{eq:SBEA_discrete}
  \mathbb{E} [u\left(x - \eta \nabla F(x; \xi) \right) \mid x] = u(x) + \eta \sum_{p=0}^\infty \eta^p \Phi_p(x),
\end{align}
where we use $\Phi_p$ to denote the aggregation of terms with $\eta^{p+1}$ in the expansion.
Simple calculation shows 
\begin{equation}
  \Phi_0(x) = - \nabla f(x)^\T \nabla u(x), \ \Phi_1(x) = \frac{1}{2}\nabla^2 u(x) : \left(\nabla f(x) \nabla f(x)^\T + \Sigma(x)\right), \ \cdots \added{,}
\end{equation}
where $\Sigma(x)$ denotes the covariance matrix of the stochastic gradient at $x$:
\begin{equation}
    \Sigma(x) = \mathbb{E}\left[\left(\nabla F(x; \xi) - \mathbb{E}[\nabla F(x; \xi)]\right)\left(\nabla F(x; \xi) - \mathbb{E}[\nabla F(x; \xi)]\right)^\T\right].
\end{equation}

For the continuous-time system, 
by the semi-group expansion~\citep{hille1996functional}, the conditional expectation of a test function $u$ under the SDE model yields:
\ifdefined\mpversion
\begin{align}
  \begin{split}
  \label{eq:SBEA_continuous}
   \mathbb{E} [u\left(X(\eta)\right) \mid X(0)=x] &= \left(e^{\eta \mathcal{L}} u\right) (x) \\
      &= u(x) + \eta \mathcal{L} u(x) 
       + \frac{1}{2} \eta^2 \mathcal{L}^2 u(x) + \frac{1}{6} \eta^3 \mathcal{L}^3 u(x) + \cdots,
  \end{split}
\end{align}
\else
\begin{align} \label{eq:SBEA_continuous}
   \mathbb{E} [u\left(X(\eta)\right) \mid X(0)=x] &= \left(e^{\eta \mathcal{L}} u\right) (x)
      = u(x) + \eta \mathcal{L} u(x) 
       + \frac{1}{2} \eta^2 \mathcal{L}^2 u(x) + \frac{1}{6} \eta^3 \mathcal{L}^3 u(x) + \cdots,
\end{align}
\fi
where $\cL$ is corresponding the infinitesimal generator~\citep{sarkka2019applied}.
For the SDE described in \Cref{eq:SBEA_unknown}, the generator is given explicitly by
\ifdefined\mpversion
\begin{align*}
  \mathcal{L} &= (g_0 + \eta g_1 + \cdots) \cdot \nabla + \frac{1}{2}(\eta h_1 +  \cdots) : \nabla^2 \\
              &= g_0 \cdot \nabla + \sum_{i=1}^{+\infty} \eta^i \left(g_i \cdot \nabla + \frac{1}{2} h_i : \nabla^2\right) = \sum_{i=0}^{+\infty} \eta^i \cL_i,
\end{align*}
\else
\begin{align*}
  \mathcal{L} = (g_0 + \eta g_1 + \cdots) \cdot \nabla + \frac{1}{2}(\eta h_1 +  \cdots) : \nabla^2 
              = g_0 \cdot \nabla + \sum_{i=1}^{+\infty} \eta^i \left(g_i \cdot \nabla + \frac{1}{2} h_i : \nabla^2\right) = \sum_{i=0}^{+\infty} \eta^i \cL_i,
\end{align*}
\fi
where for the ease of notation, we denote
\begin{equation*} 
    \cL_0 \coloneqq  g_0 \cdot \nabla \ \text{and}\
    \cL_i \coloneqq  g_i \cdot \nabla + \frac{1}{2} h_i : \nabla^2 \ \text{for}\ i \geq 1.
\end{equation*}
Gathering terms with power $\eta^{p}$ in \Cref{eq:SBEA_continuous} leads to the following lemma:
\begin{lemma} \label{lemma:semi_expand}
    Define the function 
    \begin{equation} \label{eq:expand_p}
      \Psi_{p}(x) = \sum_{n=1}^{p+1} \frac{1}{n!} \sum_{\{l^n_i \geq 0\}_{i=1}^n:\sum_{i=1}^n l^n_i = {p+1}-n} \cL_{l^n_1} \cL_{l^n_2} \cdots \cL_{l^n_n} u(x),
    \end{equation}
    where $\cL_{l^n_j} \cL_{l^n_{j+1}}$ denotes the composition of the operators $\cL_{l^n_j}$ and $\cL_{l^n_{j+1}}$.
    We have that 
    \begin{equation} \label{eq:SBEA_continuous_match}
        \left(e^{\eta \mathcal{L}} u\right) (x) = u(x) + \eta \sum_{p=0}^\infty \eta^p \Psi_p(x).
    \end{equation}
  
\end{lemma}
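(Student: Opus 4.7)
The plan is to derive the claimed expansion by directly substituting the series expression for $\mathcal{L}$ into the operator exponential and then collecting powers of $\eta$. Starting from the standard identity $\left(e^{\eta \mathcal{L}} u\right)(x) = \sum_{n=0}^{\infty} \frac{\eta^n}{n!}\, \mathcal{L}^n u(x)$ and isolating the $n=0$ term as $u(x)$, the task reduces to computing the coefficient of $\eta^{p+1}$ in $\sum_{n\geq 1} \frac{\eta^n}{n!}\mathcal{L}^n u(x)$ for each $p \geq 0$ and verifying it equals $\Psi_p(x)$.

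For each fixed $n \geq 1$, I would expand the $n$-fold composition
$$\mathcal{L}^n = \Bigl(\sum_{i\geq 0} \eta^i \mathcal{L}_i\Bigr)^n = \sum_{l_1,\ldots,l_n \geq 0} \eta^{\,l_1+\cdots+l_n}\, \mathcal{L}_{l_1}\mathcal{L}_{l_2}\cdots \mathcal{L}_{l_n},$$
where the sum is over \emph{ordered} tuples because the operators $\mathcal{L}_i$ in general do not commute (their coefficients $g_i$ and $h_i$ depend on $x$, and $\nabla$ acts on them). Plugging this into the exponential series and writing the total power of $\eta$ contributed by a tuple $(n,l_1,\ldots,l_n)$ as $n + l_1 + \cdots + l_n$, the coefficient of $\eta^m$ becomes the sum over all $(n,l_1,\ldots,l_n)$ with $n\geq 1$ and $n + l_1+\cdots+l_n = m$, each weighted by $1/n!$.

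Setting $m = p+1$, the non-negativity constraint $l_i \geq 0$ forces $1 \leq n \leq p+1$, and the remaining constraint is $l_1 + \cdots + l_n = p+1-n$. This reproduces the definition of $\Psi_p(x)$ in \Cref{eq:expand_p} verbatim, and pulling out the overall factor of $\eta$ yields the claimed \Cref{eq:SBEA_continuous_match}. The only technical subtlety is the rearrangement of the double series: this is either handled by regarding $\eta$ as a formal indeterminate (the conventional standpoint in backward error analysis, cf.\ \citet{hairer2006geometric}), or by imposing enough regularity on $u$ and uniform bounds on the $g_i,h_i$ so that Fubini applies on the absolutely convergent series. I do not expect any real obstacle; the argument is a combinatorial bookkeeping of matching total degree in $\eta$, with the non-commutativity of the $\mathcal{L}_i$ being the one feature that must be respected by keeping the product $\mathcal{L}_{l_1}\cdots \mathcal{L}_{l_n}$ ordered.
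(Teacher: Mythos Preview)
Your proposal is correct and follows essentially the same approach as the paper: expand $e^{\eta\mathcal{L}}u$ as $\sum_{n\geq 0}\frac{\eta^n}{n!}\mathcal{L}^n u$, note that $\mathcal{L}^n$ contributes $\eta^{l_1+\cdots+l_n}$ from the ordered product $\mathcal{L}_{l_1}\cdots\mathcal{L}_{l_n}$, and collect the total power $\eta^{p+1}$ by requiring $l_1+\cdots+l_n = p+1-n$ with $1\leq n\leq p+1$. The paper's proof is a two-sentence version of exactly this bookkeeping.
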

Next, we determine the components $g_p$'s and $h_p$'s in an iterative manner.

\paragraph{Identify components by iteratively matching terms.}
Our aim is to determine $g_i$'s and $h_i$'s by aligning the terms from
the discrete-time expansion (\Cref{eq:SBEA_discrete}) with those from
continuous time (\Cref{eq:SBEA_continuous_match}), according to the order of $\eta$.
Note that every application of the generator $\mathcal{L}$ on a function generates terms with all orders of $\eta$. Consequently, $p$-th order terms in the continuous-time expansion become complicated as $p$ grows, as evident in Lemma \ref{lemma:semi_expand}.
Fortunately, we can identify $g_p$'s and $h_p$'s in an iterative manner.
In the following discussion, $g_i$ and $h_i$ for $i=0,1,\dots,p-1$ are treated as \emph{known} from our iterative construction, and we solve the \emph{unknowns} $g_p$ and $h_p$ by matching $\Phi_p$ and $\Psi_p$. We make the following two observations:
\begin{itemize}
  \item \emph{$\Psi_{p}(x)$ (\Cref{eq:expand_p}) is linear on $g_p$ and $h_p$.} Recall that $g_p$ and $h_p$ are only contained linearly in $\cL_{p} u$, which appears only once in $\Psi_{p}(x)$ by choosing $n =1$ and $l^1_1 = p$.
  \item \emph{Terms in $\Psi_{p}(x)$ with $n > 1$ contains only known terms.} For
      $n > 1$, the terms in \Cref{eq:expand_p} will only include $g_i$ and $h_i$
      for $i\leq p-1$, since for any choice of the indices $\{l^{n+1}_j\}_{j=1}^n$,
      we must have $l_j^n \leq p-1$ if $n > 1$.
    These terms are already determined by our iterative strategy.
\end{itemize}

Following \citet{rosca2022continuous},
we also explore various term selection
strategies, starting with the exact matching case.
\begin{principle}[Term selection of exact-matching, SBEA] \label{principle_SBEA_exact}
    When determining the component functions $g_p$ and $h_p$, all terms in $\Phi_p$ from \Cref{eq:SBEA_discrete} must match the ones in $\Psi_p$ from \Cref{eq:SBEA_continuous_match}. In particular, given the arbitrariness of the test function, the coefficients of  $\nabla^p u$ ($p\geq 1$) should all be matched.
\end{principle}

With the above principle, we identify the unknowns, $g_p$ and $h_p$, for $p\leq 1$.
\begin{remark}[Recovering \sme-2 by SBEA] \label{remark:example_SBEA}
By matching $\Psi_0$ and $\Phi_0$, we
deduce that $g_0(x) = -\nabla f(x)$. 
By matching $\Psi_1$ with $\Phi_1$,
we solve that $g_1(x) = - \frac{1}{2} \nabla^2 f(x) \nabla f(x)$ and $h_1(x) = \Sigma(x)$. This leads to the formulation of
the \sme-2~(\Cref{eq:SDE2}).
\end{remark}

\subsection{Exact Matching Fails for Order $\eta^{3}$ in SBEA} \label{sec:overdetermination}

We have shown that for $p=0, 1$, it is possible to exactly solve $g_p$ and $h_p$.
However, for $p \geq 2$, adhering to \Cref{principle_SBEA_exact} can be infeasible.
Specifically,
following \Cref{principle_SBEA_exact} leads to an over-determined system when determining the unknown component functions $g_p$ and $h_p$ for $p\geq 2$.
Recall that the test function $u$ is chosen arbitrarily and let us consider all the possible occurrence of $\nabla^q u(x)$, $q=1, 2, \ldots$, in $\Phi_p$ and $\Psi_p$. 
\begin{itemize}
    \item For the continous-time dynamics, the term $\Psi_{p}$ in \Cref{eq:expand_p} can contain terms with factors $\nabla^q u(x)$ for $q$ ranging from $1$ to
$p+1$. For example, the term $\nabla^{p+1} u(x)$ can be
generated from \Cref{eq:expand_p} with the choice of $n=p+1$ and $l_j^n = 0$ for all $j\in\{1,\ldots, n\}$; the term $\nabla u(x)$ can be generated with the choice $n=1$ and $l_1^1 = 0$.
    \item For discrete-time expansion as per \Cref{eq:SBEA_discrete}, the
term $\Phi_p$ involves only $\nabla^{p} u(x)$. 
\end{itemize}

For exact matching, the coefficient of $\nabla^{p} u$ in $\Psi_p$ must match that in $\Phi_p$; besides, for any $q \neq p$, the coefficient of $\nabla^{q} u$ in $\Psi_p$ must be zero. This leads to an over-determined system comprising $p+1$ conditions for $2$ free variables ($g_p$ and $h_p$) when $p\geq 2$. In other words, the above iterative construction of the components $g_p$ and $h_p$ fails for $p\geq 2$.
The infeasibility result highlights a drastic difference between SBEA and BEA.
\citet{shardlow2006modified} also noted such infeasibility when attempting to
find stochastic modified flows to achieve higher orders of weak approximation
guarantees.

Inspired by PF, we introduce a relaxed term selection scheme, compared to \Cref{principle_SBEA_exact}, so that only a subset of terms in $\Psi_p$ (the continuous-time dynamics) and $\Phi_p$ (the discrete-time dynamics) match. Importantly, this term selection scheme should ensure that the resulting continuous-time dynamics still preserve the favorable escaping behaviors of the discrete-time dynamics of interest. 
This will be the focus of the following section.

\section{\oursdefull}
\label{sec:def_SPF}
In this section, we first derive the proposed SDE, \oursde{}, to model SGD
using the idea of SBEA. Then we discuss sufficient conditions for its
well-posedness.

Due to the lack of degree of freedom, exactly matching all terms in $\Phi_p$ and $\Psi_p$ for $p \geq 2$ is in general infeasible.
Instead, we adopt the following term selection principle.

\begin{principle}[Term selection in \oursde, SBEA] \label{principle_hessian_aware}
    When determining the components $g_p$ and $h_p$ in SBEA, ignore the terms that involve $\nabla^{(r)} f(x), \nabla^{(s)} u(x)$ and $\nabla^{(m)} \Sigma(x)$ for $r, s\geq 3$ and $m \geq 1$.
\end{principle}

Under this principle of term selection, we obtain the following result.
\begin{lemma} \label{lemma:SBEA_conditions}
  Under \Cref{principle_hessian_aware}, when applying SBEA on SGD, we have the following results:
  \begin{enumerate}
      \item The component functions $g_p$ and $h_p$ in \Cref{eq:SBEA_unknown} are uniquely determined;
      \item The component functions $g_p$ and $h_p$ in \Cref{eq:SBEA_unknown} admit the form
  \begin{align}
    g_p(x) =&\ c_p \cdot \left(\nabla^2 f(x)\right)^p \nabla f(x), \label{eq:gp} \\
    h_p(x) =&\ 
   \sum_{k = 0}^{p-1} a_{k, p-1-k} \cdot \left(\nabla^2 f(x)\right)^{k} \Sigma(x)
   \left(\nabla^2 f(x)\right)^{p-1-k}; \label{eq:hp}
  \end{align}
        \item The coefficients $\{a_{s, m}\}$ and $\{c_s\}$ in $\{g_p\}$ and $\{h_p\}$ satisfy
  \begin{gather*}
    \frac{\log(1-x)}{x} = \sum^{+\infty}_{s = 0} c_s x^s \text{ and }
   \frac{\log(1-x)(1-y)}{x y - (x + y)} = \sum^{+\infty}_{s, m \geq 0} a_{s, m} x^s y^m.
  \end{gather*}
  \end{enumerate}
\end{lemma}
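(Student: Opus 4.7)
The plan is to establish all three claims simultaneously by strong induction on $p$, exploiting the fact that Principle~\ref{principle_hessian_aware} collapses the matching of $\Phi_p$ against $\Psi_p$ to just two coupled equations per order --- the coefficient of $\nabla u$ and the coefficient of $\nabla^2 u$. I first analyze the discrete side: since the $n$-th Taylor term of $\mathbb{E}[u(x - \eta \nabla F(x;\xi))\mid x]$ carries $\nabla^n u$ as its only factor involving derivatives of $u$, dropping all $\nabla^{(s)} u$ with $s\geq 3$ kills every Taylor term with $n\geq 3$, yielding $\Phi_0 = -(\nabla f)^\T \nabla u$, $\Phi_1 = \tfrac{1}{2}(\nabla f\nabla f^\T + \Sigma):\nabla^2 u$, and $\Phi_p \equiv 0$ for all $p\geq 2$. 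On the continuous side, in~\eqref{eq:expand_p} the unknowns $g_p, h_p$ appear only in the $n=1$ summand $\cL_p u$ and appear there linearly, while every $n\geq 2$ summand uses only $\cL_{l_i}$ with $l_i\leq p-1$, hence only already-determined lower-order components. Matching the $\nabla u$- and $\nabla^2 u$-coefficients therefore yields a nondegenerate system (one vector and one matrix equation) that uniquely determines $(g_p, h_p)$, giving Part~1 of the lemma.

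For Part~2, I run the induction while tracking the algebraic form of the coefficients produced by the lower-order summands. Writing $g := \nabla f$ and $H := \nabla^2 f$, the key auxiliary fact is that, under Principle~\ref{principle_hessian_aware}, the operator $g\cdot\nabla$ acts on polynomials in $(g, H, \Sigma)$ as a derivation with $(g\cdot\nabla) g = Hg$ and $(g\cdot\nabla) H = (g\cdot\nabla)\Sigma = 0$; on the $u$-side it maps $\alpha^\T\nabla u \mapsto (\nabla\alpha\cdot g)^\T \nabla u + \nabla^2 u : (\alpha g^\T)$ and $B:\nabla^2 u \mapsto ((g\cdot\nabla) B):\nabla^2 u$ modulo the dropped terms. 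Composing this action with the induction hypotheses $g_i = c_i H^i g$ and $h_i = \sum_k a_{k,i-1-k} H^k\Sigma H^{i-1-k}$ forces every summand of $\Psi_p$ to have its $\nabla u$-coefficient of the form $(\text{scalar})\cdot H^p g$ and its $\nabla^2 u$-coefficient of the form $\sum_k \alpha_k H^k\Sigma H^{p-1-k} + \sum_k \beta_k H^k gg^\T H^{p-1-k}$. Since $\Phi_p$ for $p\geq 2$ contributes nothing and $\Phi_1$ has no $\nabla u$-part, matching forces $g_p = c_p H^p g$ and forces the $gg^\T$ contributions in the $\nabla^2 u$-coefficient to cancel amongst themselves, leaving $h_p$ in the claimed pure-$\Sigma$ form.

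For Part~3, the generating-function identities fall out by reading off explicit recursions from the matching. A short computation using the derivation property shows that the $\nabla u$-coefficient of $\cL_{l_1}\cdots \cL_{l_n} u$ equals $c_{l_1}\cdots c_{l_n}\, H^p g$ whenever $l_1+\cdots+l_n = p+1-n$, so matching the $\nabla u$-coefficients of $\Psi_p$ and $\Phi_p$ yields
\[
\sum_{n=1}^{p+1} \frac{1}{n!}\sum_{l_1+\cdots+l_n=p+1-n} c_{l_1}\cdots c_{l_n} = -\delta_{p,0},
\]
whose generating-function form $e^{z c(z)} - 1 = -z$, with $c(z) := \sum_p c_p z^p$, solves at once to $c(z) = \log(1-z)/z$. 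An analogous two-variable convolution identity for $a_{s,m}$, derived from the $\nabla^2 u$-coefficient matching, should then solve to $A(x,y) = \log((1-x)(1-y))/(xy-x-y)$. The hard part will be the bookkeeping for $a_{s,m}$: the $\nabla^2 u$-coefficient of $\cL_{l_1}\cdots \cL_{l_n} u$ receives contributions both from the product-rule cross-term of each $g_{l_i}\cdot\nabla$ and from the $h_{l_i}:\nabla^2$ directly, and re-summing these into the claimed bivariate generating function requires careful tracking of where each $\Sigma$ or $gg^\T$ factor is ``inserted'' along the chain. A useful sanity check, which I would perform along the way, is that Principle~\ref{principle_hessian_aware} becomes exact on quadratic objectives, so $A(x,y)$ may be independently verified as the symbolic solution of the one-step Ornstein--Uhlenbeck covariance identity $\int_0^1 (I-\eta H)^v B (I-\eta H)^v\, dv = \eta\Sigma$ obtained from the linear SGD recursion.
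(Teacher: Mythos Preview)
Your overall architecture---strong induction on $p$, matching only the $\nabla u$ and $\nabla^2 u$ coefficients, and resolving the resulting recursions via generating functions---is essentially the paper's own route, and your treatment of Part~1 and of the $c_p$-recursion in Part~3 is correct. The derivation-action description of $g\cdot\nabla$ under Principle~\ref{principle_hessian_aware} is a clean way to encode what the paper calls the ``free-nabla-$j$'' pruning.

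There is, however, a genuine gap in your Part~2 argument. You write that ``matching forces the $gg^\T$ contributions in the $\nabla^2 u$-coefficient to cancel amongst themselves.'' This is not what matching does. For $p\geq 2$ the equation $\Psi_p=\Phi_p=0$ reads $\tfrac{1}{2}h_p + (\text{known terms from }n\geq 2)=0$, so it \emph{defines} $h_p$ as minus twice the bracketed expression. That expression contains, exactly as you note, both $\Sigma$-type summands and $gg^\T$-type summands of the form $\sum_k \beta_k H^k gg^\T H^{p-1-k}$ arising from your cross-term $\nabla^2 u:(\alpha g^\T)$. Nothing in the matching forces the $\beta_k$'s to vanish; if they did not, $h_p$ would simply inherit those $gg^\T$ pieces and the claimed form~\eqref{eq:hp} would be false. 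The vanishing is a separate combinatorial identity: writing the coefficient of $\operatorname{tr}\bigl(g^\T H^s\nabla^2 u\,H^m g\bigr)$ as $b_{s,m}$, one must prove $b_{s,m}+b_{m,s}=0$ for $s+m\geq 1$ (the trace being symmetric under $s\leftrightarrow m$). The paper establishes this as its own lemma by summing $b_{s,m}x^sy^m$ to a closed form and checking that $b(x,y)+b(y,x)\equiv 1$, a computation on the same footing as your $e^{zc(z)}=1-z$ identity but not deducible from it. Your OU sanity check is consistent with this cancellation but does not prove it---on quadratics the $gg^\T$ terms are present in $\Psi_p$ and still have to sum to zero.

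In short: your plan goes through once you insert, between Parts~2 and~3, an explicit derivation of the $b_{s,m}$ recursion from your cross-term bookkeeping and a proof that $b_{s,m}+b_{m,s}=0$. Without it the induction step does not close.
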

We now derive the limit of the power series defined in \Cref{eq:SBEA_unknown}.

  \begin{theorem} \label{lemma:key}
  Let the components $\{g_p\}$ and $\{h_p\}$ be determined by the SBEA framework applied on SGD under \Cref{principle_hessian_aware}.
  Denote the limits
  \begin{equation} \label{eq:drifting and diffusion}
    b(x) = \sum_{p\geq 0} \eta^p g_p(x) \text{\ and\ } \mathcal{D}(x) = \sum_{p\geq 1} \eta^p h_p(x).
  \end{equation}
  Under the assumption that $\eta < 1/ \norm{\nabla^2 f(x)}$,
  we have that the limits $b(x)$ and $\mathcal{D}(x)$ exist with
  \begin{gather} \label{eq:HASME-drifiting-term}
      b(x) = U(x) \frac{\log(I - \eta \Lambda(x))}{\eta \Lambda(x)} U(x)^\T \nabla f(x), \\
 \label{eq:sPF_diff}
      \mathcal{D}(x) = U(x) S(x) U(x)^\top, \text{such that}\; \left[S(x)\right]_{i,j} = \frac{\left[U^\T \Sigma U \right]_{i,j} \log (1-\eta \lambda_i)
  (1-\eta \lambda_j)}{\eta \lambda_i \lambda_j - (\lambda_i + \lambda_j)},
  \end{gather}
  where $U(x)$ and $\Lambda(x)$ are defined through the eigen-decomposition $\nabla^2 f(x) = U(x) \Lambda(x) U(x)^\T$.
  The diagonal elements of $\Lambda(x)$ are denoted by $\lambda_i(x)$. For
  conciseness, we omit the dependence of $\lambda_i$, $U$ and $\Sigma$ on $x$.
\end{theorem}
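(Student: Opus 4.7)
The plan is to build directly on \Cref{lemma:SBEA_conditions}, which already provides the closed-form coefficients $c_s$ and $a_{s,m}$ and the matrix structure of each $g_p$ and $h_p$. What remains is (i) to diagonalize via the eigendecomposition $\nabla^2 f(x) = U \Lambda U^\T$ so that matrix powers become scalar powers on the spectrum, (ii) to recognize the resulting scalar series as the given generating functions evaluated at $\eta\lambda_i$ (and $\eta\lambda_j$), and (iii) to verify absolute/uniform convergence under the hypothesis $\eta<1/\|\nabla^2 f(x)\|$.

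For the drift, substitute \Cref{eq:gp} into $b(x)=\sum_{p\geq 0}\eta^p g_p(x)$. Since $(\nabla^2 f)^p = U\Lambda^p U^\T$, I factor out $U$ on the left and $U^\T\nabla f$ on the right and obtain
\begin{equation*}
b(x) \;=\; U\Bigl(\sum_{p\geq 0} c_p (\eta\Lambda)^p\Bigr) U^\T \nabla f(x).
\end{equation*}
The matrix in parentheses is diagonal with $i$-th entry $\sum_{p\geq 0} c_p(\eta\lambda_i)^p$. By the scalar identity of \Cref{lemma:SBEA_conditions}, this equals $\log(1-\eta\lambda_i)/(\eta\lambda_i)$, which is precisely the $i$-th diagonal entry of $\log(I-\eta\Lambda)/(\eta\Lambda)$. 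This yields \Cref{eq:HASME-drifiting-term}. The series $\sum c_s x^s$ has radius of convergence $1$, so the substitution is justified whenever $\eta|\lambda_i|<1$ for every $i$, i.e. $\eta\|\nabla^2 f(x)\|<1$.

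For the diffusion, I substitute \Cref{eq:hp} and reindex $p=s+m+1$ with $s=k$, $m=p-1-k$. Writing $\widetilde\Sigma = U^\T \Sigma\, U$ and using $(\nabla^2 f)^k \Sigma (\nabla^2 f)^{p-1-k} = U\Lambda^k \widetilde\Sigma \Lambda^{p-1-k} U^\T$, the $(i,j)$-entry of $U^\T \mathcal{D}(x) U$ becomes
\begin{equation*}
\widetilde\Sigma_{ij}\cdot \eta \sum_{s,m\geq 0} a_{s,m}(\eta\lambda_i)^s(\eta\lambda_j)^m.
\end{equation*}
Invoking the bivariate generating-function identity $\sum_{s,m\geq 0} a_{s,m} x^s y^m = \log(1-x)(1-y)/(xy-(x+y))$ at $x=\eta\lambda_i$, $y=\eta\lambda_j$, and multiplying through by $\eta$, produces exactly $[S(x)]_{i,j}$ as stated in \Cref{eq:sPF_diff}.

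The only non-routine step is the convergence of the bivariate series. The chief worry is that the denominator $xy-(x+y)$ might vanish inside the region of interest; however, on the open bidisk $|x|,|y|<1$ one has $(1-x)(1-y)\neq 1$ except at the origin, so $xy-(x+y)$ has no zeros there (and the removable singularity at $(0,0)$ is handled by the numerator vanishing to the same order). Hence the double series converges absolutely for $\eta|\lambda_i|<1$ and $\eta|\lambda_j|<1$, both guaranteed by $\eta<1/\|\nabla^2 f(x)\|$. Absolute convergence then justifies reordering the $(p,k)$ sum into the $(s,m)$ sum used above. The main obstacle I anticipate is this double-series bookkeeping together with the convergence argument; the rest is algebraic substitution into \Cref{lemma:SBEA_conditions}.
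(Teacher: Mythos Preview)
Your approach is essentially the same as the paper's: both build on \Cref{lemma:SBEA_conditions}, diagonalize via the eigendecomposition so that matrix powers act diagonally, reindex the double sum over $(p,k)$ into one over $(s,m)$, and then evaluate the stated generating functions at $(\eta\lambda_i,\eta\lambda_j)$. The paper is in fact terser than you are---it refers the drift to \citet{rosca2022continuous} and does not spell out the convergence argument in the proof of \Cref{lemma:key} itself.

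That said, your convergence argument contains a factual slip. It is \emph{not} true that $(1-x)(1-y)\neq 1$ on the open bidisk $|x|,|y|<1$ except at the origin: for instance $(x,y)=(\tfrac{1}{3},-\tfrac{1}{2})$ gives $(1-x)(1-y)=\tfrac{2}{3}\cdot\tfrac{3}{2}=1$. The denominator $xy-(x+y)=(1-x)(1-y)-1$ therefore vanishes along a whole curve through the bidisk, not just at the origin. The correct fix is to note that the numerator $\log\bigl((1-x)(1-y)\bigr)$ vanishes on exactly the same set, and that in fact $a(x,y)=g\bigl((1-x)(1-y)\bigr)$ where $g(w)=\log(w)/(w-1)$ has a removable singularity at $w=1$ with value $1$. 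Since for $|x|,|y|<1$ one has $\mathrm{Re}(1-x),\mathrm{Re}(1-y)>0$, the product $(1-x)(1-y)$ avoids the cut of the principal logarithm, so $a$ extends analytically to the entire open bidisk. Its Taylor series then converges absolutely there, which is the conclusion you need. With this correction, your argument goes through.
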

The derivation of the above theorem is highly nontrivial, and represents one of the major contributions of our paper.
Its proof includes several critical steps:
(1) Identifying the structure of the components $g_p$ and $h_p$ in the SBEA
ansatz; (2) Determining the coefficient of the components $g_p$ and $h_p$; (3)
Computing the limit of the resulting power series, which we defer to \Cref{sec:construct_sPF}.

The following lemma provides sufficient conditions for the existence of the diffusion coefficient $\sqrt{\mathcal{D}(x)}$.
\begin{lemma} \label{prop:SPF_exist}
  $\mathcal{D}(x)$ is positive semi-definite at point $x$ if either of the
  following two conditions holds:
  \begin{enumerate}
    \item The Hessian $\nabla^2 f(x)$ and covariance matrix $\Sigma(x)$ commute,
      and $\eta < 1/\norm{\nabla^2 f(x)}$.
    \item The covariance matrix $\Sigma(x)$ is positive definite, and $\eta$ satisfies
    \begin{equation} \label{eq:bar_eta}
        \eta \leq 
        \frac{1}{\norm{\nabla^2 f(x)}} \min\left\{1 - \sqrt{1 - \frac{\lambda_{\min}\left(\Sigma(x)\right)}{\sqrt{d} 
      \lambda_{\max}\left(\Sigma(x)\right)}}, 1 - \frac{\sqrt{2}}{2}\right\},
    \end{equation}
    where $\lambda_{\max}(\cdot)$ and $\lambda_{\min}(\cdot)$ denote the maximum and minimum eigenvalues of a matrix.
  \end{enumerate}
\end{lemma}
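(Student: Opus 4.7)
The plan is to reduce the positive semi-definiteness of $\mathcal{D}(x)$ to that of the inner matrix $S(x)$, since $\mathcal{D}(x) = U S(x) U^\top$ with $U$ orthogonal. Setting $M = U^\top \Sigma(x) U$ (PSD, with the same spectrum as $\Sigma$) and $z_i = 1 - \eta\lambda_i > 0$, a direct calculation based on the identity $\eta\lambda_i\lambda_j - (\lambda_i + \lambda_j) = (z_i z_j - 1)/\eta$ rewrites $S = \eta \cdot M \odot [G(z_i z_j)]_{i,j}$, where $G(u) = \log(u)/(u-1)$ is smooth and positive on $(0, \infty)$ with $G(1) = 1$. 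The task is therefore to show the Hadamard product $M \odot [G(z_iz_j)]$ is PSD.

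For condition~1, I would choose a common orthonormal eigenbasis $U$ simultaneously diagonalizing $\nabla^2 f(x)$ and $\Sigma(x)$. Then $M = \mathrm{diag}(\sigma_1, \dots, \sigma_d)$ with $\sigma_i \geq 0$, so $S$ itself is diagonal with entries $\eta \sigma_i G(z_i^2) \geq 0$, settling this case immediately using only the positivity of $G$.

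For condition~2, I plan a perturbation argument driven by $\mathcal{D}(x)/\eta \to \Sigma(x)$ as $\eta \to 0$. Writing the residual as $\mathcal{D}/\eta - \Sigma = U(M \odot E) U^\top$ with $E_{ij} = G(z_iz_j) - 1$, the bound
\[
\|M \odot E\|_2 \;\leq\; \|M \odot E\|_F \;\leq\; \max_{i,j}|E_{ij}|\cdot \|M\|_F \;\leq\; \sqrt{d}\,\lambda_{\max}(\Sigma)\,\max_{i,j}|E_{ij}|
\]
reduces the problem to showing $\max_{i,j}|G(z_iz_j) - 1| \leq 1 - (1-\eta L)^2$, where $L = \|\nabla^2 f(x)\|$. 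The first branch of the minimum in \eqref{eq:bar_eta} is equivalent to $1-(1-\eta L)^2 \leq \lambda_{\min}(\Sigma)/(\sqrt{d}\,\lambda_{\max}(\Sigma))$, so combining the two estimates would yield $\|\mathcal{D}/\eta - \Sigma\|_2 \leq \lambda_{\min}(\Sigma)$, and Weyl's inequality would then give $\mathcal{D}/\eta \succeq \Sigma - \lambda_{\min}(\Sigma) I \succeq 0$.

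The main technical obstacle is the $G$-bound above, and this is where the second branch $\eta L \leq 1 - \sqrt{2}/2$ of \eqref{eq:bar_eta} is used. I would invoke the expansion $G(u) - 1 = \sum_{k\geq 1}(1-u)^k/(k+1)$ valid for $|1-u|<1$, and split into $u \in [(1-\eta L)^2, 1]$ and $u \in [1, (1+\eta L)^2]$. On the first interval the condition $\eta L \leq 1 - \sqrt{2}/2$ ensures $1 - u \leq 1/2$, so factoring out $1-u$ bounds the tail by $5/6$, giving $|G(u)-1| \leq \tfrac{5}{6}(1-(1-\eta L)^2)$. On the second interval the series is alternating with decreasing magnitudes (since $u - 1 \leq 7/2 - 2\sqrt{2} < 3/2$), so $|G(u)-1| \leq (u-1)/2 \leq \eta L + \eta^2 L^2/2$, which is at most $2\eta L - \eta^2 L^2 = 1 - (1-\eta L)^2$ precisely when $\eta L \leq 2/3$, a consequence of $\eta L \leq 1 - \sqrt{2}/2$. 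Combining the two cases yields the desired bound and completes the proof.
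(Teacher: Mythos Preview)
Your proposal is correct and follows essentially the same strategy as the paper: reduce to the inner matrix $S = \eta\,(U^\top\Sigma U)\odot K$, handle condition~1 by simultaneous diagonalization, and for condition~2 write $K = \mathbf{1} + (K-\mathbf{1})$ and use a Hadamard-norm bound together with Weyl's inequality so that $\lambda_{\min}(\Sigma)$ absorbs the perturbation. The only real difference is in the key estimate $|G(u)-1|\le 1-(1-\eta L)^2$: the paper proves this as a separate calculus lemma (\Cref{lemma:ub_elm}) by analyzing monotonicity of auxiliary functions, whereas you obtain it more directly from the power series $G(u)-1=\sum_{k\ge 1}(1-u)^k/(k+1)$ via geometric and alternating-series bounds---note though that for the alternating branch you really need $u-1\le 1$ (not just $<3/2$) to get termwise decrease for all $k$, which is fine since $7/2-2\sqrt{2}<1$.
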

The validity of the above two conditions is discussed at the end of this subsection.
With \Cref{lemma:key} and \Cref{prop:SPF_exist}, we now present the proposed SDE model.
\begin{mdframed}
\begin{definition} \label{def:SPF}
  Let the assumptions in \Cref{lemma:key} and \Cref{prop:SPF_exist} hold for any $x$, then the drift term $b(x)$
    and diffusion coefficient $D(x) \coloneqq \sqrt{\mathcal{D}(x)}$ in \Cref{eq:drifting and diffusion} are
    well-defined. We denote the SDE as the \oursdefullh~(\oursde):
\begin{gather}
  d X_t = b(X_t) d t
  + D(X_t) d W_t. \label{eq:sPF}
\end{gather}
\end{definition}
\end{mdframed}
Comparing with \sme-1\ in \Cref{eq:SDE1} and \sme-2\ in \Cref{eq:SDE2}, both the drift term and the diffusion coefficient of \oursde\ incorporate the Hessian information,
hence the name.

Notice that the drift term $b(x)$ of \oursde\ exactly corresponds to PF defined in \Cref{eq:PF}. 
A naive idea would be to combine PF with the conventional diffusion coefficient $\sqrt{\eta \Sigma}$ as in \sme-1 and \sme-2. The following remark comments on the drawback of such a construction.
\begin{remark}[A naive SDE derived from PF]
    Adding the diffusion coefficient $\sqrt{\eta \Sigma}$ to the PF in \Cref{eq:PF} gives rise to the following SDE:
\begin{align} \label{eq:PFwoC}
dX_t = 
U(X_t) \frac{\log(I - \eta \Lambda(X_t))}{\eta \Lambda(X_t)} U(X_t)^\T \nabla f(X_t) d t + \sqrt{\eta \Sigma(X_t)} dW_t,
\end{align}
which we refer to as \pfsfull~(\pfs).
Such a straightforward combination could lead to escaping behaviors different
from SGD. For example, when considering a
multi-mode function, as illustrated in \Cref{fig:two_mode}, SGD and our
proposed \oursde\ can easily escape the local minimum
and find the global one, whereas \pfs\ remains trapped in the valley of the
initial local minimum.
A theoretical comparison of \oursde\ and \pfs\ on general objective functions is provided in \Cref{sec:fine_grained_error_analysis}.
\end{remark}
\begin{remark}[Comparison with existing \sme s]
  By integrating Hessian into its diffusion coefficient, \oursde\ provides a
  nuanced correction to the SDE noise.
  This adjustment is crucial for capturing the true dynamics of SGD, even when
  considering additive, state-independent noise models, such as $\nabla F(x,
  \xi) = \nabla f(x) + \xi$, where $\xi \sim \mathcal{N}(0, \Sigma)$.
  While discrete-time models treat noise as state-independent, in continuous
  time, noise is introduced at infinitesimal time intervals subject to
  immediate transformation by the drift term, which includes Hessian information.
  The necessity of incorporating the Hessian matrix into the diffusion term for
  corrections is also underscored by the failure cases of existing
  SDE models discussed in \Cref{sec:exist_quad}.
\end{remark}

\begin{remark}[Discussion on conditions of \Cref{prop:SPF_exist}]
The first condition states that the eigenvectors of $\nabla^2 f(x)$ and
$\Sigma(x)$ are aligned. In the vicinity of minima, this condition finds support from both theoretical
analysis and empirical evidence, even in deep
learning. Theoretically, \citet{jastrzebski2017three} showed that $\Sigma(x^*) \approx \nabla^2 f(x^*)$
when the model fits all the data at $x^*$,
which is further empirically verified by \citet{xie2020diffusion}.
For mean-square loss, \citet{mori2021power} and \citet{paquette2022homogenization} derived $\Sigma(x) \approx \frac{2f(x)}{B} \nabla^2 f(x^*)$
near local minima $x^*$, where $B$ is the mini-batch size.
\citet{wang2024theoretical} further theoretically justified the approximation
 for nonlinear networks.
In addition, the approximation $\Sigma(x) \approx \nabla^2 f(x^*)$ is commonly used in
local escaping analysis of
SGD~\citep{zhu2019anisotropic,xie2020diffusion,xie2022adaptive,ibayashi2023does}.
Our requirement here is more relaxed, as we only need the eigenvectors to be
the same, regardless of eigenvalues. 

\end{remark}

\begin{figure}[t]
  \centering
  \begin{subfigure}{0.4\textwidth}
    \includegraphics[width=\linewidth]{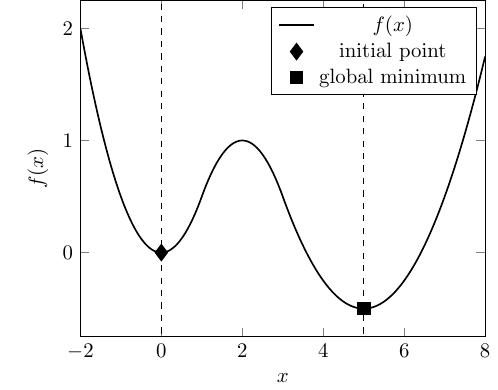}
    \caption{function $f(x)$}
      \label{fig:two_mode_func}
  \end{subfigure}
  \hspace{1cm}
  \begin{subfigure}{0.4\textwidth}
    \includegraphics[width=\linewidth]{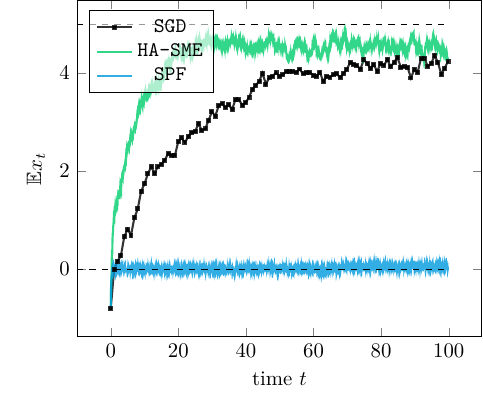}
      \caption{expected value of $x_t$ through time}
      \label{fig:two_mode_func_x}
  \end{subfigure}
  \caption{The illustration depicts the failure case for \pfs, a naive stochastic extension of PF. The objective function is
defined as $f(x) = \frac{1}{2} x^2$ for $x < 1$; $-\frac{1}{2}(x-2)^2 + 1$ for
$1 \leq x < 3$; and $\frac{1}{4}(x-5)^2 - \frac{1}{2}$ for $x \geq 3$.
We use dashed lines to indicate the two minima.
The initial point is set to 0, and we set $\eta = 0.999$ and the noise variance to
1 for all methods. We run the simulation 100 times to estimate the expectation.
SGD and \oursde\ escape the initial minimum and arrive near
the global minimum, while \pfs\ stays around the initial one.
A theoretical justification for the failure of \pfs\ is presented in \Cref{sec:exist_quad}.}
  \label{fig:two_mode}
\end{figure}

To guarantee the existence and uniqueness of a solution for an SDE, it is often
sufficient to impose regularity conditions on both the drift and diffusion
terms. In this context, we show that, when $\Sigma$ is positive definite,
conditions in \Cref{prop:SPF_exist} lead to the well-posedness of the proposed
\oursde\ model.
\begin{theorem} \label{prop:well_posed}
  Assume $f \in C^3_b\left(\mathbb{R}^d\right)$, \added{$\Sigma \in C^1_b \left(\mathbb{R}^{d \times d}\right)$}
  being positive definite,
  and that at least one of the two conditions
  in \Cref{prop:SPF_exist} is satisfied everywhere.
  \oursde\ has a unique strong solution.
\end{theorem}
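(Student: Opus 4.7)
The plan is to reduce \Cref{prop:well_posed} to the classical strong existence-and-uniqueness theorem for Itô SDEs (e.g., Theorem~5.2.1 in Øksendal), which requires the drift $b$ and the diffusion matrix $D(x) \coloneqq \sqrt{\mathcal{D}(x)}$ to be locally Lipschitz on $\mathbb{R}^d$ with at most linear growth. The task therefore splits into two parts: establishing regularity of $b$ and $\mathcal{D}$ as matrix-valued maps, and then propagating this regularity through the matrix square root.

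I would first handle $b$ and $\mathcal{D}$ via their power-series representations. Rewriting \Cref{eq:HASME-drifiting-term} as $b(x) = \psi(\eta \nabla^2 f(x))\,\nabla f(x)$, where $\psi(M) = \frac{\log(I-M)}{M} = \sum_{p \geq 0} c_p M^p$ is the analytic matrix function identified in \Cref{lemma:key}, I use that $f \in C^3_b$ to conclude that $\nabla f$ and $\nabla^2 f$ are uniformly bounded and globally Lipschitz; combined with $\eta < 1/\|\nabla^2 f(x)\|$, the argument of $\psi$ stays in a compact subset of the open unit ball on which $\psi$ is real-analytic. Since composing an analytic matrix function with a Lipschitz map preserves Lipschitz continuity, $b$ is bounded and globally Lipschitz. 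The same argument applied to \Cref{eq:sPF_diff}, or equivalently to the series $\sum_{p \geq 1}\eta^p h_p(x)$ with $h_p$ from \Cref{eq:hp}, shows that $\mathcal{D}(x)$ is locally Lipschitz and locally bounded as a function of $\nabla^2 f(x)$ and $\Sigma(x)$.

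The main obstacle is then to upgrade the PSD conclusion $\mathcal{D}(x) \succeq 0$ of \Cref{prop:SPF_exist} to a strict, locally uniform lower bound on $\lambda_{\min}(\mathcal{D}(x))$, since the matrix square root is not Lipschitz at the boundary of the PSD cone. Under condition~1 of \Cref{prop:SPF_exist}, $\nabla^2 f(x)$ and $\Sigma(x)$ share an eigenbasis, so the eigenvalues of $\mathcal{D}(x)$ are the explicit diagonal entries of \Cref{eq:sPF_diff}; in the regime $\eta\lambda_i<1$ combined with $\Sigma \succ 0$, a direct sign check of $\frac{[U^\T \Sigma U]_{i,i}\log[(1-\eta\lambda_i)(1-\eta\lambda_j)]}{\eta\lambda_i\lambda_j-(\lambda_i+\lambda_j)}$ (handling $\lambda_i>0$, $\lambda_i<0$ and the removable singularity at $\lambda_i=0$ separately) shows all eigenvalues are strictly positive. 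Under condition~2, the stepsize bound \Cref{eq:bar_eta} was designed precisely so that the estimate used to establish \Cref{prop:SPF_exist} yields strict positive definiteness. By continuity, on every compact $K \subset \mathbb{R}^d$ there exists $c_K>0$ with $\lambda_{\min}(\mathcal{D}(x)) \geq c_K$ on $K$, and since the matrix square root is $C^\infty$ on the open PD cone, $D(x) = \sqrt{\mathcal{D}(x)}$ is Lipschitz on $K$.

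Combining local Lipschitz continuity of $b$ and $D$ with the global boundedness of $b$ and the local boundedness of $D$ (inherited from that of $\Sigma$), the hypotheses of the classical Itô existence-and-uniqueness theorem are met, and a unique strong solution to \Cref{eq:sPF} follows. The nontrivial content of the argument is concentrated in the spectral-gap upgrade above; the two regularity verifications are straightforward consequences of the analytic structure exposed in \Cref{lemma:key}.
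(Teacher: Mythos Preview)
Your proposal is correct and follows essentially the same approach as the paper: reduce to the standard Itô existence theorem by showing $b$ and $D$ are Lipschitz, establish regularity of $b$ and $\mathcal{D}=DD^\top$ via their power-series representations (the paper's \Cref{lemma:bound_drift,lemma:bound_diffu}), and handle the square root by proving a strict spectral lower bound on $\mathcal{D}$ under each condition of \Cref{prop:SPF_exist}. The only cosmetic difference is that where you invoke the abstract smoothness of the matrix square root on the open positive-definite cone, the paper computes $\partial D_{ij}/\partial x_k$ explicitly via the Kronecker-sum inverse $(D\otimes I + I\otimes D)^{-1}$ and bounds its norm using the eigenvalue lower bound on $D$.
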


\begin{remark}
  While for general smooth objectives, \Cref{prop:SPF_exist} requires a small stepsize,
    in the particular case of quadratic functions with a constant noise covariance, the requirement of small stepsize is not necessary, as will be discussed in \Cref{sec:quad}.
    In this case, the diffusion coefficient is constant, and the drift term depends linearly on $X_t$, naturally fulfilling the Lipschitz criterion.
    This is sufficient for proving the well-posedness of SDEs,
    independent of the stepsize.
\end{remark}

\section{Approximation Error Analysis of \oursde}
\label{sec:approx}

In this section, we analyze the approximation errors of \oursde{} in approximating SGD on general smooth functions.
Our first result establishes the order 2 weak approximation for \oursde{}, as defined in
\Cref{def:weak_approx}, matching the order-best guarantee in the literature.
Subsequently, we conduct a more fine-grained analysis, elucidating the explicit
dependence of the approximation error on the smoothness parameter of the
objective function. We observe a significant improvement of \oursde{} over the
existing \sme\ models. In particular, for convex
objectives, the leading error term of \oursde{} is independent of the smoothness
parameter.
\added{
To validate our theoretical findings, we conduct experiments on neural networks,
showing that \oursde{} more accurately approximates SGD dynamics compared to
existing SDE models.
}

\added{
In addition to directly establishing weak approximation error guarantees, we
also demonstrate the accuracy of \oursde{} through an escape time analysis in
\Cref{sec:escape_saddle}, showing that \oursde{} more reliably captures the
escape time near saddle points compared to \sme-2.
}

\subsection{Weak Approximation Error Guarantee}
\label{sec:weak_approx_theorem}

Below, we show a weak approximation error guarantee on sufficiently
smooth functions.

\begin{theorem} \label{thm:weak_approx}
  Assuming for any $\xi$, $F(\cdot; \xi) \in C^7_b\left(\mathbb{R}^d\right)$, 
  \oursde{}
  is an order 2 weak approximation of SGD.
\end{theorem}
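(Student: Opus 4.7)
The plan is to follow the standard two-step route for weak approximation of SGD by SDEs, as in \citet{li2017stochastic} and the monographs \citet{milstein2013numerical,kloeden2011numerical}: first establish a \emph{one-step} local weak error of the form $O(\eta^3)$, then bootstrap to a global bound of $O(\eta^2)$ on the interval $[0,T]$ via a telescoping / semigroup argument.

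For the one-step bound, I would Taylor-expand both the discrete step and the continuous step to order $\eta^2$ and invoke the matching built into SBEA. On the SGD side, Taylor expansion of $u$ with remainder gives
\begin{equation*}
  \mathbb{E}\bigl[u(x-\eta\nabla F(x;\xi))\,\big|\,x\bigr] = u(x) + \eta\,\Phi_0(x) + \eta^2\,\Phi_1(x) + R_{\mathrm{d}}(x,\eta),
\end{equation*}
where $|R_{\mathrm{d}}| \le C(1+\|x\|^q)\eta^3$, using $u \in C_b^6$ together with $F(\cdot;\xi) \in C_b^7$ to control moments of $\nabla F$ up to order three. On the \oursde\ side, Lemma~\ref{lemma:semi_expand} (combined with standard Itô/semigroup estimates) yields
\begin{equation*}
  \mathbb{E}\bigl[u(X(\eta))\,\big|\,X(0)=x\bigr] = u(x) + \eta\,\Psi_0(x) + \eta^2\,\Psi_1(x) + R_{\mathrm{c}}(x,\eta),
\end{equation*}
with $|R_{\mathrm{c}}| \le C(1+\|x\|^q)\eta^3$. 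The key structural observation is that $\Phi_0 = -\nabla f\!\cdot\!\nabla u$ and $\Phi_1 = \tfrac12 \nabla^2 u:(\nabla f \nabla f^\top + \Sigma)$ involve only $\nabla f$, $\Sigma$, and derivatives of $u$; neither $\nabla^{(r)} f$ with $r\ge 3$ nor derivatives of $\Sigma$ appear. Hence Principle~\ref{principle_hessian_aware} imposes no restriction at orders $p=0,1$, and the construction of \oursde\ (via Lemma~\ref{lemma:SBEA_conditions} and Remark~\ref{remark:example_SBEA}) yields $\Phi_0 \equiv \Psi_0$ and $\Phi_1 \equiv \Psi_1$ exactly. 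The one-step weak error therefore reduces to $|R_{\mathrm{d}} - R_{\mathrm{c}}| \le C(1+\|x\|^q)\eta^3$.

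For the global bound, I would introduce the backward Kolmogorov semigroup $v(s,x) := \mathbb{E}[u(X(T-s))\mid X(T-s)=x]$ of \oursde, telescope the total error as
\begin{equation*}
  \mathbb{E}[u(x_N)] - \mathbb{E}[u(X(N\eta))] = \sum_{k=0}^{N-1}\Bigl(\mathbb{E}[v((k{+}1)\eta, x_{k+1})] - \mathbb{E}[v(k\eta, x_k)]\Bigr),
\end{equation*}
and apply the one-step bound to each summand using $v(k\eta,\cdot) \in C_b^6$ as the test function. Since $N = \lfloor T/\eta\rfloor$, summation gives $O(T\eta^2)$. This requires two ingredients: (i) uniform moment bounds $\sup_k \mathbb{E}\|x_k\|^{2q}$ and $\sup_{t\le T}\mathbb{E}\|X(t)\|^{2q}$, which follow from $F\in C_b^7$ (hence bounded $\nabla f$) and Grönwall-type estimates on the \oursde\ coefficients; and (ii) the preservation of $C_b^6$-regularity by the \oursde\ semigroup, which is standard under Lipschitz and smoothness of $b$ and $D$ (Theorem~\ref{prop:well_posed}).

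The main obstacle is technical rather than conceptual: unlike \sme-1/\sme-2, the \oursde\ coefficients $b$ and $D$ depend on $\eta$ themselves through a matrix logarithm, so the usual ``$\eta$-free coefficient'' formulation of the bootstrap meta-theorem does not directly apply. To handle this I would exploit the fact that $\|\nabla^2 f\|_\infty$ is bounded under $f \in C_b^7$, so for $\eta$ below a threshold $\eta_0 < 1/\|\nabla^2 f\|_\infty$ the defining power series for $b(x)$ and $\mathcal{D}(x)$ in Lemma~\ref{lemma:key} converge uniformly in $x$, together with their derivatives up to order $5$, via the analytic functional calculus applied to $\log(I-\eta\Lambda(x))/(\eta\Lambda(x))$ and the rational function defining $S(x)$. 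This delivers uniform-in-$\eta$ Lipschitz and smoothness estimates on $b$ and $D$, after which the semigroup and moment arguments above proceed verbatim and yield the claimed order-$2$ weak approximation.
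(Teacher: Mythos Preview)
Your proposal is correct and the one-step matching argument ($\Phi_0=\Psi_0$, $\Phi_1=\Psi_1$) is exactly the right structural observation. However, your accumulation step runs \emph{dual} to the paper's. You telescope through the SDE's backward Kolmogorov function $v(s,\cdot)$ and apply the one-step bound with $v$ as the test function, in the style of \citet{li2017stochastic} and \citet{milstein2013numerical}. The paper instead follows \citet{feng2017semi}: it telescopes through the SGD semigroup $u^n(x)=\mathbb{E}[u(x_n)\mid x_0=x]$, proves the one-step bound $\|e^{\eta\cL}u^n-u^{n+1}\|_{L^\infty}=O(\eta^3)$, and closes the recursion using the $L^\infty$ contraction of $e^{\eta\cL}$. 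The tradeoff is that the paper's route replaces your ``$C_b^6$-preservation of the \oursde\ semigroup'' step---which is standard but not free, since it requires regularity of $D$ itself (not just $DD^\top$)---with the trivial fact that a Markov semigroup contracts $L^\infty$; the price is that the paper must instead control $\|u^n\|_{C^6}$ along the SGD trajectory, which it imports from Theorem~2.1 of \citet{feng2017semi}.

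On the $\eta$-dependent coefficients, your diagnosis is right but your proposed fix is slightly incomplete. Uniform-in-$x$ convergence of the power series for $b$ and $\mathcal{D}$ (and their spatial derivatives) gives uniform-in-$\eta$ spatial regularity, which suffices to bound the semigroup remainder $\|e^{\eta\cL}u - u - \eta\cL u - \tfrac{\eta^2}{2}\cL^2 u\|_{L^\infty}=O(\eta^3)$. But to extract $\Psi_0,\Psi_1$ from $\eta\cL u$ and $\tfrac{\eta^2}{2}\cL^2 u$ you must also Taylor-expand $\cL$ itself in $\eta$, which needs bounds on $\partial_\eta^k b$ and $\partial_\eta^k(DD^\top)$. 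The paper handles this through a dedicated lemma (\Cref{lemma:L_expand}) bounding $\|\partial_\eta^p \cL^n u\|_{L^\infty}$, proved by differentiating the power series term-by-term; your functional-calculus argument would work too once you note that analyticity in $\eta$ (not just in $x$) follows from the same series representation.
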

{
\begin{remark}
  For all approximation error guarantees developed in this section, we assume
  that the (high-order) gradients of $F$ are uniformly bounded. This assumption is common
  in the analysis of weak approximation errors for SDEs that approximate
  SGD~\citep{hu2017diffusion, feng2017semi, feng2019uniform}.
Another common assumption is to consider polynomially bounded (high-order) gradients~\citep{li2017stochastic,li2019stochastic}.
However,
this leads to weaker, point-wise approximation guarantees with respect to the
initial point; specifically, the constant $C$ in \Cref{eq:weak approximation
error} depends on the initial point $x$. In contrast, our guarantees are uniform over all initial
points.

To relax the global boundedness assumption, one could restrict the approximation analysis to a compact set, where the (high-order) gradient bounds can be replaced with their local versions. To be more specific about how such results can be achieved, we outline two possible approaches:
1) modifying \oursde{} so that it remains confined within a bounded domain, or
2) studying the approximation error up to the hitting time of the boundary of a compact set.
The first approach has been employed in \citet{li2022uniform}, where, under mild
assumptions on the objective function, SGD itself remains within a compact set.
Consequently, it can be shown that the modified SDE retains the same
approximation guarantees as the original SDE, but with respect to local problem
parameters.
\end{remark}
}
An order 2 weak approximation is the order-best approximation guarantee known for SGD
in existing literature.
Typically, achieving weak approximation results requires bounding the (high-order)
derivatives of the drift and diffusion terms of the SDE.
While this condition is readily met for existing \sme{}s,
ensuring boundedness for \oursde{} presents a challenge,
as the drift and diffusion terms in \oursde{} are defined as limits
of power series and involve logarithmic components.
Our proof builds upon the following lemma.
\begin{lemma}[Regularity of the drift term and diffusion coefficient] \label{lemma:regul_drift_diff}
  Consider a fixed $n \geq 0$.
  Assume for that any $\xi$, we have $F(\cdot; \xi) \in C^{n+2}_b(\mathbb{R}^d)$.
  Then, there exists
  a constant $\eta_0 > 0$ such that, for any $\eta < \eta_0$,
  $\max_{0 \leq i \leq d} \norm{[b(x)]_i}_{C^n} < \infty$
  and $\max_{0 \leq i, j \leq d}\norm{\left[D(x)D(x)^\T\right]_{i, j}}_{C^n} < \infty$.
\end{lemma}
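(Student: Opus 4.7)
The plan is to work with the power-series representations $b(x)=\sum_{p\geq 0}\eta^p g_p(x)$ and $\mathcal{D}(x)=\sum_{p\geq 1}\eta^p h_p(x)$ from \Cref{eq:drifting and diffusion} rather than the closed-form expressions in \Cref{eq:HASME-drifiting-term} and \Cref{eq:sPF_diff}, since the latter involve the eigendecomposition of $\nabla^2 f(x)$ and may fail to be smooth in $x$ at eigenvalue crossings. By \Cref{lemma:SBEA_conditions}, each $g_p(x)=c_p(\nabla^2 f(x))^p\nabla f(x)$, and each $h_p(x)$ is a finite sum of terms of the form $(\nabla^2 f)^k\Sigma(\nabla^2 f)^{p-1-k}$; both are manifestly smooth in $x$ term by term. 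The task thus reduces to showing that these series, together with all their partial derivatives up to order $n$, converge absolutely and uniformly in $x$ whenever $\eta$ is below a suitable threshold.

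The hypothesis $F(\cdot;\xi)\in C^{n+2}_b(\mathbb{R}^d)$ (uniformly in $\xi$) gives $f\in C^{n+2}_b$, and since $\Sigma(x)=\mathbb{E}[\nabla F(x;\xi)\nabla F(x;\xi)^\T]-\nabla f(x)\nabla f(x)^\T$ is built from products of $\nabla F$, also $\Sigma\in C^{n+1}_b$. Set $M:=\|\nabla^2 f\|_\infty<\infty$ and $\eta_0:=1/M$; then $\|\eta\nabla^2 f(x)\|\leq \eta M<1$ uniformly in $x$ whenever $\eta<\eta_0$. The scalar generating functions $\log(1-z)/z=\sum_s c_s z^s$ and $\log((1-x)(1-y))/(xy-x-y)=\sum_{s,m}a_{s,m}x^s y^m$ are analytic on the unit disk and bidisk respectively (the apparent singularity of the bivariate function along $xy=x+y$ is removable, since $\log((1-x)(1-y))$ vanishes on exactly the same curve). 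Cauchy's estimates on any slightly smaller (bi)disk of radius $r\in(\eta M,1)$ then give the coefficient bounds $|c_s|\leq 1$ and $|a_{s,m}|\leq C r^{-(s+m)}$.

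Next, I would bound $\partial^\alpha g_p(x)$ for any multi-index $\alpha$ with $|\alpha|\leq n$ via the multivariate Leibniz rule applied to the product of $p+1$ matrix factors. The number of ways to distribute $|\alpha|$ derivatives among the factors is at most $(p+1)^{|\alpha|}\leq (p+1)^n$, and each resulting term is a product of factors that are either undifferentiated (bounded by $M$) or differentiated (bounded by $\|f\|_{C^{n+2}}$), with at most $n$ of the factors differentiated. This yields the key polynomial-in-$p$ estimate
\[
\|\partial^\alpha g_p\|_\infty \;\leq\; C_n\,(p+1)^n\,M^p,
\]
with $C_n$ independent of $p$. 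Combined with $|c_p|\leq 1$, the series $\sum_p \eta^p\|\partial^\alpha g_p\|_\infty$ is dominated by $C_n\sum_p(p+1)^n(\eta M)^p<\infty$, which by the Weierstrass $M$-test permits termwise differentiation and establishes $\|[b]_i\|_{C^n}<\infty$. The same scheme applied to the double-indexed series for $h_p$, using $|a_{s,m}|\leq Cr^{-(s+m)}$ together with the $C^{n+1}_b$-regularity of $\Sigma$, controls $\|[\mathcal{D}]_{ij}\|_{C^n}=\|[DD^\T]_{ij}\|_{C^n}$.

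The main obstacle is the combinatorial estimate above: one must verify that although the number of matrix factors in $(\nabla^2 f)^p\nabla f$ grows linearly in $p$, the norm of $\partial^\alpha$ of the product grows only polynomially in $p$ (as $(p+1)^n$) rather than exponentially, so that the geometric decay $(\eta M)^p$ supplied by $\eta<1/M$ dominates. Once this polynomial-times-geometric bound is established, term-by-term differentiation is routine, and the analogous two-index version for $\mathcal{D}$ proceeds without further subtlety.
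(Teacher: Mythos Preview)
Your proposal is correct and follows essentially the same approach as the paper: the paper reduces the lemma to two helper results (\Cref{lemma:bound_drift,lemma:bound_diffu}) which expand $b$ and $DD^\T$ as the power series $\sum_p \eta^p g_p$ and $\sum_p \eta^p h_p$, differentiate term-by-term via the product rule, observe that the $k$-th derivative of a $p$-fold product introduces only $O(p^k)$ terms, and then invoke Cauchy--Hadamard to conclude that the polynomial-in-$p$ factor does not change the radius of convergence. Your version is slightly cleaner in two respects: by working with operator norms rather than the paper's element-wise expansion you avoid the extraneous $d^p$ factor (so your $\eta_0=1/M$ improves on the paper's $\eta_0=1/(d\lambda)$), and your use of Cauchy estimates on a bidisk of radius $r\in(\eta M,1)$ to bound $|a_{s,m}|$ is a more transparent substitute for the paper's appeal to the multivariate Cauchy--Hadamard theorem.
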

\begin{remark}
    A similar proof shows that \pfs{} also admits the order 2 weak approximation error.    
Although \sme-2, \pfs{}, and \oursde{} share the same order of weak approximation error, as demonstrated in \Cref{sec:quad}, these models can have drastically different behaviors near critical points.
This discrepancy between the practice and the above theory arises because the classical analysis solely emphasizes the dependence on the stepsize, while neglecting other crucial factors, such as the norm of the
Hessian matrix. 
To better differentiate these models, a more fine-grained analysis is necessary,
where the dependence on the problem-dependent parameters is explicitly accounted for.
\end{remark}

\subsection{Fine-Grained Error Analysis with Hessian Dependence} \label{sec:fine_grained_error_analysis}

\added{
The preceding weak approximation results primarily focus on the dependence on
the stepsize $\eta$, consistent with earlier research on SDEs in the context of
\sme-1 and \sme-2~\citep{li2017stochastic, hu2017diffusion, feng2017semi,
feng2019uniform}.
This suggests that SDEs provide accurate approximations when the stepsize is
small.
However, a more relevant scenario in practice involves larger stepsizes, which
are inversely proportional to $\lambda\coloneqq \sup_{x \in \mathbb{R}^d} \norm{\nabla^2 f(x)}$.
This regime is
supported by the ``edge of stability'' phenomenon observed in empirical studies
of deep neural networks~\citep{cohen2020gradient, cohen2022adaptive}. These
studies have shown that GD tends to converge to solutions
where the maximum eigenvalue of the Hessian matrix approaches $2/\eta$. A
similar behavior has been noted in SGD, albeit
with a smaller final eigenvalue. This phenomenon underscores the significant
practical and theoretical importance of considering stepsizes on the order of
$\Theta(1/\lambda)$.
\red{
Consequently, prior approximation error estimates may be imprecise if they
consider only the dependence on $\eta$ while ignoring the role of $\lambda$.
}
}

Our subsequent analysis explicitly examines the dependence of the error on
\added{problem-dependent parameters, the smoothness parameter
$\lambda$ and
Lipschitz parameter $s \coloneqq \sup_{x \in \mathbb{R}^d} \norm{\nabla f(x)}$.}
Through this detailed
analysis, we differentiate the approximation guarantees of \oursde{} from \sme-2 and \pfs{}, highlighting its
advantage in modeling SGD.

\begin{theorem} \label{thm:weak_approx_lambda}
  Assume for any $\xi$, we have $F(\cdot; \xi) \in C_b^8\left(\mathbb{R}^d\right)$,
  and for any $0 \leq i, j \leq d$, we have $\left[\Sigma(\cdot)\right]_{i,j} \in C_b^6\left(\bR^d\right)$.
  Let $X(t)$ be the stochastic process
  described by \oursde{} and $\{x_k\}$ be the sequence generated by SGD.
  There exists $\eta_0 > 0$ such that for any $\eta < \eta_0$ and $T > 0$, it holds that for all $x\in\mathbb{R}^d$,
\ifdefined\mpversion
  \begin{gather}
    \begin{split} \label{eq:weak_approx_lambda_general}
    \sup_{k=1,\dots,\floor{T/\eta}}
    \abs{\mathbb{E} [u\left(x_k\right) | x_0 = x] - \mathbb{E}[u\left(X(k
    \eta)\right) | X(0) = x]} \\
    \leq \cO\left(\left(\eta^2 s^3 + \eta^3 s^4 \lambda^3\right) M(T) \right),
    \end{split}
  \end{gather}
\else
  \begin{gather}  \label{eq:weak_approx_lambda_general}
    \sup_{k=1,\dots,\floor{T/\eta}}
    \abs{\mathbb{E} [u\left(x_k\right) | x_0 = x] - \mathbb{E}[u\left(X(k
    \eta)\right) | X(0) = x]} \leq \cO\left(\left(\eta^2 s^3 + \eta^3 s^4 \lambda^3\right) M(T) \right),
  \end{gather}
\fi
  where for any $p \geq 1$, $M(T) \coloneqq \eta \sum_{k=0}^{\floor{T/\eta} - 1}
  \sum_{1 \leq \abs{J} \leq 8} \abs{D^J u^k(x)}_{\infty}$ with $u^k(x) = \Ep{u(x_k) \mid x_0 = x}$.
\end{theorem}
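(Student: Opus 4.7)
The plan is to combine the classical martingale reduction from multi-step to one-step weak error with a careful, Hessian-tracking expansion of the single-step discrepancy enforced by \Cref{principle_hessian_aware}. First, I would set up the backward Kolmogorov semigroup of SGD: define $u^k(x) := \mathbb{E}[u(x_k) \mid x_0 = x]$, with $u^0 = u$ and $u^{k+1}(x) = \mathbb{E}_\xi\bigl[u^k(x - \eta \nabla F(x;\xi))\bigr]$. Letting $P_\eta$ denote the one-step transition operator of \oursde, telescope
\begin{equation*}
  \mathbb{E}[u(X(N\eta)) \mid x] - \mathbb{E}[u(x_N) \mid x] = \sum_{k=0}^{N-1} P_\eta^{N-k-1}\bigl(P_\eta u^{k} - u^{k+1}\bigr)(x),
\end{equation*}
reducing the proof to bounding the one-step discrepancy $e_k(y) := (P_\eta u^k - u^{k+1})(y)$ and summing it against the weights $P_\eta^{N-k-1}$. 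Using \Cref{lemma:regul_drift_diff}, applying $P_\eta^{j}$ preserves $C_b^m$ bounds on $u^k$ up to constants depending on derivatives of $b$ and $D$, so the resulting double expectation ends up controlled by exactly the derivative-norm sum inside $M(T)$.

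Next, I would expand $e_k(y)$ by applying the discrete-time expansion \Cref{eq:SBEA_discrete} and the continuous-time semigroup expansion \Cref{eq:SBEA_continuous_match} to the test function $u^k$ up to order $\eta^4$, and then invoke the construction of \oursde. By \Cref{principle_hessian_aware} and \Cref{lemma:SBEA_conditions}, the components $g_p, h_p$ are chosen so that $\Phi_p - \Psi_p$ contains only terms carrying at least one factor from $\{\nabla^{(r)} f : r \geq 3\}$, $\{\nabla^{(s)} u^k : s \geq 3\}$, or $\{\nabla^{(m)} \Sigma : m \geq 1\}$. The residual at order $\eta^3$ (i.e.\ $p = 2$) is then dominated by terms of type $(\nabla f)^{\otimes 3} : \nabla^3 u^k$, which in magnitude is at most a universal constant times $s^3 |D^3 u^k|_\infty$; this is the source of the $\eta^3 s^3$ contribution to the one-step error. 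At order $\eta^4$ and above, the surviving residuals have up to four $\nabla f$ factors intertwined with up to three Hessian factors via the compositions $\mathcal{L}_{l_1}\cdots \mathcal{L}_{l_n}$, bounded in the worst case by $s^4 \lambda^3$; the geometric decay of the coefficients $c_p$ and $a_{s,m}$ from the logarithmic generating functions in \Cref{lemma:SBEA_conditions}, together with the convergence guarantee $\eta < 1/\|\nabla^2 f\|$ from \Cref{lemma:key}, ensures that the tail from $p \geq 3$ is dominated by this $\eta^4 s^4 \lambda^3$ scale once $\eta < \eta_0$.

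Summing the one-step error $\eta^3 s^3 + \eta^4 s^4 \lambda^3$ over $k = 0, \dots, \lfloor T/\eta\rfloor - 1$, weighted by the derivative norms $|D^J u^k|_\infty$ for $1 \leq |J| \leq 8$ produced by the Taylor expansions, assembles the bound $\mathcal{O}\bigl((\eta^2 s^3 + \eta^3 s^4 \lambda^3) M(T)\bigr)$ as stated. The extra degree of smoothness requested on $F$ and $\Sigma$ (relative to the order-2 weak criterion of \Cref{def:weak_approx}) is exactly what is needed to make the order-4 one-step Taylor expansion legitimate uniformly in $k$.

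The main obstacle will be the fine bookkeeping in the second paragraph: verifying at every order $\eta^{p+1}$ that the Hessian powers inside $g_p$ and $h_p$ do not accumulate uncontrollably, and that each surviving residual always retains at least one $\nabla^{\geq 3} f$, $\nabla^{\geq 3} u^k$, or $\nabla \Sigma$ factor. This relies on extracting cancellations from the generating-function identities in \Cref{lemma:SBEA_conditions} rather than on ad hoc term inspection, and on the regularity bounds of \Cref{lemma:regul_drift_diff} to legitimately truncate the drift/diffusion series of \oursde. The resulting separation between the $\eta^2 s^3$ piece (free of $\lambda$) and the $\eta^3 s^4 \lambda^3$ piece is precisely what distinguishes \oursde's guarantee from those of \sme-1, \sme-2, and \pfs.
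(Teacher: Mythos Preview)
Your plan matches the paper's proof closely: reduce the multi-step error to a sum of one-step discrepancies, then expand both the discrete and continuous one-step maps to order $\eta^3$--$\eta^4$ and use the SBEA construction to identify which residual terms survive and how they scale in $s$ and $\lambda$. Two small divergences are worth flagging. First, what the telescoping step actually needs is the $L^\infty$ contraction of the semigroup $e^{t\mathcal{L}}$ (the paper cites this as Lemma~2.1 of \citet{feng2017semi}), not a $C_b^m$ bound on $P_\eta^j u^k$; \Cref{lemma:regul_drift_diff} enters only to make the Taylor expansions of $b$ and $DD^\top$ legitimate. Second, the paper does not invoke \Cref{principle_hessian_aware} abstractly at the one-step level: because $\mathcal{L}$ itself depends on $\eta$ through the infinite series for $b$ and $\mathcal{D}$, converting the formal matching $\Phi_p \equiv \Psi_p$ (modulo excluded terms) into an honest bound on $e^{\eta\mathcal{L}} u^n - u^{n+1}$ requires a dedicated estimate $\|\partial_\eta^p \mathcal{L}^n u\|_{L^\infty} \leq \mathcal{O}(\lambda^{n+p-1} s^n \|u\|_{C^{2n}})$, and the paper then carries out the cancellation by explicitly writing out the $\eta$-expansions of $\mathcal{L} u^n$, $\mathcal{L}^2 u^n$, $\mathcal{L}^3 u^n$ and matching terms by hand. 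This concrete route is where the $s^4\lambda^3$ factor at order $\eta^4$ is pinned down, rather than via the generating-function tail argument you sketch.
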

Note that $M(T)$ 
characterizes how the regularity of the test
function $u$ deteriorates along the SGD trajectory.
By definition, this quantity is solely determined by the SGD dynamics, thus independent of the SDE models.

Similarly, we derive the following results for \sme-2 and \pfs{}.
\begin{theorem} \label{thm:weak_approx_lambda_other_SDEs}
  Under the same settings as $\Cref{thm:weak_approx_lambda}$,
\ifdefined\mpversion
  \begin{enumerate}
    \item when $X(t)$ is described by \sme-2, it holds that for all $x\in\mathbb{R}^d$,
  \begin{align*}
    \sup_{k=1,\dots,\floor{T/\eta}}
    \abs{\mathbb{E} [u\left(x_k\right) | x_0 = x] - \mathbb{E}[u\left(X(k
    \eta)\right) | X(0) = x]} \\
    \leq \cO\left(\left(\eta^2 \left(s^3 + s \lambda^2\right) + \eta^3 s^4 \lambda^3 \right) M(T) \right),
  \end{align*}
\item when $X(t)$ is described by \pfs{}, it holds that for all $x\in\mathbb{R}^d$,
  \begin{align*}
    \sup_{k=1,\dots,\floor{T/\eta}}
    \abs{\mathbb{E} [u\left(x_k\right) | x_0 = x] - \mathbb{E}[u\left(X(k
    \eta)\right) | X(0) = x]} \\
    \leq \cO\left( \left( \eta^2 \left(s^3 + \lambda \right) + \eta^3 s^4 \lambda^3 \right) M(T) \right).
  \end{align*}
  \end{enumerate}
\else
  \begin{enumerate}
    \item when $X(t)$ is described by \sme-2, it holds that for all $x\in\mathbb{R}^d$,
  \begin{align*}
    \sup_{k=1,\dots,\floor{T/\eta}}
    \abs{\mathbb{E} [u\left(x_k\right) | x_0 = x] - \mathbb{E}[u\left(X(k
    \eta)\right) | X(0) = x]} \leq \cO\left(\left(\eta^2 \left(s^3 + s \lambda^2\right) + \eta^3 s^4 \lambda^3 \right) M(T) \right),
  \end{align*}
\item when $X(t)$ is described by \pfs{}, it holds that for all $x\in\mathbb{R}^d$,
  \begin{align*}
    \sup_{k=1,\dots,\floor{T/\eta}}
    \abs{\mathbb{E} [u\left(x_k\right) | x_0 = x] - \mathbb{E}[u\left(X(k
    \eta)\right) | X(0) = x]} \leq \cO\left( \left( \eta^2 \left(s^3 + \lambda \right) + \eta^3 s^4 \lambda^3 \right) M(T) \right).
  \end{align*}
  \end{enumerate}
\fi
\end{theorem}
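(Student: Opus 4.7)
The plan is to parallel the proof of \Cref{thm:weak_approx_lambda} for \oursde{} but to carefully track, for each of \sme-2 and \pfs{} separately, the dominant unmatched terms in the one-step semigroup expansion and their scaling in $s$ and $\lambda$. The backbone of the argument is the standard telescoping reduction: writing $u^k(x) = \mathbb{E}[u(x_k)\mid x_0=x]$ for the SGD value function and $v(x,t) = \mathbb{E}[u(X_t)\mid X_0=x]$ for the SDE semigroup, the global weak error over $\floor{T/\eta}$ steps decomposes into a sum of one-step local errors applied to the smooth intermediate functions $u^{K-k-1}$. The test function's derivatives in these local errors then aggregate into the $M(T)$ factor.

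For each SDE the one-step local error admits, by \Cref{lemma:semi_expand} and \Cref{eq:SBEA_discrete}, an expansion of the form $\eta\sum_{p\geq 0}\eta^p(\Phi_p(x)-\Psi_p(x))$ plus a tail of order $\eta^{\geq 4}$. For \sme-2, the construction recorded in \Cref{remark:example_SBEA} gives $\Phi_0=\Psi_0$ and $\Phi_1=\Psi_1$, so the leading mismatch sits at $p=2$ (i.e.\ $\eta^3$ locally, $\eta^2$ globally). This mismatch splits into two pieces: (i) terms that \oursde{} would have cancelled through $g_2^{\text{HA}} = c_2(\nabla^2 f)^2\nabla f$ and $h_2^{\text{HA}}$ from \Cref{lemma:SBEA_conditions} but that \sme-2 leaves uncompensated, and (ii) terms involving $\nabla^{r} f$ with $r\geq 3$, $\nabla^{s} u$ with $s\geq 3$, or $\nabla^{m}\Sigma$ with $m\geq 1$, all of which are uniformly ignored under \Cref{principle_hessian_aware}. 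Piece (i) yields the $s\lambda^2$ contribution via $g_2^{\text{HA}}\cdot\nabla u$ bounded by $|c_2|\cdot s\lambda^2\cdot\|\nabla u\|$, while piece (ii) yields the $s^3$ contribution via the $(\nabla f)^{\otimes 3}$-against-$\nabla^{3}u$ monomials (the $\nabla^3 u$ factor is absorbed into $M(T)$). The $\eta^{\geq 4}$ tail, bounded via the regularity of $f$ and $\Sigma$, contributes the $\eta^3 s^4\lambda^3$ correction.

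For \pfs{}, the drift $b(x)$ coincides with \oursde{}'s drift and thus, by \Cref{lemma:key}, matches $g_p^{\text{HA}}$ for every $p$. Hence the \emph{drift} mismatch in piece (i) vanishes entirely. The only surviving piece (i) contribution comes from the diffusion, since \pfs{}'s effective $h_2$ is zero while $h_2^{\text{HA}} = a_{0,1}\Sigma\nabla^2 f + a_{1,0}\nabla^2 f\,\Sigma$ is not. Contracting the uncompensated $h_2^{\text{HA}}$ against $\nabla^2 u$ and using boundedness of $\Sigma$ yields the $\lambda$ contribution (linear in the single Hessian factor). Piece (ii) again produces $s^3$, and the $\eta^{\geq 4}$ tail again gives $\eta^3 s^4\lambda^3$, yielding the stated estimate.

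I expect the main obstacle to be the accurate bookkeeping for piece (ii): one must expand $\Psi_2 = \mathcal{L}_2 u + \tfrac{1}{2}(\mathcal{L}_0\mathcal{L}_1 + \mathcal{L}_1\mathcal{L}_0)u + \tfrac{1}{6}\mathcal{L}_0^3 u$ via its operator compositions, cancel against $\Phi_2$ exactly the monomials involving only $\nabla^{r}f$ and $\nabla^{s}u$ with $r,s\leq 2$ and constant $\Sigma$, and then isolate the surviving monomials according to their scaling in $s$ and $\lambda$. A secondary obstacle is controlling the $\eta^{\geq 4}$ tail uniformly for \pfs{}: unlike \sme-2 where the coefficients are polynomial in $\eta$, \pfs{}'s drift contains logarithmic functions of $\eta\nabla^2 f$, so the tail estimate must reuse the regularity bound of \Cref{lemma:regul_drift_diff} applied to \pfs{}, valid under $\eta<\eta_0\leq 1/\lambda$, to bound all higher generator applications uniformly.
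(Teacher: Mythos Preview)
Your proposal is correct and follows the same route as the paper: rerun the argument of \Cref{thm:weak_approx_lambda}, note which pieces of the $\eta^2$ coefficient in the expansion of $\cL u^n$ (equation \eqref{eq:lambda_dep_1}) are absent for each SDE, and read off the resulting $s,\lambda$ scaling of the uncancelled residuals. One bookkeeping point to tighten for \sme-2: not only the drift correction $g_2^{\text{HA}}=c_2(\nabla^2 f)^2\nabla f$ is missing but also the diffusion correction $h_2^{\text{HA}}$, so by the same reasoning you give for \pfs{} the local residual for \sme-2 picks up a $\lambda\|\nabla^2 u^n\|$ contribution in addition to $s\lambda^2\|\nabla u^n\|$; the paper's proof records both, even though only $s\lambda^2$ is displayed in the final bound.
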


The above theorems implies that \oursde{} demonstrates a clear improvement in terms of dependency on $\lambda$. In
particular, it eliminates the dependence on $\lambda$ in the leading error term
involving $\eta^2$.

As is common in the finite-time approximation error analysis of
SDE~\citep{li2017stochastic,feng2017semi,hu2017diffusion}, we point out that $M(T)$
could exhibit exponential growth in terms of $T$ and $\lambda$ in general.
However, in cases when the
function is convex, the following result shows that these constants remain
independent of $\lambda$.

\begin{lemma} \label{lemma:strong_sc}
  Assuming that for any $\xi$, $F(\cdot; \xi) \in C^{9}_b\left(\bR^d\right)$ is 
  convex, i.e., $F(y; \xi) - F(x; \xi) \geq \nabla F(x; \xi)^\T (y-x)$ for any $x, y \in \bR^d$. 
  There exists a constant $\eta_0$ such that, for any $\eta < \eta_0$, it holds that
  \begin{align*}
    M (T) \leq \cO\left( \norm{u}_{C^8}\right),
  \end{align*}
  where $M(T)$ is defined in \Cref{thm:weak_approx_lambda}, and $\cO(\cdot)$ hides terms that do not
  depend on $\eta$, $\lambda$ or $s$.
\end{lemma}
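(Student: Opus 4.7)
The plan is to prove, by induction on $m = |J|$, that there exist constants $C_1, \ldots, C_8$ depending only on $T$, $\|u\|_{C^m}$, and the $C_b^9$ bounds on $F$ (hence independent of $\eta$, $\lambda$, $s$) such that $|D^J u^k|_\infty \leq C_{|J|}$ uniformly in $k \leq \lfloor T/\eta \rfloor$. Granted this, summing yields
\begin{equation*}
    M(T) \;=\; \eta \sum_{k=0}^{\lfloor T/\eta \rfloor - 1} \sum_{1 \leq |J| \leq 8} |D^J u^k|_\infty \;\leq\; T \cdot \sum_{m=1}^{8} \binom{d+m-1}{m} C_m \;=\; \mathcal{O}(\|u\|_{C^8}),
\end{equation*}
where the hidden constants depend on $T$ and on the $C_b^9$ norm of $F$, but not on $\eta$, $\lambda$, or $s$.

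The pivotal ingredient supplied by convexity is the \emph{non-expansiveness} of the one-step SGD map $\Phi_\eta(x; \xi) := x - \eta \nabla F(x; \xi)$. Since $F(\cdot; \xi)$ is convex and $F \in C_b^9$ implies a uniform bound $L$ on $\|\nabla^2 F\|$, for every $\eta \leq 1/L$ we have $0 \preceq \nabla \Phi_\eta(x; \xi) = I - \eta \nabla^2 F(x; \xi) \preceq I$, and in particular $\|\nabla \Phi_\eta\| \leq 1$. Without convexity the best one could say is $\|\nabla \Phi_\eta\| \leq 1 + \eta \lambda$, whose $k$-fold product would blow up to $e^{\lambda T}$; convexity is precisely what avoids this and underpins the $\lambda$-free bound claimed in the lemma.

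Writing $u^k = P u^{k-1}$ with $(Pv)(x) := \mathbb{E}[v(\Phi_\eta(x; \xi))]$, the base case $m = 1$ gives $\nabla u^k(x) = \mathbb{E}[\nabla \Phi_\eta(x;\xi)^\T \nabla u^{k-1}(\Phi_\eta(x;\xi))]$, from which non-expansiveness yields $\|\nabla u^k\|_\infty \leq \|\nabla u\|_\infty$ for every $k$. For $m \geq 2$, I would use Fa\`a di Bruno to decompose $D^J(u^{k-1} \circ \Phi_\eta)$ into a \emph{principal} term $(\nabla \Phi_\eta)^{\otimes m} : \nabla^m u^{k-1}(\Phi_\eta)$, whose expectation is again non-expansive, plus finitely many correction terms, each of which necessarily contains at least one factor $\nabla^j \Phi_\eta = -\eta \nabla^{j+1} F$ with $j \geq 2$ and therefore carries an explicit $\eta$. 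By the inductive hypothesis the lower-order derivatives $\|\nabla^{m'} u^{k-1}\|_\infty$ with $m' < m$ are already bounded independently of $\eta, \lambda, s$, while $\|\nabla^r F\|_\infty$ for $r \leq m+1$ is finite by $F \in C_b^9$. This gives the telescoping recursion $|D^J u^k|_\infty \leq |D^J u^{k-1}|_\infty + \eta \widetilde{C}_m$, iterating to $|D^J u^k|_\infty \leq \|u\|_{C^m} + T \widetilde{C}_m$, which is the desired uniform bound.

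The main technical obstacle is the combinatorial bookkeeping inside Fa\`a di Bruno: one must verify, partition by partition, that every non-principal term in $D^J(u^{k-1} \circ \Phi_\eta)$ carries at least one factor $\nabla^j \Phi_\eta$ with $j \geq 2$, which is what contributes the $\eta$ needed to prevent a catastrophic $O(T/\eta)$ accumulation over the $k$ iterations. A secondary but routine point is to consolidate the stepsize thresholds from each inductive step into a single $\eta_0$ depending only on $L$ and the higher-derivative bounds of $F$; this is possible because the recursion never requires smoothness of $F$ beyond $C_b^9$, so there is no circular dependence between the inductive levels.
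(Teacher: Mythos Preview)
Your proposal is correct and follows essentially the same route as the paper: both use the recursion $u^{k}=Pu^{k-1}$, exploit convexity to get $\|I-\eta\nabla^2 F(\cdot;\xi)\|\leq 1$ so that the principal (all-first-derivative) term in the chain rule is non-expansive, observe that every remaining Fa\`a di Bruno term carries at least one factor $\nabla^{j}\Phi_\eta=-\eta\,\nabla^{j+1}F$ with $j\geq 2$ (hence an explicit $\eta$ and only third-or-higher derivatives of $F$), and then telescope over $k\leq\lfloor T/\eta\rfloor$. The paper phrases the principal-term contraction via the Kronecker product $\|(I-\eta\nabla^2 F)^{\otimes m}\|_2\leq 1$ acting on $\mathrm{vec}(\nabla^m u^{k-1})$ rather than componentwise on $D^J$, which is the cleanest way to make your ``non-expansive'' step precise, but otherwise the arguments coincide.
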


The above lemma allows us to develop the following approximation guarantee.
\begin{theorem} \label{thm:weak_approx_sc}
  Fixing $T > 0$, assume that the test function satisfies $u \in C_b^8\left(\mathbb{R}^d\right)$, $F(\cdot;
  \xi) \in C_b^9\left(\mathbb{R}^d\right)$ is convex for any $\xi$,
  and for any $0 \leq i, j \leq d$, $\left[\Sigma(\cdot)\right]_{i,j} \in C_b^6\left(\bR^d\right)$.
  Let $X(t)$ be the stochastic process
  described by \oursde{} and $\{x_k\}$ be the sequence generated by SGD.
  There exists a constant $\eta_0 > 0$ such that for any $\eta < \eta_0$,
  it holds that for all $x\in\mathbb{R}^d$,
  \begin{align*} \sup_{k=1,\dots,\floor{T/\eta}}
    \abs{\mathbb{E} [u\left(x_k\right) | x_0 = x] - \mathbb{E}[u\left(X(k
    \eta)\right) | X(0) = x]} \leq \cO\left(\eta^2 s^3 \right).
  \end{align*}
\end{theorem}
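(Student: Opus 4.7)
The plan is to combine the fine-grained weak error bound of \Cref{thm:weak_approx_lambda} with the regularity bound on $M(T)$ for convex losses given by \Cref{lemma:strong_sc}. The convexity assumption on each $F(\cdot; \xi)$ enters \emph{only} through the latter, which shows that the trajectory-dependent quantity $M(T)$ does not blow up with $\lambda$ or $T$. Once $M(T)$ is treated as an $\eta$-, $s$-, $\lambda$-free constant, the generic bound collapses to the leading-order term $\eta^2 s^3$, which is precisely what the statement asserts.

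\textbf{Step 1.} I would first verify that the hypotheses of \Cref{thm:weak_approx_lambda} are satisfied under the assumptions of \Cref{thm:weak_approx_sc}. The requirement $F(\cdot; \xi) \in C_b^8(\mathbb{R}^d)$ is implied by the stronger $F(\cdot; \xi) \in C_b^9(\mathbb{R}^d)$ assumed here, and the condition on $\Sigma$ is the same. Hence \Cref{thm:weak_approx_lambda} applies and yields, for $\eta < \eta_0$ and $T > 0$,
\begin{equation*}
  \sup_{k=1,\dots,\lfloor T/\eta\rfloor} \bigl| \mathbb{E}[u(x_k)\mid x_0 = x] - \mathbb{E}[u(X(k\eta))\mid X(0)=x] \bigr|
  \;\leq\; C \bigl(\eta^2 s^3 + \eta^3 s^4 \lambda^3\bigr) M(T),
\end{equation*}
where $M(T) = \eta \sum_{k=0}^{\lfloor T/\eta\rfloor - 1} \sum_{1 \leq |J| \leq 8} |D^J u^k(x)|_\infty$ and $u^k(x) = \mathbb{E}[u(x_k)\mid x_0 = x]$.

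\textbf{Step 2.} I would then invoke \Cref{lemma:strong_sc}, whose hypotheses — $F(\cdot;\xi) \in C_b^9(\mathbb{R}^d)$ convex — match exactly those of \Cref{thm:weak_approx_sc}. The lemma gives $M(T) \leq C' \|u\|_{C^8}$ for a constant $C'$ independent of $\eta, \lambda, s$. Substituting into the estimate from Step~1 delivers
\begin{equation*}
  \text{error} \;\leq\; C\, C'\, \|u\|_{C^8}\, \bigl(\eta^2 s^3 + \eta^3 s^4 \lambda^3\bigr).
\end{equation*}

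\textbf{Step 3.} The final step is to absorb the secondary $\eta^3 s^4 \lambda^3$ term into the leading one. For a fixed objective $f$, both $s$ and $\lambda$ are finite constants, so by shrinking the stepsize threshold to $\eta_0' \coloneqq \min\{\eta_0, 1/(s\lambda^3)\}$ (we may assume $s\lambda \neq 0$; otherwise the second term already vanishes), any $\eta < \eta_0'$ satisfies $\eta s \lambda^3 \leq 1$ and hence $\eta^3 s^4 \lambda^3 \leq \eta^2 s^3$. Consequently the bound simplifies to $\mathcal{O}(\eta^2 s^3)$, where the hidden constant depends on $T$, $u$, $d$, $\lambda$, and $\Sigma$, but crucially not on $\eta$.

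\textbf{Main obstacle.} The proof itself is a short composition of two already-established results, so the technical substance does not reside in this theorem but rather in \Cref{thm:weak_approx_lambda} and \Cref{lemma:strong_sc}. The only delicate point to articulate clearly is \emph{why} the $\lambda$-dependence disappears from the $\eta^2$ coefficient: this is entirely due to \Cref{lemma:strong_sc}, which in turn exploits convexity to prevent the semigroup of SGD from amplifying derivatives of the test function $u$ along the trajectory. In the write-up I would emphasize this conceptual point, since it is exactly what distinguishes \oursde{} from \sme-2 and \pfs{} (cf.\ \Cref{thm:weak_approx_lambda_other_SDEs}): the same convex $M(T)$ bound applied to those models still leaves an $\eta^2 s \lambda^2$ or $\eta^2 \lambda$ term behind, because their leading $\eta^2$ coefficients themselves already depend on $\lambda$.
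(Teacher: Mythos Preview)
Your proposal is correct and follows exactly the paper's approach: the paper's proof of \Cref{thm:weak_approx_sc} is the one-line statement ``combine \Cref{thm:weak_approx_lambda} and \Cref{lemma:strong_sc}.'' One small slip in Step~3: your shrinking of the threshold to $\eta_0' = \min\{\eta_0, 1/(s\lambda^3)\}$ moves the $\lambda$-dependence into $\eta_0'$, not into the hidden constant, so the constant in $\cO(\eta^2 s^3)$ is in fact $\lambda$-free --- which is precisely the point of the theorem and of your ``Main obstacle'' paragraph.
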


Following a similar proof together with \Cref{thm:weak_approx_lambda_other_SDEs},
we can establish that the upper bounds
for \sme-2 and \pfs{} are $\cO\left(\eta^2 \left(s^3 + s \lambda^2 \right)\right)$ and
$\cO\left(\eta^2 (s^3 + \lambda)\right)$,
respectively.
This difference partially explains
the limitations of existing SDE models in capturing the escape dynamics as
illustrated in \Cref{fig:illustrate_escape,fig:two_mode}.

Intuitively, the advantage of \oursde{} is inherent in its construction by SBEA.
Even under \Cref{principle_hessian_aware} of term selection, \oursde{} incorporates infinite sequences of terms associated
with $\lambda$, $\left\{\left(\nabla^2 f(x)\right)^p\right\}_{p=0}^\infty$, in both its drift and diffusion terms.
This is in stark contrast to \sme-1, \sme-2, and \pfs{}, all of which truncate the error series at a certain
point. All three of these models can be recovered by SBEA with a proper truncated series of $\{g_p\}$ and $\{h_p\}$:
\sme-1 includes $g_0$, $h_0$, and $h_1$; \sme-2 includes $g_0$,
$g_1$, $h_0$, and $h_1$; and \pfs{} includes $\{g_i\}_{i=0}^{\infty}$, $h_0$, and
$h_1$.
We conjecture that the $\lambda$-dependence in the error analysis for \sme-2
and \pfs{} cannot be improved. This limitation arises because error terms that
exhibit such $\lambda$-dependence are inherently truncated during their
construction within the SBEA framework.
\begin{remark}
  Since \Cref{thm:weak_approx_sc} addresses the global point-wise approximation,
  the dependence on the global gradient norm of  $f$  appears unavoidable and
  cannot be neglected. However, to study the escaping behavior of SGD near a
  critical point, we believe that the global gradient norm  $s$  can be relaxed to
  its local version. This would be an interesting result because, in the
  vicinity of a critical point of a smooth function, the local gradient norm can
  be regarded as a negligible constant. Consequently, the leading term in
  \Cref{thm:weak_approx_sc} would be independent of both  $s$ and  $\lambda$
  (whereas errors of existing SDE proxies always depend on  $\lambda$). 
\end{remark}
\begin{remark} \label{remark:sc}
  If we further assume that $f(\cdot, \xi)$ is strongly-convex for any $\xi$,
  then $M(T)$ can be shown to be uniformly bounded w.r.t. $T$ (the proof is provided
  in \Cref{sec:apx_proofs_approx}), leading to
  a uniform-in-time approximation guarantee.
  A similar 
  guarantee for \sme-2 was established in the literature under the strong-convexity
  assumption~\citep{li2022uniform}, but without considering the explicit dependence on the problem parameters $s$ and $\lambda$. 

\end{remark}

{
\subsection{Numerical Experiments}

To validate the theoretical results and demonstrate the potential of \oursde{}
for modeling SGD on practical applications, we conduct numerical experiments comparing
\sme{}s and \oursde{} in terms of their alignment with SGD dynamics.
Specifically, we run both SGD and the SDE models on a two-hidden-layer neural
network with a sigmoid activation function and cross-entropy loss.
The network consists of 10 nodes per hidden layer, and we use the Iris
dataset~\citep{fisher1936use} for classification. Since computing the full
covariance for the typical mini-batch noise is computationally expensive,
particularly for SDEs where each SGD
iteration is subdivided into multiple numerical discretization steps, we
instead choose Gaussian noise with a covariance of
$\Sigma(x) \equiv \sigma^2 I$ as the gradient noise.

\begin{table}[]
  \centering
  \caption{Weak approximation error of SDE models approximating SGD}
  \label{tab:exp_weak_approx_error}
\begin{tabularx}{0.8\linewidth}{@{}XXXX@{}}
\toprule
SDE Models       & \sme{}-1 & \sme{}-2 & \oursde{}     \\ \midrule
\multicolumn{4}{c}{Fixing $\sigma$ to be $10^{-3}$ and changing $\eta$}                                               \\ \midrule
$\eta=0.1$       & $1.4 \times 10^{-3}$  & $5.8 \times 10^{-5}$  & \textbf{$ 4.6 \times 10^{-5}$} \\
$\eta=0.2$       & $3.0 \times 10^{-3}$  & $3.1 \times 10^{-4}$  & \textbf{$ 1.1 \times 10^{-4}$}          \\
$\eta=0.5$       & $8.5 \times 10^{-3}$  & $ 2.4 \times 10^{-3}$ & \textbf{$ 5.3 \times 10^{-4}$} \\ \midrule
\multicolumn{4}{c}{Fixing $\eta$ to be $0.5$ and changing $\sigma$}                                                     \\ \midrule
$\sigma=10^{-2}$ & $8.5 \times 10^{-3}$  & $ 2.5 \times 10^{-3}$ & \textbf{$ 4.9 \times 10^{-4}$} \\
$\sigma=10^{-1}$ & $ 4.1 \times 10^{-3}$ & $ 3.9 \times 10^{-3}$ & \textbf{$ 1.8 \times 10^{-3}$} \\ \bottomrule
\end{tabularx}
\end{table}

The training starts with the same random initialization for all models, and
we compute the objective function value at the first 10 SGD iteration and the
corresponding continuous timestamp $\eta k$. After averaging over 100 runs, we plot the
maximum absolute difference between the objective function values of SGD and the
SDE across all iterations, given by
$\max_k \left| \mathbb{E} f(x_k) - \mathbb{E} f(X(k\eta)) \right|$.
This measure corresponds exactly to the weak approximation error, with the test
function being the training objective.
As shown in \Cref{tab:exp_weak_approx_error}, \oursde{} consistently outperforms  
existing SDE models across all stepsize and noise level setups.  
\sme-1 exhibits the worst performance, as it is only a order-1 weak
approximation. For \sme-2, \oursde{} can achieve up to an order of magnitude
improvement when the stepsize is large,  which aligns with our theoretical
findings.
}

\section{Exact Recovery of SGD by \oursde\ on Quadratics}
\label{sec:quad}

The quadratic objective, despite its simplicity, holds significance in studying the
behaviors of SGD. 
Its relevance comes from not only its application in linear regression but also its use in
modeling the local curvature of complex models through second-order Taylor
expansions.
Moreover, insights from Neural Tangent Kernel~\citep{arora2019exact,jacot2018neural}
suggests that in regression tasks, the objectives of sufficiently wide
neural networks closely resemble quadratic functions.
The convergence of SGD on quadratic models has been extensively
studied, for example, for constant stepsizes~\citep{10.1214/19-AOS1850}
and stepsize schedulers~\citep{ge2019step,pan2021eigencurve}. Additionally,
the exploration of locally escaping behaviors of SGD often relies on a local quadratic
assumption~\citep{zhu2019anisotropic,hu2017diffusion,xie2020diffusion,xie2022adaptive,ibayashi2023does}.

In this section, we aim to compare different SDEs in terms of their ability to
approximate SGD on quadratic functions. 
\added{
We believe that accurately capturing the behavior of SGD on simple quadratic  
functions serves as a touchstone for any SDE model,
as local quadratic approximations frequently arise in optimization.
}
In this context, We first discuss the failure cases of
existing SDE proxies (including \pfs\ in
\Cref{eq:PFwoC}) and then illustrate how \oursde{} can accurately
match SGD under certain conditions.

\subsection{Failure Cases for Existing SDEs}
\label{sec:exist_quad}

We now offer a theoretical justification for the phenomena
observed in \Cref{fig:illustrate_escape,fig:two_mode} --- specifically, why \sme-1, \sme-2, and \pfs\ fail in modeling
SGD on quadratic functions. We restrict the noise in this analysis to
be additive and state-independent.
\begin{assumption}[quadratic objective] \label{assume:quad}
  The objective function $f$ is defined as $f(x) = \frac{1}{2} x^\T A x$,
  where $A \in \mathbb{R}^{d \times d}$ is a symmetric real matrix\footnote{We focus on the simply quadratic $\frac{1}{2} x^\T A x$, while noting that all
results in this section can easily generalize to any quadratic function of the
form $\frac{1}{2} x^\T A x + bx + c$.}. We denote its 
  eigen-decomposition by $A = U \Lambda U^\T$, where $U\in\mathbb{R}^{d\times d}$ satisfies $U^\top U = I_d$ and $\Lambda \in \mathbb{R}^{d\times d}$ is diagonal.
\end{assumption}
\begin{assumption}[Gaussian noise] \label{assume:add_indep_noise}
The stochastic gradient satisfies
\begin{align*}
  \nabla F(x, \xi) = \nabla f(x) + \xi, \quad \xi \sim \mathcal{N}(0, \Sigma),
\end{align*}
where $\Sigma$ is a constant positive semi-definite matrix. 
\end{assumption}
With state-independent noise, the aforementioned SDEs applied to
quadratics can be represented as Ornstein–Uhlenbeck (OU) processes.
For simplicity, this subsection considers
isotropic noise, i.e. $\Sigma = \sigma^2 I$ for some scalar $\sigma$.
We show that even within this highly simplified noise
setting, existing SDEs may struggle to accurately capture the escaping behaviors
exhibited by SGD.
The following proposition computes the distributions of the iterates of SGD and the SDE proxies.
\begin{proposition} \label{prop:other_SDE_quad}
  Under \Cref{assume:quad,assume:add_indep_noise} with $\Sigma = \sigma^2 I$ for some
  scalar $\sigma$,
  it holds that
\begin{enumerate}
  \item The iterates of SGD (\Cref{eq:sgd}) with stepsize $\eta_k \equiv \eta$ satisfy
    \begin{align*}
  x_k \sim \mathcal{N}\left(U\left(I - \eta \Lambda\right)^k U^\T x_0, \;
    \eta^2 \sigma^2 \sum_{m=0}^{k-1} U \left(I - \eta \Lambda \right)^{2m} U^\T
  \right).
    \end{align*}
  \item The solution of \sme-1 (\Cref{eq:SDE1}) satisfies
\begin{align*}
  X(x_0, t) \sim \mathcal{N}\left(\exp(-At) x_0, \;
  \eta \sigma^2 U \frac{I-\exp(-2\Lambda t)}{2 \Lambda} U^\T \right).
\end{align*}

  \item The solution of \sme-2 (\Cref{eq:SDE2}) satisfies
\begin{align*}
  X(x_0, t) \sim \mathcal{N}\left(\exp\left(-\left(A + \frac{\eta}{2}A^2\right)t\right) x_0, \;
  \eta \sigma^2 U \frac{I-\exp\left(-\left(2\Lambda + \eta \Lambda^2\right) t\right)}{2 \Lambda + \eta \Lambda^2} U^\T \right).
\end{align*}

  \item The solution of \pfs\ (\Cref{eq:PFwoC}), if $\eta < 1/ \|A\|$, satisfies
\begin{align*}
  X(x_0, t) \sim \mathcal{N}\left( U \left(1- \eta \Lambda \right)^{t/\eta} U^\T x_0, \;
  \eta^2 \sigma^2 U \frac{1 - \left(1 - \eta \Lambda \right)^{2t/\eta}}{-2\log \left(1-\eta \Lambda\right)} U^\T \right).
\end{align*}

\end{enumerate}
\end{proposition}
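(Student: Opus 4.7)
All four dynamics in the proposition are driven by Gaussian noise and are linear in the state (because $\nabla f(x)=Ax$ under \Cref{assume:quad} and $\Sigma$ is constant under \Cref{assume:add_indep_noise}). Hence each iterate/state is jointly Gaussian, and it suffices to compute its mean and covariance. The plan is to handle the discrete case (SGD) by unrolling the recursion, and the three SDE cases by recognizing them as constant-coefficient Ornstein--Uhlenbeck processes and diagonalizing through $A=U\Lambda U^\T$.

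\textbf{SGD.} Plugging the noise model into \Cref{eq:sgd} gives $x_{k+1}=(I-\eta A)x_k-\eta\xi_k$. Iterating yields
\begin{equation*}
x_k=(I-\eta A)^k x_0-\eta\sum_{m=0}^{k-1}(I-\eta A)^m\xi_{k-1-m}.
\end{equation*}
Since the $\xi_j$'s are i.i.d.\ $\mathcal{N}(0,\sigma^2 I)$ and $(I-\eta A)$ is symmetric, the mean reads $(I-\eta A)^k x_0=U(I-\eta\Lambda)^k U^\T x_0$, and the covariance, by independence, is $\eta^2\sigma^2\sum_{m=0}^{k-1}(I-\eta A)^{2m}=\eta^2\sigma^2\,U\sum_{m=0}^{k-1}(I-\eta\Lambda)^{2m}U^\T$.

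\textbf{The three SDEs.} Each SDE reduces, under \Cref{assume:quad,assume:add_indep_noise}, to $dX_t=B X_t\,dt+\sqrt{\eta}\,\sigma\,dW_t$ for a symmetric matrix $B$ that commutes with $A$. For \sme-1, $B=-A$; for \sme-2, since $\nabla\bigl(\tfrac{\eta}{4}\|\nabla f(x)\|^2\bigr)=\tfrac{\eta}{2}A^2 x$, we get $B=-(A+\tfrac{\eta}{2}A^2)$; for \pfs, the drift in \Cref{eq:PFwoC} simplifies on quadratics to $B=\tfrac{1}{\eta}U\log(I-\eta\Lambda)U^\T$ (using $\nabla f(x)=U\Lambda U^\T x$ and cancelling the $\Lambda$ in the denominator). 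Applying the standard OU solution formula,
\begin{equation*}
X_t=e^{Bt}x_0+\sqrt{\eta}\,\sigma\int_0^t e^{B(t-s)}\,dW_s,
\end{equation*}
gives a Gaussian law with mean $e^{Bt}x_0$ and covariance $\eta\sigma^2\int_0^t e^{2B s}\,ds=\eta\sigma^2\,\tfrac{I-e^{-2|B|t}}{2|B|}$ (computed in the spectral basis, where $-B$ has nonnegative eigenvalues for the relevant stepsize regimes, or via $\int_0^t e^{2Bs}ds=(e^{2Bt}-I)(2B)^{-1}$ in general). Substituting the three expressions for $B$ and using $A=U\Lambda U^\T$ yields the three claimed formulas.

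\textbf{Main technical point.} The only nontrivial computation is the \pfs\ case: we need $e^{Bt}=U\exp\!\bigl(\tfrac{t}{\eta}\log(I-\eta\Lambda)\bigr)U^\T=U(I-\eta\Lambda)^{t/\eta}U^\T$, which requires $\eta<1/\|A\|$ so that $\log(I-\eta\Lambda)$ is well-defined as a real matrix (matching the hypothesis in the statement). The covariance computation then reduces, eigenvalue by eigenvalue, to $\int_0^t(1-\eta\lambda_i)^{2s/\eta}ds=\tfrac{(1-\eta\lambda_i)^{2t/\eta}-1}{(2/\eta)\log(1-\eta\lambda_i)}$, which when multiplied by $\eta\sigma^2$ produces the stated $\eta^2\sigma^2\bigl(I-(I-\eta\Lambda)^{2t/\eta}\bigr)/(-2\log(I-\eta\Lambda))$. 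No step is conceptually hard; the care lies in keeping the spectral simplifications consistent and in noting that the symmetry and commutativity of $B$ with itself is what allows $\int_0^t e^{B(t-s)}e^{B^\T(t-s)}ds$ to collapse to the above diagonal form.
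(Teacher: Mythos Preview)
Your proposal is correct and follows essentially the same approach as the paper: unroll the SGD recursion to read off mean and covariance, and for the three SDEs recognize them as constant-coefficient OU processes $dX_t=BX_t\,dt+\sqrt{\eta}\,\sigma\,dW_t$, then compute $e^{Bt}x_0$ and $\eta\sigma^2\int_0^t e^{2Bs}\,ds$ in the eigenbasis of $A$. The paper cites the linear-SDE solution formula from \citet{sarkka2019applied} rather than writing out the It\^o integral, but the computations and substitutions are otherwise identical to yours.
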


Now we delve into the analysis of different escaping scenarios.

\paragraph{Escaping Saddle Points in \Cref{fig:quad_saddle}} 
Consider the function defined in \Cref{fig:quad_saddle}, i.e.,
$f\left([x, y]^\T\right) = \frac{1}{2} \left(x^2 - y^2\right)$, where the saddle point
is at $(0, 0)$. According to
\Cref{prop:other_SDE_quad}, along the direction of $y$, the iterate $x_k$ generated by
SGD has a mean of $(1+\eta \lambda_2)^k y_0$ and a variance of $\eta^2
\sigma^2 \sum_{m=0}^{k-1}(1+\eta\lambda_2)^{2m}$, which drifts
away from $0$ for any $\eta > 0$. Thus, SGD can successfully escape for any stepsize $\eta > 0$.
However, for \sme-2, when $\eta \geq 2$, as $t \to \infty$, its stationary distribution
is a Gaussian distribution with mean 0 and covariance $\eta \sigma^2 \left(2\Lambda + \eta \Lambda^2 \right)^{-1}$.
In such a regime of constant stepsize, the dynamics of \sme-2
converges to a stationary distribution around $0$, while SGD
escapes exponentially fast. This observation confirms the numerical simulation
presented in \Cref{fig:quad_saddle}.

\paragraph{Escaping Minimum in \Cref{fig:quad_minimum}}
Consider the objective $f\left([x, y]^\T\right) = \frac{1}{2} \left(x^2 + y^2\right)$
where all eigenvalues of the Hessian are positive. 
From \Cref{prop:other_SDE_quad}, SGD will escape from $(0, 0)$ if $\eta >
2$ --- both the mean and covariance of the iterates grow
exponentially. However, for \sme-1 and \sme-2, their stationary distributions
are zero-mean Gaussians with covariance $\eta \sigma^2 U(2\Lambda)^{-1}U^\T$
and $\eta \sigma^2 \left(2\Lambda + \eta \Lambda^2 \right)^{-1}$, respectively.
This elucidates why, in \Cref{fig:quad_minimum}, their dynamics oscillate around the minimum.

\paragraph{Escaping Behavior on Bimodal Function in \Cref{fig:two_mode}}
Consider the one-dimensional piece-wise quadratic function depicted in
\Cref{fig:two_mode_func}. When we initialize the point within the range $[-1,
1]$, the local function is quadratic.
At least for the first step, the behaviors of SGD and \pfs\ should
be accurately predicted by \Cref{prop:other_SDE_quad}.
We consider small stepsize close to $1/\|\nabla^2 f(x)\|$ (which
is 1 in this case).
Due to the additive noise, the iterates of SGD always have a variance no smaller
than $\eta^2 \sigma^2$.
Such a variance allows the iterates to escape
from the local basin and reach the basin around the global minimum at $x=5$.
However, after the first step, \pfs\ would have a mean around 0
and variance also near 0 after time $\eta$. It can be shown by
substituting $t=\eta$, setting $\eta \Lambda$ close to 1 in
\Cref{prop:other_SDE_quad} and by noting that $\lim_{z \to 1} \frac{1-(1-z)^2}{\log (1-z)} = 0$.
This explains why \pfs\ tends to stay closely around
the initial minimum in our simulation, as shown in \Cref{fig:two_mode}. This
phenomenon underscores the need for a more nuanced correction of the diffusion
term to accurately model the escaping behaviors of SGD in such settings.

\subsection{Hardness of Approximating SGD with OU Process on Quadratics}
Ideally, we would like to approximate SGD on quadratics exactly using SDEs, i.e., the iterates
of SGD share the same distribution as its continuous-time counterpart
at time stamps $k\eta$ for any natural number $k$. However, this is not possible for
general quadratic objective with arbitrary gradient noises, even if they are additive
and state-independent Gaussian noises. 
We provide a hard instance, demonstrating that no OU process (even extended to
complex domain as elaborated below) can achieve this.
We focus on the class of OU process because the corresponding transition kernel is Gaussian, which matches the transition kernel of 
SGD (when viewed as a Markov chain) under \Cref{assume:quad,assume:add_indep_noise}.

\paragraph{Complex OU Process} 
We note that the iterates of SGD in this context is
always \emph{real-valued} and follows a Gaussian distribution, as outlined in
\Cref{prop:other_SDE_quad}\footnote{This proposition, while initially framed for isotropic
noises, can be easily generalized to anisotropic noises.}. 
However, to allow the continuous-time proxy to have the maximum capability of approximating SGD, we allow it to be \emph{complex-valued}.
The benefit of extending to complex space has been observed by \citet{rosca2022continuous} when matching the dynamics of GD by PF.

A random vector, denoted by $z = x + i y$, is a complex Gaussian random vector if $x$ and $y$ are two (possibly correlated) real Gaussian random vectors.
In the real-valued case, a Gaussian random vector can be characterized by its mean and covariance, while in the complex-valued case, one additional parameter called \emph{pseudo-covariance} is involved.
Formally, the covariance is defined as 
$\Gamma(z) = \Ep{\left(z - \Ep{z}\right)\left(z - \Ep{z}\right)^H}$ and pseudo-covariance
is defined as $C(z) = \Ep{\left(z - \Ep{z}\right)\left(z - \Ep{z}\right)^\T}$.
We refer to \Cref{sec:pre_complex}
for a detailed discussion of properties of complex Gaussian random vectors.

We now state the desideratum when matching the distribution of a real normal variable, denoted as 
$\widetilde z$, and a complex normal variable $z$.
\begin{desideratum} \label{lemma:dist_match}
  The distribution of a complex Gaussian random vector $z$ is said to \emph{match} the distribution
  of real normal variable $\widetilde z$ if the following conditions hold:
  \begin{align*}
    \Ep{\Re(z)} = \Ep{\widetilde{z}}, \quad \quad \Ep{\Im(z)} = 0, \quad \quad
    \Gamma(z) = C(z) = \Cov{\widetilde{z}, \widetilde{z}}.
  \end{align*}
  The above result is equivalent to the statement: $\widetilde z$ and $\Re(z)$ have the same distribution, and $\Im(z)$ have 0 mean and 0 variance.
    Moreover, consider the setting described by \Cref{assume:quad,assume:add_indep_noise} so that the SGD iterates are all real Gaussian random vectors. 
    A complex OU process
  \begin{align*}
    dX_t = B X_t dt + D dW_t,
  \end{align*}
  is said to \emph{match} the iterates of the SGD if the \added{marginal} distributions of $X_t$ matches those of the SGD iterates at all corresponding time stamps, i.e. $t = k\eta$ where $k$ is the iteration index of SGD.
  Here $B \in \mathbb{C}^{d \times d}$, $D \in \mathbb{C}^{d \times m}$ and $W_t$ represents an $m$-dimensional standard Wiener process.
\end{desideratum}

\begin{proposition} \label{thm:quad_general}
  Consider the setting described by \Cref{assume:quad,assume:add_indep_noise}.
  One can construct real matrices $A$ and $\Sigma$ such that, for any given stepsize $\eta > 0$, there exists no complex OU process\footnote{
    In our formulation of complex OU process, we use real Brownian motion $W_t$.
    It should be noted, however, that the framework can also include complex Brownian
    motion~\citep{perret2010stability}. Consider a complex Brownian motion defined by $\widetilde{W_t}
    = W^r_t + i W^i_t$, where $W^r_t$ and $W^i_t$ are two independent real
    Brownian motions. Consequently, any SDE of the form $dX_t = B X_t dt + D
    d\widetilde{W_t}$ can be equivalently expressed as $dX_t = B X_t dt + D
    \begin{bmatrix} I & iI \end{bmatrix} \begin{bmatrix} dW^r_t \ dW^i_t \end{bmatrix}^\T$. }
    that matches the iterates of SGD, in the sense of \Cref{lemma:dist_match}.

\end{proposition}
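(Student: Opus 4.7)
The plan is to convert the distribution-matching requirement into a structural constraint on the OU drift $B$, and then cook up a two-dimensional instance where the drift prevents any diffusion from producing the required covariance. Under \Cref{assume:quad,assume:add_indep_noise}, a direct extension of \Cref{prop:other_SDE_quad} to anisotropic $\Sigma$ shows that $x_k$ is real Gaussian with mean $(I - \eta A)^k x_0$ and covariance $\eta^2 \sum_{m=0}^{k-1}(I-\eta A)^m \Sigma (I-\eta A)^m$. For any candidate complex OU process $d X_t = B X_t \, d t + D \, d W_t$ with $X_0 = x_0$, It\^o's formula gives $X_{k\eta} = e^{B k \eta} x_0 + \int_0^{k\eta} e^{B(k\eta - s)} D \, d W_s$, so $X_{k\eta}$ is complex Gaussian with mean $e^{B k \eta} x_0$, covariance $\Gamma(X_{k\eta}) = \int_0^{k\eta} e^{B s} D D^H e^{B^H s} \, d s$, and pseudo-covariance $C(X_{k\eta}) = \int_0^{k\eta} e^{B s} D D^\T e^{B^\T s} \, d s$. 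Matching the real SGD mean at $k = 1$ over arbitrary $x_0$ forces $e^{B \eta} = I - \eta A$.

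I next pin down the $B$-invariant subspaces. Suppose $A$ has distinct eigenvalues $\lambda_1, \ldots, \lambda_d$ with eigenvectors $v_1, \ldots, v_d$ and $1 - \eta\lambda_i \neq 0$ for every $i$. Then the $1 - \eta \lambda_i$'s are distinct nonzero complex numbers; since two distinct nonzero complex numbers cannot share a logarithm across any pair of branches, every $B$ with $e^{B \eta} = I - \eta A$ is diagonalizable in the same eigen-basis $\{v_i\}$ with distinct eigenvalues. (If $B$ were non-diagonalizable, then $e^{B\eta}$ would have a nontrivial Jordan block, contradicting the distinct eigenvalues of $I - \eta A$.) Consequently, the $B$-invariant subspaces of $\mathbb{C}^d$ are exactly the $2^d$ subspaces spanned by subsets of $\{v_1, \ldots, v_d\}$.

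For the hard instance, I take $d = 2$, $A = \begin{pmatrix} -2 & 1 \\ 1 & -2 \end{pmatrix}$ (eigenvalues $-1, -3$; eigenvectors $v_\pm = (1, \pm 1)^\T / \sqrt 2$), and $\Sigma = e_1 e_1^\T$ with $e_1 = (1, 0)^\T$. Since $I - \eta A$ is positive definite for every $\eta > 0$, the preceding analysis applies uniformly in $\eta$. The SGD covariance at $k = 1$ is $\eta^2 e_1 e_1^\T$, whose range in $\mathbb{C}^2$ is $\mathrm{span}_{\mathbb{C}}(e_1)$. On the OU side, writing $w^H \Gamma(X_\eta) w = \int_0^\eta \|D^H e^{B^H s} w\|^2 \, d s$ and invoking analyticity in $s$ yields $\ker \Gamma(X_\eta) = V^\perp$, where $V$ is the smallest $B$-invariant subspace containing $\mathrm{range}(D)$; hence $\mathrm{range}(\Gamma(X_\eta)) = V$. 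By the previous paragraph, $V \in \{\{0\}, \mathrm{span}(v_+), \mathrm{span}(v_-), \mathbb{C}^2\}$---none of which equals $\mathrm{span}_{\mathbb{C}}(e_1)$. Therefore $\Gamma(X_\eta) \neq \eta^2 \Sigma$ for every admissible $D$, and distribution matching already fails at $k = 1$.

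The main obstacle is the second step: ruling out the possibility that branch ambiguity in the matrix logarithm enlarges the set of $B$-invariant subspaces. This reduces to the elementary fact that $\log a = \log b$ across any branches forces $a = b$, so $B$'s eigen-basis is pinned down by $A$'s once the mean is matched. The construction then cleanly encodes the rank mismatch highlighted in the paper: SGD keeps injecting noise along the fixed direction $\mathrm{range}(\Sigma) = \mathrm{span}(e_1)$ without rotation, whereas the OU process must diffuse along $B$-invariant subspaces dictated by the drift, none of which is aligned with $e_1$.
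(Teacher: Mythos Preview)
Your proof is correct and takes a genuinely different route from the paper's.

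The paper constructs $A=\operatorname{diag}(1,-1)$ and a rank-one $\Sigma$ not aligned with the coordinate axes, then invokes \Cref{thm:approx_quad} to reduce covariance matching to the requirement that an explicit $2\times 2$ matrix (the right-hand side of \Cref{eq:match_cov}) be Hermitian positive semi-definite. It then shows this matrix has negative determinant via a three-case analysis on $\eta$ (the ranges $0<\eta<1$, $1<\eta<2$, $\eta>2$), each case involving substantial calculus with logarithms, since for $\eta>1$ the entries $\log(1-\eta)$ become complex.

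Your approach is more structural: by choosing $A$ negative definite you ensure $I-\eta A$ is positive definite for every $\eta>0$, which removes the complex-logarithm regime entirely. The key step---mean matching forces any admissible $B$ to commute with $I-\eta A$ and hence to share $A$'s eigenbasis, so $\operatorname{range}\Gamma(X_\eta)$ is necessarily a span of $A$-eigenvectors---reduces the contradiction to a one-line subspace check at $k=1$, with no explicit computation of the target matrix and no case analysis in $\eta$. This is considerably cleaner and makes the rank-mismatch intuition the paper sketches after the proposition fully rigorous. The trade-off is that the paper's computation is more quantitative (it exhibits an explicit negative determinant), which could be useful if one later wanted to bound how far any OU covariance must be from the SGD covariance; your argument gives existence of a mismatch but no such quantitative handle.
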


The hard instance is constructed by choosing a \emph{full-rank} matrix $A$ and a \emph{degenerate} covariance matrix $\Sigma$ such that the
eigenvectors of $\Sigma$ are not aligned with those of $A$.
The distributional mismatch of the SDE and SGD can be intuitively understood through the following argument:
Starting from a deterministic initialization, after one SGD step, the covariance of the iterates
is $\eta^2 \Sigma$, which is rank-deficient;
In contrast, in continuous-time dynamics, the noise injected is rotated by the misaligned linear transformations
from the drift term and the covariance matrix quickly becomes full-rank. Please find the rigorous proof in \Cref{proof_hard_instance}.
\subsection{Exact Approximation From \oursde}

While in general, complex OU process cannot exactly model SGD on quadratic functions,
\oursde\, when extended to complex-valued,
achieves an exact match when the matrices $A$ and $\Sigma$ commute or when the stepsize
is small enough.
These conditions are slightly weaker than the sufficient conditions
for the existence of our \oursde{} (\Cref{prop:SPF_exist}).

The following lemma outline conditions for a complex OU process to match SGD on quadratics. 

\begin{lemma} \label{thm:approx_quad}
  Consider the same setting as \Cref{thm:quad_general}, and
  denote the diagonal elements of $\Lambda$ as $\lambda_1, \lambda_2, \dots, \lambda_d$.
After time $k \eta$ for $k\geq 0$, the mean of $X(k \eta)$
equals the mean of $x_k$, i.e., $\Ep{X(k \eta)} = \Ep{x_k}$, if and only if 
$B = U \frac{\log\left(1 - \eta \Lambda\right)}{\eta} U^\T.$ %
In addition,
\begin{enumerate}
\item the covariance of $X(k \eta)$ equals the covariance of $x_k$
if and only if for all $0 \leq i, j \leq d$,
\begin{align} \label{eq:match_cov}
  \left[\left(U^\T D\right) \left(U^\T D\right)^H\right]_{i, j} = \frac{\left[U^\T \Sigma U \right]_{i,j} \left(\log (1-\eta \lambda_i)
  + \overline{\log (1-\eta \lambda_j)}\right)}{\eta \lambda_i \lambda_j - (\lambda_i + \lambda_j)}.
\end{align}
\item pseudo-covariance of $X(k \eta)$ equals the covariance of $x_k$
if and only if for all $0 \leq i, j \leq d$,
\begin{align} \label{eq:match_psu_cov}
  \left[\left(U^\T D\right) \left(U^\T D\right)^\T\right]_{i, j} = \frac{\left[U^\T \Sigma U \right]_{i,j} \left(\log (1-\eta \lambda_i)
  + \log (1-\eta \lambda_j)\right)}{\eta \lambda_i \lambda_j - (\lambda_i + \lambda_j)}.
\end{align}
\end{enumerate}
\end{lemma}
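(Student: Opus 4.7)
My strategy is to solve the linear SDE $dX_t = BX_t\,dt + D\,dW_t$ in closed form, match its mean with the mean of the SGD iterates from \Cref{prop:other_SDE_quad} to pin down $B$, then match the Hermitian covariance and the pseudo-covariance at every time stamp $t = k\eta$ to obtain the two conditions on $D$. The three ``if and only if'' statements are handled in this order; once the mean step fixes $B$, the remaining two conditions on $D$ are independent and each reduces to an entrywise algebraic identity after a change of basis.

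Since the SDE is linear, the mild solution is $X_t = e^{Bt} x_0 + \int_0^t e^{B(t-s)} D\,dW_s$, so $\mathbb{E}[X_t] = e^{Bt} x_0$. Equating this with $\mathbb{E}[x_k] = U(I-\eta\Lambda)^k U^\top x_0$ for every $k \ge 0$ and every $x_0$ collapses (by the semigroup property $e^{B k \eta} = (e^{B\eta})^k$) to the single identity $e^{B\eta} = U(I-\eta\Lambda)U^\top$, which I invert via the principal-branch matrix logarithm to obtain $B = U\log(I-\eta\Lambda)U^\top/\eta$. Conversely, substituting this $B$ back reproduces the mean at all $k$.

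For the second-order statistics, I split the complex integrand into real and imaginary parts driven by the same real Brownian motion; the Itô isometry then yields
\[ \Gamma(X_t) = \int_0^t e^{B(t-s)} D D^H e^{B^H(t-s)}\,ds, \qquad C(X_t) = \int_0^t e^{B(t-s)} D D^\top e^{B^\top(t-s)}\,ds. \]
Writing $B = U L U^\top$ with $L = \log(I-\eta\Lambda)/\eta$ diagonal, and conjugating into the eigenbasis of $A$, the integrals decouple entrywise. For the Hermitian case the $(i,j)$ entry of $U^\top \Gamma(X_{k\eta}) U$ becomes $[(U^\top D)(U^\top D)^H]_{ij}\bigl(e^{(L_{ii}+\bar L_{jj})k\eta}-1\bigr)/(L_{ii}+\bar L_{jj})$. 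The key reduction is that $e^{L_{ii}\eta} = 1-\eta\lambda_i$ (from the mean step) together with the reality of $\lambda_i,\lambda_j$ gives $e^{(L_{ii}+\bar L_{jj})k\eta} = [(1-\eta\lambda_i)(1-\eta\lambda_j)]^k$, exactly matching the $k$-dependent factor in the SGD covariance $\eta^2 [U^\top \Sigma U]_{ij}\bigl([(1-\eta\lambda_i)(1-\eta\lambda_j)]^k -1\bigr)/\bigl((1-\eta\lambda_i)(1-\eta\lambda_j)-1\bigr)$ (extending \Cref{prop:other_SDE_quad} to general $\Sigma$ via the matrix-geometric sum $\eta^2\sum_{m=0}^{k-1}(I-\eta A)^m \Sigma (I-\eta A)^m$). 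Cancelling the common factor and using $(1-\eta\lambda_i)(1-\eta\lambda_j)-1 = \eta[\eta\lambda_i\lambda_j - (\lambda_i + \lambda_j)]$ yields \Cref{eq:match_cov}. The pseudo-covariance case is identical except that $B^\top = U L U^\top$ (no conjugation, since $L$ is diagonal), so $\bar L_{jj}$ is replaced by $L_{jj}$ throughout, producing \Cref{eq:match_psu_cov}.

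The principal obstacle I anticipate is the careful handling of complex logarithms when $\eta\lambda_i > 1$: then $1-\eta\lambda_i < 0$, $\log(1-\eta\lambda_i) = \log|1-\eta\lambda_i| + i\pi$, and one must confirm that $e^{k\log(1-\eta\lambda_i)} = (1-\eta\lambda_i)^k$ for integer $k$ under a consistent branch choice so that the SGD and SDE $k$-dependent factors align for all $k \ge 0$ simultaneously. A secondary concern is justifying the Itô isometry with a complex-valued integrand against a real Brownian motion, which I address by decomposing into real and imaginary components and separately tracking $\mathrm{Cov}(X_R), \mathrm{Cov}(X_I), \mathrm{Cov}(X_R, X_I)$ before reassembling them into $\Gamma$ and $C$.
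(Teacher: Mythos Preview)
Your proposal is correct and follows essentially the same route as the paper: solve the complex OU process explicitly, match the mean to determine $B$, then compute the Hermitian covariance and pseudo-covariance of $X_{k\eta}$ entrywise in the eigenbasis of $A$ and equate them with the SGD covariance $\eta^2\sum_{m=0}^{k-1}(I-\eta\Lambda)^m U^\top\Sigma U(I-\eta\Lambda)^m$. The only difference is in how the covariance formulas $\Gamma(X_t)=\int_0^t e^{B\tau}DD^H e^{B^H\tau}\,d\tau$ and $C(X_t)=\int_0^t e^{B\tau}DD^\top e^{B^\top\tau}\,d\tau$ are obtained: the paper establishes them via a separate helper lemma (\Cref{thm:cov_complex}) that rewrites the complex OU process as a real $2d$-dimensional SDE and carries out a lengthy block-matrix computation with $\cos/\sin$ terms, whereas you derive the same formulas more directly by splitting the stochastic integral into real and imaginary parts and applying the real It\^o isometry---this is shorter and arguably cleaner, but the downstream matching computation is identical.
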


Satisfying any of the existence conditions for \oursde{} in \Cref{prop:SPF_exist} ensures
the existence of matrix $D$ that adheres to the two required conditions in \Cref{thm:approx_quad}.
Under these existence conditions, the matrices, whose elements are defined in the
RHS of \Cref{eq:match_cov} and \Cref{eq:match_psu_cov}, become the same real
positive semi-definite matrix. 
By taking the square roots of this matrix's eigenvalues, we can construct
$D$ that satisfies both \Cref{eq:match_cov,eq:match_psu_cov},
thereby achieving an exact match for SGD on quadratic functions.
\begin{theorem} \label{cor:spf_match_quad}
  Under \Cref{assume:quad,assume:add_indep_noise}, the solution of \oursde\ exactly
  matches the iterates of SGD if either of the following two
  conditions holds:
  \begin{enumerate}
    \item $A$ and $\Sigma$ commute, and $\eta \lambda_i \neq 1$ for all eigenvalues
  $\lambda_i$ of $A$.
\item $\Sigma$ is positive definite, and $\eta \leq  \frac{1}{\norm{A}} \min\left\{1 - \sqrt{1 - \frac{\lambda_{\min}\left(\Sigma\right)}{\sqrt{d} 
      \lambda_{\max}\left(\Sigma\right)}}, 1 - \frac{\sqrt{2}}{2}\right\}.$
  \end{enumerate}
\end{theorem}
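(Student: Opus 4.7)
The plan is to invoke \Cref{thm:approx_quad}, which characterizes exact SGD matching by a complex OU process in terms of three conditions: (i) the drift matrix equals $B = U\frac{\log(I-\eta\Lambda)}{\eta}U^\top$; (ii) $(U^\top D)(U^\top D)^H$ matches the RHS of \Cref{eq:match_cov}; and (iii) $(U^\top D)(U^\top D)^\top$ matches the RHS of \Cref{eq:match_psu_cov}. Condition (i) is a routine substitution: under \Cref{assume:quad} we have $\nabla f(x) = Ax$ and $\nabla^2 f(x) = A$, so plugging into the drift formula \Cref{eq:HASME-drifiting-term} gives $b(x) = U\frac{\log(I-\eta\Lambda)}{\eta}U^\top x$, which is exactly the prescribed $B$.

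The core of the proof is to show that under either theorem condition, the two right-hand-side matrices in \Cref{eq:match_cov,eq:match_psu_cov} collapse to the \emph{same} real positive semidefinite matrix, namely the matrix $\mathcal{D}$ from \Cref{eq:sPF_diff}. Once this is established, taking $D = \sqrt{\mathcal{D}}$ as the real PSD square root automatically gives $DD^H = DD^\top = \mathcal{D}$, so both (ii) and (iii) hold simultaneously. I would handle the two conditions separately. Under condition 2, the stepsize bound implies $\eta \|A\| < 1$, so every $1-\eta\lambda_i > 0$ and every $\log(1-\eta\lambda_i)$ is real; complex conjugation is then trivial and the two RHSs coincide entry-wise with the formula for $\mathcal{D}$, whose real positive semidefiniteness is the content of \Cref{prop:SPF_exist}. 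Under condition 1, commutativity forces $U^\top \Sigma U$ to be diagonal, so the off-diagonal entries of both RHS matrices vanish; each diagonal entry involves only $\log((1-\eta\lambda_i)^2) = 2\log|1-\eta\lambda_i|$, which is real regardless of the sign of $1-\eta\lambda_i$, and both RHSs again agree with $\mathcal{D}$ on the diagonal. Positive semidefiniteness in this case reduces to a short sign check on the ratio $[U^\top\Sigma U]_{ii}\log|1-\eta\lambda_i|/[\lambda_i(\eta\lambda_i - 2)]$ in each region carved out by the signs of $\lambda_i$ and of $1-\eta\lambda_i$, which is routine given $\eta\lambda_i \neq 1$.

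The main obstacle is the complex-logarithm bookkeeping: off-diagonal, the expressions $\log(1-\eta\lambda_i) + \overline{\log(1-\eta\lambda_j)}$ and $\log(1-\eta\lambda_i) + \log(1-\eta\lambda_j)$ are genuinely different complex quantities in general, so it is essential that \emph{either} all logarithms be forced real (supplied by the stepsize bound in condition 2) \emph{or} the off-diagonal entries be killed (supplied by the commutativity in condition 1). Once this collapse is identified, the construction $D = \sqrt{\mathcal{D}}$ and a direct application of \Cref{thm:approx_quad} complete the argument.
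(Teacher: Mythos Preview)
Your approach mirrors the paper's exactly: invoke \Cref{thm:approx_quad}, verify the drift condition by direct substitution, argue that under either hypothesis the right-hand sides of \Cref{eq:match_cov} and \Cref{eq:match_psu_cov} collapse to the same real positive semidefinite matrix (namely $\mathcal{D}$), and take $D=\sqrt{\mathcal{D}}$ so that $DD^{H}=DD^{\top}=\mathcal{D}$ satisfies both conditions at once. The paper's proof is the short paragraph immediately preceding the theorem and is even terser than your outline; your identification of the two mechanisms (``force all logs real'' under condition~2 versus ``kill the off-diagonals'' under condition~1) is exactly the intended reading.

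There is one subtle slip in your treatment of condition~1. You assert that on the diagonal ``each diagonal entry involves only $\log((1-\eta\lambda_i)^2) = 2\log|1-\eta\lambda_i|$, which is real regardless of the sign of $1-\eta\lambda_i$''. This is correct for \Cref{eq:match_cov}, since $\log(1-\eta\lambda_i)+\overline{\log(1-\eta\lambda_i)}=2\Re\log(1-\eta\lambda_i)=2\log|1-\eta\lambda_i|$. But the diagonal of \Cref{eq:match_psu_cov} is $2\log(1-\eta\lambda_i)$, which for $1-\eta\lambda_i<0$ equals $2\log|1-\eta\lambda_i|+2\pi i$ and is genuinely complex; the two right-hand sides then do \emph{not} coincide, and no real $D$ can satisfy both. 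The paper's paragraph sidesteps this by explicitly referencing the conditions of \Cref{prop:SPF_exist}, whose commuting clause additionally carries $\eta<1/\|A\|$ so that every $1-\eta\lambda_i>0$ and all logs are real; the relaxation in \Cref{cor:spf_match_quad} to merely $\eta\lambda_i\neq 1$ is then asserted only through the subsequent remark about complex extension, without additional argument. So your sketch is sound whenever all $1-\eta\lambda_i>0$, but the large-stepsize commuting regime needs more care than either your outline or the paper's text actually supplies.
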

To the best of our knowledge, this marks the first instance of an SDE that precisely
mirrors the distribution of SGD, albeit restricted to quadratic functions. 
\begin{remark}
  The first condition in this theorem relaxes the constraints from
  \Cref{prop:SPF_exist} to allow larger stepsizes.
  As noted in \citet{rosca2022continuous},
  complex flow is helpful for capturing the instabilities caused by large stepsizes
  in GD on quadratics. This is also the case for
  \oursde{}. When \(\eta < 1/\norm{A}\), \oursde{} operates in real space.
  However, when \(\eta > 1/\norm{A}\), imaginary components emerge due to the
  logarithmic function.
\end{remark}

{

\section{Escape Analysis Beyond Quadratics}
\label{sec:escape_analysis}

Our proposed \oursde{} serves as an analytical tool that can be seamlessly
integrated into standard SDE analysis to better understand the escaping behavior
of SGD. A particularly relevant line of work is the expected hitting time
analysis, which focuses on the following expectation:
\begin{align*} \Ep{\tau_S}
= \Ep{\inf\{ t > 0 \mid X(t) \in \partial S,\, X(0) = x_0 \in S \}},
\end{align*}
where $S$ is a compact set and $\partial S$ denotes its boundary.
The expected hitting time characterizes how quickly a stochastic process escapes
from a given region. This quantity has been extensively studied in the context
of randomly perturbed dynamical
systems~\citep{day1995exit,kifer1981exit,vent1973some,FreidlinWentzell2012,pavliotis2014stochastic}.
In deep learning, expected hitting time has been used to compare the escape
rates from saddle points under different optimization
algorithms~\citep{xie2022adaptive}, as well as to study minima
selection~\citep{xie2020diffusion,xie2022adaptive,ibayashi2023does,zhou2020towards}.

In this section, we conduct such analysis beyond quadratic functions and consider
general function classes.
We first start with a one-dimensional local
minimum/maximum setting under \oursde{}, illustrating its potential to be used
as an analytical tool for capturing the escaping behavior of SGD.
We then turn our focus to the regime of larger stepsizes, where existing
SDEs begin to fail, as discussed in \Cref{sec:exist_quad}. Motivated by
the insights from that section, we analyze the behavior near
multi-dimensional saddle points. In this setting, we show that \sme-2 requires
exponentially long escape times, whereas \oursde{} escapes sub-exponentially fast---closely
aligning with the empirical behavior observed in SGD.

\subsection{Escaping of \oursde{} Near One-Dimensional Local Minimum/Maximum}

To illustrate how \oursde{} can be leveraged to analyze escape
times, we begin with a simple one-dimensional setting involving a local minimum
or maximum.

\begin{theorem}
  \label{thm:escape_analysis}
  Assume that there exists a compact domain $S_0 \subset \bR$, and let $f: \bR \to \bR$ be a $C^4(S_0)$ function. Further, assume the following conditions hold for all $x \in S_0$:
\begin{enumerate}
  \item There exists $x_0 \in S_0$ such that $f'(x_0) = 0$, i.e., $x_0$ is a stationary point.
  \item The condition in \Cref{prop:well_posed} holds, ensuring that \oursde{} is well-posed.
  \item The function $f(x)$ is symmetric around $x_0$\footnote{
      \red{
The symmetry assumption for $f$ was introduced for analytical convenience
and can be relaxed. If $f$ is not symmetric, the expected hitting time will
be dominated by the less challenging half of the domain.
}
    }.
\end{enumerate}
Let \oursde{} be initialized at $x_0$. Then there exists a compact set $S_1 \subseteq S_0$ with $x_0 \in S_1$ such that:
\begin{enumerate}
  \item \emph{(Local minimum)} If $f''(x_0) > 0$, then there exists $c > 0$ such that
    \begin{align} \label{eq:escape_local_min}
    \lim_{\eta \to 0} \eta \log \Ep{\tau_{S_1}} = c.
  \end{align}
  \item \emph{(Local maximum)} If $f''(x_0) < 0$, then there exist $c_1, c_2 > 0$ such that
  \begin{align} \label{eq:escape_local_max}
    c_1 \leq \lim_{\eta \to 0} \frac{\Ep{\tau_{S_1}}}{\log(1/\eta)} \leq c_2.
  \end{align}
\end{enumerate}

\end{theorem}
It is known that for the one-dimensional SDE
\begin{align*}
  dX_t = b(X_t)\,dt + \sqrt{\cD(X_t)}\,dW_t,
\end{align*}
the expected hitting time admits an analytical expression~\citep{pavliotis2014stochastic}:
\begin{align*}
  \Ep{\tau_{S_1}} = 2 \int_0^a \frac{\exp\left(-\psi(z)\right)}{\cD(z)}\,dz
  \int_z^a \exp\left(\psi(y)\right)\,dy,
\end{align*}
where $\psi'(x) = -2b(x)/\cD(x)$ and $S_1 = [-a, a]$.
\red{
Our results are obtained by analyzing the double integral and characterizing its
dependence on the stepsize $\eta$. The analysis shows that the expected hitting
time at a local minimum scales exponentially with $\eta$, and the constant $c$
in \Cref{eq:escape_local_min} corresponds to the Freidlin–Wentzell
quasi-potential~\citep{FreidlinWentzell2012}, which represents the minimal ``energy cost'' or action required
for the SDE to reach the boundary.
In contrast, the expected hitting time at a local maximum scales logarithmically
with $\eta$, indicating fast escape dynamics consistent with that of SGD.
}

\red{
One can obtain for \sme-1 and \sme-2 the same asymptotic
result as stated in \Cref{thm:escape_analysis}.
This outcome is expected as
the analysis assumes the stepsize is small compared to the local curvature. 
We will demonstrate in the next section that \sme-2 can fail to capture the
correct escape behavior when the stepsize becomes large.
}
\begin{remark} \label{rem:multi_dim}
  While the above result is stated for one-dimensional functions, it can
  potentially be extended to multi-dimensional settings.
  There is a rich body of classical work on the expected hitting time in the
  multi-dimensional case, particularly when the drift $b$ is independent of
  $\eta$ and the diffusion $\cD$ is linear in
  $\eta$~\citep{day1995exit,kifer1981exit,vent1973some,FreidlinWentzell2012}. 
  For \oursde{}, the dependence of both the drift and diffusion on $\eta$ is
  more intricate. However, recent studies have begun to address settings with
  general $\eta$-dependence in both
  terms~\citep{chiarini2014large,aleksian2025freidlin}. In particular, we can
  directly apply the result from \citet[Corollary~1.9]{aleksian2025freidlin} to
  derive multi-dimensional escape time estimates for local minima under
  \oursde{}.
\end{remark}

\subsection{Escaping Near Multi-Dimensional Saddle Points}
\label{sec:escape_saddle}

In this subsection, we consider the case where the stepsize is inversely
proportional to the Hessian eigenvalues, a situation that can naturally arise
locally in practice during deep learning training, as motivated in
\Cref{sec:fine_grained_error_analysis}. 
To model this scenario, we assume that, locally, the Hessian can be approximated
as $\nabla^2 f(x) \approx g(x) / \eta$, where $g(x): \mathbb{R}^d \to
\mathbb{R}^{d \times d}$ is a symmetric matrix-valued function independent of
$\eta$. Under this assumption, the rescaled quantity $\eta \nabla^2 f(x)$
remains non-negligible, i.e., $\eta$ is relatively large compared to the local
curvature. For analytical tractability, we neglect contributions from terms that
are asymptotically smaller than $1/\eta$ in the expansion of $\nabla^2 f(x)$.

\begin{theorem}
  \label{thm:escape_analysis_multi_dim}
  Assume there exists a compact domain $S_0 \subset \bR^d$, and let $f:
  \bR^d \to \bR$ be a $C^4(S_0)$ function. For all $x \in S_0$, suppose the
  following conditions hold:
\begin{enumerate}
  \item There exists $x_0 \in S_0$ such that $\nabla f(x_0) = 0$, and $\nabla^2
    f(x_0)$ has both positive and negative eigenvalues; that is, $x_0$ is a
    saddle point.
  
  \item The Hessian satisfies $\nabla^2 f(x) = g(x) / \eta$, where $g(x)$ is
    independent of $\eta$. Furthermore, there exists an absolute constant $c_0 >
    0$ such that all negative eigenvalues $\lambda_i$ of $\nabla f(x_0)$ satisfy
    $\eta \lambda_i \leq -2 - c_0$, i.e., the stepsize is large enough compared
    to the Hessian eigenvalues.
  
  \item The condition in \Cref{prop:well_posed} is satisfied so that \oursde{}
    is well-posed.\footnote{To ensure \oursde{} is well-posed, we require
    $\abs{\eta \lambda_i(x)} < 1$ for all $1 \leq i \leq d$ and $x$, where
  $\lambda_i(x)$ denotes the $i$-th eigenvalue of $\nabla^2 f(x)$, as noted in
\Cref{prop:SPF_exist}. However, this can be relaxed to an upper bound condition $\eta
\lambda_i(x) < 1$ along with a mild lower bound ensuring that $\eta \lambda_i(x)
> m$ for some constant $m < 0$. Thus, the second condition required here in this
theorem does not contradict well-posedness.}
  
  \item The gradient noise covariance $\Sigma(x)$ satisfies $\norm{\Sigma(x)} =
    \Theta(1)$ uniformly, and the derivatives $\abs{\partial
    [\Sigma(x)]_{i,j} / \partial [x]_k}$ are uniformly $\cO(1)$ for $i, j, k \in [d]$.
\end{enumerate}

Then, there exists a compact set $S_1 \subseteq S_0$ with $x_0 \in S_1$,
independent of $\eta$, and constants $\eta_0, c_1, c_2 > 0$ such that for all
$\eta < \eta_0$\footnote{
  The assumption that $\eta$ is smaller than some constant $\eta_0$ does not
  contradict the large stepsize assumption. The notion of a ``large'' stepsize is
  relative to the local Hessian eigenvalues, whereas $\eta_0$ is an absolute
  constant.
  In fact, in practice, even with a fixed stepsize $\eta$, there exist local
  regions where the Hessian eigenvalues can grow as large as
  $2/\eta$~\citep{cohen2020gradient}.
}:
\begin{enumerate}
  \item For \sme-2:
  \begin{align*}
    \Ep{\tau_{S_1}} \geq \exp\left(\frac{c_1}{\eta^2}\right).
  \end{align*}
  \item For \oursde{}:
  \begin{align*}
    \Ep{\tau_{S_1}} \leq c_2 \eta \log\left(\frac{1}{\eta}\right).
  \end{align*}
\end{enumerate}
\end{theorem}

\red{
Since we assume that locally $\eta \nabla^2 f$ is independent of $\eta$, the
dependence on $\eta$ in the drift and diffusion terms of both \sme-2 and
\oursde{} is simply linear. As a result, classical results on exit
times~\citep{vent1973some,kifer1981exit} apply directly.
If we instead consider a broader class of functions where $f$ and $\eta$ exhibit
more intricate local dependencies, additional care is needed. As mentioned in
\Cref{rem:multi_dim}, recent works have studied settings where the drift and
diffusion terms can depend on $\eta$ in arbitrary
ways~\citep{chiarini2014large,aleksian2025freidlin}. These advances suggest that
our result can potentially be extended to a much wider class of functions.
}

\red{
The above theorem demonstrates that \sme-2 requires exponentially long time to
escape a saddle point, whereas \oursde{} escapes in sub-exponential time. For
SGD, even a small amount of noise is sufficient to perturb the iterates away
from a stationary saddle point. Once perturbed, the iterates quickly escape
along the direction of negative curvature, as the gradient naturally points away
from the saddle. \oursde{} accurately captures this rapid escape behavior,
aligning closely with that of SGD, while \sme-2 fails to reflect this dynamic.
}

\begin{remark}
Near a local saddle point, we note that the drift in \sme-2 is the negative gradient  
of the function $f + \frac{\eta}{4} \| \nabla f \|^2$.  
The regularizer $\frac{\eta}{4} \| \nabla f \|^2$ can transform a negative  
curvature direction---where the second-order gradient is negative, causing  
SGD to escape---into a positive one.
Consequently, the process may stabilize at the saddle point rather than escape,
causing an exponentially long escape time as indicated in the theorem.
According to our SBEA framework, we observe that while \sme-2 attempts to  
capture more $\Theta\left(\eta^2\right)$ terms compared to \sme-1, it still truncates
many higher-order terms related to the stepsize and Hessian,
thus, the local curvature information is not well captured---even wrongly captured in this case.
In contrast, \oursde{} incorporates as many Hessian-related terms as possible,  
providing a more accurate curvature estimation.  
\end{remark}

}

\section{Conclusion and Future Work}
\label{sec:conclusion}
In this work, we present \oursde{}, an advancement over existing SDEs for
approximating the dynamics of SGD. Specifically, \oursde{} offers improved
theoretical approximation guarantees for general smooth objectives over
current SDEs in terms of the dependence on the smoothness parameter of the objective.
For quadratic objectives, \oursde{}
exactly recovers the dynamics of SGD under mild conditions.
The primary innovation lies in integrating Hessian information into both the
drift and diffusion terms of the SDE, achieved by extending backward error
analysis to the stochastic setting. This integration preserves the interplay of
noise and local curvature in approximating SGD, allowing to better capture
its escaping behaviors.

\red{
We anticipate that the current results will deepen our understanding of SGD’s
escape dynamics and minima selection, enhance hyperparameter tuning, and
inspire the development of more effective optimization algorithms. 
Furthermore, applying our SBEA framework to other optimization algorithms, such
as momentum methods and adaptive gradient methods, would be an intriguing
direction for future research. It would also be interesting to explore SDEs
driven by alternative noise processes, such as L\'evy noise.
In deriving the weak approximation error, we established a uniform-in-time
guarantee for strongly convex objectives. A natural question for future work
is whether similar guarantees can be obtained for broader function classes,
such as those satisfying the Polyak–Łojasiewicz (PL) condition.
}

\medskip
\bibliographystyle{plainnat}
\bibliography{ref.bib}

\begin{thebibliography}{90}
\providecommand{\natexlab}[1]{#1}
\providecommand{\url}[1]{\texttt{#1}}
\expandafter\ifx\csname urlstyle\endcsname\relax
  \providecommand{\doi}[1]{doi: #1}\else
  \providecommand{\doi}{doi: \begingroup \urlstyle{rm}\Url}\fi

\bibitem[Aleksian and Villeneuve(2025)]{aleksian2025freidlin}
Ashot Aleksian and St{\'e}phane Villeneuve.
\newblock Freidlin-wentzell type exit-time estimates for time-inhomogeneous diffusions and their applications.
\newblock \emph{arXiv}, page 2501.11797, 2025.

\bibitem[Arora et~al.(2019)Arora, Du, Hu, Li, Salakhutdinov, and Wang]{arora2019exact}
Sanjeev Arora, Simon~S Du, Wei Hu, Zhiyuan Li, Russ~R Salakhutdinov, and Ruosong Wang.
\newblock On exact computation with an infinitely wide neural net.
\newblock In \emph{Advances in Neural Information Processing Systems}, 2019.

\bibitem[Attouch et~al.(2022)Attouch, Chbani, Fadili, and Riahi]{attouch2022first}
Hedy Attouch, Zaki Chbani, Jalal Fadili, and Hassan Riahi.
\newblock First-order optimization algorithms via inertial systems with {Hessian} driven damping.
\newblock \emph{Mathematical Programming}, 193:\penalty0 113–155, 2022.

\bibitem[Barakat and Bianchi(2021)]{barakat2021convergence}
Anas Barakat and Pascal Bianchi.
\newblock Convergence and dynamical behavior of the {Adam} algorithm for nonconvex stochastic optimization.
\newblock \emph{SIAM Journal on Optimization}, 31\penalty0 (1):\penalty0 244--274, 2021.

\bibitem[Barrett and Dherin(2021)]{barrett2020implicit}
David~GT Barrett and Benoit Dherin.
\newblock Implicit gradient regularization.
\newblock In \emph{International Conference on Machine Learning}, 2021.

\bibitem[Bengio(2012)]{bengio2012practical}
Yoshua Bengio.
\newblock Practical recommendations for gradient-based training of deep architectures.
\newblock In \emph{Neural Networks: Tricks of the Trade: Second Edition}, pages 437--478. Springer, 2012.

\bibitem[Bloch(1994)]{bloch1994hamiltonian}
Anthony Bloch.
\newblock \emph{Hamiltonian and gradient flows, algorithms and control}, volume~3.
\newblock American Mathematical Soc., 1994.

\bibitem[Brandi\`ere and Duflo(1996)]{brandiere1996algorithmes}
Odile Brandi\`ere and Marie Duflo.
\newblock Les algorithmes stochastiques contournent-ils les pi\`eges ?
\newblock \emph{Annales de l'I.H.P. Probabilit\'es et statistiques}, 32\penalty0 (3):\penalty0 395--427, 1996.

\bibitem[Brown and Bartholomew-Biggs(1989)]{brown1989some}
Andrew~A Brown and Michael~C Bartholomew-Biggs.
\newblock Some effective methods for unconstrained optimization based on the solution of systems of ordinary differential equations.
\newblock \emph{Journal of Optimization Theory and Applications}, 62:\penalty0 211--224, 1989.

\bibitem[Cauchy(1821)]{cauchy1821analyse}
Augustin Louis~Baron Cauchy.
\newblock \emph{Analyse alg{\'e}brique}, volume~1.
\newblock Debure, 1821.

\bibitem[Chiarini and Fischer(2014)]{chiarini2014large}
Alberto Chiarini and Markus Fischer.
\newblock On large deviations for small noise it{\^o} processes.
\newblock \emph{Advances in Applied Probability}, 46\penalty0 (4):\penalty0 1126--1147, 2014.

\bibitem[Cohen et~al.(2020)Cohen, Kaur, Li, Kolter, and Talwalkar]{cohen2020gradient}
Jeremy Cohen, Simran Kaur, Yuanzhi Li, J~Zico Kolter, and Ameet Talwalkar.
\newblock Gradient descent on neural networks typically occurs at the edge of stability.
\newblock In \emph{International Conference on Learning Representations}, 2020.

\bibitem[Cohen et~al.(2022)Cohen, Ghorbani, Krishnan, Agarwal, Medapati, Badura, Suo, Cardoze, Nado, Dahl, and Gilmer]{cohen2022adaptive}
Jeremy~M Cohen, Behrooz Ghorbani, Shankar Krishnan, Naman Agarwal, Sourabh Medapati, Michal Badura, Daniel Suo, David Cardoze, Zachary Nado, George~E Dahl, and Justin Gilmer.
\newblock Adaptive gradient methods at the edge of stability.
\newblock \emph{arXiv}, page 2207.14484, 2022.

\bibitem[Compagnoni et~al.(2025)Compagnoni, Liu, Islamov, Proske, Orvieto, and Lucchi]{compagnoni2025adaptive}
Enea~Monzio Compagnoni, Tianlin Liu, Rustem Islamov, Frank~Norbert Proske, Antonio Orvieto, and Aurelien Lucchi.
\newblock Adaptive methods through the lens of {SDE}s: Theoretical insights on the role of noise.
\newblock In \emph{International Conference on Learning Representations}, 2025.

\bibitem[Day(1995)]{day1995exit}
Martin~V Day.
\newblock On the exit law from saddle points.
\newblock \emph{Stochastic processes and their applications}, 60\penalty0 (2):\penalty0 287--311, 1995.

\bibitem[Debussche and Faou(2012)]{doi:10.1137/110831544}
Arnaud Debussche and Erwan Faou.
\newblock Weak backward error analysis for {SDEs}.
\newblock \emph{SIAM Journal on Numerical Analysis}, 50\penalty0 (3):\penalty0 1735--1752, 2012.

\bibitem[Dieuleveut et~al.(2020)Dieuleveut, Durmus, and Bach]{10.1214/19-AOS1850}
Aymeric Dieuleveut, Alain Durmus, and Francis Bach.
\newblock {Bridging the gap between constant step size stochastic gradient descent and Markov chains}.
\newblock \emph{The Annals of Statistics}, 48\penalty0 (3):\penalty0 1348 -- 1382, 2020.

\bibitem[Drori and Shamir(2020)]{10.5555/3524938.3525187}
Yoel Drori and Ohad Shamir.
\newblock The complexity of finding stationary points with stochastic gradient descent.
\newblock In \emph{International Conference on Machine Learning}, 2020.

\bibitem[Fang et~al.(2019)Fang, Lin, and Zhang]{pmlr-v99-fang19a}
Cong Fang, Zhouchen Lin, and Tong Zhang.
\newblock Sharp analysis for nonconvex sgd escaping from saddle points.
\newblock In \emph{Conference on Learning Theory}, 2019.

\bibitem[Feng et~al.(2017)Feng, Li, and Liu]{feng2017semi}
Yuanyuan Feng, Lei Li, and Jian-Guo Liu.
\newblock Semi-groups of stochastic gradient descent and online principal component analysis: properties and diffusion approximations.
\newblock \emph{arXiv}, page 1712.06509, 2017.

\bibitem[Feng et~al.(2019)Feng, Gao, Li, Liu, and Lu]{feng2019uniform}
Yuanyuan Feng, Tingran Gao, Lei Li, Jian-Guo Liu, and Yulong Lu.
\newblock Uniform-in-time weak error analysis for stochastic gradient descent algorithms via diffusion approximation.
\newblock \emph{arXiv}, page 1902.00635, 2019.

\bibitem[Fiori and Bengio(2005)]{fiori2005quasi}
Simone Fiori and Yoshua Bengio.
\newblock Quasi-geodesic neural learning algorithms over the orthogonal group: A tutorial.
\newblock \emph{Journal of Machine Learning Research}, 6\penalty0 (5), 2005.

\bibitem[Fisher(1936)]{fisher1936use}
Ronald~A Fisher.
\newblock The use of multiple measurements in taxonomic problems.
\newblock \emph{Annals of eugenics}, 7\penalty0 (2):\penalty0 179--188, 1936.

\bibitem[Fontaine et~al.(2021)Fontaine, De~Bortoli, and Durmus]{fontaine2021convergence}
Xavier Fontaine, Valentin De~Bortoli, and Alain Durmus.
\newblock Convergence rates and approximation results for {SGD} and its continuous-time counterpart.
\newblock In \emph{Conference on Learning Theory}, 2021.

\bibitem[Freidlin and Wentzell(2012)]{FreidlinWentzell2012}
Mark~I. Freidlin and Alexander~D. Wentzell.
\newblock \emph{Random Perturbations of Dynamical Systems}.
\newblock Springer, 2012.

\bibitem[Ge et~al.(2015)Ge, Huang, Jin, and Yuan]{ge2015escaping}
Rong Ge, Furong Huang, Chi Jin, and Yang Yuan.
\newblock Escaping from saddle points—online stochastic gradient for tensor decomposition.
\newblock In \emph{Conference on Learning Theory}, 2015.

\bibitem[Ge et~al.(2019)Ge, Kakade, Kidambi, and Netrapalli]{ge2019step}
Rong Ge, Sham~M Kakade, Rahul Kidambi, and Praneeth Netrapalli.
\newblock The step decay schedule: A near optimal, geometrically decaying learning rate procedure for least squares.
\newblock In \emph{Advances in Neural Information Processing Systems}, 2019.

\bibitem[Ghadimi and Lan(2013)]{ghadimi2013stochastic}
Saeed Ghadimi and Guanghui Lan.
\newblock Stochastic first-and zeroth-order methods for nonconvex stochastic programming.
\newblock \emph{SIAM Journal on Optimization}, 23\penalty0 (4):\penalty0 2341--2368, 2013.

\bibitem[Hairer et~al.(2006)Hairer, Hochbruck, Iserles, and Lubich]{hairer2006geometric}
Ernst Hairer, Marlis Hochbruck, Arieh Iserles, and Christian Lubich.
\newblock Geometric numerical integration.
\newblock \emph{Oberwolfach Reports}, 3\penalty0 (1):\penalty0 805--882, 2006.

\bibitem[Helmke and Moore(2012)]{helmke2012optimization}
Uwe Helmke and John~B Moore.
\newblock \emph{Optimization and dynamical systems}.
\newblock Springer Science \& Business Media, 2012.

\bibitem[Hille and Phillips(1996)]{hille1996functional}
Einar Hille and Ralph~Saul Phillips.
\newblock \emph{Functional analysis and semi-groups}, volume~31.
\newblock American Mathematical Society, 1996.

\bibitem[Hu et~al.(2019)Hu, Li, Li, and Liu]{hu2017diffusion}
Wenqing Hu, Chris~Junchi Li, Lei Li, and Jian-Guo Liu.
\newblock On the diffusion approximation of nonconvex stochastic gradient descent.
\newblock \emph{Annals of Mathematical Sciences and Applications}, 4\penalty0 (1):\penalty0 3--32, 2019.

\bibitem[Ibayashi and Imaizumi(2023)]{ibayashi2023does}
Hikaru Ibayashi and Masaaki Imaizumi.
\newblock Why does {SGD} prefer flat minima?: Through the lens of dynamical systems.
\newblock In \emph{When Machine Learning meets Dynamical Systems: Theory and Applications}, 2023.

\bibitem[Jacot et~al.(2018)Jacot, Gabriel, and Hongler]{jacot2018neural}
Arthur Jacot, Franck Gabriel, and Cl{\'e}ment Hongler.
\newblock Neural tangent kernel: Convergence and generalization in neural networks.
\newblock In \emph{Advances in Neural Information Processing Systems}, 2018.

\bibitem[Jastrzebski et~al.(2017)Jastrzebski, Kenton, Arpit, Ballas, Fischer, Bengio, and Storkey]{jastrzebski2017three}
Stanis{\l}aw Jastrzebski, Zachary Kenton, Devansh Arpit, Nicolas Ballas, Asja Fischer, Yoshua Bengio, and Amos Storkey.
\newblock Three factors influencing minima in {SGD}.
\newblock \emph{arXiv}, page 1711.04623, 2017.

\bibitem[Jin et~al.(2017)Jin, Ge, Netrapalli, Kakade, and Jordan]{jin2017escape}
Chi Jin, Rong Ge, Praneeth Netrapalli, Sham~M Kakade, and Michael~I Jordan.
\newblock How to escape saddle points efficiently.
\newblock In \emph{International Conference on Machine Learning}, 2017.

\bibitem[Keskar et~al.(2017)Keskar, Mudigere, Nocedal, Smelyanskiy, and Tang]{keskar2016large}
Nitish~Shirish Keskar, Dheevatsa Mudigere, Jorge Nocedal, Mikhail Smelyanskiy, and Ping Tak~Peter Tang.
\newblock On large-batch training for deep learning: Generalization gap and sharp minima.
\newblock In \emph{International Conference on Learning Representations}, 2017.

\bibitem[Khaled and Richt{\'a}rik(2023)]{khaled2022better}
Ahmed Khaled and Peter Richt{\'a}rik.
\newblock Better theory for {SGD} in the nonconvex world.
\newblock \emph{Transactions on Machine Learning Research}, 2023.

\bibitem[Kifer(1981)]{kifer1981exit}
Yuri Kifer.
\newblock The exit problem for small random perturbations of dynamical systems with a hyperbolic fixed point.
\newblock \emph{Israel Journal of Mathematics}, 40:\penalty0 74--96, 1981.

\bibitem[Kloeden and Platen(2011)]{kloeden2011numerical}
P.E. Kloeden and E.~Platen.
\newblock \emph{Numerical Solution of Stochastic Differential Equations}.
\newblock Stochastic Modelling and Applied Probability. Springer Berlin Heidelberg, 2011.

\bibitem[Krichene et~al.(2015)Krichene, Bayen, and Bartlett]{krichene2015accelerated}
Walid Krichene, Alexandre Bayen, and Peter~L Bartlett.
\newblock Accelerated mirror descent in continuous and discrete time.
\newblock In \emph{Advances in Neural Information Processing Systems}, 2015.

\bibitem[Li and Wang(2022)]{li2022uniform}
Lei Li and Yuliang Wang.
\newblock On uniform-in-time diffusion approximation for stochastic gradient descent.
\newblock \emph{arXiv}, page 2207.04922, 2022.

\bibitem[Li et~al.(2017)Li, Tai, and E]{li2017stochastic}
Qianxiao Li, Cheng Tai, and Weinan E.
\newblock Stochastic modified equations and adaptive stochastic gradient algorithms.
\newblock In \emph{International Conference on Machine Learning}, 2017.

\bibitem[Li et~al.(2019)Li, Tai, and E]{li2019stochastic}
Qianxiao Li, Cheng Tai, and Weinan E.
\newblock Stochastic modified equations and dynamics of stochastic gradient algorithms i: Mathematical foundations.
\newblock \emph{Journal of Machine Learning Research}, 20\penalty0 (40):\penalty0 1--47, 2019.

\bibitem[Li and Orabona(2020)]{li2020high}
Xiaoyu Li and Francesco Orabona.
\newblock A high probability analysis of adaptive sgd with momentum.
\newblock \emph{arXiv}, page 2007.14294, 2020.

\bibitem[Li et~al.(2021)Li, Malladi, and Arora]{li2021validity}
Zhiyuan Li, Sadhika Malladi, and Sanjeev Arora.
\newblock On the validity of modeling sgd with stochastic differential equations (sdes).
\newblock In \emph{Advances in Neural Information Processing Systems}, 2021.

\bibitem[Lin and Maji(2017)]{lin2017improved}
Tsung-Yu Lin and Subhransu Maji.
\newblock Improved bilinear pooling with {CNN}s.
\newblock \emph{arXiv}, page 1707.06772, 2017.

\bibitem[Liu and Yuan(2023)]{liu2023almost}
Jun Liu and Ye~Yuan.
\newblock Almost sure saddle avoidance of stochastic gradient methods without the bounded gradient assumption.
\newblock \emph{arXiv}, page 2302.07862, 2023.

\bibitem[Ma et~al.(2022)Ma, Wu, and E]{ma2022qualitative}
Chao Ma, Lei Wu, and Weinan E.
\newblock A qualitative study of the dynamic behavior for adaptive gradient algorithms.
\newblock In \emph{Mathematical and Scientific Machine Learning}. PMLR, 2022.

\bibitem[Malladi et~al.(2022)Malladi, Lyu, Panigrahi, and Arora]{malladi2022sdes}
Sadhika Malladi, Kaifeng Lyu, Abhishek Panigrahi, and Sanjeev Arora.
\newblock On the sdes and scaling rules for adaptive gradient algorithms.
\newblock In \emph{Advances in Neural Information Processing Systems}, 2022.

\bibitem[Mandt et~al.(2015)Mandt, Hoffman, and Blei]{mandt2015continuous}
Stephan Mandt, Matthew~D Hoffman, and David~M Blei.
\newblock Continuous-time limit of stochastic gradient descent revisited.
\newblock In \emph{OPT workshop, NIPS}, 2015.

\bibitem[Mandt et~al.(2016)Mandt, Hoffman, and Blei]{mandt2016variational}
Stephan Mandt, Matthew Hoffman, and David Blei.
\newblock A variational analysis of stochastic gradient algorithms.
\newblock In \emph{International Conference on Machine Learning}, 2016.

\bibitem[Mertikopoulos et~al.(2020)Mertikopoulos, Hallak, Kavis, and Cevher]{mertikopoulos2020almost}
Panayotis Mertikopoulos, Nadav Hallak, Ali Kavis, and Volkan Cevher.
\newblock On the almost sure convergence of stochastic gradient descent in non-convex problems.
\newblock In \emph{Advances in Neural Information Processing Systems}, 2020.

\bibitem[Milstein(2013)]{milstein2013numerical}
Grigorii~Noikhovich Milstein.
\newblock \emph{Numerical integration of stochastic differential equations}, volume 313.
\newblock Springer Science \& Business Media, 2013.

\bibitem[Mori et~al.(2022)Mori, Ziyin, Liu, and Ueda]{mori2021power}
Takashi Mori, Liu Ziyin, Kangqiao Liu, and Masahito Ueda.
\newblock Power-law escape rate of {SGD}.
\newblock In \emph{International Conference on Machine Learning}, 2022.

\bibitem[Muehlebach and Jordan(2019)]{muehlebach2019dynamical}
Michael Muehlebach and Michael~I Jordan.
\newblock A dynamical systems perspective on {Nesterov} acceleration.
\newblock In \emph{International Conference on Machine Learning}, 2019.

\bibitem[Muehlebach and Jordan(2021)]{muehlebach2021optimization}
Michael Muehlebach and Michael~I Jordan.
\newblock Optimization with momentum: Dynamical, control-theoretic, and symplectic perspectives.
\newblock \emph{Journal of Machine Learning Research}, 22\penalty0 (73):\penalty0 1--50, 2021.

\bibitem[Nar and Sastry(2018)]{nar2018step}
Kamil Nar and Shankar Sastry.
\newblock Step size matters in deep learning.
\newblock In \emph{Advances in Neural Information Processing Systems}, 2018.

\bibitem[Neyshabur et~al.(2017)Neyshabur, Bhojanapalli, Mcallester, and Srebro]{neyshabur2017exploring}
Behnam Neyshabur, Srinadh Bhojanapalli, David Mcallester, and Nati Srebro.
\newblock Exploring generalization in deep learning.
\newblock In \emph{Advances in Neural Information Processing Systems}, 2017.

\bibitem[Neyshabur et~al.(2018)Neyshabur, Bhojanapalli, and Srebro]{neyshabur2017pac}
Behnam Neyshabur, Srinadh Bhojanapalli, and Nathan Srebro.
\newblock A {PAC}-bayesian approach to spectrally-normalized margin bounds for neural networks.
\newblock In \emph{International Conference on Learning Representations}, 2018.

\bibitem[Nguyen et~al.(2019)Nguyen, Simsekli, Gurbuzbalaban, and Richard]{nguyen2019first}
Thanh~Huy Nguyen, Umut Simsekli, Mert Gurbuzbalaban, and Ga{\"e}l Richard.
\newblock First exit time analysis of stochastic gradient descent under heavy-tailed gradient noise.
\newblock In \emph{Advances in Neural Information Processing Systems}, 2019.

\bibitem[Orvieto and Lucchi(2019)]{orvieto2019continuous}
Antonio Orvieto and Aurelien Lucchi.
\newblock Continuous-time models for stochastic optimization algorithms.
\newblock In \emph{Advances in Neural Information Processing Systems}, 2019.

\bibitem[Pan et~al.(2021)Pan, Ye, and Zhang]{pan2021eigencurve}
Rui Pan, Haishan Ye, and Tong Zhang.
\newblock Eigencurve: Optimal learning rate schedule for {SGD} on quadratic objectives with skewed hessian spectrums.
\newblock \emph{arXiv}, page 2110.14109, 2021.

\bibitem[Paquette and Paquette(2021)]{paquette2021dynamics}
Courtney Paquette and Elliot Paquette.
\newblock Dynamics of stochastic momentum methods on large-scale, quadratic models.
\newblock In \emph{Advances in Neural Information Processing Systems}, 2021.

\bibitem[Paquette et~al.(2022)Paquette, Paquette, Adlam, and Pennington]{paquette2022homogenization}
Courtney Paquette, Elliot Paquette, Ben Adlam, and Jeffrey Pennington.
\newblock Homogenization of {SGD} in high-dimensions: Exact dynamics and generalization properties.
\newblock \emph{arXiv}, page 2205.07069, 2022.

\bibitem[Pavliotis(2014)]{pavliotis2014stochastic}
Grigorios~A Pavliotis.
\newblock Stochastic processes and applications.
\newblock \emph{Texts in applied mathematics}, 60, 2014.

\bibitem[Pemantle(1990)]{pemantle1990nonconvergence}
Robin Pemantle.
\newblock Nonconvergence to unstable points in urn models and stochastic approximations.
\newblock \emph{The Annals of Probability}, 18\penalty0 (2):\penalty0 698--712, 1990.

\bibitem[Perret(2010)]{perret2010stability}
Christian Perret.
\newblock \emph{The stability of numerical simulations of complex stochastic differential equations}.
\newblock PhD thesis, ETH Zurich, 2010.

\bibitem[Robbins and Monro(1951)]{10.1214/aoms/1177729586}
Herbert Robbins and Sutton Monro.
\newblock {A Stochastic Approximation Method}.
\newblock \emph{The Annals of Mathematical Statistics}, 22\penalty0 (3):\penalty0 400 -- 407, 1951.

\bibitem[Rosca et~al.(2022)Rosca, Wu, Qin, and Dherin]{rosca2022continuous}
Mihaela Rosca, Yan Wu, Chongli Qin, and Benoit Dherin.
\newblock On a continuous time model of gradient descent dynamics and instability in deep learning.
\newblock \emph{Transactions on Machine Learning Research}, 2022.

\bibitem[Ross(1997)]{ross1997generalized}
Peter Ross.
\newblock Generalized hockey stick identities and $n$-dimensional blockwalking.
\newblock \emph{The College Mathematics Journal}, 28\penalty0 (4):\penalty0 325, 1997.

\bibitem[S{\"a}rkk{\"a} and Solin(2019)]{sarkka2019applied}
Simo S{\"a}rkk{\"a} and Arno Solin.
\newblock \emph{Applied stochastic differential equations}, volume~10.
\newblock Cambridge University Press, 2019.

\bibitem[Schropp and Singer(2000)]{schropp2000dynamical}
Johannes Schropp and I~Singer.
\newblock A dynamical systems approach to constrained minimization.
\newblock \emph{Numerical functional analysis and optimization}, 21\penalty0 (3-4):\penalty0 537--551, 2000.

\bibitem[Sebbouh et~al.(2021)Sebbouh, Gower, and Defazio]{sebbouh2021almost}
Othmane Sebbouh, Robert~M Gower, and Aaron Defazio.
\newblock Almost sure convergence rates for stochastic gradient descent and stochastic heavy ball.
\newblock In \emph{Conference on Learning Theory}, 2021.

\bibitem[Shabat(1992)]{shabat1992introduction}
Boris~V Shabat.
\newblock \emph{Introduction to complex analysis {Part II.} Functions of Several Variables}, volume 110 of \emph{Translations of Mathematical Monographs}.
\newblock American Mathematical Society, 1992.

\bibitem[Shardlow(2006)]{shardlow2006modified}
Tony Shardlow.
\newblock Modified equations for stochastic differential equations.
\newblock \emph{BIT Numerical Mathematics}, 46:\penalty0 111--125, 2006.

\bibitem[Shi et~al.(2022)Shi, Du, Jordan, and Su]{shi2021understanding}
Bin Shi, Simon~S Du, Michael~I Jordan, and Weijie~J Su.
\newblock Understanding the acceleration phenomenon via high-resolution differential equations.
\newblock \emph{Mathematical Programming}, 195:\penalty0 79–148, 2022.

\bibitem[Steeb and Hardy(2011)]{steeb2011matrix}
Willi-Hans Steeb and Yorick Hardy.
\newblock \emph{Matrix calculus and Kronecker product: a practical approach to linear and multilinear algebra}.
\newblock World Scientific Publishing Company, 2011.

\bibitem[Su et~al.(2016)Su, Boyd, and Candes]{su2016differential}
Weijie Su, Stephen Boyd, and Emmanuel~J Candes.
\newblock A differential equation for modeling {Nesterov's} accelerated gradient method: Theory and insights.
\newblock \emph{Journal of Machine Learning Research}, 17\penalty0 (153):\penalty0 1--43, 2016.

\bibitem[Tops{\o}e(2007)]{topsoe12007some}
Flemming Tops{\o}e.
\newblock Some bounds for the logarithmic function.
\newblock \emph{Inequality theory and applications}, 4:\penalty0 137, 2007.

\bibitem[Vent-Tsel’ and Freidlin(1973)]{vent1973some}
AD~Vent-Tsel’ and MI~Freidlin.
\newblock Some problems concerning stability under small random perturbations.
\newblock \emph{Theory of Probability \& Its Applications}, 17\penalty0 (2):\penalty0 269--283, 1973.

\bibitem[Wang and Wu(2024)]{wang2024theoretical}
Mingze Wang and Lei Wu.
\newblock A theoretical analysis of noise geometry in stochastic gradient descent.
\newblock \emph{arXiv}, page 2310.00692, 2024.

\bibitem[Wilson et~al.(2017)Wilson, Roelofs, Stern, Srebro, and Recht]{wilson2017marginal}
Ashia~C Wilson, Rebecca Roelofs, Mitchell Stern, Nati Srebro, and Benjamin Recht.
\newblock The marginal value of adaptive gradient methods in machine learning.
\newblock In \emph{Advances in Neural Information Processing Systems}, 2017.

\bibitem[Xie et~al.(2020)Xie, Sato, and Sugiyama]{xie2020diffusion}
Zeke Xie, Issei Sato, and Masashi Sugiyama.
\newblock A diffusion theory for deep learning dynamics: Stochastic gradient descent exponentially favors flat minima.
\newblock In \emph{International Conference on Learning Representations}, 2020.

\bibitem[Xie et~al.(2022)Xie, Wang, Zhang, Sato, and Sugiyama]{xie2022adaptive}
Zeke Xie, Xinrui Wang, Huishuai Zhang, Issei Sato, and Masashi Sugiyama.
\newblock Adaptive inertia: Disentangling the effects of adaptive learning rate and momentum.
\newblock In \emph{International Conference on Machine Learning}, 2022.

\bibitem[Yang et~al.(2024)Yang, Li, Fatkhullin, and He]{yang2024two}
Junchi Yang, Xiang Li, Ilyas Fatkhullin, and Niao He.
\newblock Two sides of one coin: the limits of untuned {SGD} and the power of adaptive methods.
\newblock In \emph{Advances in Neural Information Processing Systems}, 2024.

\bibitem[Zhang et~al.(2021)Zhang, Bengio, Hardt, Recht, and Vinyals]{zhang2021understanding}
Chiyuan Zhang, Samy Bengio, Moritz Hardt, Benjamin Recht, and Oriol Vinyals.
\newblock Understanding deep learning (still) requires rethinking generalization.
\newblock \emph{Commun. ACM}, 64\penalty0 (3):\penalty0 107–115, 2021.

\bibitem[Zhou et~al.(2020)Zhou, Feng, Ma, Xiong, Hoi, and E]{zhou2020towards}
Pan Zhou, Jiashi Feng, Chao Ma, Caiming Xiong, Steven Chu~Hong Hoi, and Weinan E.
\newblock Towards theoretically understanding why {SGD} generalizes better than {Adam} in deep learning.
\newblock In \emph{Advances in Neural Information Processing Systems}, 2020.

\bibitem[Zhou et~al.(2022)Zhou, Liang, and Zhang]{Zhou2022}
Yi~Zhou, Yingbin Liang, and Huishuai Zhang.
\newblock Understanding generalization error of {SGD} in nonconvex optimization.
\newblock \emph{Machine Learning}, 111\penalty0 (1):\penalty0 345--375, 2022.

\bibitem[Zhu et~al.(2019)Zhu, Wu, Yu, Wu, and Ma]{zhu2019anisotropic}
Zhanxing Zhu, Jingfeng Wu, Bing Yu, Lei Wu, and Jinwen Ma.
\newblock The anisotropic noise in stochastic gradient descent: Its behavior of escaping from sharp minima and regularization effects.
\newblock In \emph{International Conference on Machine Learning}, 2019.

\end{thebibliography}

\newpage

\tableofcontents

\appendix

\section{Construction of \oursde{}} \label{sec:construct_sPF}
\subsection{Proof of \Cref{lemma:semi_expand}}
\begin{proof}
In the semi-group expansion of \Cref{eq:SBEA_continuous}, consider the term
$\frac{1}{n!}\eta^{n} \cL^{n} u(x)$ for $1 \leq n \leq p+1$. Since we already have
$\eta^{n}$, to get $\eta^{p+1}$ , $\cL^{n} u(x)$ must contribute $\eta^{p+1-n}$.
As $\cL_i$ contains $\eta^i$, we obtain the conclusion.
\end{proof}

\subsection{Proof of \Cref{lemma:SBEA_conditions}}
We start with a sketch of the proof for \Cref{lemma:SBEA_conditions}, where we first determine the functional structure of the components $g_i$'s and $h_i$'s in the SBEA ansatz in \Cref{eq:SBEA_unknown} and then determine their exact expression using a generating function based approach. We prove via induction.
The complexity in our analysis emerges from identifying all possible terms generated by the expansion of
$\cL_{l_1} \cL_{l_2}  \cdots \cL_{l_n}$ in \Cref{eq:expand_p}.
The collective expansion of these operators unfolds as a multinomial series.
This sets our work apart from the PF, as in the latter, in the absence of diffusion terms, the expansion generates only a monomial.
The rigorous proofs of the statements made in the induction are provided afterwards.

Before providing the inductive proof, statement (1) in \Cref{lemma:SBEA_conditions}, i.e. the uniqueness of $g_p$ and $h_p$, can be easily obtained from the following argument.
\begin{proof}[Proof of point (1) in \Cref{lemma:SBEA_conditions}]
We first give examples for solving the first few $g_i$ and $h_i$, then
we use an argument of induction to finish the proof.
For the terms associated with $\eta^0$,
\Cref{eq:SBEA_discrete} and \Cref{eq:SBEA_continuous} already match, i.e.,
$u(x)$.
For terms associated with $\eta$, we can solve that $g_0(x) = -\nabla f(x)$
and $h_0(x) = 0$.
Considering terms with $\eta^2$, we have for discrete-time (\Cref{eq:SBEA_discrete})
\begin{align*}
  \eta^2 \cdot \frac{1}{2} \nabla^2 u(x) : \left(\nabla f(x) \nabla f(x)^\T + \Sigma(x)\right),
\end{align*}
and for continuous-time(\Cref{eq:SBEA_continuous})
\begin{align*}
  \eta^2 \cdot \left( \underbrace{g_1(x)^\T \nabla u(x)  + \frac{1}{2}h_1(x) : \nabla^2 u(x) }_{\text{from $\eta \cL u(x)$}} 
  + \underbrace{ \frac{1}{2} (-\nabla f \cdot \nabla (-\nabla f \cdot \nabla u)) (x)}_{\text{from $\frac{1}{2}\eta^2 \cL^2 u(x)$}} \right).
\end{align*}
Note that this holds for any proper $u$, so we group terms associated with 
$\nabla u$ and $\nabla^2 u$ and match the terms in discrete-time and 
continuous-time. In this way, we obtain 2 equations, which gives us the following
$g_1$ and $h_1$:
\begin{align} \label{eq:f}
  g_1(x) = -\frac{1}{2} \nabla^2 f(x) \nabla f(x) \quad \text{and} \quad
  h_1(x) = \Sigma(x).
\end{align}

Now assume we already know $\left\{g_i\right\}_{1 \leq i < p}$ and $\left\{h_i\right\}_{1 \leq i < p}$.
We will proceed to solve $g_p$ and $h_p$. This is done by solving terms with
$\eta$ of order $p+1$.
The terms associated with $\eta^{p+1}$ are shown in \Cref{lemma:semi_expand}.
When $n=1$, we will have linear terms with $g_p$ and $h_p$, respectively.
When $n \geq 2$, since $l_1 + l_2 + \cdots + l_n$ could only sum to $p-1$, we
only have $g_i$ and $h_i$ for $i < p$. Given that we assume
$\left\{g_i\right\}_{1 \leq i < p}$ and $\left\{h_i\right\}_{1 \leq i < p}$ are
already solved, by grouping terms with $\nabla u$ and $\nabla^2 u$, we obtain 
two linear equations and can solve for $g_p$ and $h_p$. Note that the
coefficients for $g_p$ and $h_p$ in these two linear equations are non-zero,
therefore, the solution exists and is unique.
\end{proof}

We now give a sketch of the inductive proof of the second statement in \Cref{lemma:SBEA_conditions}, on the functional structure of the components $h_p$'s and $g_p$'s. 
Recall point (2) in \Cref{lemma:SBEA_conditions}: The components $g_p$ and $h_p$ admit the form
\begin{align}
  g_p(x) =&\ c_p \cdot \left(\nabla^2 f(x)\right)^p \nabla f(x),  \\
  h_p(x) =&\ 
 \sum_{k = 0}^{p-1} a_{k, p-1-k} \cdot \left(\nabla^2 f(x)\right)^{k} \Sigma(x)
 \left(\nabla^2 f(x)\right)^{p-1-k} ,
\end{align}
where $\{c_k\}$ and $\{a_{k, p-1-k}\}$ are constants to be determined.
Let $q$ be the induction index and let \Cref{eq:gp,eq:hp} be the induction hypothesis for $q=p \geq 1$.
Such a hypothesis clearly holds for $q=1$ (recall the calculation in \Cref{remark:example_SBEA}).
The following lemma shows that the nice functional structures in \Cref{eq:gp,eq:hp} are preserved for $q = p+1$.
\begin{lemma} \label{lemma_induction}
Under \Cref{principle_hessian_aware}, suppose that \Cref{eq:gp,eq:hp} hold in step $q = p$. They also hold for $q = p+1$.
\end{lemma}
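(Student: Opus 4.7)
By Lemma \ref{lemma:semi_expand}, determining $g_{p+1}$ and $h_{p+1}$ reduces to matching
\begin{align*}
\Psi_{p+1}(x) = \mathcal{L}_{p+1} u(x) + \sum_{n=2}^{p+2} \frac{1}{n!} \sum_{l_1+\cdots+l_n = p+2-n} \mathcal{L}_{l_1}\cdots\mathcal{L}_{l_n} u(x)
\end{align*}
against $\Phi_{p+1}(x)$. The unknowns appear linearly only in the $n=1$ term; every other composition has all $l_i \leq p$, so by the induction hypothesis the inner operators $\mathcal{L}_{l_i}$ already have the algebraic form dictated by (\ref{eq:gp})--(\ref{eq:hp}). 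I would first prove a structural lemma (by a side induction on $n$) stating that, under Principle \ref{principle_hessian_aware}, every composition $\mathcal{L}_{l_1}\cdots\mathcal{L}_{l_n} u$ is a scalar linear combination of three templates: (T1) $\bigl[(\nabla^2 f)^L\nabla f\bigr]\cdot\nabla u$; (T2) $\bigl[(\nabla^2 f)^k \Sigma (\nabla^2 f)^m\bigr]:\nabla^2 u$; and (T2$'$) $\bigl[(\nabla^2 f)^k \nabla f (\nabla f)^\T (\nabla^2 f)^m\bigr]:\nabla^2 u$. The rewriting rules are elementary under the principle: $g_i\cdot\nabla(\nabla f)=c_i(\nabla^2 f)^{i+1}\nabla f$, $g_i\cdot\nabla(\nabla^2 f)\equiv 0$, $g_i\cdot\nabla\Sigma\equiv 0$, and $\nabla^2$ of a T2 tensor vanishes (since it would require a $\nabla\Sigma$ or a $\nabla^{(3)} u$). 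Tracking gradings gives $L = \sum_i l_i + (n-1) = p+1$ for T1, and $k+m = \sum_i l_i + (n-2) = p$ for T2 and T2$'$, matching the target exponents. A further bookkeeping step confines diffusion: since drifts and further diffusions annihilate a T2 tensor, a nonzero T2 contribution at length $n\geq 2$ arises only when the \emph{outermost} operator is a diffusion and the inner chain is drift-only; T1 and T2$'$ therefore arise solely from drift-only chains.

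\textbf{Main obstacle: cancelling the T2$'$ terms.}
The T2$'$ template contains $\nabla f(\nabla f)^\T$ rather than $\Sigma$, and a priori would spoil the form of $h_{p+1}$ in (\ref{eq:hp}). Eliminating it is the crux of the argument. My plan is a specialization to quadratic $f(x)=\tfrac12 x^\T A x$ with $\Sigma\equiv 0$: all $h_i$ vanish, SBEA collapses to drift-only BEA in the sense of Principle \ref{principle_PF}, and the ansatz becomes the Principal Flow (\ref{eq:PF}). A direct calculation (recalled at the end of Section \ref{sec:pre_pf}) gives that PF integrates to $X(k\eta) = U(I-\eta\Lambda)^k U^\T X(0) = x_k$, so PF matches GD distributionally at every step and $\Psi_{p+1}\equiv\Phi_{p+1}$ on such $f$ for arbitrary $u$. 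Under Principle \ref{principle_hessian_aware} the discrete side $\Phi_{p+1}$ is identically zero for $p+1\geq 2$, since the order-$\eta^{p+2}$ coefficient of $u(x-\eta\nabla f)$ is a pure $\nabla^{(p+2)}u$ contraction. Hence the $\nabla^2 u$-coefficient of $\Psi_{p+1}$ on quadratic $f$ with $\Sigma\equiv 0$ vanishes. That coefficient is exactly the T2$'$ sum evaluated at $\nabla^2 f = A$, $\nabla f = Ax$; specializing further to diagonal $A$ turns it into a polynomial identity in $(\lambda_i,\lambda_j,x_i,x_j)$, and matching coefficients forces the symmetric pair $\beta_{k,m}+\beta_{m,k}$ of each T2$'$ scalar to vanish for every $k+m=p$. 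Because these scalars are universal (depending only on the combinatorics of the $\mathcal{L}_{l_i}$ compositions, not on $f$ or $\Sigma$), the same symmetric-part cancellation extends to general $f$ and $\Sigma$, so the effective T2$'$ contribution to the $\nabla^2 u$-coefficient of $\Psi_{p+1}$ is zero.

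\textbf{Closing the induction.}
After the T2$'$ terms are removed, the $n\geq 2$ piece of $\Psi_{p+1}$ has $\nabla u$-coefficient of the form $C\cdot(\nabla^2 f)^{p+1}\nabla f$ and $\nabla^2 u$-coefficient of the form $\sum_{k=0}^{p} A_{k,p-k}(\nabla^2 f)^k\Sigma(\nabla^2 f)^{p-k}$, for universal scalars $C$ and $A_{k,p-k}$. Matching $\Psi_{p+1}\equiv 0 \equiv \Phi_{p+1}$ under the principle for $p+1\geq 2$, and invoking the uniqueness from point (1) of Lemma \ref{lemma:SBEA_conditions}, yields
\begin{align*}
g_{p+1}(x) = -C\cdot(\nabla^2 f(x))^{p+1}\nabla f(x), \qquad h_{p+1}(x) = -2\sum_{k=0}^{p} A_{k,p-k}(\nabla^2 f(x))^k\Sigma(x)(\nabla^2 f(x))^{p-k},
\end{align*}
which is precisely (\ref{eq:gp})--(\ref{eq:hp}) at index $p+1$, completing the induction. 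Identifying $c_{p+1}=-C$ and $a_{k,p-k}=-2A_{k,p-k}$ with the explicit generating-function coefficients from point (3) of the lemma is then a combinatorial computation over the $\tfrac{1}{n!}$ weights, which the paper carries out in the next subsection.
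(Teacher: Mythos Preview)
Your proposal is largely correct and takes a genuinely different route from the paper on the crucial step --- eliminating the T2$'$ contributions (what the paper calls the $b_{s,m}$ coefficients). The structural part (T1/T2/T2$'$ templates, gradings, and the claim that nonzero T2 requires the outermost operator to be the unique diffusion) is equivalent to the paper's enumeration of Sequence Types I--III via the \texttt{free-nabla-$j$} argument; the bookkeeping differs but the content is the same.

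Where you diverge is the T2$'$ cancellation. The paper proves $b_{s,m}+b_{m,s}=0$ by writing $b_{s,m}$ as an explicit nested sum over $\rho$-functions (\Cref{eq:ae}) and then summing the two-variable generating function $b(x,y)+b(y,x)$ to the constant $1$ via direct manipulation of $\frac{\log(1-x)}{x}$ (\Cref{lemma:coef_h_1}). Your argument instead specializes to quadratic $f$ with $\Sigma\equiv 0$, invokes that PF integrates GD exactly there so that $\Psi_{p+1}^{\mathrm{PF}}=\Phi_{p+1}$ as unfiltered power-series coefficients, and reads off that the $\nabla^2 u$-coefficient of $\Psi_{p+1}^{\mathrm{PF}}$ --- which is precisely the T2$'$ sum from the $n\geq 2$ drift-only chains --- must vanish. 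That is a clean and conceptually pleasing shortcut: it replaces a page of generating-function algebra with a one-line observation about PF. What it loses is the explicit recursion for $b_{s,m}$, but that recursion is not needed downstream anyway (only the $a_{s,m}$ recursion is, for \Cref{lemma:coef_h_2}).

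There is one soft spot you should tighten. The T2$'$ scalars $\beta_{k,m}$ are universal in $f$ and $\Sigma$, but they do depend on the drift constants $c_0,\dots,c_p$ that enter the $g_i$'s. Your PF argument establishes $\beta_{k,m}+\beta_{m,k}=0$ for \emph{PF's} $c_i$'s; to conclude the same for \emph{SBEA's} $c_i$'s you need $c_i^{\mathrm{SBEA}}=c_i^{\mathrm{PF}}$ for $i\leq p$. That equality is true --- the $\nabla u$-matching in SBEA under \Cref{principle_hessian_aware} is exactly the BEA recursion of \Cref{principle_PF}, since it involves only $\nabla u$ and drifts --- and the paper states it at the end of its ``Solve for $g_p$'' paragraph. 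But your phrase ``all $h_i$ vanish \dots\ and the ansatz becomes the Principal Flow'' reads as if you are also asserting $h_{p+1}=0$, which would be circular. You should instead say: the $n\geq 2$ part of $\Psi_{p+1}$ uses only $g_0,\dots,g_p$, which by the $\nabla u$-matching coincide with PF's; hence the T2$'$ sum is the same one that PF produces, and PF's exact integration of GD forces its symmetric part to vanish. Also, the sentence ``Under Principle~\ref{principle_hessian_aware} the discrete side $\Phi_{p+1}$ is identically zero'' conflates filtered and unfiltered; what you actually need (and what your own parenthetical says) is the unfiltered statement that $\Phi_{p+1}$ has zero $\nabla^2 u$-coefficient because it is a pure $\nabla^{(p+2)}u$ contraction.
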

\begin{proof}[Proof sketch.]
    To establish the induction, we need to enumerate all possible terms generated by the expansion in \Cref{eq:expand_p}.
    Viewing the application of each $\cL_{l_i}$ as a layer, this process entails a dual-stage selection mechanism: 
    \begin{enumerate}
  \item Within each layer we consider the terms generated from either $\eta^{l_i} g_{l_i} \cdot \nabla$ or $\frac{\eta^{l_i}}{2} h_{l_i}: \nabla^2$
    in $\cL_{l_i}$. \label{proof_sketch_lemma_induction_point_1}
  \item For a fixed layer,  we consider how the operators $\nabla$ and $\nabla^2$ are applied on the subsequent layer. For example, for the operator $\nabla$, it could be acting on $g_{l_{i+1}}$ to yield a concrete function $\nabla g_{l_{i+1}}$, or it can engage with the operator $\nabla^2$ from the subsequent layer to form the operator $\nabla^3$. 
  \label{proof_sketch_lemma_induction_point_2}
\end{enumerate}
Finally, for every possible \texttt{sequence-of-selections} made in the above mechanism, any remaining differential operators $\nabla^j$ will be applied on the test function $u$.
Hereafter, we use the typewriter font to emphasize that the \texttt{sequence-of-selections} is with respect to the above selection mechanism.

Unraveling all potential selection in
\Cref{eq:expand_p} is inherently difficult. However, \Cref{principle_hessian_aware} allows us to ensures the particular structures of $g_p$
and $h_p$. For the following discussion, we introduce the concept of \texttt{free-nabla-$j$}: During unraveling of the operator compositions in \Cref{eq:expand_p}, a \texttt{free-nabla-$j$} is generated if we obtain an operator $\nabla^j$ during any stage of the above selection mechanism.

We have the following result.  
\begin{lemma} \label{lemma_free_nabla_j}
    Once \texttt{free-nabla-$j$}, for $j \geq 3$, is generated, one can terminate the subsequent operator expansion as all the resulting terms will be excluded by \Cref{principle_hessian_aware}.
\end{lemma}
See an elaborated discussion in \Cref{sec:apdx_proof_sec_ha}. From this observation, the expansion of \Cref{eq:expand_p} can be significantly simplified and one can show that if the induction in \Cref{eq:gp,eq:hp} holds for $q=p$, $g_{p+1}$ and $h_{p+1}$ admit the following form:
\begin{align}
  g_{p+1}(x) =&\ c_{p+1} \cdot \left(\nabla^2 f(x)\right)^{p+1} \nabla f(x) \label{eq:g_form}  \\
  h_{p+1}(x) =&\ 
 \sum_{k = 0}^{p} (a_{k, p-k} + b_{k, p-k} \cdot \|\nabla f(x)\|^2) \cdot \left(\nabla^2 f(x)\right)^{k} \Sigma(x)
 \left(\nabla^2 f(x)\right)^{p-k}.
  \label{eq:h_form}
\end{align}
Further, a detailed examination of the combinations reveals the following result.
\begin{lemma} \label{lemma_b_is_0}
    $b_{k, p-k} \equiv 0$.
\end{lemma}
The above results together establish the induction step of $q=p+1$.
\end{proof}

Once we have determine the functional structure of $g_p$ and $h_p$,
the next step is to determine the coefficients $a_{i,j}$ and $c_{p}$. As we
obtain the structures of $g_p$ and $h_p$ by induction, the coefficients admit
recursion forms. To resolve these coefficients, we utilize an argument based on the generating functions. As a result, we obtain point (3) in \Cref{lemma:SBEA_conditions}.

\subsubsection{Proof of \Cref{lemma_induction}}
\label{sec:apdx_proof_sec_ha}
To establish the inductive step in \Cref{lemma_induction}, we prove the following points in this section.
\begin{enumerate}[label=(\alph*)]
    \item We first prove \Cref{lemma_free_nabla_j}, which allows us to significantly simplify the expansion of the compositions of the generators $\cL_i$'s. \label{lemma_induction_proof_point_a}
    \item We analyze all possible terms generated without incurring \texttt{free-nabla-$j$}, for $j \geq 3$, and find that there are only three possibilities. This leads to the functional structure described in \Cref{eq:g_form,eq:h_form}. \label{lemma_induction_proof_point_b}
    \item We further provide the recursive definitions of the coefficients $\{c_{i}\}$, $\{a_{i, j}\}$, and $\{b_{i, j}\}$. This allows us to conclude that $b_{k, p-k} \equiv 0$ (\Cref{lemma_b_is_0}) and also it allows us to calculate the limit of the power series $\{g_p\}$ and $\{h_p\}$, i.e. point (3) in \Cref{lemma:SBEA_conditions}. \label{lemma_induction_proof_point_c}
\end{enumerate}

\subsubsection{Proof of \Cref{lemma_free_nabla_j}}
Let us recall the concept of \texttt{free-nabla-$j$}: During unraveling of the operator compositions in \Cref{eq:expand_p}, a \texttt{free-nabla-$j$} is generated if we obtain an operator $\nabla^j$ during any stage of the selection mechanism mentioned in the proof sketch of \Cref{lemma_induction}.

In the following, we will show that once \texttt{free-nabla-$j$} for $j\geq 3$ is generated in a \texttt{sequence-of-selections} mentioned in the proof sketch of \Cref{lemma_induction}, the term resulting from this sequence is excluded by \Cref{principle_hessian_aware}, i.e. it contains at least one of the terms $\nabla^{(r)} f(x), \nabla^{(s)} u(x)$ and $\nabla^{(m)} \Sigma(x)$ for $r, s\geq 3$ and $m \geq 1$.
To establish this result, there are two possibilities once \texttt{free-nabla-$j$} for $j\geq 3$ is generated:
\begin{enumerate}
    \item The \texttt{free-nabla-$j$} operator is to be directly applied on the test function $u$. In this case, all the resulting term from this \texttt{sequence-of-selections} contains $\nabla^{(j)} u(x)$, $j\geq 3$, which is excluded by \Cref{principle_hessian_aware}.
    \item The \texttt{free-nabla-$j$} operator is to be applied on a generator $\cL_i$, i.e. we encounter the term
    \begin{equation} \label{eqn_free_nabla_j_generator}
        \nabla^{(j)} (\eta^i g_i \cdot \nabla + \frac{1}{2}\eta^i h_i : \nabla^ 2).
    \end{equation}
    One can prove that (see the discussion below) all terms in the expansion of the above expression either is excluded by \Cref{principle_hessian_aware} or it contains \texttt{free-nabla-$q$} for $q \geq j$\footnote{For simplicity, we take $j=3$ as example. For $j> 3$, the logic is similar. One has for the first term in \Cref{eqn_free_nabla_j_generator}
    \begin{align*}
        \nabla^{(3)} \left( g_i \cdot \nabla\right) = \underbrace{g_i \cdot \nabla^{(4)}}_{\texttt{free-nabla-$4$}} + \underbrace{\nabla g_i \cdot \nabla^{(3)}}_{\texttt{free-nabla-$3$}} + \underbrace{\nabla^2 g_i \cdot \nabla^{(2)} + \nabla^3 g_i \cdot \nabla}_{\text{every term contains $\nabla^{r} f$, $r\geq 3$}}.
    \end{align*}
    By the product rule of the differential and based on the induction in \Cref{eq:g_form}, one can check that in the expansions of $\nabla^2 g_i$ and $\nabla^3 g_i$, all terms contain $\nabla^{r} f$, for $r\geq 3$. Hence, the last two terms in the above expression are excluded by \Cref{principle_hessian_aware}. The second term in \Cref{eqn_free_nabla_j_generator} can be excluded with a similar argument.}. Consequently, the order $j$ does not decrease after the operator \texttt{free-nabla-$j$} being applied on a generator $\cL_i$. 
\end{enumerate}
One can repeat the case 2 until the \texttt{free-nabla-$j$} operator is directly applied on the test function $u$, which reduces to case 1. From the above argument, all terms generated are to be excluded by \Cref{principle_hessian_aware}.

\subsubsection{Proof of the Functional Structures in \Cref{eq:g_form,eq:h_form}}
According to \Cref{lemma_free_nabla_j}, we only need to consider the \texttt{sequence-of-selections} mentioned in the proof sketch of \Cref{lemma_induction} such that no \texttt{free-nabla-$j$} operator is generated, for $j\geq 3$. It turns out that there are only three possible \texttt{sequence-of-selections}, as depicted in \Cref{fig:construct_spf}.

In this section, we elaborate on these three possible sequences, based on which we then establish the functional structures in \Cref{eq:g_form,eq:h_form}. In the meantime, we also derive the recursive definitions of the coefficients $\{c_{i}\}$, $\{a_{i, j}\}$, and $\{b_{i, j}\}$.

\paragraph{Notation} For the ease of the proof, we define for $n \geq 1$, $m \geq 0$ and sequence $\{c_i\}_{i=0}^{+\infty}$,
\begin{align*}
  \rho\left(n, m, \{c_i\}_{i=0}^{+\infty}\right) = \sum_{l_1 + l_2 + \dots l_n = m} c_{l_1} c_{l_2}
  \cdots c_{l_n},
\end{align*}
i.e., sum of all possible combinations of $n$ items from sequence $\{c_i\}_{i=0}^{+\infty}$
such that the sum of indices is $m$.
A way to look at this is through generating functions. Let $c(x) = \sum_{i=0}^{+\infty} c_i x^i$,
then $\rho\left(n, m, \{c_i\}_{i=0}^{+\infty}\right)$ is equivalent to
the coefficient of $x^m$ in $c(x)^n$.

\begin{figure}[t]
  \centering
  \begin{subfigure}{0.3\textwidth}
    \includegraphics[width=\textwidth]{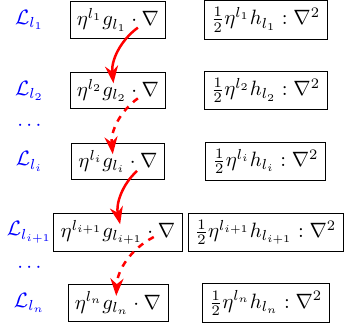}
    \caption{Sequence Type I}
    \label{fig:construct_spf_p1}
  \end{subfigure}
  \hfill
  \begin{subfigure}{0.3\textwidth}
    \includegraphics[width=\textwidth]{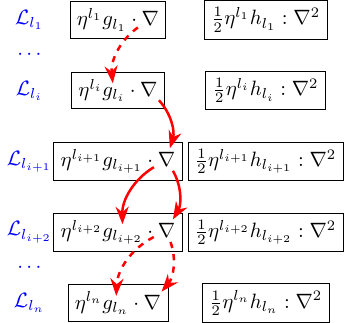}
    \caption{Sequence Type II}
    \label{fig:construct_spf_p2}
  \end{subfigure}
  \hfill
  \begin{subfigure}{0.3\textwidth}
    \includegraphics[width=\textwidth]{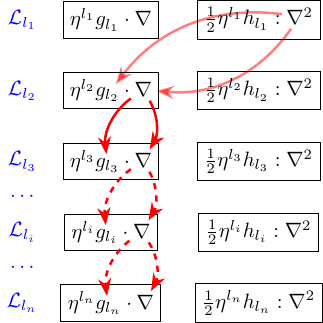}
    \caption{Sequence Type III}
    \label{fig:construct_spf_p3}
  \end{subfigure}
  \caption{Illustrations of possible combinations in the construction of \oursde.
In constructing \oursde, we examine the product of
$\cL_{l_1}\cL_{l_2}\cdots\cL_{l_n}$, where each $\cL_{l_i}$ contains two terms.
The red arrows indicate the remaining Sequences after filtering out those with
higher-order derivatives of $f$, $u$, or $\Sigma$, which are excluded in the
construction rules of \oursde. These arrows originate from $\nabla$ or $\nabla^2$
and lead to $g_{l_i}$ or $\nabla$, demonstrating how the $\nabla$ from the
previous operator is applied in the subsequent operator. The dashed arrows represent
the exclusion of repeated operations.}%
  \label{fig:construct_spf}
\end{figure}

  \paragraph{Solve for $g_p$}
  We claim that we
  can solve $g_p$ by matching terms with $\nabla u$. To show the claim, we
  observe that $\nabla u$ terms could only be achieved by choosing $g_{l_i}$
  and never selecting $h_{l_i}$ for for all $1 \leq i \leq n$. This is the case
  of our Sequence Type I illustrated in \Cref{fig:construct_spf_p1}.
  To see why this
  is the case, we start with $\cL_{l_n}$, since we want terms with $\nabla
  u(x)$, we must select $g_{l_n}$ instead of $h_{l_n}$. Next, when selecting
  from $\cL_{l_{n-1}}$, we could also only select $g_{l_{n-1}}$, since if
  $h_{l_{n-1}}$ is chosen, we would have $h_{l_{n-1}} : \nabla^2 \left(g_{l_n} \cdot \nabla u\right)$.
  The $\nabla^2$ operation must be applied to $g_{l_n}$ otherwise we will have
  $\nabla^2 u$ instead of $\nabla u$. However, when we apply the gradient operator
  twice in $g_{l_n}$, there must be a third-order gradient factor, which we
  choose to exclude, popping up as we have $g_{l_n}$ of form \Cref{eq:gp}. Following this
  logic recursively, we have that $\nabla u$ terms contain only $g_i$ (with $0
  \leq i \leq p -1$). Also note that the gradient operator in $g_{l_i} \cdot \nabla$
  must be applied to $g_{l_{i+1}}$ except when $i=n$ where the gradient operator
  is applied to $u$, otherwise we will have $\nabla^2$ operator and face the 
  same issue as when we selecting $h_{l_i}$.

  Since now in the $\nabla u$ terms only contains $g_p$ (with non-zero
  coefficient) without $h_p$, we can solve for $g_p$. Again, since the gradient
  operator from $g_{l_i}$ (except $i=n$) could only be used for the next
  $g_{l_{i+1}}$, specifically applied on the $\nabla f(x)$ factor of
  $g_{l_{i+1}}$ otherwise we would have third-order gradient of $f(x)$, we have
  $n - 1$ factors of $\nabla f(x)$ changed to $\nabla^2 f(x)$ and only one
  remains. Therefore, $g_p$ is of form \Cref{eq:gp}.
  
  The remaining proof for finding $c_p$ is almost the same as the proof
  for Theorem~A.2 of \citet{rosca2022continuous}. The resulting $c_i$ is the
  coefficients of the Taylor expansion of $c(x) = \frac{\log(1-x)}{x}$, i.e.,
  $c(x) = \sum_{i=0}^{+\infty} c_i x^i$.

  \paragraph{Solve for $h_p$}
  In the proof $g_p$, we have shown that the $\nabla u$
  terms allow us to solve $g_p$, and now we proceed to solve the form of $h_p$
  by considering $\nabla^2 u$ terms. Again, to solve $h_p$, we need terms
  with the order of $\eta$ summing up to $p+1$ and consider \Cref{lemma:semi_expand}.
  Let us discuss what are the valid possibilities of selections from such an 
  expansion.

  Let us call the selection between $g_{l_i}$ and $h_{l_i}$ the $i$-th step
  of selection. Assume we have the gradient operator $\nabla^q$ after the $i$-th step.
  If in the next step, we 
  select $g_{l_{i+1}} \cdot \nabla$, since $g_{l_{i+1}}$ has only one factor of
  $\nabla f$, which means we can only apply the gradient operator once on this $g_{l_{i+1}}$,
  the order of the remaining gradient operator passed to the next step is still at least $q$.
  If in the $i+1$-th step, we select $h_{l_{i+1}} : \nabla^2$ instead, all the gradient operators
  should go to the next step (otherwise, if applied on $h_{l_{i+1}}$, factors we exclude
  will emerge), resulting in a $\nabla^{q+2}$. Note that by our
  construction, in the end, the operator applied on $u$ should not exceed 2.
  Therefore no matter what we select from the first step, which produces at least
  $\nabla$, we should only select $g_{l_i}$ instead $h_{l_i}$ for $i > 1$.
  Because otherwise we would have at least $\nabla^3$ passing to $u$.
  According to the above reasoning, we have only two cases left:
  \begin{enumerate}
    \item \textbf{If we select $g_{l_1}$ in the first step}, which corresponds to
      \Cref{fig:construct_spf_p2}, then in order
      to get $\nabla^2 u$ at the end, we will need at a step $i$ with $2 \leq i \leq n$,
      the gradient operator is not applied to $g_{l_i}$ but passed to the next
      step. For other steps, the operator is applied to $g$. This is the only
      way to have $\nabla^2 u$. Plugging in the solution of $g_i$,
      we know that the resulting term in \Cref{eq:SBEA_continuous} is of form
      \begin{align*}
        \eta^{p+1} \sum_{s=0}^{p-1} b_{s, p-1-s}
        \Tr{ \nabla f(x) \left(\nabla^2 f(x)\right)^{s} \nabla^2 u(x) \left(\nabla^2 f(x)\right)^{p-1-s} \nabla f(x)^\T},
      \end{align*}
      where $b_{s, p-1-s}$ can be written as
      \begin{align}
        b_{s, p-1-s} 
        & = \sum_{n=2}^{p+1} \frac{1}{n!} \sum_{i=2}^n
        \sum_{q=0}^{n-i} \binom{n-i}{q} \rho\left(q+1, p-1-s-q, \{c_k\}_{k=0}^{+\infty}\right) \nonumber \\
        &\fakeeq \rho\left(n-1-q, s+q-n+2, \{c_k\}_{k=0}^{+\infty}\right), \label{eq:ae}
      \end{align}
      where $c_k = -\frac{1}{1+k}$, the coefficient associated with $g_i$.
      Note that the coefficient is for terms with $\nabla^2 f(x)$ to the power
      of $s$ on the left of $\nabla^2 u$ and $p-1-s$ on the right.
      To see why this is the case, we have the following selection procedure:
      \begin{enumerate}
        \item Such terms can be found in $\frac{\eta^n}{n!}\cL^n u$
          for $n=2$ to $p+1$.
        \item As mentioned before, we have at step $2 \leq i \leq n$, 
          the gradient operator is not applied to $g_{l_i}$.
          Note that in this case the $\nabla^2 f(x)$ coming from $g_{l_i}$
          will contribute to the RHS of $\nabla^2 u$.
          Then we will have for the remaining steps of selection,
          \begin{align*}
            \nabla^2 \left( g_{l_{i+1}} \cdot \nabla \left( g_{l_{i+2}} \cdot \nabla 
            \left(\cdots g_{l_{n}} \cdot \nabla u \right) \right) \right).
          \end{align*}
        \item \label{item:step_3} According to \Cref{lemma:nabla_2}, there are $\binom{n-i}{q}$
          combinations such that the number of $\nabla g$ resulting in the
          RHS of $\nabla^2 u$ is $q$.
        \item For the RHS of $\nabla^2 u$, we have $g_{l_i}$ and $q$ 
          of $\nabla g$ selected from the last step.
          Note that $g_k$ would contains $\left(\nabla^2 f(x)\right)^{k}$ for
          $k \geq 0$, and for the $q$ of $\nabla g$, we obtain
          additional $\left(\nabla^2 f(x)\right)^q$ because of the $\nabla$
          operator. Therefore, the total number of $\nabla^2 f(x)$ on the
          RHS of $\nabla^2 u$ comes from $g_{l_i}$ and $q$ of $\nabla g$
          and an additional $q$. Therefore, we select from $q+1$ of $g$,
          whose indices sum up to $p-1-s-q$ (so that the total order of
          $\nabla f(x)$ is $p-1-s-q + q = p-1-s$).
        \item For the LHS of $\nabla^2 u$, we have $g_{1}$, $\{\nabla g_{l_k}\}_{k=2}^{i-1}$ and $n-i-q$ of
          $\nabla g$ selected according to Step~\ref{item:step_3}.
          Following similar logic as the last step, in total we select $1 + i - 2
          + n - i - q = n - 1 - q$ of $g$, whose indices sum up to $s + q - n + 2$.
      \end{enumerate}

      We note that for any matrix $A$, $\Tr{A} = \Tr{A^\T}$, so
      the traces associated with $b_{s, m}$ and $b_{m, s}$ are the same for 
      $s, m \geq 0$. According to \Cref{lemma:coef_h_1}, $b_{s, m} + b_{m, s} = 0$
      for $m + s \geq 1$. Therefore, the coefficients cancel each other.
      For the case of $b_{s, s}$ with $s \geq 1$, \Cref{lemma:coef_h_1}
      also implies $b_{s, s} + b_{s, s} = 0 \implies b_{s, s} = 0$.

      Therefore, in summary, in this whole case, there is no term generated.

    \item \textbf{If we select $h_{l_1}$ in the first step}, which corresponds to
      \Cref{fig:construct_spf_p3}, then in order
      to get $\nabla^2 u$ at the end, we will in the following steps apply
      the gradient at $g_{l_k}$ for $k \geq 2$ at once and pass to the next
      step the other gradient operator. 
      In this way, all terms of \Cref{eq:SBEA_continuous} with order of $\eta$ summing to $p+1$
      and coming from $\frac{1}{n!} \cL^n u$, has the following form:
      \begin{align} \label{eq:ad}
        \frac{1}{2}\eta^{p+1} \tilde{a}_{i, j, k} \Tr{\left(\nabla^2 f(x)\right)^i \Sigma(x) 
        \left(\nabla^2 f(x)\right)^j \nabla^2 u(x) \left(\nabla^2 f(x)\right)^k},
      \end{align}
      for some absolute constants $\tilde{a}_{i, j, k}$ with $i + j + k = p-1$.
      The reason why we have this form is because of our induction assumption
      and \Cref{lemma:nabla_2}.
      The reason why $i + j + k = p-1$ is that to have $\eta^{p+1}$, we must have
      $l_1 + l_2 + \cdots + l_n = p + 1 - n$. For the first step, we have $h_{l_1}$.
      According to the induction assumption, it can offer $(l_1 - 1)$-th order of
      $\nabla^2 f(x)$. For the remaining steps, we have $\nabla g_{l_s}$,
      which can offer $(l_s + 1)$-th order of $\nabla^2 f(x)$. Therefore, in total
      we have $\nabla f(x)$ to the power of $l_1 - 1 + l_2 + l_3 + \cdots + l_n + n - 1 = p + 1 - n - 1 + n - 1 = p - 1$.
      According to the property of the trace operator, \Cref{eq:ad} can also
      be written as 
      \begin{align*}
        \frac{1}{2}\eta^{p+1} \tilde{a}_{i, j, k} \Tr{\left(\nabla^2 f(x)\right)^{k + i} \Sigma(x) 
        \left(\nabla^2 f(x)\right)^j \nabla^2 u(x)}.
      \end{align*}

      Now we try to match the $\eta^{p+1}$ terms in \Cref{eq:SBEA_discrete}
      and \Cref{eq:SBEA_continuous}.
      Note that in \Cref{eq:SBEA_discrete}, the term associated with
      $\eta^{p+1}$ for $p \geq 2$ is $0$, therefore we have the following
      equality for solving $h_p$:
      \begin{align*}
        \frac{1}{2}\eta^{p+1}\left(
        \Tr{h_{p}(x)^\T \nabla^2 u(x)} +
          \sum_{n=2}^{p+1} 
        \tilde{a}_{i, j, k} \Tr{\left(\nabla^2 f(x)\right)^{k + i} \Sigma(x) 
        \left(\nabla^2 f(x)\right)^j \nabla^2 u(x)} \right) = 0,
      \end{align*}
      which implies (since this should hold for any $u$)
      \begin{align*}
        h_{p}(x) = - \sum_{n=2}^{p+1} 
        \tilde{a}_{i, j, k} \left(\nabla^2 f(x)\right)^{j} \Sigma(x) 
        \left(\nabla^2 f(x)\right)^{k+i}.
      \end{align*}
      So far, by induction, we have proved that $h_p$ has the form of \Cref{eq:gp}
      (regardless of the constants). Next we will try to find these constants.
      We will stick to the notation in \Cref{lemma:SBEA_conditions}, i.e., using $a_{i, j}$
      for the rest of the proof, i.e., 
      \begin{align*}
  h_p(x) = 
 \sum_{k = 0}^{p-1} a_{k, p-1-k} \cdot \left(\nabla^2 f(x)\right)^{k} \Sigma(x)
  \left(\nabla^2 f(x)\right)^{p-1-k}.
      \end{align*}
      Similar to the proof of \Cref{eq:ae}, we have
      \begin{align}
        a_{k, p-1-k}  \nonumber
        &= - \sum_{n=2}^p \frac{1}{n!} \sum_{l=0}^k \sum_{r=0}^{p-1-k}
        a_{l, r} \sum_{q=0}^{n-1} \binom{n-1}{q} \rho\left(q, k-q-l, \{c_k\}_{k=0}^{+\infty}\right) \\
        &\fakeeq \rho\left(n-1-q, p-k-n+q-r, \{c_k\}_{k=0}^{+\infty}\right). \label{eqn:definition_a}
      \end{align}
      The reason is that we have the following selection process:
      \begin{enumerate}
        \item Such terms can be found in $\frac{\eta^n}{n!}\cL^n u$
          for $n=2$ to $p+1$.
        \item We try to find term with
          \begin{align*}
            \Tr{\left(\nabla^2 f(x)\right)^{p-1-k} \Sigma \left(\nabla^2 f(x)\right)^{k} \nabla^2 u},
          \end{align*}
          which can be found from Sequence
          \begin{align*}
            \underbrace{\left(\nabla^2 f(x)\right)^l \Sigma \left(\nabla^2 f(x)\right)^r}_{h_{l_1}} :
            \left(\nabla^2 f(x)\right)^{k-l} \nabla^2 u \left(\nabla^2 f(x)\right)^{p-1-k-r}.
          \end{align*}
        \item For $\left(\nabla^2 f(x)\right)^{k}$, we can have $\left(\nabla^2 f(x)\right)^{l}$ coming from
          $h_{l_1}$. Also $h_{l_1}$ can contribute to $\left(\nabla f(x)\right)^{p-1-k}$
          with $\left(\nabla^2 f(x)\right)^{r}$.
          That is why we have summations over $l$ and $r$ with coefficients $a_{l, r}$.
        \item \label{item:step_3_2}  As mentioned before, we have for the selection at step $2 \leq k \leq n$,
          \begin{align*}
            \nabla^2 \left( g_{l_2} \cdot \nabla \left( g_{l_3} \cdot \nabla 
            \left(\cdots g_{l_{n}} \cdot \nabla u \right) \right) \right).
          \end{align*}
        According to \Cref{lemma:nabla_2}, there are $\binom{n-1}{q}$
          combinations such that the number of $\nabla g$ resulting in the
          LHS of $\nabla^2 u$ is $q$.
        \item For the LHS of $\Sigma(x)$, we have $q$ of $\nabla g$ selected from the last step.
          The indices should sum up to $k - q - l$, since we will generate
          additional $q$ of $\nabla^2 f(x)$ and $h_{l_1}$ provides $l$.
          Therefore, we have coefficient $\rho\left(q, k-q-l, \{c_k\}_{k=0}^{+\infty}\right)$
          similarly for the RHS.

      \end{enumerate}

  \end{enumerate}
\subsubsection{Proof of \Cref{lemma_b_is_0} and the Recursive Expressions of $\{c_{i}\}$ and $\{a_{i, j}\}$}
\begin{lemma} \label{lemma:coef_h_1}
  Recall the recursive definition of $b_{s, m}$ in \Cref{eq:ae},
  \begin{align*}
    b_{s, m}
    = \sum_{n = 2}^{s+m+2} \frac{1}{n!} \sum_{i=2}^{n} \sum_{q=0}^{n-i}
    \binom{n-i}{q} \rho\left(q+1, m-q, \{c_k\}_{k=0}^{+\infty}\right)
    \rho\left(n-1-q, s+q-n+2, \{c_k\}_{k=0}^{+\infty}\right),
  \end{align*}
  where $s, m \geq 0$, $s+m \geq 1$, $c_k = -\frac{1}{k+1}$. We have
  $b_{s, m} + b_{m, s} = 0$.
\end{lemma}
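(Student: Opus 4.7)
}
The plan is to attack the symmetry $b_{s,m}+b_{m,s}=0$ via generating functions. First I would simplify the triple sum that defines $b_{s,m}$. Note that the summand depends on $i$ only through $\binom{n-i}{q}$, so after swapping the order of summation and applying the hockey-stick identity
\begin{equation*}
\sum_{i=2}^{n-q}\binom{n-i}{q}=\sum_{k=q}^{n-2}\binom{k}{q}=\binom{n-1}{q+1},
\end{equation*}
and relabelling $r=q+1$, the definition collapses to
\begin{equation*}
b_{s,m}=\sum_{n\geq 2}\frac{1}{n!}\sum_{r=1}^{n-1}\binom{n-1}{r}\,\rho\!\left(r,m-r+1,\{c_k\}\right)\,\rho\!\left(n-r,s-(n-r)+1,\{c_k\}\right).
\end{equation*}

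Next I would introduce the generating function $C(x)=\sum_{k\geq 0}c_k x^k$. Since $c_k=-1/(k+1)$, a direct calculation gives $xC(x)=\log(1-x)$, which is exactly the series in point (3) of \Cref{lemma:SBEA_conditions}. By the definition of $\rho$, one has $\rho(r,\ell,\{c_k\})=[x^\ell]C(x)^r$, so setting $u=\log(1-x)$ and $v=\log(1-y)$ yields
\begin{equation*}
\sum_{m\geq 0}\rho(r,m-r+1,\{c_k\})\,y^m=\frac{v^r}{y},\qquad \sum_{s\geq 0}\rho(n-r,s-(n-r)+1,\{c_k\})\,x^s=\frac{u^{n-r}}{x}.
\end{equation*}
Multiplying the two and summing, the bivariate generating function $B(x,y):=\sum_{s,m\geq 0}b_{s,m}x^s y^m$ becomes
\begin{equation*}
B(x,y)=\frac{1}{xy}\sum_{n\geq 2}\frac{1}{n!}\sum_{r=1}^{n-1}\binom{n-1}{r}u^{n-r}v^r.
\end{equation*}

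Now I would apply the binomial theorem to the inner sum: $\sum_{r=1}^{n-1}\binom{n-1}{r}u^{n-r}v^r=u\bigl[(u+v)^{n-1}-u^{n-1}\bigr]$. Summing over $n\geq 2$ via the identity $\sum_{n\geq 2}z^{n-1}/n!=(e^z-1-z)/z$, I would arrive at the closed form
\begin{equation*}
B(x,y)=\frac{1}{xy}\left[\frac{u\bigl(e^{u+v}-1-(u+v)\bigr)}{u+v}-\bigl(e^{u}-1-u\bigr)\right].
\end{equation*}
Symmetrizing gives
\begin{equation*}
B(x,y)+B(y,x)=\frac{1}{xy}\bigl[e^{u+v}-e^{u}-e^{v}+1\bigr]=\frac{(e^{u}-1)(e^{v}-1)}{xy}=\frac{(-x)(-y)}{xy}=1,
\end{equation*}
where I used $e^{u}-1=e^{\log(1-x)}-1=-x$ and similarly for $v$.

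Finally, since $B(x,y)+B(y,x)=\sum_{s,m\geq 0}(b_{s,m}+b_{m,s})x^s y^m$ and the right-hand side is the constant $1$, comparing coefficients yields $b_{s,m}+b_{m,s}=0$ for every $(s,m)$ with $s+m\geq 1$, which is exactly the claim. The main obstacle I anticipate is the bookkeeping in the first simplification step: one must carefully swap the order of summation over $i$ and $q$ and correctly relabel indices so that the binomial hockey-stick identity applies. Once the sum is reduced to a clean bilinear form in $u^{n-r}v^r$, the remainder is a routine exponential generating function calculation.
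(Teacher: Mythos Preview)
Your proposal is correct and follows essentially the same approach as the paper: both collapse the $i$-sum via the hockey-stick identity, pass to the bivariate generating function in $u=\log(1-x)$ and $v=\log(1-y)$, sum the exponential series, and show $B(x,y)+B(y,x)=1$. Your presentation is slightly cleaner---you symmetrize directly and factor $(e^u-1)(e^v-1)$, whereas the paper first computes $B(x,y)$ in closed form and then adds---but the argument is the same.
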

\begin{proof}
  Let $c(x) = \sum_{k=0}^{+\infty} c_k x^k$, and we know that $c(x) = \frac{\log(1 - x)}{x}$.
  Define
  \begin{align*}
    b(x, y) = \sum_{s, m \geq 0}^{+\infty} b_{s, m} x^s  y^m.
  \end{align*}
  Then we have
\ifdefined\mpversion
  \begin{align*}
    &\fakeeq b(x, y) \\
    &= \sum_{s, m \geq 0}^{+\infty} x^s  y^m 
    \sum_{n = 2}^{s+m+2} \frac{1}{n!} \sum_{i=2}^{n} \sum_{q=0}^{n-i}
    \binom{n-i}{q} \rho\left(q+1, m-q, \{c_k\}_{k=0}^{+\infty}\right)
    \rho\left(n-1-q, s+q-n+2, \{c_k\}_{k=0}^{+\infty}\right) \\
    &= \frac{1}{y} \sum_{n = 2}^{+\infty} \frac{1}{n!} \sum_{s + m \geq n - 2}^{+\infty} x^s  y^{m+1}  \sum_{q=1}^{n-2}
    \binom{n-1}{q} \rho\left(q, m-q+1, \{c_k\}_{k=0}^{+\infty}\right)
    \rho\left(n-q, s+q-n+1, \{c_k\}_{k=0}^{+\infty}\right),
  \end{align*}
\else
  \begin{align*}
    &\fakeeq b(x, y) \\
    &= \sum_{s, m \geq 0}^{+\infty} x^s  y^m 
    \sum_{n = 2}^{s+m+2} \frac{1}{n!} \sum_{i=2}^{n} \sum_{q=0}^{n-i}
    \binom{n-i}{q} \rho\left(q+1, m-q, \{c_k\}_{k=0}^{+\infty}\right)
    \rho\left(n-1-q, s+q-n+2, \{c_k\}_{k=0}^{+\infty}\right) \\
    &= \sum_{n = 2}^{+\infty} \frac{1}{n!} \sum_{s + m \geq n - 2}^{+\infty} x^s  y^m \sum_{i=2}^{n} \sum_{q=0}^{n-i}
    \binom{n-i}{q} \rho\left(q+1, m-q, \{c_k\}_{k=0}^{+\infty}\right)
    \rho\left(n-1-q, s+q-n+2, \{c_k\}_{k=0}^{+\infty}\right) \\
    &= \sum_{n = 2}^{+\infty} \frac{1}{n!} \sum_{s + m \geq n - 2}^{+\infty} x^s  y^m  \sum_{q=0}^{n-2} \sum_{i=2}^{n-q}
    \binom{n-i}{q} \rho\left(q+1, m-q, \{c_k\}_{k=0}^{+\infty}\right)
    \rho\left(n-1-q, s+q-n+2, \{c_k\}_{k=0}^{+\infty}\right) \\
    &= \sum_{n = 2}^{+\infty} \frac{1}{n!} \sum_{s + m \geq n - 2}^{+\infty} x^s  y^m  \sum_{q=0}^{n-2}
    \binom{n-1}{q+1} \rho\left(q+1, m-q, \{c_k\}_{k=0}^{+\infty}\right)
    \rho\left(n-1-q, s+q-n+2, \{c_k\}_{k=0}^{+\infty}\right) \\
    &= \sum_{n = 2}^{+\infty} \frac{1}{n!} \sum_{s + m \geq n - 2}^{+\infty} x^s  y^m  \sum_{q=1}^{n-2}
    \binom{n-1}{q} \rho\left(q, m-q+1, \{c_k\}_{k=0}^{+\infty}\right)
    \rho\left(n-q, s+q-n+1, \{c_k\}_{k=0}^{+\infty}\right) \\
    &= \frac{1}{y} \sum_{n = 2}^{+\infty} \frac{1}{n!} \sum_{s + m \geq n - 2}^{+\infty} x^s  y^{m+1}  \sum_{q=1}^{n-2}
    \binom{n-1}{q} \rho\left(q, m-q+1, \{c_k\}_{k=0}^{+\infty}\right)
    \rho\left(n-q, s+q-n+1, \{c_k\}_{k=0}^{+\infty}\right),
  \end{align*}
\fi
  where in the forth equality, we used hockey-stick identity~\citep{ross1997generalized}, and in the
  fifth equality, we replace $q$ with $q-1$.
  Note that
  \begin{align} \label{eq:aa}
    \sum_{s + m \geq n - 2}^{+\infty} x^s  y^{m+1}  \sum_{q=1}^{n-2}
    \binom{n-1}{q} \rho\left(q, m-q+1, \{c_k\}_{k=0}^{+\infty}\right)
    \rho\left(n-q, s+q-n+1, \{c_k\}_{k=0}^{+\infty}\right)
  \end{align}
  is equivalent to
  \begin{align} \label{eq:ab}
    \left(y c(y) + x c(x) \right)^{n-1} c(x) - \left(x c(x)\right)^{n-1} c(x).
  \end{align}
  To see why this is the case, let us first looked at $\left(y c(y) + x c(x) \right)^{n-1} c(x)$.
  To have $y^{m+1}$, we can select $yc(y)$ from $\left(y c(y) + x c(x) \right)^{n-1}$
  for $q$ times, which results in coefficients $\binom{n-1}{q}$ and 
 $\rho\left(q, m-q+1, \{c_k\}_{k=0}^{+\infty}\right)$. For $x^s$, we select
 $n-1-q$ of $xc(x)$ from $\left(y c(y) + x c(x) \right)^{n-1}$,
 and obtain $\rho\left(n-q, s+q-n+1, \{c_k\}_{k=0}^{+\infty}\right)$.
 Note that in \Cref{eq:aa}, the sum of $q$ starts from $1$, which means we
 have not considered the case where $yc(y)$ is never selected in
$\left(y c(y) + x c(x) \right)^{n-1}$.
That is why in \Cref{eq:ab}, we subtract $\left(x c(x)\right)^{n-1} c(x)$.

Next, we proceed with
\begin{align}
  b(x, y)
    &= \frac{1}{y} \sum_{n = 2}^{+\infty} \frac{1}{n!} 
    \left(\left(y c(y) + x c(x) \right)^{n-1} c(x) - \left(x c(x)\right)^{n-1} c(x)\right) \nonumber \\
    &= \frac{1}{y} \sum_{n = 2}^{+\infty} \frac{1}{n!} \left(y c(y) + x c(x) \right)^{n-1} c(x)
    - \frac{1}{y} \sum_{n = 2}^{+\infty} \frac{1}{n!} \left(x c(x)\right)^{n-1} c(x). \label{eq:ac}
\end{align}
For the first part, we have
\begin{align*}
  \frac{1}{y} \sum_{n = 2}^{+\infty} \frac{1}{n!} \left(y c(y) + x c(x) \right)^{n-1} c(x)
  &= \frac{c(x)}{y \left(y c(y) + x c(x) \right)} \sum_{n = 2}^{+\infty} \frac{1}{n!} \left(y c(y) + x c(x) \right)^{n} \\
  &= \frac{c(x)}{y \left(y c(y) + x c(x) \right)} \left(e^{y c(y) + x c(x)} - 1 - \left(y c(y) + x c(x) \right)\right),
\end{align*}
where in the last equality we used Taylor expansion 
$e^x = \sum_{n=0}^{+\infty} \frac{1}{n!} x^n$.
Plugging in the definition of $c(x)$, we have
\ifdefined\mpversion
\begin{align*}
  &\fakeeq \frac{c(x)}{y \left(y c(y) + x c(x) \right)} \left(e^{y c(y) + x c(x)} - 1 - \left(y c(y) + x c(x) \right)\right) \\
  &= \frac{\log(1-x)}{x y \log\left((1-x)(1-y)\right) } \left(x y - x - y  - \log\left((1-x)(1-y)\right)\right).
\end{align*}
\else
\begin{align*}
  &\fakeeq \frac{c(x)}{y \left(y c(y) + x c(x) \right)} \left(e^{y c(y) + x c(x)} - 1 - \left(y c(y) + x c(x) \right)\right) \\
  &= \frac{\log(1-x)}{x y \log\left((1-x)(1-y)\right) } \left((1 - x)(1-y) - 1 - \log\left((1-x)(1-y)\right)\right) \\
  &= \frac{\log(1-x)}{x y \log\left((1-x)(1-y)\right) } \left(x y - x - y  - \log\left((1-x)(1-y)\right)\right).
\end{align*}
\fi
For the second part of \Cref{eq:ac},
\begin{align*}
    \frac{1}{y} \sum_{n = 2}^{+\infty} \frac{1}{n!} \left(x c(x)\right)^{n-1} c(x)
   &= \frac{1}{x y} \sum_{n = 2}^{+\infty} \frac{1}{n!} \left(x c(x)\right)^{n}
   = \frac{1}{x y} \left(e^{xc(x)} - 1 - xc(x)\right) \\
   &= \frac{1}{x y} \left(1 - x - 1 - \log(1-x)\right)
   = \frac{1}{x y} \left( - x - \log(1-x)\right).
\end{align*}
Combining the two parts, we get
\begin{align*}
  b(x, y) 
  &= \frac{\log(1-x)}{x y \log\left((1-x)(1-y)\right) } \left(x y - x - y  - \log\left((1-x)(1-y)\right)\right)
   + \frac{1}{x y} \left( x + \log(1-x)\right) \\
  &=\frac{\log(1-x)}{x y \log\left((1-x)(1-y)\right) } \left(x y - x - y  \right)
   + \frac{1}{y}.
\end{align*}
Now to prove the required result, we consider the generator function
\begin{align*}
  &\fakeeq b(x, y) + b(y, x) \\
  &= \frac{\log(1-x)}{x y \log\left((1-x)(1-y)\right) } \left(x y - x - y  \right)
   + \frac{1}{y}
   + \frac{\log(1-y)}{x y \log\left((1-x)(1-y)\right) } \left(x y - x - y  \right)
   + \frac{1}{x} \\
  &= \frac{1}{xy} (xy - x - y ) + \frac{1}{x} + \frac{1}{y}
  = 1.
\end{align*}
Note that the coefficients of $b(x, y) + b(y, x)$ for $x^s y^m$ with $s + m \geq 1$
is $b_{s, m} + b_{m, s}$, which means $b_{s, m} + b_{m, s} = 0$.

\end{proof}
\begin{lemma} \label{lemma:coef_h_2}
  Define $a_{0, 0} = 1$ and for $s, m \geq 0$ and $s + m \geq 1$, recall the recursive definition in \Cref{eqn:definition_a},
  \begin{align*}
    a_{s, m} 
    &= - \sum_{n=2}^{s+m+1} \frac{1}{n!} \sum_{l=0}^{s} \sum_{r=0}^m
    a_{l, r} \sum_{q=0}^{n-1} \binom{n-1}{q} \rho\left(q, s-q-l, \{c_k\}_{k=0}^{+\infty}\right) \\
     &\fakeeq    \rho\left(n-1-q, m+1-n+q-r, \{c_k\}_{k=0}^{+\infty}\right),
  \end{align*}
  where $c_k = -\frac{1}{1+k}$. Then we have $a(x, y) = \sum^{+\infty}_{s, m \geq 0} a_{s, m} x^s y^m = \frac{\log((1-x)(1-y))}{x y - x - y}$.
\end{lemma}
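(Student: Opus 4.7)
The plan is to mirror the generating-function argument used in the proof of Lemma~\ref{lemma:coef_h_1}: turn the recursion for $a_{s,m}$ into a functional equation for the bivariate generating function $a(x,y)$ and then solve it in closed form. Throughout I will use the key identity $c(x)=\sum_{k\ge 0} c_k x^k = \log(1-x)/x$, so that $x\,c(x)+y\,c(y)=\log\bigl((1-x)(1-y)\bigr)$.

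First I would note that the ``$\rho$'' factors linearize under generating-function summation. Concretely, using the definition of $\rho(n,m,\{c_k\})$ as the coefficient of $x^m$ in $c(x)^n$, one gets
\[
\sum_{s\ge 0} x^s \rho(q,\,s-q-l,\,\{c_k\}) = x^{q+l} c(x)^q,
\qquad
\sum_{m\ge 0} y^m \rho(n-1-q,\,m+1-n+q-r,\,\{c_k\}) = y^{n-1-q+r} c(y)^{n-1-q}.
\]
These are the workhorse identities, analogous to how $\bigl(yc(y)+xc(x)\bigr)^{n-1}c(x)$ appeared in Lemma~\ref{lemma:coef_h_1}. Before proceeding, I would verify that the recursion is well-posed, i.e.\ that when $l=s$ and $r=m$ the corresponding summand vanishes (the only nonzero contribution would require $n=1$, which is excluded by $n\ge 2$), so the right-hand side genuinely depends only on $a_{l,r}$ with $l+r<s+m$.

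Next I would multiply the recursion by $x^s y^m$, sum over $s+m\ge 1$, and exchange the order of summation (the outer bound $n\le s+m+1$ can be relaxed to $n\ge 2$ because the $\rho$-factors vanish otherwise). After substituting the two identities above and pulling the $\sum_{l,r} a_{l,r}\, x^l y^r = a(x,y)$ factor outside, the binomial sum over $q$ collapses:
\[
\sum_{q=0}^{n-1}\binom{n-1}{q}\bigl(xc(x)\bigr)^q \bigl(yc(y)\bigr)^{n-1-q}
=\bigl(xc(x)+yc(y)\bigr)^{n-1}=u^{n-1},
\qquad u:=\log\bigl((1-x)(1-y)\bigr).
\]
Thus the equation becomes
\[
a(x,y)-1 = -\,a(x,y)\sum_{n\ge 2}\frac{u^{n-1}}{n!} = -\,a(x,y)\cdot\frac{e^u-1-u}{u}.
\]

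Finally I would substitute $e^u=(1-x)(1-y)=1-x-y+xy$, so $e^u-1-u = xy-x-y-u$, and solve the resulting linear equation for $a(x,y)$. A direct simplification then yields
\[
a(x,y)=\frac{u}{xy-x-y}=\frac{\log\bigl((1-x)(1-y)\bigr)}{xy-x-y},
\]
as claimed. The only genuinely delicate step is the bookkeeping in the second paragraph: one has to make sure the ranges of $s,m,l,r,q$ in the original recursion exactly match what the generating-function product produces, and that the ``missing'' $n=0,1$ terms account correctly for the $-1$ coming from $a_{0,0}$. Everything else is routine algebraic manipulation of formal power series, analogous to Lemma~\ref{lemma:coef_h_1}.
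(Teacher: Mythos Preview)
Your proposal is correct and is essentially the same argument as the paper's proof: both convert the recursion into the functional equation $a(x,y)=1-\dfrac{a(x,y)}{u}\bigl(e^{u}-1-u\bigr)$ with $u=xc(x)+yc(y)=\log((1-x)(1-y))$ via the binomial collapse of the $q$-sum, and then solve for $a(x,y)$. Your explicit statement of the $\rho$-to-$c(x)^q$ identities and the well-posedness check are nice touches, but the route is identical.
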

\begin{proof}
 We begin with 
 \begin{align*}
    &\fakeeq a(x, y)
    = \sum_{s, m \geq 0}^{+\infty} a_{s, m} x^s y^m \\
    &= 1 - \sum_{s, m \geq 1}^{+\infty} x^s y^m \sum_{n=2}^{s+m+1} \frac{1}{n!} \sum_{l=0}^{s} \sum_{r=0}^m
    a_{l, r} \sum_{q=0}^{n-1} \binom{n-1}{q} \rho\left(q, s-q-l, \{c_k\}_{k=0}^{+\infty}\right) \\
    &\fakeeq \fakeeq \cdot 
        \rho\left(n-1-q, m+1-n+q-r, \{c_k\}_{k=0}^{+\infty}\right)\\
    &= 1 - \sum_{n=2}^{+\infty} \frac{1}{n!} 
    \sum_{s + m \geq n - 1}^{+\infty} x^s y^m \sum_{l=0}^{s} \sum_{r=0}^m
    a_{l, r} \sum_{q=0}^{n-1} \binom{n-1}{q} \rho\left(q, s-q-l, \{c_k\}_{k=0}^{+\infty}\right) \\
    &\fakeeq \fakeeq \cdot
        \rho\left(n-1-q, m+1-n+q-r, \{c_k\}_{k=0}^{+\infty}\right)\\
 \end{align*}
 Similar to the proof technique used in \Cref{lemma:coef_h_1}, we note that
 \begin{align*}
       \sum_{s + m \geq n - 1}^{+\infty} x^s y^m \sum_{l=0}^{s} \sum_{r=0}^m
    a_{l, r} \sum_{q=0}^{n-1} \binom{n-1}{q} \rho\left(q, s-q-l, \{c_k\}_{k=0}^{+\infty}\right)
        \rho\left(n-1-q, m+1-n+q-r, \{c_k\}_{k=0}^{+\infty}\right)
 \end{align*}
 is equivalent to $a(x, y) \left(x c(x) + y c(y)\right)^{n-1}$, where $c(x) = \sum_{k=0}^{+\infty} c_k x^k = \frac{\log(1-x)}{x}$.
 To constitute $x^s y^m$, we first select $a_{l, r} x^l y^r$ from $a(x, y)$,
 and then select $q$ times $xc(x)$ from $\left(x c(x) + y c(y)\right)^{n-1}$.

 Then we have
 \begin{align*}
   a(x, y) 
   &= 1 - \sum_{n=2}^{+\infty} \frac{1}{n!} a(x, y) \left(x c(x) + y c(y)\right)^{n-1} \\
   &= 1 - \frac{a(x, y)}{x c(x) + y c(y)} \sum_{n=2}^{+\infty} \frac{1}{n!} \left(x c(x) + y c(y)\right)^{n} \\
   &= 1 - \frac{a(x, y)}{x c(x) + y c(y)} \left(e^{x c(x) + y c(y)} - 1 - \left(x c(x) + y c(y)\right)\right).
 \end{align*}
 Plugging in the definition of $c(x)$, we obtain
 \begin{align*}
   a(x, y) 
   &= 1 - \frac{a(x, y)}{\log\left((1-x)(1-y)\right)} \left( (1 - x)(1-y) - 1 - \log\left((1-x)(1-y)\right) \right) \\
   &= 1 - \frac{a(x, y)}{\log\left((1-x)(1-y)\right)} \left( xy - x - y - \log\left((1-x)(1-y)\right) \right).
 \end{align*}
 Solving this equation for $a(x, y)$ gives us the desired result.
\end{proof}

  \subsection{Proof of \Cref{lemma:key}}
  Next, we need to show that $b(x)$ and $\mathcal{D}(x)$ is convergent.
Solving for $b(x)$ is similar to the proof of \citet[Theorem~A.2]{rosca2022continuous}.
Here, we present the derivation of the diffusion term. We have
\begin{align*}
  \mathcal{D}(x) = \sum_{p=0}^{+\infty} \eta^p h_p(x) = \sum_{p=0}^{+\infty} 
 \eta^p \sum_{k = 0}^{p-1} a_{k, p-1-k} \cdot \left(\nabla^2 f(x)\right)^{k} \Sigma(x)
  \left(\nabla^2 f(x)\right)^{p-1-k},
\end{align*}
where $a_{k, p-1-k}$ is determined in \Cref{lemma:SBEA_conditions}.
The above equation implies
\begin{align*}
  U^\T \mathcal{D}  U
  &= U^\T \left( \sum_{p=0}^{+\infty} 
 \eta^p \sum_{k = 0}^{p-1} a_{k, p-1-k} \cdot \left(\nabla^2 f(x)\right)^{k} \Sigma(x)
  \left(\nabla^2 f(x)\right)^{p-1-k} \right) U \\
  &= \sum_{p=0}^{+\infty} 
 \eta^p \sum_{k = 0}^{p-1} a_{k, p-1-k} \cdot \Lambda^{k} U^\T \Sigma(x)
  U \Lambda^{p-1-k}.
\end{align*}
Since $\Lambda$ is a diagonal matrix, we have
\ifdefined\mpversion
\begin{align*}
  \left[U^\T \mathcal{D}  U\right]_{i, j} 
  &= \sum_{p=0}^{+\infty} 
 \eta^p \sum_{k = 0}^{p-1} a_{k, p-1-k} \cdot \lambda_i^{k} \left[U^\T \Sigma(x)
 U\right]_{i, j} \lambda_j^{p-1-k} \\
  &= \eta  \left[U^\T \Sigma(x) U\right]_{i, j} \sum_{k=0}^{+\infty} 
 \sum_{q = 0}^{+\infty} a_{k, q} \cdot (\eta \lambda_i)^{k} (\eta \lambda_j)^{q},
\end{align*}
\else
\begin{align*}
  \left[U^\T \mathcal{D}  U\right]_{i, j} 
  &= \sum_{p=0}^{+\infty} 
 \eta^p \sum_{k = 0}^{p-1} a_{k, p-1-k} \cdot \lambda_i^{k} \left[U^\T \Sigma(x)
 U\right]_{i, j} \lambda_j^{p-1-k} \\
  &= \eta \sum_{p=0}^{+\infty} 
 \sum_{k = 0}^{p-1} a_{k, p-1-k} \cdot (\eta \lambda_i)^{k} \left[U^\T \Sigma(x)
 U\right]_{i, j} (\eta \lambda_j)^{p-1-k} \\
  &= \eta \sum_{k=0}^{+\infty} 
 \sum_{p = k+1}^{+\infty} a_{k, p-1-k} \cdot (\eta \lambda_i)^{k} \left[U^\T \Sigma(x)
 U\right]_{i, j} (\eta \lambda_j)^{p-1-k} \\
  &= \eta  \left[U^\T \Sigma(x) U\right]_{i, j} \sum_{k=0}^{+\infty} 
 \sum_{q = 0}^{+\infty} a_{k, q} \cdot (\eta \lambda_i)^{k} (\eta \lambda_j)^{q},
\end{align*}
\fi
where in the last equality we let $q = p - 1 -k$. Then according to \Cref{lemma:SBEA_conditions},
we have
\begin{align*}
  \left[U^\T \mathcal{D} U\right]_{i, j} = \eta \left[U^\T \Sigma(x) U\right]_{i, j} a(\eta \lambda_i, \eta \lambda_j),
\end{align*}
where $a(x, y) = \frac{\log(1-x)(1-y)}{x y - (x + y)}$.

\subsection{Proof of \Cref{prop:SPF_exist}}
In the first condition, the matrix $U^\T \Sigma U$ becomes diagonal.
We can check that
the diagonal elements are always positive definite,
and by taking square root of the eigenvalues, we obtain
\begin{align} \label{eq:analytic_D}
  D = U \sqrt{\widetilde{\Lambda} \eta \frac{\log\left(1 - \eta \Lambda\right)^2}{\left(1 - \eta \Lambda\right)^2 - 1}} U^\T,
\end{align}
where $\Sigma = U \widetilde{\Lambda} U^\T$ and $\nabla^2 f(x) = U \Lambda U^\T$.

For the second condition, we consider small stepsize regime.
We will proceed to show that the RHS matrix of \Cref{eq:sPF_diff}, denoted as $M$
is positive semi-definite, so that such a square root always exists.
  Consider matrix $K$, whose $(i, j)$-th element is defined as
  \begin{align*}
    K_{i, j} = \frac{\log(1-\eta \lambda_i) + \log(1-\eta \lambda_j)}{(1 - \eta \lambda_i)(1- \eta \lambda_j) - 1}.
  \end{align*}
  We consider small $\eta$, i.e., $\eta < \frac{1 - \frac{\sqrt{2}}{2}}{\norm{\nabla^2 f(x)}}$, 
  therefore, $K$ is real.
  The matrix $M$ can then be written as
  \begin{align*}
    M = \eta U^\T \Sigma U \odot K = \eta U^\T \Sigma U \odot \left(\mathbf{1} + K - \mathbf{1}\right)
     = \eta \left(U^\T \Sigma U + U^\T \Sigma U \odot \left(K - \mathbf{1}\right)\right),
  \end{align*}
  where $\mathbf{1}$ is a all-one matrix.
  Now since $\eta > 0$, to determine the positive semi-definiteness of M, it is 
  sufficient to determine positive semi-definiteness of $U^\T \Sigma U + U^\T \Sigma U \odot \left(K - \mathbf{1}\right)$.

  Note that for matrix $U^\T \Sigma U$, since $U$ is orthogonal, it only changes
  the eigenspace of $\Sigma$, but does not change the eigenvalues. To see why
  this is the case, assume $\lambda$ is a eigenvalue of $\Sigma$ with $v$ being the corresponding
  eigenvector. Then we have $\Sigma v = \lambda v$.
  Then $U^{-1}v$ is a eigenvector for $U^\T \Sigma U$ with eigenvalue $\lambda$, since
  \begin{align*}
    U^\T \Sigma U U^{-1} v = U^\T \Sigma v = U^\T \lambda v = \lambda U^{-1} v.
  \end{align*}

  Going back to our problem, since $U^\T \Sigma U$ is positive definite,
  a sufficient condition for matrix $U^\T \Sigma U + U^\T \Sigma U \odot \left(K - \mathbf{1}\right)$ 
  to be PSD is that 
  \begin{align} \label{eq:aac}
    \lambda_{\min}\left(U^\T \Sigma U\right) = \lambda_{\min}\left(\Sigma\right) \geq \norm*{U^\T \Sigma U \odot \left(K - \mathbf{1}\right)}.
  \end{align}
  Let us look at $\norm*{U^\T \Sigma U \odot \left(K - \mathbf{1}\right)}$,
  we know that
\ifdefined\mpversion
  \begin{align*}
    \norm*{U^\T \Sigma U \odot \left(K - \mathbf{1}\right)}
    &\leq \sqrt{d} \norm*{U^\T \Sigma U} \sup_{i, j} \left| \left[K - \mathbf{1}\right]_{i, j} \right| \\
    &= \sqrt{d} \lambda_{\max}\left(\Sigma\right)
    \sup_{i, j} \left| \frac{\log\left((1-\eta \lambda_i)(1-\eta \lambda_j)\right)}{(1 - \eta \lambda_i)(1- \eta \lambda_j) - 1} - 1 \right|,
  \end{align*}
\else
  \begin{align*}
    \norm*{U^\T \Sigma U \odot \left(K - \mathbf{1}\right)}
    &\leq \sqrt{d} \norm*{U^\T \Sigma U} \sup_{i, j} \left| \left[K - \mathbf{1}\right]_{i, j} \right| \\
    &= \sqrt{d} \norm*{U^\T \Sigma U}
    \sup_{i, j} \left| \frac{\log\left((1-\eta \lambda_i)(1-\eta \lambda_j)\right)}{(1 - \eta \lambda_i)(1- \eta \lambda_j) - 1} - 1 \right| \\
    &= \sqrt{d} \lambda_{\max}\left(\Sigma\right)
    \sup_{i, j} \left| \frac{\log\left((1-\eta \lambda_i)(1-\eta \lambda_j)\right)}{(1 - \eta \lambda_i)(1- \eta \lambda_j) - 1} - 1 \right|,
  \end{align*}
\fi
  where we used \Cref{lemma:hadamard_norm} for the inequality.

  According to \Cref{lemma:ub_elm}, 
  \begin{align*}
    \sup_{i, j} \left| \frac{\log\left((1-\eta \lambda_i)(1-\eta \lambda_j)\right)}{(1 - \eta \lambda_i)(1- \eta \lambda_j) - 1} - 1 \right|
    \leq \sup_{i, j} 1 - \left(1 - \max\{|\eta \lambda_i|, |\eta \lambda_j|\}\right)^2
    \leq 1 - \left(1 - \eta \norm{\nabla^2 f(x)} \right)^2
  \end{align*}
  Plugging in back to \Cref{eq:aac}, a sufficient condition for $M$ to be PSD is
  \begin{align*}
    \lambda_{\min}\left(\Sigma\right) \geq \sqrt{d} \lambda_{\max}\left(\Sigma\right) \left(1 - \left(1 - \eta \norm{\nabla^2 f(x)} \right)^2\right).
  \end{align*}
  Rearranging the terms gives us the condition in the theorem.

\subsection{Proof of \Cref{prop:well_posed}}
  To show the result, it is sufficient to show that the eigenvalues of $D$ are
  lower bounded away from $0$. For the first condition, we have an explicit
  form of $D$ in \Cref{eq:analytic_D}. Note that when the abstract value of all 
  entries of $\eta \Lambda$ are smaller than 1, entries of $\log(1- \eta \Lambda)^2 / \left(\left(1-\eta \Lambda\right)^2 - 1\right)$
  are lower bounded away from $0$. Therefore, as long as eigenvalues of $\Sigma$
  are lower bounded, i.e., $\Sigma$ is positive definite, the eigenvalues of $D$
  are lower bounded from 0. For the second condition,
  Similar to the proof for \Cref{prop:SPF_exist}, we know that as long as the
  stepsize satisfies the condition, the eigenvalues of the diffusion coefficient
  are positive and lower bounded. 

  Now, we show that why lower boundedness of eigenvalues of $D$ is sufficient to prove
  the result. We need to show that the drift term $b$ and diffusion term $D$
  are Lipschitz.
  According to \Cref{lemma:bound_drift}, the drift term is Lipschitz,
  and according to \Cref{lemma:bound_diffu},
  we know that $DD^\T$ is Lipschitz. Therefore, $\partial [DD^\T]_{i,j}/ \partial x_k$
  is upper bounded.
  Then according to \citet[Equation~(7)]{lin2017improved}, we have
  \begin{align*}
    \vecto\left(\frac{\partial D_{i, j}}{\partial DD^\T}\right) = (D \otimes I + I \otimes D)^{-1} \vecto\left(\tilde{\mathbf{1}}_{i, j}\right),
  \end{align*}
  where $\vecto$ is the vectorization of matrices and $\otimes$ denotes the Kronecker product.
  $\tilde{\mathbf{1}}_{i,j}$ is a matrix whose $(i, j)$-th element is 1 and
  all other elements are 0. Next, we get
\ifdefined\mpversion
  \begin{align*}
    \frac{\partial D_{i, j}}{\partial x_k} 
    &= \sum_{p, q} \left[(D \otimes I + I \otimes D)^{-1} \vecto\left(\tilde{\mathbf{1}}_{i, j}\right)\right]_{qd+p} \frac{\partial [DD^\T]_{p, q}}{\partial x_k} \\
    &\leq \sum_{p, q} \norm{ (D \otimes I + I \otimes D)^{-1} \vecto\left(\tilde{\mathbf{1}}_{i, j}\right) } \abs{\frac{\partial [DD^\T]_{p, q}}{\partial x_k} } \\
    &\leq \sum_{p, q} \norm{ (D \otimes I + I \otimes D)^{-1} }   \abs{\frac{\partial [DD^\T]_{p, q}}{\partial x_k} }.
  \end{align*}
\else
  \begin{align*}
    \frac{\partial D_{i, j}}{\partial x_k} 
    &= \sum_{p, q} \frac{\partial D_{i, j}}{\partial [DD^\T]_{p, q}} \frac{\partial [DD^\T]_{p, q}}{\partial x_k} \\
    &= \sum_{p, q} \left[(D \otimes I + I \otimes D)^{-1} \vecto\left(\tilde{\mathbf{1}}_{i, j}\right)\right]_{qd+p} \frac{\partial [DD^\T]_{p, q}}{\partial x_k} \\
    &\leq \sum_{p, q} \norm{ (D \otimes I + I \otimes D)^{-1} \vecto\left(\tilde{\mathbf{1}}_{i, j}\right) } \abs{\frac{\partial [DD^\T]_{p, q}}{\partial x_k} } \\
    &\leq \sum_{p, q} \norm{ (D \otimes I + I \otimes D)^{-1} }   \abs{\frac{\partial [DD^\T]_{p, q}}{\partial x_k} }.
  \end{align*}
\fi
  It remains to show that the eigenvalues of $(D \otimes I + I \otimes D)^{-1}$ is bounded.
  According to \citet[Theorem~2.15]{steeb2011matrix}, the eigenvalues of
  $D \otimes I + I \otimes D$ are $\{a + b \mid a, b \in \{\lambda_i\}_{i=1}^d \}$,
  where $\lambda_i$ are the eigenvalues of $D$. Therefore, the eigenvalues
  of $D \otimes I + I \otimes D$ are lower bounded since we have the eigenvalues
  of $D$ being lower bounded by a positive constant. This implies that
  eigenvalues of $(D \otimes I + I \otimes D)^{-1}$ are upper bounded, which
  concludes the proof.

\subsection{Helper Lemmas}
\begin{lemma} \label{lemma:ub_elm}
  For any $x, y \in \mathbb{R}$ and $\frac{\sqrt{2}}{2} - 1 \leq x, y \leq 1 - \frac{\sqrt{2}}{2}$,
  it holds that
  \begin{align*}
    \left|\frac{\log((1-x)(1-y))}{(1-x)(1-y) - 1} - 1 \right| \leq 1 - \left(1 - \max\{|x|, |y|\}\right)^2.
  \end{align*}
\end{lemma}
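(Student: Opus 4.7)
The plan is to collapse the two-variable bound into a one-variable monotonicity argument. Setting $u = (1-x)(1-y)$ and $M = \max\{|x|,|y|\}$, I would first observe, by checking the four sign patterns of $(x,y)$, that the map $(x,y)\mapsto (1-x)(1-y)$ attains its extremes at the corners of $[-M,M]^2$, giving the enclosure $(1-M)^2 \leq u \leq (1+M)^2$. This reduces the problem to bounding $|\phi(u)|$ on that interval, where $\phi(u) := 1 - \tfrac{\log u}{u-1}$, extended continuously by $\phi(1)=0$.

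Next, I would establish that $\phi$ is nondecreasing on $(0,\infty)$. A direct differentiation gives $\phi'(u) = g(u)/(u-1)^2$ with $g(u) = \log u - 1 + 1/u$; then $g(1)=0$ and $g'(u)=(u-1)/u^2$ show that $g$ has a global minimum $0$ at $u=1$, so $\phi'\geq 0$. Combined with the sign facts $\phi<0$ on $(0,1)$ and $\phi>0$ on $(1,\infty)$ (which follow from $\log u \leq u-1$), this means the maximum of $|\phi|$ over $[(1-M)^2, (1+M)^2]$ is attained at one of the two endpoints. Hence it will suffice to verify
\[
\bigl|\phi\bigl((1-M)^2\bigr)\bigr|\leq M(2-M)
\qquad\text{and}\qquad
\bigl|\phi\bigl((1+M)^2\bigr)\bigr|\leq M(2-M)
\]
for $M\in[0,\,1-\sqrt{2}/2]$.

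For the lower endpoint, I would substitute $s := M(2-M)$ and note that the hypothesis $M\leq 1-\sqrt{2}/2$ is exactly equivalent to $s\leq 1/2$. The inequality then reduces to the classical bound $-\log(1-s)\leq s+s^2$ on $[0,1/2]$, which follows by checking that $h(s)=s+s^2+\log(1-s)$ satisfies $h(0)=0$ and $h'(s)=s(1-2s)/(1-s)\geq 0$. For the upper endpoint, the inequality rearranges to the scalar bound $k(M) := 2\log(1+M) - 2M + 3M^2 - M^4 \geq 0$; since $k(0)=0$, it suffices to show $k'(M)\geq 0$ on $[0,\,1-\sqrt{2}/2]$, and after clearing the denominator in $k'(M)=2/(1+M)-2+6M-4M^3$ this reduces to the polynomial positivity $p(M):=2+3M-2M^2-2M^3\geq 0$. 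One can finish by locating the unique critical point of $p$ on $[0,1]$ and checking $\min(p(0),p(1))=1>0$.

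I expect the upper-endpoint verification to be the main obstacle: the cruder estimate $|\phi(u)|\leq |1-u|$ does not suffice because $(1+M)^2 - 1 = 2M+M^2$ already exceeds the target $M(2-M)$, so one must genuinely exploit the sublinearity of $\log(1+\cdot)$. The algebra leading to $p(M)$ and its positivity check is therefore the only non-mechanical step, and it is where the specific threshold $1-\sqrt{2}/2$ implicitly enters (via $s\leq 1/2$ on the lower side); on the upper side the inequality in fact holds on the larger range $[0,1]$.
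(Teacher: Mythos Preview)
Your argument is correct and follows the same overall architecture as the paper's proof: reduce to the one-variable function $\phi(u)=1-\tfrac{\log u}{u-1}$, establish its monotonicity via $g(u)=\log u - 1 + 1/u\geq 0$, and then control $|\phi|$ at the two extreme values $u=(1\pm M)^2$. Your lower-endpoint check, after the substitution $s=M(2-M)$, is literally the paper's final computation ($h(z)=\log z+z^2-3z+2\geq 0$ on $[1/2,1]$) in disguise.

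The one genuine difference is how the upper endpoint $u=(1+M)^2$ is handled. The paper does not bound it directly; instead it proves the comparison $|\phi((1-m)^2)|\geq |\phi((1+m)^2)|$ for all $m\in[0,1]$ (via a somewhat lengthy calculation showing a certain $g(m)\leq 0$), so that only the lower endpoint needs to be checked against the target. You skip this comparison entirely and bound the upper endpoint on its own, reducing to the polynomial positivity $p(M)=2+3M-2M^2-2M^3>0$ on $[0,1]$. Your route is a bit more direct and avoids the comparison step; the paper's route yields the extra information that the lower endpoint is always the worst one, but that is not needed for the lemma. Either way the proof goes through.
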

\begin{proof}
  Let $z = (1-x)(1-y)$, then the LHS becomes $\left|\frac{\log z}{z-1} -1 \right|$.
  We first study the function
\ifdefined\mpversion 
$f(z) = \frac{\log z}{z - 1} - 1$,
  whose gradient is given by $f'(z) = \frac{\frac{z-1}{z}-\log z}{(z-1)^2} = \frac{1 - \frac{1}{z} - \log z}{(z-1)^2} \leq 0$,
\else
  \begin{align*}
    f(z) = \frac{\log z}{z - 1} - 1,
  \end{align*}
  whose gradient is given by
  \begin{align*}
    f'(z) = \frac{\frac{z-1}{z}-\log z}{(z-1)^2} = \frac{1 - \frac{1}{z} - \log z}{(z-1)^2} \leq 0,
  \end{align*}
\fi
  where the last inequality comes from the fact $\log z \geq 1 - \frac{1}{z}$ for $z > 0$.
  Also note that $f(1) = 0$, therefore, we know that
  \begin{align*}
    |f(z)| = \begin{cases} \frac{\log z}{z - 1} - 1, \quad z \leq 1 \\ 
        1 - \frac{\log z}{z - 1}, \quad z > 1.
    \end{cases}
  \end{align*}
  According to previous analysis, the maximum of $\left| \frac{\log z}{z - 1} - 1 \right|$
  would be at the minimum or maximum possible value of $z$.
  Denoting $m \coloneqq \max\{|x|, |y|\}$ with $0 \leq m \leq 1 - \frac{\sqrt{2}}{2}$,
  then we know that
  \begin{align*}
    \left|\frac{\log((1-x)(1-y))}{(1-x)(1-y) - 1} - 1 \right| 
    \leq \max\left\{ \frac{\log(1-m)^2}{(1-m)^2-1} - 1, 
    1-\frac{\log(1+m)^2}{(1+m)^2-1}\right\}
  \end{align*}
  Next, we will show that for $0 \leq m \leq 1$,
\ifdefined\mpversion 
$\frac{\log(1-m)^2}{(1-m)^2-1} - 1 \geq 1-\frac{\log(1+m)^2}{(1+m)^2-1}$.
\else
  \begin{align*}
    \frac{\log(1-m)^2}{(1-m)^2-1} - 1 \geq 1-\frac{\log(1+m)^2}{(1+m)^2-1}.
  \end{align*}
\fi
  We start with
  \begin{align*}
    &\fakeeq \frac{\log(1-m)^2}{(1-m)^2-1} - 1 - \left(1-\frac{\log(1+m)^2}{(1+m)^2-1}\right) \\
    &= \frac{2(m+2)\log(1-m) + 2(m-2)\log(1+m) - 2m(m+2)(m-2)}{m(m+2)(m-2)}.
  \end{align*}
  Since $m(m+2)(m-2)<0$, it suffice to show that the numerator is less or equal than 0.
  We let
\ifdefined\mpversion 
$g(m) = 2(m+2)\log(1-m) + 2(m-2)\log(1+m) - 2m(m+2)(m-2)$,
\else
  \begin{align*}
    g(m) = 2(m+2)\log(1-m) + 2(m-2)\log(1+m) - 2m(m+2)(m-2),
  \end{align*}
\fi
  whose gradient is
  \begin{align*}
    g'(m) &= \frac{m+2}{m-1} + \log(1-m) + \frac{m-2}{1+m} + \log(1+m) - 3m^2 + 4 \\
          &\leq \frac{m+2}{m-1} + \frac{m-2}{1+m} - 4m^2 + 4
          = \frac{3m^2(m^2-3)}{(1-m)(m+1)} \leq 0,
  \end{align*}
  where the inequality holds because $\log z \leq z - 1$ for $z > 0$.
  Therefore, for $0 \leq m \leq 1$, we have $g(m) \leq g(0) = 0$. So far, we
  have proved that
  \begin{align*}
    \left|\frac{\log((1-x)(1-y))}{(1-x)(1-y) - 1} - 1 \right| 
    \leq \frac{\log(1-m)^2}{(1-m)^2-1} - 1.
  \end{align*}
  Next, we will prove that an upper bound is as follows
\ifdefined\mpversion 
$\frac{\log(1-m)^2}{(1-m)^2-1} - 1 \leq 1 - (1-m)^2$,
\else
  \begin{align*}
    \frac{\log(1-m)^2}{(1-m)^2-1} - 1 \leq 1 - (1-m)^2,
  \end{align*}
\fi
  for $0 \leq m \leq 1 - \frac{\sqrt{2}}{2}$.
  With a change of variable for $z = (1-m)^2$, to prove the above inequality is
  the same as showing the following for $\frac{1}{2} \leq z \leq 1$,
  \begin{align*}
    \frac{\log z}{z-1} - 1 - (1 - z) = \frac{\log z}{z-1} + z - 2
    = \frac{\log z + z^2 - 3z + 2}{z - 1} \leq 0.
  \end{align*}
  Then it is sufficient to show
\ifdefined\mpversion 
$h(z) \coloneqq \log z + z^2 - 3z + 2 \geq 0$.
\else
  \begin{align*}
    h(z) \coloneqq \log z + z^2 - 3z + 2 \geq 0.
  \end{align*}
\fi
  We have
  \begin{align*}
    h'(z) = \frac{1}{z} + 2z -3 = \frac{(z-1)(2z-1)}{z} \leq 0.
  \end{align*}
  Therefore $h(z) \geq h(1) = 0$ for $\frac{1}{2} \leq z \leq 1$.
  The whole proof is finished.
\end{proof}

\begin{lemma} \label{lemma:nabla_2}
  Let $u: \mathbb{R}^{d} \to \mathbb{R}$ and $\{v_i(x)\}_{i=1}^{n}$ be vector fields, i.e., $v_i: \mathbb{R}^d \to \mathbb{R}^d$.
  The terms in the result of
  \begin{align*}
    \nabla^2 \left( v_1 \cdot \nabla \left( v_2 \cdot \nabla \left( v_3 \cdot \nabla 
    \left(\cdots v_n \cdot \nabla u \right) \right) \right) \right),
  \end{align*}
  that contain only $\nabla^2 u$ and $\nabla v_i$ for $i \in [1, n]$ are
  \begin{align*}
    \sum_{S \subseteq [1, n]} \left(\prod_{i \in \widetilde S} \nabla v_{i}\right)
    \nabla^2 u \left(\prod_{i \in \widehat{ [1, n] \setminus S}} \nabla v_{i}\right),
  \end{align*}
  where $\cdot$ is the inner product between vectors, $\widetilde S$ is the ascending ordered set containing all elements
  from $S$, $\widehat S$ is the corresponding descending ordered set,
  and $\setminus$ is the set difference operator.
\end{lemma}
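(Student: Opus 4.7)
I would prove this by induction on $n$. For the base case $n = 0$, the nested expression reduces to $u$, so $\nabla^2 u$ is the only term, and the right-hand side collapses to the single summand indexed by $S = \emptyset$ with empty matrix products on both sides of $\nabla^2 u$.

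For the inductive step, write $T_n := v_1 \cdot \nabla(v_2 \cdot \nabla(\cdots v_n \cdot \nabla u))$ and $T^{(2)} := v_2 \cdot \nabla(v_3 \cdot \nabla(\cdots v_n \cdot \nabla u))$, so that $T_n = \sum_k v_1^k\, \partial_k T^{(2)}$. Applying the product rule twice gives
\[
\partial_i \partial_j T_n = \sum_k \Bigl[(\partial_i \partial_j v_1^k)(\partial_k T^{(2)}) + (\partial_j v_1^k)(\partial_i \partial_k T^{(2)}) + (\partial_i v_1^k)(\partial_j \partial_k T^{(2)}) + v_1^k(\partial_i \partial_j \partial_k T^{(2)})\Bigr].
\]
Since there are $n+2$ total derivatives distributed over $n+1$ factors, a ``good'' term (only $\nabla v_\ell$ and $\nabla^2 u$) forces each $v_\ell$ to carry exactly one derivative and $u$ to carry exactly two. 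Hence the first summand (two derivatives on $v_1$) and the last (zero derivatives on $v_1$, which forces some other factor to absorb an extra derivative) cannot contribute. In the two middle summands, $v_1$ is already differentiated exactly once, and the factor $\partial_i \partial_k T^{(2)}$ (respectively $\partial_j \partial_k T^{(2)}$) is an entry of $\nabla^2 T^{(2)}$; keeping only its good part and applying the inductive hypothesis yields
\[
\bigl(\nabla^2 T^{(2)}\bigr)^{\text{good}} = \sum_{S' \subseteq [2,n]} \left(\prod_{\ell \in \widetilde{S'}} \nabla v_\ell\right) \nabla^2 u \left(\prod_{\ell \in \widehat{[2,n] \setminus S'}} \nabla v_\ell\right) =: \sum_{S' \subseteq [2,n]} B_{S'}.
\]

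To close the induction, I would split the target sum $\sum_{S \subseteq [n]}$ according to whether $1 \in S$. When $1 \in S$, setting $S' = S \setminus \{1\}$ makes $1$ the first element of $\widetilde{S}$, so $\nabla v_1$ appears as the leftmost factor and the term equals $\nabla v_1 \cdot B_{S'}$. When $1 \notin S$, setting $S' = S \subseteq [2,n]$ makes $1$ the smallest element of $[n]\setminus S$, hence the last element of the descending ordering $\widehat{[n]\setminus S}$, so $\nabla v_1$ appears as the rightmost factor and the term equals $B_{S'} \cdot \nabla v_1$. These two cases should correspond exactly to the two surviving summands from the product-rule expansion: the contraction through the dummy index $k$ in $(\partial_i v_1^k)[B_{S'}]_{jk}$ and $(\partial_j v_1^k)[B_{S'}]_{ik}$ realizes multiplication of $B_{S'}$ by $\nabla v_1$ on opposite sides.

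The main obstacle is purely a bookkeeping issue: one must fix an index convention for $\nabla v_i$ (viewed as a matrix of first partials) and then verify that the two contractions above really do assemble into $(\nabla v_1 \cdot B_{S'})_{ij}$ and $(B_{S'} \cdot \nabla v_1)_{ij}$ respectively, consistent with how the matrix product in the statement is meant to be parsed. Because $B_{S'}$ is not symmetric in general for $n \geq 2$, one has to be careful about which index of $\nabla v_1$ contracts with which index of $B_{S'}$; once this convention is pinned down, the two recursion cases cleanly match the split of the target sum and the induction closes.
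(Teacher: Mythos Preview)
Your proposal is correct and follows essentially the same approach as the paper: induction on $n$, peeling off $v_1$ and reducing to $\nabla^2 T^{(2)}$. The only difference is cosmetic---the paper works directly in matrix form (using the identity $\nabla^2(v_1\cdot\nabla w)=\nabla v_1\,\nabla^2 w+\nabla^2 w\,\nabla v_1+\mathcal{C}$ with $w=T^{(2)}$) and takes $n=1$ as the base case, whereas you expand componentwise and start from $n=0$; the inductive structure and the split according to whether $1\in S$ are identical.
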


\begin{proof}
  We will prove this by induction. First, let us consider the case
  of $n=1$, then clearly, we have
\ifdefined\mpversion 
  $\nabla^2 \left(v_1 \cdot \nabla u\right)
    = \nabla v_1 \nabla^2 u + \nabla^2 u \nabla v_1 + \cC$,
  \else
  \begin{align*}
    \nabla^2 \left(v_1 \cdot \nabla u\right)
    = \nabla v_1 \nabla^2 u + \nabla^2 u \nabla v_1 + \cC,
  \end{align*}
\fi
  where $\cC$ contains irrelevant terms, i.e., terms with higher order
  gradients of $v_i$ and $u$.
  Now assume the conclusion holds true for $n=1, \dots, m-1$, then
  when $n = m$, we have
  \begin{align*}
    &\fakeeq \nabla^2 \left( v_1 \cdot \nabla \left( v_2 \cdot \nabla \left( v_3 \cdot \nabla 
    \left(\cdots v_m \cdot \nabla u \right) \right) \right) \right) \\
    &= \nabla v_1 \nabla^2 \left( v_2 \cdot \nabla \left( v_3 \cdot \nabla 
    \left(\cdots v_m \cdot \nabla u \right) \right) \right)
    + \nabla^2 \left( v_2 \cdot \nabla \left( v_3 \cdot \nabla 
    \left(\cdots v_m \cdot \nabla u \right) \right) \right) \nabla v_1 + \cC\\
    &= \nabla v_1 
    \sum_{S \subseteq [2, m]} \left(\prod_{i \in \widetilde S} \nabla v_{i}\right)
    \nabla^2 u \left(\prod_{i \in \widehat{ [2, m] \setminus S}} \nabla v_{i}\right)
    + 
    \sum_{S \subseteq [2, m]} \left(\prod_{i \in \widetilde S} \nabla v_{i}\right)
    \nabla^2 u \left(\prod_{i \in \widehat{ [2, m] \setminus S}} \nabla v_{i}\right)
   \nabla v_1 + \cC \\
    &= \sum_{S \subseteq [1, m]} \left(\prod_{i \in \widetilde S} \nabla v_{i}\right)
    \nabla^2 u \left(\prod_{i \in \widehat{ [1, m] \setminus S}} \nabla v_{i}\right)
    + \cC,
  \end{align*}
  which concludes the proof.
\end{proof}

\begin{lemma} \label{lemma:hadamard_norm}
  We have for any $A, B \in \mathbb{R}^{d \times d}$,
  \begin{align*}
    \| A \odot B \| \leq \sqrt{d} \sup_{i, j} A_{i, j} \| B \|.
  \end{align*}
\end{lemma}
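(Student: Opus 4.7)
The plan is to reduce the claim to three standard norm inequalities linking the spectral (operator) norm to the Frobenius norm on $\mathbb{R}^{d \times d}$, then chain them together. Throughout, let $M \coloneqq \sup_{i,j} A_{i,j}$ and interpret $\|\cdot\|$ as the spectral norm.

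First, I would use the elementary fact that for any square matrix $C \in \mathbb{R}^{d \times d}$, $\|C\| \leq \|C\|_F$, which follows because $\|C\|^2 = \sigma_{\max}(C)^2 \leq \sum_{i=1}^d \sigma_i(C)^2 = \|C\|_F^2$. Applying this to $C = A \odot B$ reduces the problem to bounding $\|A \odot B\|_F$. Second, since $(A \odot B)_{i,j} = A_{i,j} B_{i,j}$, a direct entrywise computation gives
\begin{align*}
\|A \odot B\|_F^2 = \sum_{i,j} A_{i,j}^2 B_{i,j}^2 \leq M^2 \sum_{i,j} B_{i,j}^2 = M^2 \|B\|_F^2,
\end{align*}
hence $\|A \odot B\|_F \leq M \|B\|_F$.

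Third, I would apply the reverse Frobenius/spectral comparison for $d \times d$ matrices: since $B$ has at most $d$ nonzero singular values, $\|B\|_F^2 = \sum_{i=1}^d \sigma_i(B)^2 \leq d\, \sigma_{\max}(B)^2 = d\, \|B\|^2$, so $\|B\|_F \leq \sqrt{d}\, \|B\|$. Chaining the three bounds,
\begin{align*}
\|A \odot B\| \;\leq\; \|A \odot B\|_F \;\leq\; M \|B\|_F \;\leq\; \sqrt{d}\, M \, \|B\|,
\end{align*}
which is exactly the claimed inequality. There is no real obstacle here; the proof is a routine chaining of Frobenius/spectral norm equivalences, and the $\sqrt{d}$ factor arises solely from the last step, reflecting the fact that $B$ has at most $d$ singular values.
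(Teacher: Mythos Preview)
Your argument is correct and arrives at the same bound, but the route differs from the paper's. The paper does not pass through the Frobenius norm at all: it first bounds each column, $\|(A\odot B)e_k\| \leq M\,\|Be_k\| \leq M\,\|B\|$, then for a general $v = \sum_k v_k e_k$ uses the triangle inequality $\|(A\odot B)v\| \leq \sum_k |v_k|\,\|(A\odot B)e_k\| \leq M\,\|B\|\sum_k|v_k|$ and finishes with Cauchy--Schwarz, $\sum_k|v_k|\leq\sqrt d\,\|v\|$. Your Frobenius sandwich is arguably cleaner and makes transparent exactly where the $\sqrt d$ comes from (the rank of $B$), whereas the paper's argument is more elementary in that it avoids singular values entirely. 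Both are equally short.

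One small caveat, shared by both your write-up and the paper's statement: you set $M=\sup_{i,j}A_{i,j}$ and then use $A_{i,j}^2\leq M^2$, which requires $M=\sup_{i,j}|A_{i,j}|$ (and indeed that is how the lemma is invoked in the paper, with an explicit absolute value). It is worth writing $|A_{i,j}|$ in the definition of $M$ to make the entrywise step valid as stated.
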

\begin{proof}
  Let $e_1, e_2, \dots, e_d$ be the standard basis of $\mathbb{R}^d$. Then
  we have for any $1 \leq k \leq d$,
  \begin{align*}
    \| (A \odot B) e_k \| \leq \sup_{i, j} A_{i, j} \| B e_k \| \leq \sup_{i, j} A_{i, j} \| B \|.
  \end{align*}
  Then for any vector $v \in \mathbb{R}^d$, which can be written as
  $v = v_1 e_1 + v_2 e_2 + \cdots$, we have
  \begin{align*}
    \| (A \odot B) v \| \leq \sum_{k=1}^d |v_k| \| (A \cdot B) e_k \|
    \leq \sup_{i, j} A_{i, j} \|B\| \sum_{k=1}^d |v_k| \leq \sup_{i, j} A_{i, j} \|B\| \sqrt{d} \|v\|,
  \end{align*}
  where the last inequality we used Cauchy-Schwartz inequality.
\end{proof}

\section{Approximation Error Analysis}

\subsection{Proofs for Section~\ref{sec:approx}}
\label{sec:apx_proofs_approx}

\begin{proof}[Proof for \Cref{thm:weak_approx}]
The proof follows the idea of Theorem~2.2 of \citet{feng2017semi}, however our
proof is more difficult since we need to deal with more complicated \oursde.
The assumption of $\| F(x; \xi)\|_{c^7} < \infty$ implies $\| f(x)\|_{c^7} < \infty$ and
$\| \Sigma(x) \|_{c^6} < \infty$.
According to \Cref{lemma:bound_drift}, we have boundedness of $\norm{b(x)}_{C^5}$,
where $b(x)$ is the drift term of \oursde.
According to \Cref{lemma:bound_drift}, we have boundedness of the diffusion
term of \oursde, i.e., $\frac{\norm{U(x)L(x)U(x)^\T}_{C^5}}{\eta}$ is upper bounded.
Let $u^n(x)$ defined the same as in \citet{feng2017semi}, i.e.,
$u^n(x_0) = \Ep{u(x_n)}$. According to
Theorem~2.1 of \citet{feng2017semi}, for small enough $\eta$, we have
boundedness of $\norm{u^n}_{C^6}$.

Starting both from $u^n(x)$, we first measure the error between the
discrete-time SGD and continuous-time iterates after one-step.
For the discrete-time SGD, by Taylor expansion, and boundedness of $\nabla f(x; \xi)$
and $\norm{u^n(x)}_{C^6}$, we have
\ifdefined\mpversion
\begin{align}
  \abs{u^{n+1}(x) - u^n(x) + \eta \inp{\nabla f(x)}{\nabla u^n(x)}
  - \frac{1}{2} \eta^2 \Ep{\nabla f(x; \xi) \nabla f(x; \xi)^\T}: \nabla^2 u^n(x)}
  \leq \cO\left(\eta^3\right). \label{eq:bbd}
\end{align}
\else
\begin{align}
  &\fakeeq \abs{u^{n+1}(x) - u^n(x) + \eta \inp{\nabla f(x)}{\nabla u^n(x)}
  - \frac{1}{2} \eta^2 \Ep{\nabla f(x; \xi) \nabla f(x; \xi)^\T}: \nabla^2 u^n(x)} \nonumber \\
  &= \abs{u^{n+1}(x) - u^n(x) + \eta \inp{\nabla f(x)}{\nabla u^n(x)}
  - \frac{1}{2} \eta^2 \left(\nabla f(x) \nabla f(x)^\T + \Sigma(x) \right): \nabla^2 u^n(x)} \nonumber \\
  &\leq \cO\left(\eta^3\right). \label{eq:bbd}
\end{align}
\fi

In continuous time, after time $\eta$, according to \Cref{lemma:continuous_expand}, we have
\begin{align} \label{eq:bba}
  \abs{e^{\eta \cL} - u^n(x) - \eta \cL u^n(x) - \frac{\eta^2}{2} \cL^2 u^n(x)} \leq 
  \cO(\eta^3).
\end{align}
Note that different from previous SDEs (\Cref{eq:SDE1,eq:SDE2}), now $\eta \cL u^n(x)$
and $\frac{\eta^2}{2} \cL^2 u^n(x)$ contains infinite many terms and we need to
consider their errors. We know that
\begin{align*}
  \eta \cL u^n(x) = \eta b(x) \cdot \nabla u^n(x)
  + \frac{\eta}{2} D(x)D(x)^\T : \nabla^2 u^n(x),
\end{align*}
where $b(x)$ is the drift term of \oursde{}.
By Taylor expansion of $\cL u^n$ w.r.t. $\eta$ and
\Cref{lemma:L_expand}, we have
\begin{align*}
  \bigg| 
  &\eta b(x) \cdot \nabla u^n(x) + \eta \nabla f(x) \cdot \nabla u^n (x)
  + \frac{\eta^2}{2} \nabla^2 f(x) \nabla f(x) \cdot \nabla u^n(x) \\
  &\fakeeq + \frac{\eta}{2} U(x)L(x)U(x)^\T : \nabla^2 u^n(x) - \frac{1}{2} \eta^2 \Sigma(x): \nabla^2 u^n(x)
\bigg|
  \leq \cO(\eta^3).
\end{align*}
Then we have
\begin{align} \label{eq:bbb}
  \abs{\eta \cL u^n(x) + \eta \nabla f(x) \cdot \nabla u^n (x)
  + \frac{\eta^2}{2} \nabla^2 f(x) \nabla f(x) \cdot \nabla u^n(x) - \frac{1}{2} \eta^2 \Sigma(x): \nabla^2 u^n(x)}
  \leq \cO\left(\eta^3\right).
\end{align}
Next, we look at the errors in $\frac{\eta^2}{2} \cL^2 u^n(x)$.
By Taylor expansion of $\cL^2 u^n$ w.r.t. $\eta$ and
\Cref{lemma:L_expand}, we have
\begin{align} \label{eq:bbc}
  \abs{\frac{\eta^2}{2} \cL^2 u^n(x) - \frac{\eta^2}{2}\nabla^2 f(x) \nabla f(x) \cdot \nabla u^n(x)
  - \frac{\eta^2}{2} \nabla f(x)\nabla f(x)^\T : \nabla^2 u^n(x)} \leq \cO\left(\eta^3\right).
\end{align}
Combining \Cref{eq:bba,eq:bbb,eq:bbc}, we get
\begin{align*}
  \abs{e^{\eta \cL} u^n(x) - u^n(x) + \eta \nabla f(x) \cdot \nabla u^n (x)
  - \frac{\eta^2}{2} \Sigma(x): \nabla^2 u^n(x)
  - \frac{\eta^2}{2} \nabla f(x)\nabla f(x)^\T : \nabla^2 u^n(x)} \leq \cO\left(\eta^3\right).
\end{align*}
Combining the above equation and \Cref{eq:bbd}, we have
\ifdefined\mpversion
$\abs{e^{\eta \cL} u^n(x) - u^{n+1}(x)} \leq \cO\left(\eta^3\right)$.
Denote $E^n = \norm{u^n(x) - u(x, n\eta)}_{L^{\infty}}$.
\else
\begin{align*}
  \abs{e^{\eta \cL} u^n(x) - u^{n+1}(x)} \leq \cO\left(\eta^3\right).
\end{align*}
Denote
\begin{align*}
  E^n = \norm{u^n(x) - u(x, n\eta)}_{L^{\infty}}.
\end{align*}
\fi
We obtain
\begin{align*}
  E^{n+1} 
  &= \norm{u^{n+1}(x) - u(x, (n+1)\eta) + e^{\eta \cL} u^n(x) - e^{\eta \cL} u^n(x)}_{L^{\infty}} \\
  &\leq \norm{e^{\eta \cL} \left(u^n(x) - u(x, n \eta)\right)}_{L^{\infty}}
  + \norm{u^{n+1}(x) - e^{\eta \cL} u^n(x)}_{L^{\infty}} \\
  &= \norm{e^{\eta \cL} \left(u^n(x) - u(x, n \eta)\right)}_{L^{\infty}}
  + \cO\left(\eta^3\right) \\
  &\leq \norm{u^n(x) - u(x, n \eta)}_{L^{\infty}}
  + \cO\left(\eta^3\right)
  = E^n + \cO\left(\eta^3\right),
\end{align*}
where the last inequality comes from the $L^{\infty}$ contraction of $e^{t\cL}$
(see Lemma~2.1 of \citet{feng2017semi}). Then we know that 
\ifdefined\mpversion 
$E^n \leq n\cO\left(\eta^3\right) = \frac{T}{\eta} \cO\left(\eta^3\right) \leq \cO\left(\eta^2\right)$.
\else
\begin{align*}
E^n \leq n\cO\left(\eta^3\right) = \frac{T}{\eta} \cO\left(\eta^3\right) \leq \cO\left(\eta^2\right).
\end{align*}
\fi
  
\end{proof}

\begin{proof}[Proof of \Cref{lemma:regul_drift_diff}]
  The lemma follows by \Cref{lemma:bound_drift,lemma:bound_diffu}
\end{proof}

\begin{proof}[Proof for \Cref{thm:weak_approx_lambda}]
The assumption of $\| F(x; \xi)\|_{c^8} < \infty$ implies $\| f(x)\|_{c^8} < \infty$.
According to \Cref{lemma:bound_drift}, we have boundedness of $\norm{b(x)}_{C^6}$,
where $b(x)$ is the drift term of \oursde.
According to \Cref{lemma:bound_drift}, we have boundedness of the diffusion
term of \oursde, i.e., $\frac{\norm{U(x)L(x)U(x)^\T}_{C^6}}{\eta}$ is upper bounded.
Denote $M_p^n \coloneqq \sum_{1 \leq \abs{J} \leq p} \abs{D^J u^n(x)}$.

Similar to the proof for \Cref{thm:weak_approx}, we first have
\begin{align*}
  \norm{u^{n+1} - u^n + \eta \nabla f \cdot \nabla u
  - \frac{1}{2} \eta^2 \Sigma: \nabla^2 u^n}_{L^{\infty}} \leq \cO\left(\eta^3 s^3 M^n_3 \right).
\end{align*}
First, we expand $e^{\eta \cL} u^n$ using Taylor's expansion and \Cref{lemma:continuous_expand},
\begin{align*}
  \norm{e^{\eta \cL} u^n - u^n - \eta \cL u^n - \frac{1}{2}\eta^2 \cL^2 u^n - \frac{1}{3!}\eta^3 \cL^3 u^n}_{L^{\infty}}
  \leq \cO\left(\eta^4 \lambda^3 s^4 M_8^n \right).
\end{align*}
Then using \Cref{lemma:L_expand}, we find the expansion of $\cL u^n$, $\cL^2 u^n$
and $\cL^3 u^n$, respectively.
\begin{align}
  \cL u^n 
  &= -\nabla f \cdot \nabla u^n + \eta\left(-\frac{1}{2}\nabla^2 f \nabla f \cdot \nabla u^n + \frac{1}{2} \Sigma : \nabla^2 u^n\right) \nonumber \\
  &\fakeeq + \frac{\eta^2}{2}\left(-\frac{2}{3}\left(\nabla^2 f\right)^2 \nabla f \cdot \nabla u^n  + \frac{1}{2}\left(\Sigma \nabla^2 f
  + \nabla^2 f \Sigma\right) : \nabla^2 u^n \right) +
  \cO\left(\eta^3 \lambda^3 s M_2^n \right). \label{eq:lambda_dep_1}
\end{align}
For $\cL^2 u^n$, we have
\begin{align*}
  \cL^2 u^n
  &= \nabla f(x)^\T \nabla^2 f \nabla u^n + \nabla f^\T \nabla^2 u^n \nabla f \\
  &\fakeeq + \eta \left( \nabla f^\T \left(\nabla^2 f\right)^2 \nabla u^n + \nabla f^\T \nabla^2 f \nabla^2 u^n \nabla f
    - \frac{1}{2} \Sigma: \left( \nabla^2 u^n \nabla^2 f + \nabla^2 f \nabla^2 u^n \right) + \cO(M_3^n)\right) \\
  &\fakeeq + \cO\left(\eta^2 \lambda^3 s^2 M_4^n \right).
\end{align*}
For $\cL^3 u^n$, we have
\begin{align*}
  \cL^3 u^n
  &= - \nabla f^\T \left(\nabla^2 f \right)^2 \nabla u^n - 3 \nabla f^\T \nabla^2 f \left(\nabla u^n\right)^2 
  + \cO\left(M_3^n\right) + \cO\left(\eta \lambda^3 s^3 M_6^n \right).
\end{align*}

Summarizing the above, we obtain
\begin{align*}
  \norm{e^{\eta \cL} u^n - u^{n+1}}_{L^{\infty}} \leq \cO\left(\eta^3 s^3 M^n_3 + \eta^4 \lambda^3 s^4 M_8^n \right).
\end{align*}
Following the last few steps in the proof of \Cref{thm:weak_approx}, we obtain
\begin{align*}
  E^n \leq \sum_{k=0}^{n-1} \cO\left(\eta^3 s^3 M^k_3 + \eta^4 \lambda^3 s^4 M_8^k \right)
\end{align*}
finish the proof.

\end{proof}

\begin{proof}[Proof for \Cref{thm:weak_approx_lambda_other_SDEs}]
  The proof follows similarly to the proof of \Cref{thm:weak_approx_lambda}.
  For \sme-2, in the expansion of $\cL u^n$, i.e., \Cref{eq:lambda_dep_1},
  the term
  \begin{align*}
    \frac{\eta^2}{2}\left(-\frac{2}{3}\left(\nabla^2 f\right)^2 \nabla f \cdot \nabla u^n  + \frac{1}{2}\left(\Sigma \nabla^2 f
  + \nabla^2 f \Sigma\right) : \nabla^2 u^n \right)
  \end{align*}
  is missing. Therefore, after cancellation of terms, there will be error terms of
  $\lambda^2 s \norm{\nabla u^n}$ and $\lambda \norm{\nabla^2 u^n}$.
  For \pfs{}, in \Cref{eq:lambda_dep_1}, the term
  \begin{align*}
    \frac{\eta^2}{2}\left(\frac{1}{2}\left(\Sigma \nabla^2 f
  + \nabla^2 f \Sigma\right) : \nabla^2 u^n \right)
  \end{align*}
  is missing. Therefore, it results in additional error of $\lambda \norm{\nabla^2 u^n}$.
\end{proof}

\begin{proof}[Proof of \Cref{lemma:strong_sc}]
  We will show the results for both the convex and strongly-convex 
  (corresponding to our \Cref{remark:sc}) settings.
  According to \citet{feng2017semi}, for any $n \geq 0$, $u^{n+1}(x)$ can be written as
  \begin{align*}
    u^{n+1} (x) = \Ep{u^n\left(x - \eta \nabla F(x; \xi)\right)}.
  \end{align*}
  Let us check the first-order derivatives of $u^{n+1}$. Denoting $y =
  x-\eta \nabla F(x; \xi)$, we obtain
  \begin{align*}
    \nabla u^{n+1} (x) = \Ep{\left(I - \eta \nabla^2 F(x; \xi) \right)\nabla u^{n} (y) }.
  \end{align*}
  \begin{enumerate}
    \item If $f(\cdot)$ is strongly-convex, we obtain
  \begin{align*}
    \sup_{x} \norm{\nabla u^{n+1} (x)} \leq (1-\eta \mu) \sup_x \norm{\nabla u^{n} (x)}.
  \end{align*}
   With small $\eta$, we have $\eta \sup_{x, \xi} \norm{\nabla^2 F(x; \xi)} < 1$, and the recursion form is
  a contraction:
  \begin{align*}
    \sup_x \norm{\nabla u^{n} (x)} 
    &\leq (1-\mu \eta)^{n} \sup_x \norm{\nabla u (x)}
    \leq e^{-\mu \eta n} \sup_x \norm{\nabla u (x)}
    \leq \cO\left(e^{-\mu \eta n} \norm{u}_{C^1}\right).
  \end{align*}
\item If $f(\cdot)$ is convex, we have
  \begin{align*}
    \sup_{x} \norm{\nabla u^{n+1} (x)} \leq \sup_x \norm{\nabla u^{n} (x)} \leq \cO\left(\norm{u}_{C^1}\right).
  \end{align*}
  \end{enumerate}
  The notation $\cO\left(\cdot\right)$ does not depend on $n$, $\eta$ or upper bounds for derivatives of $F$ in $\cO(\cdot)$ with orders higher than the second order.

  Next, we consider the second-order gradients:
  \begin{align*}
    \nabla^2 u^{n+1} (x) = \Ep{\left(I - \eta \nabla^2 F(x; \xi) \right)\nabla^2 u^{n} (y) \left(I - \eta \nabla^2 F(x; \xi) \right)
    - \eta \nabla u^n(y)^\T \nabla^3 f(x) }.
  \end{align*}
  Then we also obtain a recursion for the 2-norm of its vector form:
\ifdefined\mpversion
  \begin{align}
    &\fakeeq \underbrace{\sup_x \norm{\vectt{\nabla^2 u^{n+1} (x)}}_2}_{a_{n+1}} \\
    &=\sup_x \norm{\nabla^2 u^{n+1} (x)}_F \\
    &\leq \sup_x \norm{\left(I - \eta \nabla^2 F(x; \xi) \right)\nabla^2 u^{n} (y) \left(I - \eta \nabla^2 F(x; \xi) \right)}_F + \eta \sup_x \norm{\nabla u^n(y)^\T \nabla^3 f(x)}_F \nonumber \\
    &= \sup_x \norm{\left(I - \eta \nabla^2 F(x; \xi) \right)^{\otimes 2} \vectt{\nabla^2 u^{n} (y)} }_2 + \eta \sup_x \norm{\nabla u^n(y)^\T \nabla^3 f(x)}_F \nonumber \\
    &\leq \sup_x \norm{\left(I - \eta \nabla^2 F(x; \xi) \right)^{\otimes 2}}_2 \norm{\vectt{\nabla^2 u^{n} (y)} }_2 + \eta \sup_x \norm{\nabla u^n(y)^\T \nabla^3 f(x)}_F \nonumber \\
    &\leq \sup_x \norm{I - \eta \nabla^2 F(x; \xi)}_2^{2} \norm{ \vectt{\nabla^2 u^{n} (y)} }_2 + \eta \sup_x \norm{\nabla u^n(y)^\T \nabla^3 f(x)}_F \nonumber,
  \end{align}
\else
  \begin{align}
    &\fakeeq \underbrace{\sup_x \norm{\vectt{\nabla^2 u^{n+1} (x)}}_2}_{a_{n+1}} \\
    &=\sup_x \norm{\nabla^2 u^{n+1} (x)}_F \\
    &\leq \sup_x \norm{\left(I - \eta \nabla^2 F(x; \xi) \right)\nabla^2 u^{n} (y) \left(I - \eta \nabla^2 F(x; \xi) \right)}_F + \eta \sup_x \norm{\nabla u^n(y)^\T \nabla^3 f(x)}_F \nonumber \\
    &= \sup_x \norm{\vectt{\left(I - \eta \nabla^2 F(x; \xi) \right)\nabla^2 u^{n} (y) \left(I - \eta \nabla^2 F(x; \xi) \right)}}_2 + \eta \sup_x \norm{\nabla u^n(y)^\T \nabla^3 f(x)}_F \nonumber \\
    &= \sup_x \norm{\left(I - \eta \nabla^2 F(x; \xi) \right)^{\otimes 2} \vectt{\nabla^2 u^{n} (y)} }_2 + \eta \sup_x \norm{\nabla u^n(y)^\T \nabla^3 f(x)}_F \nonumber \\
    &\leq \sup_x \norm{\left(I - \eta \nabla^2 F(x; \xi) \right)^{\otimes 2}}_2 \norm{\vectt{\nabla^2 u^{n} (y)} }_2 + \eta \sup_x \norm{\nabla u^n(y)^\T \nabla^3 f(x)}_F \nonumber \\
    &\leq \sup_x \norm{I - \eta \nabla^2 F(x; \xi)}_2^{2} \norm{ \vectt{\nabla^2 u^{n} (y)} }_2 + \eta \sup_x \norm{\nabla u^n(y)^\T \nabla^3 f(x)}_F \nonumber,
  \end{align}
\fi
  where $\otimes$ is the Kronecker product, and we used that $\norm{A\otimes B}_2 \leq \norm{A}_2 \norm{B}_2$
  for matrices $A$ and $B$.
  \begin{enumerate}
    \item If $f(\cdot)$ is strongly-convex, the recursion becomes
      \begin{align}
    \underbrace{\sup_x \norm{\vectt{\nabla^2 u^{n+1} (x)}}_2}_{a_{n+1}}
    \leq \underbrace{(1-\eta \mu)^2}_{c} \underbrace{\sup_x \norm{\vectt{\nabla^2 u^{n} (x)}}_2}_{a_n} + \underbrace{\eta \sup_x \norm{\nabla u^n(y)^\T \nabla^3 f(x)}_F}_{b_n}. \label{eq:recur}
      \end{align}
  According to our
  assumption and previous results, $b_n \leq \cO(\eta e^{-\mu \eta n} \norm{u}_{C^1})$.
  Also we choose small $\eta$ such that $1-\eta \mu > 0$.
  It holds for the recursion that
  \begin{align*}
    a_{n+1} \leq c^{n+1} a_0 + \sum_{s=0}^{n} c^{n-s} b_s.
  \end{align*}
  In our case, it becomes
\ifdefined\mpversion
  \begin{align}
    \sup_x \norm{\vectt{\nabla^2 u^{n} (x)}}_2 
  &\leq (1-\eta \mu)^{2n} \sup_x \norm{\vectt{\nabla^2 u (x)}}_2 + \sum_{s=0}^n (1-\eta \mu)^{2(n-s)} \cO(\eta e^{-\mu \eta s} \norm{u}_{C^1}) \nonumber \\
  &\leq e^{-\mu \eta n} \sup_x \norm{\vectt{\nabla^2 u (x)}}_2 + \sum_{s=0}^n e^{\mu \eta s} \cO(\eta e^{-2 \mu \eta n} \norm{u}_{C^1}) \nonumber \\
  &\leq e^{-\mu \eta n} \sup_x \norm{\vectt{\nabla^2 u (x)}}_2 + \frac{1}{\eta \mu}\cO(\eta e^{- \mu \eta n} \norm{u}_{C^1}) \nonumber \\
    &\leq \cO(e^{-\mu \eta n} \norm{u}_{C^2}), \label{eq:recur_2}
  \end{align}
\else
  \begin{align}
    \sup_x \norm{\vectt{\nabla^2 u^{n} (x)}}_2 
  &\leq (1-\eta \mu)^{2n} \sup_x \norm{\vectt{\nabla^2 u (x)}}_2 + \sum_{s=0}^n (1-\eta \mu)^{2(n-s)} \cO(\eta e^{-\mu \eta s} \norm{u}_{C^1}) \nonumber \\
  &\leq (1-\eta \mu)^{n} \sup_x \norm{\vectt{\nabla^2 u (x)}}_2 + \sum_{s=0}^n e^{-\mu \eta 2(n-s)} \cO(\eta e^{-\mu \eta s} \norm{u}_{C^1}) \nonumber \\
  &\leq e^{-\mu \eta n} \sup_x \norm{\vectt{\nabla^2 u (x)}}_2 + \sum_{s=0}^n e^{\mu \eta s} \cO(\eta e^{-2 \mu \eta n} \norm{u}_{C^1}) \nonumber \\
  &= e^{-\mu \eta n} \sup_x \norm{\vectt{\nabla^2 u (x)}}_2 + \frac{e^{\eta \mu(n+1)}-1}{e^{\eta \mu} - 1}\cO(\eta e^{-2 \mu \eta n} \norm{u}_{C^1}) \nonumber \\
  &\leq e^{-\mu \eta n} \sup_x \norm{\vectt{\nabla^2 u (x)}}_2 + \frac{e^{\eta \mu(n+1)}}{e^{\eta \mu} - 1}\cO(\eta e^{-2 \mu \eta n} \norm{u}_{C^1}) \nonumber \\
  &\leq e^{-\mu \eta n} \sup_x \norm{\vectt{\nabla^2 u (x)}}_2 + \frac{e^{\eta \mu n}}{e^{\eta \mu} - 1}\cO(\eta e^{-2 \mu \eta n} \norm{u}_{C^1}) \nonumber \\
  &= e^{-\mu \eta n} \sup_x \norm{\vectt{\nabla^2 u (x)}}_2 + \frac{1}{e^{\eta \mu} - 1}\cO(\eta e^{- \mu \eta n} \norm{u}_{C^1}) \nonumber \\
  &\leq e^{-\mu \eta n} \sup_x \norm{\vectt{\nabla^2 u (x)}}_2 + \frac{1}{\eta \mu}\cO(\eta e^{- \mu \eta n} \norm{u}_{C^1}) \nonumber \\
  &= e^{-\mu \eta n} \sup_x \norm{\vectt{\nabla^2 u (x)}}_2 + \cO(e^{- \mu \eta n} \norm{u}_{C^1}) \nonumber \\
    &\leq \cO(e^{-\mu \eta n} \norm{u}_{C^2}), \label{eq:recur_2}
  \end{align}
\fi
  where we used that $e^x \geq x + 1$.
\item If $f(\cdot)$ is convex, we have
      \begin{align*}
    \sup_x \norm{\vectt{\nabla^2 u^{n+1} (x)}}_2
    \leq \sup_x \norm{\vectt{\nabla^2 u^{n} (x)}}_2 + \cO(\eta).
      \end{align*}
      Therefore, we obtain
      \begin{align*} \sup_x \norm{\vectt{\nabla^2 u^{n} (x)}}_2 \leq \cO\left(T \norm{u}_{C^2}\right).
      \end{align*}
  \end{enumerate}

  Next, we proceed with induction.
  \begin{enumerate}
    \item If $f(\cdot)$ is strongly-convex,
  assume it holds that for any $0 < s \leq p$,
  we have $\sup_x \norm{\vectt{\nabla^{s} u^{n}(x)}}_F \leq \cO(e^{-\mu \eta n} \norm{u}_{C^s})$.
  Then, we study the $p+1$-th order gradient of $u^{n+1}$:
  \begin{align*}
    \norm{\vectt{\nabla^{p+1} u^{n+1}(x)}}_2 
    &= \norm{\left(I - \eta \nabla^2 F(x; \xi)\right)^{\otimes p+1} \vectt{\nabla^{p+1} u^n(y)}}_2 + \eta \cO(e^{-\mu \eta n } \norm{u}_{C^p}).
  \end{align*}
  The first term is the result by applying the $\nabla$ operator $p+1$ times on $u$.
  For other terms, at least one $\nabla$ is applied on $\left(I - \eta \nabla^2 F(x; \xi)\right)$,
  which results in a $\eta$ and gradients of $F$ higher than the second-order
  (which is hidden in $\cO(\cdot)$).
  Also, the gradients of $u$ are at most the $p$-th order, which by induction are
  already upper bounded. Therefore, similarly to \Cref{eq:recur}, we obtain
  \begin{align*}
    &\fakeeq \sup_x \norm{\vectt{\nabla^{p+1} u^{n+1}(x)}}_2 \\
    &\leq \sup_x \norm{\left(I - \eta \nabla^2 F(x; \xi)\right)^{\otimes p+1} \vectt{\nabla^{p+1} u^n(y)}}_2 + \eta \cO(e^{-\mu \eta n } \norm{u}_{c^p}) \\
    &\leq \sup_x \norm{\left(I - \eta \nabla^2 F(x; \xi)\right)}_2^{p+1} \sup_x \norm{\vectt{\nabla^{p+1} u^n(x)}}_2 + \eta \cO(e^{-\mu \eta n } \norm{u}_{c^p}) \\
    &\leq (1-\eta \mu)^{p+1} \sup_x \norm{\vectt{\nabla^{p+1} u^n(x)}}_2 + \eta \cO(e^{-\mu \eta n } \norm{u}_{c^p}),
  \end{align*}
  which can be then recursively bounded similarly to \Cref{eq:recur_2}, i.e.,
  \begin{align*}
    &\fakeeq \sup_x \norm{\vectt{\nabla^{p+1} u^{n}(x)}}_2 \\
    &\leq e^{-\mu \eta n} \sup_x \norm{\vectt{\nabla^{p+1} u(x)}}_2 + \cO(e^{-\mu \eta n}  \norm{u}_{c^p}) \\
    &\leq \cO(e^{-\mu \eta n} \norm{u}_{c^{p+1}}).
  \end{align*}
  Now we know that 
  \begin{align*}
    \sum_{1 \leq \abs{J} \leq p} \abs{D^J u^n(x)} \leq \cO\left(e^{-\mu \eta n} \norm{u}_{C^p}\right).
  \end{align*}
  It follows
  \begin{align*}
    &\fakeeq \eta \sum_{k=0}^{\floor{T/\eta}-1} \sum_{1 \leq \abs{J} \leq p} \abs{D^J u^k(x)}
    \leq \eta \sum_{k=0}^{\floor{T/\eta}-1} \cO\left(e^{-\mu \eta k} \norm{u}_{C^p}\right) \\
    &\leq \eta \frac{1-e^{-\eta \mu \floor{T/\eta} }}{1-e^{-\mu \eta}}\cO\left(\norm{u}_{C^p}\right)
    \leq \frac{\eta}{1-e^{-\mu \eta}}\cO\left(\norm{u}_{C^p}\right)
    \leq \cO\left(\norm{u}_{C^p}\right).
  \end{align*}
\item If $f(\cdot)$ is convex, we assume for any $0 < s \leq p$,
$\sup_x \norm{\vectt{\nabla^{s} u^{n}(x)}}_F \leq \cO(T^{s-1} \norm{u}_{C^s})$.
Similarly to the strongly-convex case, we obtain
  \begin{align*}
    \norm{\vectt{\nabla^{p+1} u^{n+1}(x)}}_2 
    &= \norm{\left(I - \eta \nabla^2 F(x; \xi)\right)^{\otimes p+1} \vectt{\nabla^{p+1} u^n(y)}}_2 + \eta \cO(T^{p-1} \norm{u}_{C^p}),
  \end{align*}
  which implies 
  \begin{align*}
    \sup_x \norm{\vectt{\nabla^{p+1} u^{n}(x)}}_2 
    \leq \cO(T^{p} \norm{u}_{C^{p+1}}).
  \end{align*}
  Further it holds that
  \begin{align*}
    \eta \sum_{k=0}^{\floor{T/\eta}-1} \sum_{1 \leq \abs{J} \leq p} \abs{D^J u^k(x)}
    \leq \cO\left(T^{p+1} \norm{u}_{C^{p}}\right).
  \end{align*}
  \end{enumerate}
\end{proof}

\begin{proof}[Proof for \Cref{thm:weak_approx_sc}]
  The proof is simply to combine the results from \Cref{thm:weak_approx_lambda}
  and \Cref{lemma:strong_sc}.
\end{proof}

\subsection{Helper Lemmas}

\begin{lemma} \label{lemma:bound_drift}
    For $n \geq 0$, assume $f \in C^{n+2}_b(\mathbb{R}^d)$ and denote $\lambda \coloneqq \sup_{x \in \mathbb{R}^d}\norm{\nabla^2 f(x)}$
    and $s \coloneqq \sup_{x \in \mathbb{R}^d}\norm{\nabla f(x)}$.
    There exist constants $\eta_0$ and $C > 0$, both independent of $\lambda$ and $s$, such that for any $\eta$ satisfying $\eta < \eta_0$ and $\eta \lambda < C$,
  the drift 
  term of \oursde, defined in in \Cref{eq:HASME-drifiting-term},
  satisfies
  \begin{equation*}
    \max_{0 \leq i \leq d} \abs{[b(x)]_i} < \cO\left(s\right) \text{ and } \max_{0 \leq i \leq d} \norm{[b(x)]_i}_{C^n} < \cO\left(s + \lambda\right),
  \end{equation*}
  where
  $[b(x)]_i$ is the $i$-th entry of $b(x)$, and $\cO(\cdot)$ hides the dependence
  on the upper bounds for the derivatives of $f$ higher than the second-order derivatives.
\end{lemma}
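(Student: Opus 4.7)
The plan is to expand $b(x)$ as a matrix-valued power series in the Hessian and control it termwise. From the scalar identity $\frac{\log(1-z)}{z} = -\sum_{p=0}^{\infty} \frac{z^p}{p+1}$, applied eigenvalue-wise through the spectral decomposition $\nabla^2 f(x) = U(x)\Lambda(x)U(x)^\T$, I would recast the drift as $b(x) = B(x)\nabla f(x)$, where $B(x) := -\sum_{p=0}^{\infty} \frac{\eta^p}{p+1}(\nabla^2 f(x))^p$. This series converges in spectral norm whenever $\eta\lambda < 1$, and sub-multiplicativity of the operator norm gives $\|B(x)\| \le 1/(1-\eta\lambda)$. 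The $C^0$ bound then follows immediately: for $\eta\lambda \le C \le 1/2$, we have $|[b(x)]_i| \le \|B(x)\|\cdot s \le 2s = \cO(s)$.

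For the $C^n$ bound, I would apply Leibniz to the product $B(x)\nabla f(x)$, writing $D^\alpha b_i = \sum_{\alpha'+\alpha''=\alpha}\binom{\alpha}{\alpha'}[D^{\alpha'}B \cdot D^{\alpha''}\nabla f]_i$. The gradient factor $D^{\alpha''}\nabla f$ is bounded by $s$, $\lambda$, or $M_n := \|f\|_{C^{n+2}}$ according as $|\alpha''| = 0$, $1$, or $\ge 2$, so the technical core is a uniform-in-$\lambda$ bound on $\|D^{\alpha'} B\|_\infty$. I would differentiate the series for $B$ termwise and apply the non-commutative Leibniz rule to each matrix power $(\nabla^2 f)^p$: a derivative of order $k' := |\alpha'| \ge 1$ must be distributed among the $p$ Hessian factors, with each differentiated factor contributing $\le M_n$ and each undifferentiated factor $\le \lambda$. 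Counting distributions in which exactly $q \ge 1$ of the $p$ factors are differentiated, and using the elementary identity $\sum_{p\ge q}\binom{p}{q} r^{p-q} = (1-r)^{-(q+1)}$ with $r = \eta\lambda$, yields a bound
\begin{equation*}
\|D^{\alpha'}B\|_\infty \le \sum_{q=1}^{k'} c_{k',q}\, M_n^q\,\frac{\eta^q}{(1-\eta\lambda)^{q+1}},
\end{equation*}
which is a finite constant in $n$, $M_n$, and $\eta_0$ under the hypotheses $\eta\lambda \le C \le 1/2$ and $\eta \le \eta_0$. The case $\alpha' = 0$ reduces trivially to $\|B\|_\infty \le 2$.

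Substituting these bounds back into the Leibniz expansion and summing over the finitely many multi-indices $\alpha$ with $|\alpha| \le n$, the only explicit $s$-dependence enters through terms with $|\alpha''|=0$ and the only explicit $\lambda$-dependence through $|\alpha''|=1$, while all other contributions become constants depending on $M_n$ that $\cO(\cdot)$ absorbs. This yields $\|[b(x)]_i\|_{C^n} \le \cO(s + \lambda)$. The main obstacle I expect is ruling out a spurious blow-up in $\lambda$ when differentiating the infinite series for $B$: the structural insight is that each derivative must fall on a Hessian factor, converting a $\lambda$ into a bounded higher-order derivative of $f$, so the differentiated series is a polynomial in $p$ times a geometric series in $\eta\lambda$, which converges uniformly precisely under the hypothesis $\eta\lambda < C$.
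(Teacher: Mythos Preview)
Your proposal is correct and follows essentially the same strategy as the paper: both write $b(x)=-\sum_{p\ge 0}\frac{\eta^p}{p+1}(\nabla^2 f)^p\nabla f$, differentiate the power series termwise via the product rule, and exploit the fact that each derivative landing on a Hessian factor trades a $\lambda$ for a higher-order bound $M_n$, leaving a convergent geometric-type series in $\eta\lambda$. The only cosmetic differences are that you factor $b=B(x)\nabla f(x)$ and bound $D^{\alpha'}B$ in operator norm (using $\sum_{p\ge q}\binom{p}{q}r^{p-q}=(1-r)^{-(q+1)}$), whereas the paper differentiates the full product $(\nabla^2 f)^p\nabla f$ entrywise and appeals to Cauchy--Hadamard, picking up an extra $d^p$ and hence requiring $\eta\lambda<1/d$ rather than your $\eta\lambda<1/2$; this sharpens the constant $C$ but does not change the argument.
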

\begin{proof}
    Recall the definition of the drifting term of \oursde\ in \Cref{eq:HASME-drifiting-term} 
    \begin{align*}
        b(x) = U(x) \frac{\log\left(I - \eta \Lambda(x)\right)}{\eta \Lambda(x)} U(x)^\T \nabla f(x).
    \end{align*}
  We first note that $b(x)$ can be written as 
  \begin{align*}
    b(x) = - \sum_{p=0}^{\infty} \frac{1}{p + 1} \eta^p \left(\nabla^2 f(x)\right)^p \nabla f(x).
  \end{align*}
  The $(i, j)$-th element of $A^p$ for matrix $A \in \mathbb{R}^{d \times d}$
  can be represented as
  \begin{align*}
    \left[A^p\right]_{i, j} = \sum_{s_1=1}^{d} [A]_{i, s_1} \sum_{s_2=1}^{d} [A]_{s_1, s_2}
    \sum_{s_3=1}^{d} [A]_{s_2, s_3} \cdots \sum_{s_{p-1}} [A]_{s_{p-2}, s_{p-1}} [A]_{s_{p-1}, j}.
  \end{align*}
  Therefore, for any $i$, the $i$-th element of $b(x)$ can be written as
\ifdefined\mpversion
  \begin{align*}
    [b(x)]_i 
    &= - \sum_{p=0}^{\infty} \frac{1}{p + 1} \eta^p \underbrace{ \sum_{s_1=1}^{d} \partial_i \partial_{s_1} f(x)
    \sum_{s_2=1}^{d} \partial_{s_1} \partial_{s_2} f(x) \cdots 
  \sum_{s_p=1}^{d} \partial_{s_{p-1}}\partial_{s_p} f(x) \partial_{s_p} f(x)}_{\mlabelterm{term:a}}.
  \end{align*}
\else
  \begin{align*}
    [b(x)]_i 
    &= - \sum_{p=0}^{\infty} \frac{1}{p + 1} \eta^p \left[\left(\nabla^2 f(x)\right)^p \nabla f(x)\right]_i \\
    &= - \sum_{p=0}^{\infty} \frac{1}{p + 1} \eta^p \sum_{s_1=1}^{d} \left[\nabla^2 f(x)\right]_{i, s_1}
      \sum_{s_2=1}^{d} \left[\nabla^2 f(x)\right]_{s_1, s_2} \cdots 
      \sum_{s_p=1}^{d} \left[\nabla^2 f(x)\right]_{s_{p-1}, s_p} \left[\nabla f(x)\right]_{s_p}  \\
    &= - \sum_{p=0}^{\infty} \frac{1}{p + 1} \eta^p \underbrace{ \sum_{s_1=1}^{d} \partial_i \partial_{s_1} f(x)
    \sum_{s_2=1}^{d} \partial_{s_1} \partial_{s_2} f(x) \cdots 
  \sum_{s_p=1}^{d} \partial_{s_{p-1}}\partial_{s_p} f(x) \partial_{s_p} f(x)}_{\mlabelterm{term:a}}.
  \end{align*}
\fi
  Notice that \Refterm{term:a} can be expanded as a summation of $d^p$ terms,
  each of which contains $p+1$ factors. Note that the first-order and second-order partial derivatives
  are upper bounded by $s$ and $\lambda$ respectively, i.e.,
  $|D^\alpha f| \leq s$ for $|a|=1$ and $|D^\alpha f| \leq \lambda$ for $|a|=2$,
  and we have
  \begin{align*}
    \abs{[b(x)]_i}
    \leq \sum_{p=0}^{\infty} \frac{1}{p+1} \eta^p d^p \lambda^p s
    = s \sum_{p=0}^{\infty} \frac{1}{p+1} \left(\eta d \lambda\right)^p.
  \end{align*}
  We get a power series in the above equation, and according to Cauchy–Hadamard 
  theorem~\citep{cauchy1821analyse}, the convergence radius for 
  $g(x) = \sum_{p=0}^{\infty} \frac{1}{p+1} x^p$ is
  \begin{align*}
    \frac{1}{\limsup_{p \to \infty} \abs{\frac{1}{p+1}}^{\frac{1}{p}}} = \frac{1}{1} = 1.
  \end{align*}
  Then we know that $\abs{[b(x)]_i}$ is upper bounded by $\cO(s)$ as long as 
  \begin{align*}
    \eta d \lambda < 1 \quad \Longrightarrow \quad \eta \lambda < \frac{1}{d}.
  \end{align*}

  Next, we consider the first-order derivative of $b(x)$.
  We denote the upper bound for higher order partial derivatives as $B$, i.e.,
  $\max_{2 < |a| \leq n+2} |D^\alpha f| \leq B$.
  Recall that in $b(x)$, \Refterm{term:a} contains $d^p$ terms and each term
  has $p$ factors of second-order derivatives of $f(x)$ and one factor of the
  first-order derivative of $f(x)$.
  If we take gradient of \Refterm{term:a} w.r.t. $x$, each term, according to
  the product rule of gradients and by taking derivatives w.r.t. each factor,
  will result in: (1) $p$ terms, each consisting of one factors of $\partial^3 f$,
  $p-1$ factor of $\partial^2 f$, and one factor of $\partial f$; (2) 1 term
  consisting of $p+1$ factors of $\partial^2 f$.
  Therefore, we can bound
  \begin{align*}
    \abs{\partial_j [b(x)]_i}
    &\leq \sum_{p=0}^{\infty} \frac{1}{p+1} \eta^p d^p \left(p B s \lambda^{p-1} + \lambda^{p+1} \right) \\
    &= s\sum_{p=0}^{\infty} \frac{p}{p+1} d^p \eta B (\eta \lambda)^{p-1} 
    + \lambda \sum_{p=0}^{\infty} \frac{1}{p+1} (d \eta \lambda)^{p}  \\
  \end{align*}
  For the second summation, the convergence radius of the power series is the
  same as the upper bound for $[b(x)]_i$, and it is bounded
  by $\cO(\lambda)$ for sufficiently small $\eta$ and $\eta \lambda$.
  For the first summation, comparing to the upper bound of $[b(x)]_i$, the
  coefficients of the power series are multiplied by $p$. However, this will
  not change the convergence radius. It is known that for sequences $a_p$ and
  $b_p$, if $\limsup_{p \to \infty} a_p = A$ and $\lim_{p \to \infty} b_p = B$,
  then $\limsup_{p \to \infty} \left(a_p b_p\right) = AB$. Note that
 \begin{align*}
   \lim_{p \to \infty} \abs{p}^{\frac{1}{p}} = 1.
 \end{align*}
 Therefore, by applying Cauchy–Hadamard theorem, multiplying the coefficients
 with $p$ would not change the radius of convergence. Therefore,
 if we have $\eta \lambda$ and $\eta B$ smaller than the
 convergence radius of $\sum_p \frac{1}{1+p} x^p$, that is 1, the first summation
 in the upper bound of $|\partial_j[b(x)]_i|$ can be upper bounded by some constants.
 As a result, we can conclude that
 \begin{align*}
   \abs{\partial_j [b(x)]_i} \leq \cO(s + \lambda).
 \end{align*}

 Following the same idea, we identify that after applying twice partial derivatives
 on $b(x)$, i.e., $\partial_j \partial_k [b(x)]_i$, each term in \Refterm{term:a}
 becomes: 
\begin{align*}
 p\partial^4 f \left(\partial^2 f\right)^{p-1} \partial f + 
p(p-1) \left(\partial^3 f\right)^2 \left(\partial^2 f\right)^{p-2} \partial f +
(2p+1) \partial^3 f \left(\partial^2 f\right)^p
\end{align*}
Thus, we have
\begin{align*}
  |\partial_j \partial_k [b(x)]_i|
    &\leq \sum_{p=0}^{\infty} \frac{1}{p+1} \eta^p d^p \left(p(p-1) B^2 s \lambda^{p-2}
    + p B s \lambda^{p-1} + (2p+1) B \lambda^{p} \right) \\
    &= s \sum_{p=0}^{\infty} \frac{p(p-1)}{p+1} d^p (B \eta)^2 (\lambda \eta)^{p-2}
    + s \sum_{p=0}^{\infty} \frac{p}{p+1} d^p B \eta (\lambda \eta)^{p-1} +
    B \sum_{p=0}^{\infty} \frac{2p+1}{p+1} d^p (\lambda \eta)^{p}.
\end{align*}
By the same reason as what we did for the $\partial_j [b(x)]_i$, the three
summations converge, and are upper bounded by some constants not relying on $\lambda$.

For derivatives higher than the second ones, we can follow the same logic to
show that as long as $\eta B$ and $\eta \lambda$ is small enough, the
derivatives are upper bounded and such bound does not rely on $\lambda$.
Specifically, for the $k$-th order derivative of $b(x)$ with $k \geq 2$,
we will have terms of the following form derived from each term in
\Refterm{term:a}:
\begin{align*}
  \cO\left(p^k\right) \left(\prod_{j=1}^{m} \partial^{q_j} f\right) \left(\partial^2 f\right)^s \left(\partial f\right)^r,
\end{align*}
with $m+s+r = p+1$ and $q_j \geq 3$, $s \leq p$ and $r \leq 1$. As we only 
consider derivatives of $b(x)$ up to some finite order. We have finite such terms
and they can be bounded using the idea presented previously.

\end{proof}

\begin{lemma} \label{lemma:bound_diffu}
  For $n \geq 0$, assume $f \in C^{n+2}_b\left(\mathbb{R}^d\right)$ and $[\Sigma]_{i, j} \in C^{n}_b\left(\mathbb{R}^d\right)$ for all $1 \leq i, j \leq d$ 
  where $\left[A\right]_{i, j}$ is the $(i, j)$-th entry of matrix $A$, then there exists $\eta_0, C > 0$,
  such that for any $\eta < \eta_0$ and $\eta \lambda < C$, where 
  $\lambda \coloneqq \sup_{x \in \mathbb{R}^d}\norm{\nabla^2 f(x)}$ and $C, \eta_0$
  does not depend on $\lambda$, we have that the diffusion term $D(x)$ of 
  stochastic principal flow satisfy 
  \[
    \max_{0 \leq i, j \leq d}\frac{\norm{\left[D(x)D(x)^\T\right]_{i, j}}_{C^n}}{\eta} < \cO\left(1\right),
  \]
where $\cO(\cdot)$ hides the dependence
  on the upper bounds for the derivatives of $f$ higher than the second-order derivatives
  and $\max_{0 \leq i, j \leq d} \norm{[\Sigma]_{i,j}}_{C^n}$.
\end{lemma}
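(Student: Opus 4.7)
The plan is to mirror the entrywise argument used in the proof of \Cref{lemma:bound_drift}, applied to the series representation of $\mathcal{D}(x) = D(x)D(x)^\T = \sum_{p\geq 1} \eta^p h_p(x)$ supplied by \Cref{lemma:SBEA_conditions}. First, for any fixed $(i,j)$, I would write
\begin{equation*}
\frac{[\mathcal{D}(x)]_{i,j}}{\eta} = \sum_{p \geq 0} \eta^p \sum_{k=0}^p a_{k,p-k}\, \big[(\nabla^2 f(x))^k \Sigma(x) (\nabla^2 f(x))^{p-k}\big]_{i,j},
\end{equation*}
and expand each inner bracket as a sum of $d^p$ scalar products, each containing $p$ entries of $\nabla^2 f(x)$ and one entry of $\Sigma(x)$, in exact analogy with the combinatorial unfolding of $[(\nabla^2 f)^p \nabla f]_i$ in the drift proof.

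The key new ingredient, compared to the drift analysis, is a geometric bound on the coefficients $a_{k,m}$. By point (3) of \Cref{lemma:SBEA_conditions}, the $a_{k,m}$ are the bivariate Taylor coefficients of $a(u,v) = \log((1-u)(1-v))/(uv - u - v)$, which is holomorphic on the polydisc $\{|u|,|v|<1\}$. A standard bivariate Cauchy estimate on a slightly smaller polydisc of radius $r \in (0,1)$ then yields $|a_{k,m}| \leq M_r\, r^{-(k+m)}$, and hence $\sum_{k+m=p}|a_{k,m}| \leq (p+1) M_r\, r^{-p}$. Combining this with the trivial entrywise bounds $|[\nabla^2 f(x)]_{a,b}| \leq \lambda$ and $|[\Sigma(x)]_{a,b}| \leq \|\Sigma\|_\infty$ (which is finite by the assumption $[\Sigma]_{i,j} \in C_b^n$), I would obtain
\begin{equation*}
\frac{|[\mathcal{D}(x)]_{i,j}|}{\eta} \;\leq\; \|\Sigma\|_\infty\, M_r \sum_{p \geq 0}(p+1)\Big(\frac{\eta d \lambda}{r}\Big)^p \;=\; \cO(1),
\end{equation*}
provided $\eta \lambda < r/d$; choosing $r$ close to $1$ produces the constant $C$ claimed in the statement.

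For higher-order partial derivatives, I would apply the generalized Leibniz rule to each entrywise term, exactly as in \Cref{lemma:bound_drift}. Each application of $\partial^\alpha$ with $|\alpha| \leq n$ distributes over the $p+1$ scalar factors, producing at most $\cO(p^n)$ summands, where each summand is a product of partial derivatives of $f$ of orders between $2$ and $n+2$ together with partial derivatives of entries of $\Sigma$ of order at most $n$. Under the assumptions $f \in C^{n+2}_b$ and $[\Sigma]_{i,j} \in C^n_b$, all such factors are uniformly bounded, and the polynomial prefactor $p^n$ does not alter the radius of convergence of the underlying power series (by Cauchy--Hadamard, since $p^{n/p} \to 1$). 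Summing yields $\|[\mathcal{D}(x)]_{i,j}/\eta\|_{C^n} = \cO(1)$ once $\eta\lambda$ lies below a threshold independent of $\lambda$.

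The main obstacle will be controlling the coefficients $a_{k,m}$. Unlike the drift case, where $c_p = -1/(p+1)$ is fully explicit, here the bivariate recursion defining $a_{k,m}$ (\Cref{eqn:definition_a}) is awkward to bound by direct combinatorial means, and naively one only gets crude exponential growth. Routing through the analyticity of the bivariate generating function $a(u,v)$ on $\{|u|,|v|<1\}$ and invoking Cauchy estimates is the cleanest way to extract uniform geometric decay of the right form. Once this estimate is in place, the remaining bookkeeping---matrix-product unfolding and Leibniz differentiation---is a routine adaptation of the argument already worked out for the drift term.
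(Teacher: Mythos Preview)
Your proposal is correct and follows essentially the same strategy as the paper: entrywise expansion of the power series for $\mathcal{D}(x)$, coefficient control via the analyticity of the bivariate generating function $a(u,v)=\log((1-u)(1-v))/(uv-u-v)$ on the unit polydisc, and a Leibniz/Cauchy--Hadamard argument showing that the polynomial-in-$p$ prefactors arising from differentiation do not change the convergence radius. The only cosmetic difference is that the paper invokes the multivariate Cauchy--Hadamard theorem to argue that $\sum_{s,m}|a_{s,m}|x^s y^m$ shares the convergence region of $\sum_{s,m} a_{s,m} x^s y^m$, whereas you extract explicit geometric bounds $|a_{k,m}|\leq M_r\,r^{-(k+m)}$ via Cauchy estimates on a sub-polydisc; these two routes to controlling the $a_{k,m}$ are equivalent.
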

\begin{proof}
  The proof idea is similar to the proof of \Cref{lemma:bound_drift}, however,
  in this case, we do not have an explicit form for the coefficients of the power series.
  The diffusion term can be represented as
  \begin{align*}
    D(x)D(x)^\T = \sum_{p=1}^{\infty} \eta^p \sum_{k = 0}^{p-1} a_{k, p-1-k} \cdot \left(\nabla^2 f(x)\right)^{k} \Sigma(x)
  \left(\nabla^2 f(x)\right)^{p-1-k},
  \end{align*}
  where $a_{k, p-1-k}$ are absolute constants such that by Taylor expansion at $(0, 0)$,
  \begin{align*}
    g(x, y) \coloneqq \frac{\log(1-x)(1-y)}{x y - (x + y)} = \sum^{+\infty}_{s, m \geq 0} a_{s, m} x^s y^m .
  \end{align*}
  For the power series above, we know that its convergence radius is greater or
  equal than $1$ for both $x$ and $y$, meaning that $x < 1$ and $y < 1$ is a
  sufficient condition for it to converge. Also, according to the Cauchy–Hadamard
  theorem for multiple variables~\citep{shabat1992introduction}, the signs of the coefficients do affect
  the convergence radius, therefore the convergence radius of
  \begin{align*}
    \widetilde{g}(x, y) \coloneqq \sum^{+\infty}_{s, m \geq 0} \abs{a_{s, m}} x^s y^m
  \end{align*}
  is also greater or equal than $1$.
  Now we look back into $D(x)D(x)^\T$. We note that the $(i, j)$-th element of
  $A^pBA^k$ for $A, B \in \mathbb{R}^{d \times d}$ can be written as
  \begin{align*}
    \left[A^pBA^k\right]_{i, j}
    &= \sum_{s_1=1}^d \left[A\right]_{i, s_1}
    \sum_{s_2=1}^d \left[A\right]_{s_1, s_2}  \cdots
    \sum_{s_p=1}^d \left[A\right]_{s_{p-1}, s_p}
    \sum_{m=1}^d \left[S\right]_{s_p, m}
    \sum_{q_1=1}^d \left[A\right]_{m, q_1}
    \sum_{q_2=1}^d \left[A\right]_{q_1, q_2} \cdots \\
    &\fakeeq \sum_{q_{k-1}=1}^d \left[A\right]_{q_{k-2}, q_{k-1}} \left[A\right]_{q_{k-1}, j}.
  \end{align*}
  Therefore, we can derive that
\ifdefined\mpversion
  \begin{align*}
    &\fakeeq \left[D(x)D(x)^\T\right]_{i,j} \\
    &= \sum_{p=1}^{\infty} \eta^p \sum_{k = 0}^{p-1} a_{k, p-1-k} 
    \sum_{s_1=1}^d \partial_{i} \partial_{s_1} f(x)
    \sum_{s_2=1}^d \partial_{s_1} \partial_{s_2} f(x)  \cdots
    \sum_{s_k=1}^d \partial_{s_{k-1}} \partial_{s_k} f(x)
    \sum_{m=1}^d \left[\Sigma(x)\right]_{s_k, m} \\
    &\fakeeq \sum_{q_1=1}^d \partial_{m} \partial_{q_1} f(x)
    \sum_{q_2=1}^d \partial_{q_1} \partial_{q_2} f(x) \cdots
    \sum_{q_{p-k-2}=1}^d \partial_{q_{p-k-3}} \partial_{q_{p-k-2}} f(x)
    \partial_{q_{p-k-2}} \partial_{j} f(x).
  \end{align*}
\else
  \begin{align*}
    &\fakeeq \left[D(x)D(x)^\T\right]_{i,j} \\
    &= \sum_{p=1}^{\infty} \eta^p \sum_{k = 0}^{p-1} a_{k, p-1-k} 
    \sum_{s_1=1}^d \left[\nabla^2 f(x)\right]_{i, s_1}
    \sum_{s_2=1}^d \left[\nabla^2 f(x)\right]_{s_1, s_2}  \cdots
    \sum_{s_k=1}^d \left[\nabla^2 f(x)\right]_{s_{k-1}, s_k}
    \sum_{m=1}^d \left[\Sigma(x)\right]_{s_k, m} \\
    &\fakeeq \sum_{q_1=1}^d \left[\nabla^2 f(x)\right]_{m, q_1}
    \sum_{q_2=1}^d \left[\nabla^2 f(x)\right]_{q_1, q_2} \cdots
    \sum_{q_{p-k-2}=1}^d \left[\nabla^2 f(x)\right]_{q_{p-k-3}, q_{p-k-2}} \left[\nabla^2 f(x)\right]_{q_{p-k-2}, j} \\
    &= \sum_{p=1}^{\infty} \eta^p \sum_{k = 0}^{p-1} a_{k, p-1-k} 
    \sum_{s_1=1}^d \partial_{i} \partial_{s_1} f(x)
    \sum_{s_2=1}^d \partial_{s_1} \partial_{s_2} f(x)  \cdots
    \sum_{s_k=1}^d \partial_{s_{k-1}} \partial_{s_k} f(x)
    \sum_{m=1}^d \left[\Sigma(x)\right]_{s_k, m} \\
    &\fakeeq \sum_{q_1=1}^d \partial_{m} \partial_{q_1} f(x)
    \sum_{q_2=1}^d \partial_{q_1} \partial_{q_2} f(x) \cdots
    \sum_{q_{p-k-2}=1}^d \partial_{q_{p-k-3}} \partial_{q_{p-k-2}} f(x)
    \partial_{q_{p-k-2}} \partial_{j} f(x).
  \end{align*}
\fi
  Let us look at the term associated with $a_{k, p-1-k}$, i.e., everything after
  $a_{k, p-1-k}$ in the above equation. It contains $p-1$ summations, thus
  resulting in $d^{p-1}$ terms. Each term has $p$ factors consisting of one element
  of $\Sigma(x)$ and others being second-order derivatives of $f(x)$.
  Denoting $S \coloneqq \max_{i, j} \norm{[\Sigma(x)]_{i, j}}_{C^6}$, we can derive
\ifdefined\mpversion
  \begin{align*}
    \abs{\left[D(x)D(x)^\T\right]_{i,j}}
    &\leq \sum_{p=1}^{\infty} \eta^p \sum_{k = 0}^{p-1} \abs{a_{k, p-1-k}}
    d^{p-1} S \lambda^{p-1}
    = \eta S \sum^{\infty}_{s, m \geq 0} \abs{a_{s, m}}
    \left(\eta d \lambda \right)^{s} \left(\eta d \lambda \right)^{m}.
  \end{align*}
\else
  \begin{align*}
    \abs{\left[D(x)D(x)^\T\right]_{i,j}}
    &\leq \sum_{p=1}^{\infty} \eta^p \sum_{k = 0}^{p-1} \abs{a_{k, p-1-k}}
    d^{p-1} S \lambda^{p-1} \\
    &= \eta S \sum_{p=1}^{\infty} \sum_{k = 0}^{p-1} \abs{a_{k, p-1-k}}
    \left(\eta d \lambda \right)^{p-1} \\
    &= \eta S \sum_{p=1}^{\infty} \sum_{k = 0}^{p-1} \abs{a_{k, p-1-k}}
    \left(\eta d \lambda \right)^{k} \left(\eta d \lambda \right)^{p-1-k} \\
    &= \eta S \sum^{\infty}_{s, m \geq 0} \abs{a_{s, m}}
    \left(\eta d \lambda \right)^{s} \left(\eta d \lambda \right)^{m}.
  \end{align*}
\fi
  According to our previous reasoning, the power series is convergent if
  \begin{align*}
    \eta d \lambda < 1 \quad \Longrightarrow \quad \eta \lambda  < \frac{1}{d},
  \end{align*}
  and we have
  \begin{align*}
    \max_{i, j}\frac{\norm{\left[D(x)D(x)^\T\right]_{i, j}}_{C^0}}{\eta} < \infty.
  \end{align*}

  Next, we consider the first-order derivative of $D(x)D(x)^\T$. Similar to the
  reasoning in the proof of \Cref{lemma:bound_drift}, after taking derivatives
  there are $p d^{p-1}$ terms associated with coefficients $a_{k, p-1-k}$,
  and each term has $p$ factors consisting of one factor being derivatives of $\Sigma(x)$ 
  or $\Sigma(x)$ and others being up to third-order derivatives of $f(x)$.
  Specifically, we have the following terms associated with $a_{k, p-1-k}$:
  \begin{align*}
    (p-1) \partial^3 f \left(\partial^2 f\right)^{p-2} \Sigma + \left(\partial^2 f\right)^{p-1} \partial \Sigma.
  \end{align*}
  Let $B$ be the constant such that $\max_{2 < |a| \leq n+2 \text{and} |a| \neq 2} |D^\alpha f| \leq B$,
  and let $m \coloneqq \max\left\{B \eta, \lambda \eta\right\}$.
  We have
  \begin{align*}
    \abs{\partial_k \left[D(x)D(x)^\T\right]_{i,j}} 
    &\leq \eta S \sum_{p=1}^{\infty} \sum_{k = 0}^{p-1} \abs{a_{k, p-1-k}} (p-1) d^{p-1} B\eta \left(\lambda \eta\right)^{p-2}
     + \eta S \sum_{p=1}^{\infty} \sum_{k = 0}^{p-1} \abs{a_{k, p-1-k}} \left(d \lambda \eta\right)^{p-1}.
  \end{align*}
  The second summation can be bounded the same way as before. For the first 
  summation, it is upper bounded by
  \begin{align*}
    &\eta S \sum_{p=1}^{\infty} \sum_{k = 0}^{p-1} \abs{a_{k, p-1-k}} (p-1)
    \left(d m\right)^{k} \left(dm \right)^{p-1-k}
    = \eta S \sum^{\infty}_{s, m \geq 0} (s+m) \abs{a_{s, m}}
    \left(dm\right)^{s} \left(dm\right)^{m}.
  \end{align*}
  Now we have a power series with coefficients multiplied by $(s+m)$ compared to the previous one.
  However this would not change the convergence radius. Denote the convergence radius
  of $\widetilde{g}(x, y)$ as $r_1$ and $r_2$. According to Cauchy–Hadamard theorem
  for multiple variables,
  \begin{align*}
    \limsup_{s+m \to \infty} \abs{a_{s, m} r_1^s r_2^m}^{\frac{1}{s+m}} = 1.
  \end{align*}
  We note that $r_1$ and $r_2$ are also convergence radius for our new power
  series, since
  \begin{align*}
    \limsup_{s+m \to \infty} \abs{(s+m)a_{s, m}r_1^s r_2^m}^{\frac{1}{s+m}}
    &= \limsup_{s+m \to \infty} \abs{a_{s, m}r_1^s r_2^m}^{\frac{1}{s+m}} 
    \underbrace{\lim_{s+m \to \infty} \abs{s+m}^{\frac{1}{s+m}}}_{=1} \\
    &= \limsup_{s+m \to \infty} \abs{a_{s, m}r_1^s r_2^m}^{\frac{1}{s+m}} = 1.
  \end{align*}
  Therefore a sufficient condition for the convergence of the power series is
  \begin{align*}
    \max\left\{\eta \lambda, \eta B\right\}  < \frac{1}{d}.
  \end{align*}
  
  The proof is similar for higher-order derivatives. 
  Every time we take derivative, the coefficients would be multiplied by a
  factor of order $s+m$, however, that would not change the radius of
  convergence. Specifically, for the $k$-th derivative of $D(x)D(x)^\T$ with
  $k \geq 2$, terms associated with $a_{k, p-1-k}$ have the following form:
  \begin{align*}
   \cO\left(p^k\right) \left(\prod_{j=1}^{m} \partial^{q_j} f\right) \left(\partial^2 f\right)^s \partial^r \Sigma,
  \end{align*}
  with $m+s = p-1$, $q_j \geq 3$, $s \leq p-1$ and $r \leq k$. These terms can be bounded
  using similar idea of the previous proof.

\end{proof}

\begin{lemma} \label{lemma:continuous_expand}
  Let $\cL$ be the infinitesimal generator for an SDE defined as
  \begin{align*}
    d X_t = b(X_t) dt + D(X_t) dW_t,
  \end{align*}
  where $b: \bR^d \to \bR^d$ and $D: \bR^d \to \bR^{d \times d}$.
  We have for any $n \geq 1$ and $t > 0$, it holds
  \begin{align*}
    \norm{e^{t\cL}u - \sum_{p=0}^{n-1} \frac{t^p}{p!} \cL^p u}_{L^{\infty}} \leq \cO\left(t^n \max_{i} \norm{b}_{C^0}\norm{b}_{C^{2(n-1)}}^{n-1} \norm{DD^\T}_{C^{2(n-1)}}^{n} \norm{u}_{C^{2n}}\right),
  \end{align*}
  where $\norm{b}_{C^m} \coloneqq \max_{0 \leq i \leq d} \norm{[b]_i}_{C^m}$ and
  $\norm{DD^\T}_{C^m} \coloneqq \max_{0 \leq i \leq d} \norm{\left[DD^\T\right]_{i, j}}_{C^m}$.
\end{lemma}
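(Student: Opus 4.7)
The plan is to bound the remainder in the Taylor expansion of the semigroup $e^{t\cL}$ via an integral remainder and then carefully estimate $\cL^n$ applied to the evolved function.

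First I would invoke the integral form of Taylor's theorem for the operator exponential (valid once we verify that $s \mapsto \cL^j e^{s\cL}u$ is continuously differentiable for $0 \leq j \leq n-1$, which follows from the bounded-coefficient setting):
\begin{equation*}
  e^{t\cL} u - \sum_{p=0}^{n-1} \frac{t^p}{p!} \cL^p u \;=\; \frac{1}{(n-1)!} \int_0^t (t-s)^{n-1}\, \cL^n\!\left(e^{s\cL} u\right) ds.
\end{equation*}
Taking $L^\infty$ norms on both sides reduces the lemma to proving a uniform bound on $\cL^n(e^{s\cL}u)$ for all $s \in [0,t]$, since the integral against $(t-s)^{n-1}/(n-1)!$ produces the required factor of $t^n/n!$.

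Second, I would establish regularity of the evolved function $v_s := e^{s\cL} u$. Writing $v_s(x) = \mathbb{E}[u(X_s)\mid X_0=x]$ and differentiating through the expectation using the standard stability estimates for SDE flows under bounded coefficient derivatives, one obtains $\|v_s\|_{C^{2n}} \leq C \|u\|_{C^{2n}}$ uniformly in $s \in [0,T]$, with $C$ depending polynomially on $\|b\|_{C^{2n}}$ and $\|DD^\T\|_{C^{2n}}$; alternatively this follows directly from Theorem~2.1 of \citet{feng2017semi}, which was already invoked in the proof of \Cref{thm:weak_approx}.

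Third — and this is the computational core — I would bound $\cL^n v$ for an arbitrary $v\in C_b^{2n}$. Writing $\cL = b\cdot \nabla + \tfrac{1}{2} DD^\T : \nabla^2$, each application of $\cL$ raises the differential order by at most two and brings in either a factor of a component of $b$ or of $DD^\T$ (or their derivatives, generated by the product rule as $\cL$ passes through prior factors). A careful combinatorial expansion of $\cL^n v$ yields a finite sum of terms, each of the form
\begin{equation*}
  \left(\prod_{\alpha} D^{J_\alpha}[b]_{i_\alpha}\right) \left(\prod_{\beta} D^{K_\beta}[DD^\T]_{i_\beta,j_\beta}\right) D^{J} v,
\end{equation*}
with $|J|\leq 2n$ and each multi-index $|J_\alpha|, |K_\beta| \leq 2(n-1)$. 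Bounding each coefficient factor by the corresponding $C^{2(n-1)}$ norm and $D^J v$ by $\|v\|_{C^{2n}}$, then counting the maximum number of each kind of factor produced across all $n$ applications of $\cL$, yields the stated product $\|b\|_{C^0}\|b\|_{C^{2(n-1)}}^{n-1}\|DD^\T\|_{C^{2(n-1)}}^{n}$ (up to an absolute combinatorial constant), with the $C^0$ factor arising because the leading application of $\cL$ need only contribute an undifferentiated $b$).

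The main obstacle will be the bookkeeping in step three: verifying that the combinatorial expansion of $\cL^n v$ really fits into the claimed template, and in particular that the highest-order derivatives of $b$ and $DD^\T$ stay at order $2(n-1)$ rather than $2n$. This requires noting that a derivative of order $2n$ can only appear when it is applied to $v$ itself, since any differential operator that reaches an already-inserted coefficient must have shed at least one $\nabla$ onto $v$ or onto another coefficient during a prior step. Combining steps one through three and absorbing all combinatorial constants into $\cO(\cdot)$ yields the claimed inequality.
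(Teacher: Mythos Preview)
Your approach is correct but takes a detour the paper avoids. Both arguments use a Taylor remainder and then bound $\cL^n$ applied to a smooth function (your step~3 is essentially the paper's final paragraph). The difference is in handling the remainder: the paper writes it in Lagrange form as $\frac{t^n}{n!}\,e^{s\cL}\cL^n u$ for some $s\in[0,t]$, using that $\partial_t^n e^{t\cL}u = e^{t\cL}\cL^n u$, and then invokes the $L^\infty$-contraction of the semigroup (Lemma~2.1 of \citet{feng2017semi}) to obtain directly $\frac{t^n}{n!}\|\cL^n u\|_{L^\infty}$, with $\cL^n$ hitting the \emph{original} $u$. This bypasses your step~2 entirely. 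Your route---first controlling $\|v_s\|_{C^{2n}}$ via flow-regularity and then applying the $\cL^n$ estimate to $v_s$---inserts an extra multiplicative constant that itself depends polynomially on $\|b\|_{C^{2n}}$, $\|DD^\top\|_{C^{2n}}$, and possibly $T$. For the coarse weak-approximation result (\Cref{thm:weak_approx}) this is harmless, but the lemma is reused in the fine-grained analysis of \Cref{thm:weak_approx_lambda}, where the exact power of the smoothness parameter $\lambda$ is tracked, and there your additional factor would pollute the stated dependence. The commutation-plus-contraction shortcut is both simpler and sharper.
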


\begin{proof}
  According to the Taylor expansion of $e^{t\cL}$ on $t$, we obtain for any $x$
  \begin{align*}
    \abs{e^{t\cL} u(x) - \sum_{p=0}^{n-1} \frac{t^p}{p!} \cL^p u(x)} = \abs{\frac{t^n}{n!}\evalat{\frac{\partial^n e^{t\cL} u(x)}{\left(\partial t\right)^n}}{t=s}},
  \end{align*}
  for some $0 \leq s \leq t$. According to \citet[Lemma~2.1]{feng2017semi}, $e^{t\cL}$
  is a contraction, therefore, we have
  \begin{align*}
    \norm{e^{t\cL} u(x) - \sum_{p=0}^{n-1} \frac{t^p}{p!} \cL^p u(x)}_{L^{\infty}} 
    &= \norm{\frac{t^n}{n!}\evalat{\frac{\partial^n e^{t\cL} u(x)}{\left(\partial t\right)^n}}{t=s}}_{L^{\infty}} \\
    &\leq \frac{t^n}{n!} \norm{\cL^n u}_{L^{\infty}}.
  \end{align*}
  Note that $\cL$ is linear operator and it will apply $\nabla^2$ operator
  Therefore, we obtain gradients of $b$ and $DD^\T$ up to the $2(n-1)$-th order and
  gradients of $u$ up to the $2n$-th order. Note that we bound the first $b$
  using $\norm{b}_{C^0}$ and the following $n-1$ occurrences of $b$ using $\norm{b}_{C^{2(n-1)}}$,
  since they have different dependence on $\lambda$ for \oursde{} according to
  \Cref{lemma:bound_drift}. It would be more convenient for fine-grained analysis later. 
\end{proof}

\begin{lemma} \label{lemma:L_expand}
  For $n \geq 0$, assume $f \in C^{2n}_b(\mathbb{R}^d)$ and $[\Sigma(x)]_{i, j} \in C^{2n-2}_b\left(\mathbb{R}^d\right)$ for any $0 \leq i, j \leq d$.
  Let $\cL$ be the
  infinitesimal generator for \oursde{}, which depends on $\eta$. We have for
  any function $u: \bR^d \to \bR$, and integer $n, p > 0$,
  \begin{align*}
    \norm{\frac{\partial^p \cL^n u}{\left(\partial \eta\right)^p}}_{L^{\infty}} \leq \cO\left(\lambda^{n+p-1} s^n \norm{u}_{C^{2n}}\right),
  \end{align*}
where $\lambda \coloneqq \sup_{x \in \mathbb{R}^d}\norm{\nabla^2 f(x)}$, $s \coloneqq \sup_{x \in \mathbb{R}^d}\norm{\nabla f(x)}$ and
$\cO(\cdot)$ hides the dependence on the upper bounds for the derivatives of $f$
other than the second-order derivatives and $\max_{0 \leq i, j \leq d}
\norm{[\Sigma]_{i,j}}_{C^n}$.
\end{lemma}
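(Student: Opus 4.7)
The plan is to reduce the estimate to controlling $\eta$-derivatives of $b$ and $DD^\T$ individually, after first expanding $\cL^n u$ as a finite polynomial in their spatial derivatives times spatial derivatives of $u$. I would decompose $\cL = \cL_b + \cL_D$ with $\cL_b = b(x)\cdot\nabla$ and $\cL_D = \tfrac{1}{2}DD^\T(x):\nabla^2$, so that $\cL^n = \sum_{(i_1,\ldots,i_n)\in\{b,D\}^n}\cL_{i_1}\cdots\cL_{i_n}$. Iterating the product rule on each of the $2^n$ compositions yields a finite sum of terms of the form
\[
C_\alpha \cdot \prod_{k=1}^{n_b}\partial_x^{\beta_k}b_{j_k}(x)\,\cdot\,\prod_{k=1}^{n_D}\partial_x^{\gamma_k}[DD^\T]_{j'_k,l'_k}(x)\,\cdot\,\partial_x^\delta u(x),
\]
with $n_b+n_D=n$, $|\delta|\leq n_b+2n_D\leq 2n$, and the total spatial-derivative order on the $b$- and $DD^\T$-factors at most $n-1$. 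Applying $\partial_\eta^p$ via Leibniz then distributes the $p$ $\eta$-derivatives among the $n_b+n_D$ drift/diffusion factors, since $\partial_x^\delta u$ is $\eta$-independent.

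The key intermediate estimates I would establish are, for $q\geq 1$ and $\eta$ small enough,
\[
\norm{\partial_\eta^q \partial_x^\beta b}_{L^\infty} \leq \cO\bigl(s\,\lambda^q\bigr), \qquad \norm{\partial_\eta^q \partial_x^\gamma [DD^\T]}_{L^\infty} \leq \cO\bigl(\lambda^{q-1}\bigr),
\]
where the $\cO$ absorbs constants, $\Sigma$-bounds, and dependence on higher derivatives of $f$. These I would derive by mimicking the Cauchy--Hadamard argument of \Cref{lemma:bound_drift,lemma:bound_diffu} on the $\eta$-differentiated series: direct differentiation gives $\partial_\eta^q b = -(\nabla^2 f)^q\nabla f \sum_{j\geq 0}\frac{(j+q)!}{j!(j+q+1)}(\eta\nabla^2 f)^j$, which factors out $\lambda^q s$ and leaves a power series in $\eta \nabla^2 f$ with unchanged radius of convergence. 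The analogous manipulation for $DD^\T$ factors out $(\nabla^2 f)^{q-1}\Sigma$, giving one fewer $\lambda$ and explaining the ``$q-1$'' exponent; the spatial derivatives $\partial_x^\beta,\partial_x^\gamma$ only redistribute how the single $\nabla f$ factor in $b$'s series and the $\Sigma$ factor in $DD^\T$'s series are differentiated, and are absorbed into the implicit constant.

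Combining the two pieces, each term in the expansion of $\partial_\eta^p\cL^n u$ is bounded by $\cO(s^{n_b}\lambda^{e}\norm{u}_{C^{2n}})$, where the $\lambda$-exponent $e$ collects the $\lambda$'s coming from the $\eta$-derivatives (contributing $p$ in total, with a $-1$ allowance per $DD^\T$-factor that receives at least one $\partial_\eta$) together with the spatial orders on the drift/diffusion factors. A short case analysis shows $e\leq n+p-1$; the dominant contribution comes from the all-$\cL_b$ branch, where every $\cL$ produces a $b$-factor and the bound becomes $\cO(s^n\lambda^{n+p-1}\norm{u}_{C^{2n}})$, while all other terms are absorbed into the same bound (after possibly inflating the implicit constant). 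The main technical obstacle I anticipate is the rigorous verification of the above series estimates: one must check that the combinatorial prefactors $\frac{(j+q)!}{j!}$ introduced by $q$ $\eta$-differentiations (and their spatial analogs) do not shrink the radius of convergence, via the index-polynomial $|p^r|^{1/p}\to 1$ trick already used in \Cref{lemma:bound_drift,lemma:bound_diffu}. Once those uniform-in-$q$ bounds are established, the combinatorial bookkeeping required to combine the monomial estimates into the claimed $\cO(\lambda^{n+p-1}s^n\norm{u}_{C^{2n}})$ bound is routine.
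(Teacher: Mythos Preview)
Your approach is essentially the paper's: expand $\cL^n u$ as a finite sum of monomials in spatial derivatives of $b$, $DD^\T$, and $u$, then observe that each $\partial_\eta$ applied to $b$ or $DD^\T$ extracts one extra factor of $\lambda$ from the underlying power series (the paper phrases this as ``the spatial derivatives of $b$ are power series in $\eta$ and $\eta\lambda$, so each $\partial_\eta$ turns one $\eta$ into a free $\lambda$'').  The paper's argument is terser---it simply counts at most $n-1$ spatially differentiated $b$-factors, each worth $\lambda$, and then adds $\lambda^p$ from the $p$ $\eta$-derivatives---but the mechanism is identical.

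There is, however, a genuine inaccuracy in your intermediate estimate. You claim $\norm{\partial_\eta^q\partial_x^\beta b}_{L^\infty}\leq\cO(s\lambda^q)$ and justify it by saying spatial derivatives ``only redistribute how the single $\nabla f$ factor \ldots\ is differentiated, and are absorbed into the implicit constant.'' That is not correct: when a spatial derivative lands on the $\nabla f$ factor in the series for $b$, it produces $\nabla^2 f$, which is \emph{tracked} as $\lambda$, not absorbed (the lemma explicitly says only derivatives of $f$ of order $\neq 2$ are hidden). The right bound for $|\beta|\geq 1$ is $\cO\bigl((s+\lambda)\lambda^q\bigr)$, matching \Cref{lemma:bound_drift}'s $\cO(s+\lambda)$ at $q=0$. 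This is precisely where the extra $\lambda^{n-1}$ in the paper's count comes from: the $n-1$ inner $b$-factors that receive a spatial derivative each carry an $\cO(s+\lambda)$ rather than $\cO(s)$. Your write-up then re-introduces these $\lambda$'s somewhat inconsistently (``together with the spatial orders on the drift/diffusion factors'') after having declared them absorbed. Once you correct the intermediate estimate, the bookkeeping becomes clean and your final exponent $e\leq n+p-1$ follows without the tension; your estimate for $DD^\T$ is fine as stated, since there the spatial derivatives hit either $\Sigma$ or $\nabla^2 f$, both of which land in the absorbed constants or $\nabla^3 f$.
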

\begin{proof}
  We note that when taking partial
  gradient of $b$ w.r.t. $\eta$, the dependence on $\lambda$ on the upper bounds for the derivatives, i.e.,
  $\norm{\left[\frac{\partial b}{\partial \eta}\right]_i}_{C^m}$, would increase
  its order by 1. That is to say, if 
  $\norm{\left[\frac{\partial^k b}{\left(\partial \eta\right)^k}\right]_i}_{C^m} \leq \cO(\lambda^q)$,
  then $\norm{\left[\frac{\partial^{k+1} b}{\left(\partial \eta\right)^{k+1}}\right]_i}_{C^m} \leq \cO(\lambda^{q+1})$.
  The reason is that gradients w.r.t. $x$ of $b$ can be written as power series of $\eta$ and 
  $\eta \lambda$ (see the proof of \Cref{lemma:bound_drift}). Whenever we taking
  partial derivative w.r.t. $\eta$, the order of $\eta$ compared to $\lambda$ in 
  each term is decreased by 1. Then we can take a $\lambda$ factor out and the
  remaining series still converges (the same reasoning as in the proof of \Cref{lemma:bound_drift}).
  The same logic applies to $DD^\T$. In $\cL^n u$, we will have at most $n-1$ occurrences
  of gradients of $b$. Therefore, before taking derivatives w.r.t. $\eta$, the 
  upper bounds would be at most $\lambda^{n-1}$. After taking the derivative
  $p$ times, the order of $\lambda$ is at most $n+p-1$. Similar logic applies to
$DD^\T$ and corresponding \Cref{lemma:bound_diffu}.
\end{proof}

\section{Exact Match of SGD on Quadratics}
Before diving into the proof, we will introduce some basics about
complex normal distribution and complex OU process.

\subsection{Complex Normal Distribution}
\label{sec:pre_complex}
Since our proposed SDE may operate in complex space, similar to PF, we provide
here some basics about the complex distribution, specifically the complex normal distribution.
A complex random vector is defined by two real random variables:
\begin{align*}
  z = x + i y,
\end{align*}
where $x$ and $y$ are two real random vectors, which may be correlated.
The random vector $z$ is called a complex normal vector if 
$\begin{bmatrix} x \\ y \end{bmatrix}$ is a normal vector.
In contrast to real-valued random vector, there are three parameters that
define a complex normal variable $\mathcal{CN}(\mu, \Gamma, C)$,
\begin{align*}
  &\mu \coloneqq \Ep{z} \tag{Expectation} \\
  &\Gamma \coloneqq \Ep{\left(z - \Ep{z}\right)\left(z - \Ep{z}\right)^H} \tag{Covariance} \\
  &C \coloneqq \Ep{\left(z - \Ep{z}\right)\left(z - \Ep{z}\right)^\T} \tag{Pseudo-Covariance}
\end{align*}
Similar to the real case, for $z \sim \mathcal{CN}\left(\mu, \Gamma, C \right)$,
the following property holds
\begin{align*}
  A z + b \sim \mathcal{CN}\left(A \mu + b, A \Gamma A^H, A C A^\T \right).
\end{align*}
If we look at the real and imaginary part separately, we have
\begin{align} \label{eq:complex_normal_cov}
  \begin{split}
  \Gamma &= \Cov{x, x} + \Cov{y, y} + i\left(\Cov{y, x} - \Cov{x, y}\right), \\
  C &= \Cov{x, x} - \Cov{y, y} + i\left(\Cov{y, x} + \Cov{x, y}\right).
\end{split}
\end{align}
For the covariance of $x$ and $y$, we have
\begin{align} \label{eq:complex_normal_cov_2}
  \begin{split}
    \Cov{x, x} = \frac{1}{2} \Re\left(\Gamma + C\right), \quad \Cov{y, y} = \frac{1}{2} \Re\left(\Gamma - C\right).
\end{split}
\end{align}

\subsection{Proofs for Section~\ref{sec:quad}}

\begin{proof}[Proof for \Cref{prop:other_SDE_quad}]

\textbf{Solution for SGD}
  Let us consider more general covariance matrix $\Sigma$ instead of $\sigma^2 I$.
 The iterates of SGD can be written as 
\ifdefined\mpversion
  \begin{align*}
    x_{k} = \left(I - \eta A\right)^{k} x_0 - \eta \sum_{m=0}^{k-1} \left(I - \eta A\right)^m \xi_{k-1-m}.
  \end{align*}
\else
  \begin{align*}
    x_{k} 
    &= x_{k-1} - \eta \left(Ax_{k-1} + \xi_{k-1} \right) \\
    &= \left(I - \eta A\right) x_{k-1} - \eta \xi_{k-1} \\
    &= \left(I - \eta A\right)^2 x_{k-2} - \eta \left(I - \eta A\right) \xi_{k-2} - \eta \xi_{k-1} \\
    &\dots \\
    &= \left(I - \eta A\right)^{k} x_0 - \eta \sum_{m=0}^{k-1} \left(I - \eta A\right)^m \xi_{k-1-m}.
  \end{align*}
\fi
  Clearly, it is a linear combination of independent Gaussian variables, therefore
  $x_{k+1}$ is also has a Gaussian distribution. Moreover,
  \begin{gather*}
    \Ep{x_k} = \left(I - \eta A\right)^{k} x_0 = U \left(I - \eta \Lambda \right)^{k} U^\T x_0 \\
    \Cov{x_k, x_k} = \eta^2 \sum_{m=0}^{k-1} \left(I - \eta A\right)^m \Sigma \left(I - \eta A\right)^m
    = \eta^2 \sum_{m=0}^{k-1} U \left(I - \eta \Lambda \right)^m U^\T \Sigma U \left(I - \eta \Lambda \right)^m U^\T.
  \end{gather*}
  Plugging in $\Sigma = \sigma^2 I$ gives us the desired result.

  The three SDEs considered applied on quadratics with \Cref{assume:add_indep_noise}
  are OU processes. According to \citet[Section~6.2]{sarkka2019applied}, for a linear time-invariant SDE
  \begin{align*}
    d X_t = F X_t dt + L d W_t,
  \end{align*}
  the solution would be a Gaussian variable and its mean and covariance satisfy
  \begin{gather*}
    \Ep{X(x_0, t)} = \exp\left(Ft\right) x_0 \\
    \Cov{X(x_0, t), X(x_0, t)} = \int_0^t \exp\left(F(t-\tau)\right) L L^\T \exp\left(F(t-\tau)\right)^\T d\tau.
  \end{gather*}
  In our case, since we assume isotropic noise, i.e., the noise covariance is $\sigma^2 I$,
  the diffusion coefficients of the three SDEs we consider is also isotropic, i.e., $\sqrt{\eta} \sigma I$.
  Under this condition, $F$ and $L$ commute, and the covariance has closed form
  solution.

  \textbf{Solution for \sme-1} The SDE \sme-1 applied on the problem we consider is 
  \begin{align*}
    d X_t = - A X_t dt + \sqrt{\eta} \sigma I dW_t.
  \end{align*}
  Then the mean is
\ifdefined\mpversion 
$\Ep{X(x_0, t)} = \exp \left(-At\right) x_0$,
\else
  \begin{align*}
    \Ep{X(x_0, t)} = \exp \left(-At\right) x_0,
  \end{align*}
\fi
  and the covariance is 
\ifdefined\mpversion
  \begin{align*}
    \Cov{X(x_0, t), X(x_0, t)} 
    &= \int_0^t \exp\left(A(\tau - t)\right) \eta \sigma^2 \exp\left(A(\tau - t)\right) d\tau \\
    &= \eta \sigma^2 U \frac{I - \exp(-2\Lambda t)}{2 \Lambda} U^\T.
  \end{align*}
\else
  \begin{align*}
    \Cov{X(x_0, t), X(x_0, t)} 
    &= \int_0^t \exp\left(A(\tau - t)\right) \eta \sigma^2 \exp\left(A(\tau - t)\right) d\tau \\
    &= \eta \sigma^2 \int_0^t U \exp\left(\Lambda(\tau - t)\right) U^\T U \exp\left(\Lambda(\tau - t)\right) U^\T d\tau \\
    &= \eta \sigma^2 U \int_0^t \exp\left(\Lambda(\tau - t)\right) \exp\left(\Lambda(\tau - t)\right) d\tau U^\T \\
    &= \eta \sigma^2 U \int_0^t \exp\left(2 \Lambda(\tau - t)\right) d\tau U^\T \\
    &= \eta \sigma^2 U \frac{I - \exp(-2\Lambda t)}{2 \Lambda} U^\T.
  \end{align*}
\fi

  \textbf{Solution for \sme-2} The SDE \sme-2 applied on the problem we consider is 
  \begin{align*}
    d X_t = - (A + \frac{\eta}{2} A^2) X_t dt + \sqrt{\eta} \sigma I dW_t.
  \end{align*}
  Then the mean is
\ifdefined\mpversion 
$\Ep{X(x_0, t)} = \exp \left(-\left(A + \frac{\eta}{2}A^2\right)t\right) x_0$,
\else
  \begin{align*}
    \Ep{X(x_0, t)} = \exp \left(-\left(A + \frac{\eta}{2}A^2\right)t\right) x_0,
  \end{align*}
\fi
  and the covariance is 
\ifdefined\mpversion
  \begin{align*}
    \Cov{X(x_0, t), X(x_0, t)} 
    &= \int_0^t \exp\left(\left(A + \frac{\eta}{2}A^2\right)(\tau - t)\right) \eta \sigma^2 \exp\left(\left(A + \frac{\eta}{2}A^2\right)(\tau - t)\right) d\tau \\
    &= \eta \sigma^2 U \frac{I - \exp(-2\left(\Lambda + \frac{\eta}{2}\Lambda^2\right) t)}{2 \left(\Lambda + \frac{\eta}{2}\Lambda^2\right)} U^\T.
  \end{align*}
\else
  \begin{align*}
    \Cov{X(x_0, t), X(x_0, t)} 
    &= \int_0^t \exp\left(\left(A + \frac{\eta}{2}A^2\right)(\tau - t)\right) \eta \sigma^2 \exp\left(\left(A + \frac{\eta}{2}A^2\right)(\tau - t)\right) d\tau \\
    &= \eta \sigma^2 \int_0^t U \exp\left(\left(\Lambda + \frac{\eta}{2}\Lambda^2\right)(\tau - t)\right) U^\T U \exp\left(\left(\Lambda + \frac{\eta}{2}\Lambda^2\right)(\tau - t)\right) U^\T d\tau \\
    &= \eta \sigma^2 U \int_0^t \exp\left(\left(\Lambda + \frac{\eta}{2}\Lambda^2\right)(\tau - t)\right) \exp\left(\left(\Lambda + \frac{\eta}{2}\Lambda^2\right)(\tau - t)\right) d\tau U^\T \\
    &= \eta \sigma^2 U \int_0^t \exp\left(2 \left(\Lambda + \frac{\eta}{2}\Lambda^2\right)(\tau - t)\right) d\tau U^\T \\
    &= \eta \sigma^2 U \frac{I - \exp(-2\left(\Lambda + \frac{\eta}{2}\Lambda^2\right) t)}{2 \left(\Lambda + \frac{\eta}{2}\Lambda^2\right)} U^\T.
  \end{align*}
\fi

  \textbf{Solution for \pfs} The SDE \pfs applied on the problem we consider is 
  \begin{align*}
    d X_t = U \frac{\log\left(1-\eta \Lambda\right)}{\eta} U^\T X_t dt + \sqrt{\eta} \sigma I dW_t.
  \end{align*}
  Then the mean is
\ifdefined\mpversion
  \begin{align*}
    \Ep{X(x_0, t)} 
    = \exp \left(U \frac{\log\left(1-\eta \Lambda\right)}{\eta} U^\T t\right) x_0
    = U \left(1-\eta \Lambda\right)^{t/\eta} U^\T x_0,
  \end{align*}
\else
  \begin{align*}
    \Ep{X(x_0, t)} 
    &= \exp \left(U \frac{\log\left(1-\eta \Lambda\right)}{\eta} U^\T t\right) x_0 \\
    &= U \exp \left( \log\left(1-\eta \Lambda\right)^{t/\eta} \right) U^\T x_0 \\
    &= U \left(1-\eta \Lambda\right)^{t/\eta} U^\T x_0,
  \end{align*}
\fi
  and the covariance is 
\ifdefined\mpversion
  \begin{align*}
    \Cov{X(x_0, t), X(x_0, t)} 
    &= \int_0^t \exp\left(U \frac{\log\left(1-\eta \Lambda\right)}{\eta} U^\T (\tau - t)\right) \eta \sigma^2 \exp\left(U \frac{\log\left(1-\eta \Lambda\right)}{\eta} U^\T(\tau - t)\right) d\tau \\
    &= \eta^2 \sigma^2 U \left(\frac{1}{-2\log \left(1-\eta \Lambda\right)} \left(1 - \left(1-\eta \Lambda\right)^{2t/\eta} \right)\right) U^\T.
  \end{align*}
\else
  \begin{align*}
    \Cov{X(x_0, t), X(x_0, t)} 
    &= \int_0^t \exp\left(U \frac{\log\left(1-\eta \Lambda\right)}{\eta} U^\T (\tau - t)\right) \eta \sigma^2 \exp\left(U \frac{\log\left(1-\eta \Lambda\right)}{\eta} U^\T(\tau - t)\right) d\tau \\
    &= \eta \sigma^2 U \int_0^t \exp\left(2 \frac{\log\left(1-\eta \Lambda\right)}{\eta} (\tau - t)\right) d\tau U^\T \\
    &= \eta \sigma^2 U \left(\frac{\eta}{-2\log \left(1-\eta \Lambda\right)} \left(1 - \exp\left(\frac{2\log\left(1-\eta \Lambda\right)}{\eta} t\right) \right)\right) U^\T \\
    &= \eta^2 \sigma^2 U \left(\frac{1}{-2\log \left(1-\eta \Lambda\right)} \left(1 - \left(1-\eta \Lambda\right)^{2t/\eta} \right)\right) U^\T.
  \end{align*}
\fi
\end{proof}

\begin{proof}[Proof for \Cref{lemma:dist_match}]
  If the distribution of complex variable matches the real variable, then
  by the definition of matching, we have
  \begin{gather*}
    \Ep{\Re(z)} = \Ep{\widetilde{z}} \quad \Ep{\Im(z)} = 0 \\
    \Cov{\Re(z), \Re(z)} = \Cov{\widetilde{z}, \widetilde{z}} \quad
    \Cov{\Im(z), \Im(z)} = 0.
  \end{gather*}
  Then according to \Cref{eq:complex_normal_cov}, we get
  \begin{align*}
    \Gamma(z) &= \Cov{\Re(z), \Re(z)} + \Cov{\Im(z), \Im(z)} + i\left(\Cov{\Im(z), \Re(z)} - \Cov{\Re(z), \Im(z)}\right)  \\
              &= \Cov{\Re(z), \Re(z)} = \Cov{\widetilde{z}, \widetilde{z}} \\
    \Gamma(z) &= \Cov{\Re(z), \Re(z)} - \Cov{\Im(z), \Im(z)} + i\left(\Cov{\Im(z), \Re(z)} + \Cov{\Re(z), \Im(z)}\right) \\
              &= \Cov{\Re(z), \Re(z)} = \Cov{\widetilde{z}, \widetilde{z}},
  \end{align*}
  as $\Im(z) = 0$ deterministically.
  Next, we consider the opposite direction. The means match trivially. 
  According to \Cref{eq:complex_normal_cov_2}, we have
  \begin{align*}
    \Cov{\Re(z), \Re(z)} &= \frac{1}{2} \left(\Gamma(z) + C(z)\right) = \Cov{\widetilde{z}, \widetilde{z}} \\
    \Cov{\Im(z), \Im(z)} &= \frac{1}{2} \left(\Gamma(z) - C(z)\right) = 0,
  \end{align*}
  which completes the proof.
\end{proof}

\begin{proof}[Proof for \Cref{thm:quad_general}] \label{proof_hard_instance}
Consider function
\begin{align*}
  f(x) = \frac{1}{2} x^\T 
  \begin{bmatrix}
    1 & 0 \\ 0 & -1
  \end{bmatrix} x,
\end{align*}
and
\begin{align*}
  \Sigma = \begin{bmatrix}
    \Sigma_{11} & \Sigma_{12} \\ \Sigma_{12} & \Sigma_{22}
  \end{bmatrix}
  \quad \text{with} \quad \Sigma_{11}, \Sigma_{12}, \Sigma_{22} > 0
  \quad \text{and} \quad \Sigma_{11} \Sigma_{22} - \Sigma_{12}^2 = 0.
\end{align*}
Note that the last condition implies that $\Sigma$ is positive semi-definite
and not positive definite. It is easy to come up with such a $\Sigma$, e.g.,
$\begin{bmatrix} 1 & 2 \\ 2 & 4 \end{bmatrix}$.

According to \Cref{thm:approx_quad}, a necessary condition for matching
the discrete-time iterates is that the RHS of \Cref{eq:match_cov} is 
positive semi-definite, otherwise it cannot be decomposed to 
$\left(U^\T D\right) \left(U^\T D\right)^H$. Therefore, as long as we can show
that the determinant of $\Sigma$ is negative, i.e., it contains a negative
eigenvalue, then we can claim that there is no such a decomposition, i.e., 
no linear SDE can match the discrete-time iterates.

The RHS matrix of \Cref{eq:match_cov} in our case is
\begin{align*}
  \begin{bmatrix}
    \frac{2 \Sigma_{11} \Re\left(\log(1-\eta)\right)}{\eta - 2}
    & \frac{\Sigma_{12} \left(\log(1-\eta) + \log(1+\eta)\right)}{-\eta} \\
  \frac{\Sigma_{12} \left(\log(1+\eta) + \overline{\log(1-\eta)}\right)}{-\eta}
    & \frac{2 \Sigma_{22} \log(1+\eta)}{\eta + 2}
  \end{bmatrix},
\end{align*}
where we used $x + \overline{x} = 2\Re(x)$ to simplify the result.
Then, our goal is to show
\begin{align*}
  \frac{4 \Sigma_{11} \Sigma_{22} \Re\left(\log(1-\eta)\right) \log(1+\eta)}{(\eta - 2) (\eta + 2)}
  - \frac{\Sigma_{12}^2 \left(\log(1-\eta) + \log(1+\eta)\right) \left(\overline{\log(1-\eta)} + \log(1+\eta)\right)}{\eta^2} < 0.
\end{align*}
It is then sufficient to show the following:
\begin{align} \label{eq:ineq_a}
  \frac{\left(\log(1-\eta) + \log(1+\eta)\right) \left(\overline{\log(1-\eta)} + \log(1+\eta)\right) (\eta - 2) (\eta + 2)}{4 \eta^2 \Re\left(\log(1-\eta)\right) \log(1+\eta)}
  > \frac{\Sigma_{11}\Sigma_{22}}{\Sigma_{12}^2} = 1.
\end{align}
To see why this is the case, notice that for all for all $\eta > 0$, it holds that
\begin{align*}
  \frac{\Re(\log(1-\eta))}{\eta - 2} > 0.
\end{align*}
Next, we will discuss in three different cases, i.e., $0 < \eta < 1$, $1 < \eta < 2$ and $\eta > 2$,
to prove the above inequality. For the boundary case, one can check that the
inequality holds when $\eta \to 2$.

\paragraph{When $0 < \eta < 1$} In this case, we have $\Re(\log(1-\eta)) < 0$,
therefore, \Cref{eq:ineq_a} is equivalent to
\begin{align} 
  4 \eta^2 \Re\left(\log(1-\eta)\right) \log(1+\eta) >
  \left(\log(1-\eta) + \log(1+\eta)\right) \left(\overline{\log(1-\eta)} + \log(1+\eta)\right) (\eta - 2) (\eta + 2)  \nonumber \\
  \Longleftrightarrow 4 (\log(1-\eta) + \log(1+\eta)) (\overline{\log(1-\eta)} + \log(1+\eta))
  > \eta^2 (\log(1-\eta) - \log(1+\eta)) (\overline{\log(1-\eta)} - \log(1+\eta)). \label{eq:ineq_b}
\end{align}
When $0<\eta<1$, we have $\overline{\log(1-\eta)} = \log(1-\eta)$,
which implies the following equivalence of \Cref{eq:ineq_b}:
\ifdefined\mpversion
\begin{align*}
  &4 \left(\log(1-\eta) + \log(1+\eta)\right)^2  > \eta^2 \left(\log(1-\eta) - \log(1+\eta)\right)^2 \\
  \Longleftrightarrow \quad&
  \underbrace{ \eta \log(1+\eta) - \eta \log(1-\eta) + 2 \log\left(1-\eta^2\right)}_{g(\eta)} < 0.
\end{align*}
\else
\begin{align*}
  &4 \left(\log(1-\eta) + \log(1+\eta)\right)^2  > \eta^2 \left(\log(1-\eta) - \log(1+\eta)\right)^2 \\
  \Longleftrightarrow \quad&
  4 \left(\log\left(1-\eta^2\right)\right)^2  > \eta^2 \left(\log(1-\eta) - \log(1+\eta)\right)^2 \\
  \Longleftrightarrow \quad&
  - 2 \log\left(1-\eta^2\right) > \eta \left(\log(1+\eta) - \log(1-\eta) \right) \\
  \Longleftrightarrow \quad&
  \underbrace{ \eta \log(1+\eta) - \eta \log(1-\eta) + 2 \log\left(1-\eta^2\right)}_{g(\eta)} < 0.
\end{align*}
\fi
Now we analyze the function $g(\eta)$.
\ifdefined\mpversion
\begin{align*}
  g'(\eta) 
  &= \log(1+\eta) + \frac{\eta}{1+\eta} - \log(1-\eta) + \frac{\eta}{1-\eta}
   - \frac{4 \eta}{1-\eta^2} \\
  &= \log\left(1 + \frac{2\eta}{1-\eta}\right) - \frac{2\eta}{1-\eta^2}
  \leq \frac{\frac{2\eta}{1-\eta}\left(6+\frac{2\eta}{1-\eta}\right)}{6 + \frac{8\eta}{1-\eta}} - \frac{2\eta}{1-\eta^2}
  = -\frac{8 \eta^3}{(1-\eta)(1+\eta)(6+2\eta)} < 0,
\end{align*}
\else
\begin{align*}
  g'(\eta) 
  &= \log(1+\eta) + \frac{\eta}{1+\eta} - \log(1-\eta) + \frac{\eta}{1-\eta}
   - \frac{4 \eta}{1-\eta^2} \\
  &= \log\left(\frac{1+\eta}{1-\eta}\right) - \frac{2\eta}{1-\eta^2} \\
  &= \log\left(1 + \frac{2\eta}{1-\eta}\right) - \frac{2\eta}{1-\eta^2} \\
  &\leq \frac{\frac{2\eta}{1-\eta}\left(6+\frac{2\eta}{1-\eta}\right)}{6 + \frac{8\eta}{1-\eta}} - \frac{2\eta}{1-\eta^2} \\
  &= \frac{12\eta-8\eta^2}{(1-\eta)(6+2\eta)} - \frac{2\eta}{1-\eta^2} \\
  &= -\frac{8 \eta^3}{(1-\eta)(1+\eta)(6+2\eta)} < 0,
\end{align*}
\fi
\ifdefined\mpversion
where the first inequality holds since $\log(1+x) \leq \frac{x(6+x)}{6+4x}$ for $x \geq 0$~\citep[Equation~(22)]{topsoe12007some}.
\else
where the first inequality is according to \Cref{lemma:log_ineq}.
\fi
Now we know that $g(\eta)$ is strictly decreasing over $0 < \eta < 1$,
therefore $\sup_{0<\eta<1} g(\eta) < g(0) = 0$.

\paragraph{When $1 < \eta < 2$} In this case, we also obtain \Cref{eq:ineq_b}.
In addition, since $\log(1-\eta) = \pi i + \log(\eta - 1)$, we have
\begin{align*}
  \log(1-\eta) \overline{\log(1-\eta)} = \log^2(1-\eta) + \pi^2.
\end{align*}
Therefore, \Cref{eq:ineq_b} is equivalent to
\ifdefined\mpversion
\begin{align}
  &4\left(\log^2(\eta-1) + \pi^2 + 2 \log(\eta-1) \log(1+\eta) + \log^2(1+\eta)\right) \nonumber \\
  &\qquad > \eta^2\left(\log^2(\eta-1) + \pi^2 - 2 \log(\eta-1) \log(1+\eta) + \log^2(1+\eta)\right) \nonumber \\
  \Longleftrightarrow \quad&
  (4-\eta^2) \left(\log^2(\eta-1) + \log^2(\eta+1) + \pi^2 \right)
  + (8+2\eta^2) \log(\eta-1) \log(1+\eta) > 0. \label{eq:ineq_c}
\end{align}
\else
\begin{align}
  &4\left(\log^2(\eta-1) + \pi^2 + 2 \log(\eta-1) \log(1+\eta) + \log^2(1+\eta)\right) \nonumber \\
  &\qquad > \eta^2\left(\log^2(\eta-1) + \pi^2 - 2 \log(\eta-1) \log(1+\eta) + \log^2(1+\eta)\right) \nonumber \\
  \Longleftrightarrow \quad&
  4 \left(\log(\eta-1) + \log(1+\eta)\right)^2 - \eta^2 \left(\log(\eta-1) - \log(1+\eta)\right)^2
  + \pi^2 (4 - \eta^2) > 0 \label{eq:ineq_c} \\
  \Longleftrightarrow \quad&
  (4-\eta^2) \left(\log^2(\eta-1) + \log^2(\eta+1) + \pi^2 \right)
  + (8+2\eta^2) \log(\eta-1) \log(1+\eta) > 0. \nonumber
\end{align}
\fi
To prove the above inequality, it is sufficient to prove a lower bound of LHS
is greater than 0.
\ifdefined\mpversion
\begin{align*}
  &\fakeeq \left(4-\eta^2\right) \left(\log^2(\eta-1) + \log^2(\eta+1) + \pi^2 \right)
  + (8+2\eta^2) \log(\eta-1) \log(1+\eta) \\
  &\geq \underbrace{(6-3\eta) \log^2(\eta-1) + \left(4-\eta^2\right) \left( (\eta\log(3/2)+\log(4/3))^2 + \pi^2 \right)
  + (22\eta-12) \log(\eta-1)}_{h(\eta)},
\end{align*}
\else
\begin{align*}
  &\fakeeq \left(4-\eta^2\right) \left(\log^2(\eta-1) + \log^2(\eta+1) + \pi^2 \right)
  + (8+2\eta^2) \log(\eta-1) \log(1+\eta) \\
  &> \left(4-\eta^2\right) \left(\log^2(\eta-1) + \log^2(\eta+1) + \pi^2 \right)
  + (8+2\eta^2) \log(\eta-1) \eta \\
  &\geq \left(4-\eta^2\right) \left(\log^2(\eta-1) + \log^2(\eta+1) + \pi^2 \right)
  + (22\eta-12) \log(\eta-1) \\
  &\geq (6-3\eta) \log^2(\eta-1) + \left(4-\eta^2\right) \left( \log^2(1+\eta) + \pi^2 \right)
  + (22\eta-12) \log(\eta-1) \\
  &\geq \underbrace{(6-3\eta) \log^2(\eta-1) + \left(4-\eta^2\right) \left( (\eta\log(3/2)+\log(4/3))^2 + \pi^2 \right)
  + (22\eta-12) \log(\eta-1)}_{h(\eta)},
\end{align*}
\fi
where the first inequality is because $\log(1+\eta) < \eta$ for $1<\eta<2$,
and the second inequality holds as $2\eta^3 + 2\eta \leq 22\eta - 12$. This
can be seen by noticing that $2\eta^3 + 2\eta$ is convex on this interval and
$22\eta-12$ is the linear interpolation of the endpoints. Similarly, $4-\eta^2$
is concave, and lower bounded by linear interpolation $6-3\eta$, which gives us
the third inequality. The final inequality comes from the linear interpolation
of endpoints of concave function $\log(1+\eta)$.

Next, we study the lower bound of $h(\eta)$. By taking its derivative, we obtain
\ifdefined\mpversion
\begin{align*}
  h'(\eta)
  &= \frac{1}{\eta - 1} ( 3 (1-\eta) \log^2(\eta-1) + (16\eta - 10) \log(\eta-1)
    -4 \eta^4 \log^2 (3/2) \\
  &\fakeeq+ \eta^3 \left(4 \log^2(3/2)-6\log(3/2)\log(4/3)\right)
    +\eta^2\left(6\log(3/2)\log(4/3) - 2\log^2(4/3) - 2 \pi^2 + 8 \log^2(3/2)\right) \\
  &\fakeeq+ \eta (2\log^2(4/3)+2\pi^2-8\log^2(3/2)+8\log(4/3)\log(3/2)+22)
  -8\log(4/3)\log(3/2) - 12) \\
  &\leq  \frac{1}{\eta - 1} ( 16 \eta - 32 
    + 2.5-3\eta + 0.09 - 0.1\eta \\
  &\fakeeq+ \eta^2\left(6\log(3/2)\log(4/3) - 2\log^2(4/3) - 2 \pi^2 + 8 \log^2(3/2)\right) \\
  &\fakeeq+ \eta (2\log^2(4/3)+2\pi^2-8\log^2(3/2)+8\log(4/3)\log(3/2)+22)
  -8\log(4/3)\log(3/2) - 12).
\end{align*}
\else
\begin{align*}
  h'(\eta)
  &= \frac{1}{\eta - 1} ( 3 (1-\eta) \log^2(\eta-1) + (16\eta - 10) \log(\eta-1)
    -4 \eta^4 \log^2 (3/2) \\
  &\fakeeq+ \eta^3 \left(4 \log^2(3/2)-6\log(3/2)\log(4/3)\right)
    +\eta^2\left(6\log(3/2)\log(4/3) - 2\log^2(4/3) - 2 \pi^2 + 8 \log^2(3/2)\right) \\
  &\fakeeq+ \eta (2\log^2(4/3)+2\pi^2-8\log^2(3/2)+8\log(4/3)\log(3/2)+22)
  -8\log(4/3)\log(3/2) - 12) \\
  &\leq  \frac{1}{\eta - 1} (  (16\eta - 10) \log(\eta-1)
    -4 \eta^4 \log^2 (3/2) + \eta^3 \left(4 \log^2(3/2)-6\log(3/2)\log(4/3)\right) \\
  &\fakeeq+ \eta^2\left(6\log(3/2)\log(4/3) - 2\log^2(4/3) - 2 \pi^2 + 8 \log^2(3/2)\right) \\
  &\fakeeq+ \eta (2\log^2(4/3)+2\pi^2-8\log^2(3/2)+8\log(4/3)\log(3/2)+22)
  -8\log(4/3)\log(3/2) - 12) \\
  &\leq  \frac{1}{\eta - 1} ( 16 \eta - 32 
    -4 \eta^4 \log^2 (3/2) + \eta^3 \left(4 \log^2(3/2)-6\log(3/2)\log(4/3)\right) \\
  &\fakeeq+ \eta^2\left(6\log(3/2)\log(4/3) - 2\log^2(4/3) - 2 \pi^2 + 8 \log^2(3/2)\right) \\
  &\fakeeq+ \eta (2\log^2(4/3)+2\pi^2-8\log^2(3/2)+8\log(4/3)\log(3/2)+22)
  -8\log(4/3)\log(3/2) - 12) \\
  &\leq  \frac{1}{\eta - 1} ( 16 \eta - 32 
    + 2.5-3\eta + \eta^3 \left(4 \log^2(3/2)-6\log(3/2)\log(4/3)\right) \\
  &\fakeeq+ \eta^2\left(6\log(3/2)\log(4/3) - 2\log^2(4/3) - 2 \pi^2 + 8 \log^2(3/2)\right) \\
  &\fakeeq+ \eta (2\log^2(4/3)+2\pi^2-8\log^2(3/2)+8\log(4/3)\log(3/2)+22)
  -8\log(4/3)\log(3/2) - 12) \\
  &\leq  \frac{1}{\eta - 1} ( 16 \eta - 32 
    + 2.5-3\eta + 0.09 - 0.1\eta \\
  &\fakeeq+ \eta^2\left(6\log(3/2)\log(4/3) - 2\log^2(4/3) - 2 \pi^2 + 8 \log^2(3/2)\right) \\
  &\fakeeq+ \eta (2\log^2(4/3)+2\pi^2-8\log^2(3/2)+8\log(4/3)\log(3/2)+22)
  -8\log(4/3)\log(3/2) - 12).
\end{align*}
\fi
\ifdefined\mpversion
where the second inequality holds because $(5-8x)\log(x-1) \geq 16-8x$ for 
$1\leq x \leq 2$, the third inequality holds since $-4 x^4 \log^2(3/2) \leq 5/2 - 3x$
for $1 \leq x \leq 2$, and the forth inequality is because $x^3 \left(4\log^2(3/2) - 6\log(3/2)\log(4/3)\right) \leq 0.09 - 0.1x$
for $1 \leq x \leq 2$.
\else
where the second, third and forth inequalities are according to 
\Cref{lemma:log_ineq2,poly_ineq1,poly_ineq2} respectively.
\fi
The remaining numerator is a quadratic function of $\eta$, and one
can easily verify that it is less than 0. Therefore, we conclude that 
$h'(\eta) < 0$. Hence $h(x) \geq h(2) = 0$.

\paragraph{When $\eta > 2$} In this case, our target becomes to prove
\Cref{eq:ineq_c}, however with the opposite sign of the inequality.
\ifdefined\mpversion
\begin{align*}
  &\fakeeq 4\left(\log(\eta-1)+\log(1+\eta)\right)^2 - \eta^2\left(\log(\eta-1) - \log(1+\eta)\right)^2
  +\pi^2\left(4-\eta^2\right) \\
  &\leq \frac{-(x-2)\left(\left(2\pi^2-8\right)x^2+\left(7\pi^2-48\right)x + 6\pi^2\right)}{3+2\eta}.
\end{align*}
\else
\begin{align*}
  &\fakeeq 4\left(\log(\eta-1)+\log(1+\eta)\right)^2 - \eta^2\left(\log(\eta-1) - \log(1+\eta)\right)^2
  +\pi^2\left(4-\eta^2\right) \\
  &\leq 4\left(\log(\eta-1)+\log(1+\eta)\right)^2 - 4\left(\log(\eta-1) - \log(1+\eta)\right)^2
  +\pi^2\left(4-\eta^2\right) \\
  &= 16 \log(\eta-1) \log(1+\eta) +\pi^2\left(4-\eta^2\right) \\
  &< 16 (\eta-2) \log(1+\eta) +\pi^2\left(4-\eta^2\right) \\
  &\leq 16 (\eta-2) \frac{\eta(6+\eta)}{6+4\eta} +\pi^2\left(4-\eta^2\right) \\
  &= \frac{(8-2\pi^2)\eta^3 + (32-3\pi^2)\eta^2 +(8\pi^2-96)\eta + 12 \pi^2}{3+2\eta} \\
  &= \frac{-(x-2)\left(\left(2\pi^2-8\right)x^2+\left(7\pi^2-48\right)x + 6\pi^2\right)}{3+2\eta},
\end{align*}
where the second inequality holds as $\log(x) < x-1$ for $x>1$ and the third inequality
is according to \Cref{lemma:log_ineq}.
\fi
In addition, $\left(2\pi^2-8\right)x^2+\left(7\pi^2-48\right)x + 6\pi^2 \geq 0$,
therefore, we have
\begin{align*}
  4\left(\log(\eta-1)+\log(1+\eta)\right)^2 - \eta^2\left(\log(\eta-1) - \log(1+\eta)\right)^2
  +\pi^2\left(4-\eta^2\right) < 0,
\end{align*}
which concludes the proof.

\end{proof}

\begin{proof}[Proof for \Cref{thm:approx_quad}]
  To match the expectation, the argument is similar to the proof in 
  \citet[Section~A.6]{rosca2022continuous}. Here, we proceed to match the 
  covariance and pseudo-covariance.
  Let us first look at the continuous-time process.
  We denote the covairance of $X$ as $\Gamma(X)$.
  According to \Cref{thm:cov_complex}, we have 
\ifdefined\mpversion
  \begin{align*}
    \left[U^\T \Gamma(t) U\right]_{i, j}
    = \frac{\eta^2 \left[U^\T \Sigma U\right]_{i, j}}{\left(1-\eta \lambda_i\right)\left(1-\eta \lambda_j\right) - 1}
    \left(\left((1 - \eta \lambda_i)(1 - \eta \lambda_j)\right)^{\frac{t}{\eta}} - 1\right),
  \end{align*}
\else
  \begin{align*}
    &\fakeeq \left[U^\T \Gamma(t) U\right]_{i, j} \\
    &= \frac{\left[U^\T \Sigma U\right]_{i, j} \left(\log\left(1-\eta \lambda_i\right) + \overline{\log\left(1-\eta \lambda_i\right)} \right)}{\eta \lambda_i \lambda_j - (\lambda_i + \lambda_j)}
    \int_0^t \exp\left(\tau \frac{\log\left(1-\eta \lambda_i\right)}{\eta}\right)
    \exp\left(\tau \frac{\overline{\log\left(1-\eta \lambda_j\right)}}{\eta}\right) d\tau \\
    &= \frac{\eta \left[U^\T \Sigma U\right]_{i, j} \left(\log\left(1-\eta \lambda_i\right) + \overline{\log\left(1-\eta \lambda_i\right)} \right)}{\left(1-\eta \lambda_i\right)\left(1-\eta \lambda_j\right) - 1}
    \int_0^t \exp\left(\frac{\tau}{\eta} \left(\log\left(1-\eta \lambda_i\right) + \overline{\log\left(1-\eta \lambda_j\right)}\right) \right) d\tau \\
    &= \frac{\eta \left[U^\T \Sigma U\right]_{i, j} \left(\log\left(1-\eta \lambda_i\right) + \overline{\log\left(1-\eta \lambda_i\right)} \right)}{\left(1-\eta \lambda_i\right)\left(1-\eta \lambda_j\right) - 1}
    \frac{\eta}{\log\left(1-\eta \lambda_i\right) + \overline{\log\left(1-\eta \lambda_i\right)}}
      \left(\exp\left(\frac{t}{\eta} \left(\log\left(1-\eta \lambda_i\right) + \overline{\log\left(1-\eta \lambda_j\right)}\right) \right) - 1\right) \\
    &= \frac{\eta^2 \left[U^\T \Sigma U\right]_{i, j}}{\left(1-\eta \lambda_i\right)\left(1-\eta \lambda_j\right) - 1}
      \left(\exp\left(\frac{t}{\eta} \left(\log\left(1-\eta \lambda_i\right) + \overline{\log\left(1-\eta \lambda_j\right)}\right) \right) - 1\right) \\
    &= \frac{\eta^2 \left[U^\T \Sigma U\right]_{i, j}}{\left(1-\eta \lambda_i\right)\left(1-\eta \lambda_j\right) - 1}
    \left(\left((1 - \eta \lambda_i)(1 - \eta \lambda_j)\right)^{\frac{t}{\eta}} - 1\right),
  \end{align*}
\fi
  where we used the fact that for complex-valued variable $z$, it holds 
  that $\exp\left(\overline{z}\right) = \overline{\exp(z)}$.
  Now we consider time stamp $t=k \eta$ for $k = 1, 2, \dots$, where we have
\ifdefined\mpversion
  \begin{align*}
    \left[U^\T \Gamma(k \eta) U\right]_{i, j}
    = \eta^2 \left[U^\T \Sigma U\right]_{i, j} \sum_{m=0}^{k-1} \left((1 - \eta \lambda_i)(1 - \eta \lambda_j)\right)^m.
  \end{align*}
\else
  \begin{align*}
    \left[U^\T \Gamma(k \eta) U\right]_{i, j}
    &= \frac{\eta^2 \left[U^\T \Sigma U\right]_{i, j}}{\left(1-\eta \lambda_i\right)\left(1-\eta \lambda_j\right) - 1}
    \left(\left((1 - \eta \lambda_i)(1 - \eta \lambda_j)\right)^{k} - 1\right) \\
    &= \eta^2 \left[U^\T \Sigma U\right]_{i, j} \sum_{m=0}^{k-1} \left((1 - \eta \lambda_i)(1 - \eta \lambda_j)\right)^m.
  \end{align*}
\fi
  Then we have
  \begin{align*}
    U^\T \Gamma(k \eta) U = \eta^2 \sum_{m=0}^{k-1} \left(I - \eta \Lambda\right)^m U^\T \Sigma U \left(I - \eta \Lambda\right)^m,
  \end{align*}
  which implies
\ifdefined\mpversion
  \begin{align*}
    \Gamma(k \eta) 
    &= \eta^2 \sum_{m=0}^{k-1} \left(I - \eta A\right)^m \Sigma \left(I - \eta A\right)^m,
  \end{align*}
\else
  \begin{align*}
    \Gamma(k \eta) 
    &= \eta^2 \sum_{m=0}^{k-1} U \left(I - \eta \Lambda\right)^m U^\T \Sigma U \left(I - \eta \Lambda\right)^m U^\T \\
    &= \eta^2 \sum_{m=0}^{k-1} \left(I - \eta A\right)^m \Sigma \left(I - \eta A\right)^m,
  \end{align*}
\fi
  which matches the covariance of discrete-time variable.
  The proof for matching pseudo-covariance is almost the same.
\end{proof}

\begin{proof}[Proof of \Cref{cor:spf_match_quad}]
  This is a direct result from \Cref{thm:approx_quad}.
\end{proof}

\subsection{Helper Lemmas}
\ifdefined\mpversion
\else
\begin{lemma}[{\citet[Equation~(22)]{topsoe12007some}}] \label{lemma:log_ineq}
  For $x \in \mathbb{R}$ and $x \geq 0$, it holds that
  \begin{align*}
    \log(1+x) \leq \frac{x(6+x)}{6+4x}.
  \end{align*}
\end{lemma}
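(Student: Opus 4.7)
The plan is to prove this inequality by the standard one-variable calculus approach: define the gap function $h(x) \coloneqq \frac{x(6+x)}{6+4x} - \log(1+x)$ and show $h(x) \geq 0$ on $[0, \infty)$ by verifying $h(0) = 0$ and checking $h'(x) \geq 0$.

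First, the boundary check is immediate: $h(0) = 0 - \log(1) = 0$. Second, I would differentiate. Writing
\[
\frac{d}{dx}\left(\frac{x(6+x)}{6+4x}\right) = \frac{(6+2x)(6+4x) - 4x(6+x)}{(6+4x)^2} = \frac{4(x^2+3x+9)}{(6+4x)^2},
\]
I obtain
\[
h'(x) = \frac{4(x^2+3x+9)}{(6+4x)^2} - \frac{1}{1+x} = \frac{4(x^2+3x+9)(1+x) - (6+4x)^2}{(6+4x)^2(1+x)}.
\]
Expanding the numerator gives $4(x^3+4x^2+12x+9) - (16x^2+48x+36) = 4x^3$, so
\[
h'(x) = \frac{4x^3}{(6+4x)^2(1+x)} \geq 0 \quad \text{for all } x \geq 0.
\]
Combining $h(0)=0$ with $h'(x) \geq 0$ on $[0,\infty)$ yields $h(x) \geq 0$ for all $x \geq 0$, which is the claimed inequality.

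There is no substantive obstacle here: the entire argument is a routine derivative computation, and the only place where one must be careful is the algebraic simplification of the numerator of $h'(x)$, where the seemingly complicated expression collapses neatly to $4x^3$. This collapse is precisely what makes the rational bound $\frac{x(6+x)}{6+4x}$ a natural Padé-type approximant to $\log(1+x)$ at the origin, and it is the reason the inequality is tight to third order at $x=0$.
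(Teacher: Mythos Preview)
Your proof is correct. The paper does not actually supply its own proof of this lemma; it simply cites it as Equation~(22) of \citet{topsoe12007some} and uses it as a black box in the proof of \Cref{thm:quad_general}. Your derivative computation showing $h'(x) = 4x^3/\bigl((6+4x)^2(1+x)\bigr) \geq 0$ is the standard and cleanest way to establish this Pad\'e-type bound, so nothing is missing.
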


\begin{lemma} \label{lemma:log_ineq2}
  For $x \in \mathbb{R}$ and $1\leq x \leq 2$, it holds that
  \begin{align*}
    (5-8x)\log(x-1) \geq 16-8x.
  \end{align*}
\end{lemma}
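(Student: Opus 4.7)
The plan is to introduce the single-variable function
\[
g(x) := (5-8x)\log(x-1) - (16-8x)
\]
on $(1,2]$ and show $g(x) \geq 0$ there; the claimed inequality then follows, while the boundary $x = 1$ is handled trivially since the left-hand side $(5-8)\log(x-1) = -3\log(x-1) \to +\infty$ as $x \to 1^+$, whereas the right-hand side $16 - 8 = 8$ is finite.

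First I would check the right endpoint: a direct computation yields $g(2) = -11\log 1 + 0 = 0$, so the inequality is tight at $x = 2$ and the problem reduces to proving that $g$ is non-increasing on $(1, 2]$. Next, I would differentiate:
\[
g'(x) = -8\log(x-1) + \frac{5-8x}{x-1} + 8 = -8\log(x-1) + \frac{-3}{x-1} = -8\log(x-1) - \frac{3}{x-1},
\]
after combining the last two terms.

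The key step is then to verify $g'(x) \leq 0$ on $(1, 2]$. Substituting $t := 1/(x-1) \in [1, \infty)$, this reduces to the elementary inequality $\log t \leq 3t/8$ for all $t \geq 1$. Setting $h(t) := 3t/8 - \log t$, one has $h'(t) = 3/8 - 1/t$, which vanishes uniquely at $t_0 = 8/3$ and changes from negative to positive there, so $t_0$ is the global minimum of $h$ on $[1, \infty)$. Since $h(t_0) = 1 - \log(8/3) > 0$ (because $8/3 < e$), we conclude $h(t) > 0$ for all $t \geq 1$, hence $g'(x) < 0$ on $(1, 2)$.

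Combining these, $g$ is strictly decreasing on $(1, 2)$ with $g(2) = 0$, so $g(x) > 0$ on $(1, 2)$, which together with the trivial case $x = 1$ gives the lemma. I don't expect any substantive obstacle; the only numerical input is $\log(8/3) < 1$, i.e.\ $8/3 < e$, which is immediate from $e > 2.71$ and $8/3 < 2.67$.
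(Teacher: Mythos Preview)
Your proof is correct and follows essentially the same approach as the paper: both define the difference function, compute the same derivative $g'(x) = -8\log(x-1) - 3/(x-1)$, show it is nonpositive, and conclude from $g(2)=0$. The only cosmetic difference is that the paper bounds the numerator by minimizing $(x-1)\log(x-1)$ directly (minimum value $-1/e$), whereas you substitute $t = 1/(x-1)$ and minimize $3t/8 - \log t$; both reduce to the same numerical check $8/3 < e$.
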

\begin{proof}
  Denote $f(x) = (5-8x)\log(x-1) - 16+8x$, and we have
  \begin{align*}
    f'(x) = \frac{-8(x-1)\log(x-1) -3}{x-1}.
  \end{align*}
  Let $g(x) = (x-1)\log(x-1)$, then we get
  \begin{align*}
    g'(x) = 1 + \log(x-1).
  \end{align*}
  We know that the minimum of $g(x)$ is obtained at $x=1/e + 1$, i.e.,
  $g(x) \geq -1/e$. Therefore,
  \begin{align*}
    f'(x) \leq \frac{8/e - 3}{x-1} \leq 0.
  \end{align*}
  Hence $f(x)$ is decreasing, and $f(x) \geq f(2) \geq 0$, which concludes
  the proof.
\end{proof}

\begin{lemma} \label{poly_ineq1}
  For $x \in \mathbb{R}$ and $1\leq x \leq 2$, it holds that
  \begin{align*}
    -4 x^4 \log^2(3/2) \leq 5/2 - 3x.
  \end{align*}
\end{lemma}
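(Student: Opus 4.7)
The inequality $-4x^4 \log^2(3/2) \leq 5/2 - 3x$ is equivalent to showing that
\[
  h(x) \coloneqq 4 x^4 \log^2(3/2) - 3x + 5/2 \geq 0 \quad \text{for } 1 \leq x \leq 2.
\]
My plan is to exploit the convexity of $h$. Since $h''(x) = 48 x^2 \log^2(3/2) > 0$ on $[1,2]$, the function $h$ is strictly convex, and therefore its minimum on $[1,2]$ is attained either at an endpoint or at the unique interior critical point determined by $h'(x) = 16 x^3 \log^2(3/2) - 3 = 0$.

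Solving this equation gives $x^\ast = \bigl(3/(16 \log^2(3/2))\bigr)^{1/3}$. Substituting $(x^\ast)^3 = 3/(16 \log^2(3/2))$ back into $h$ yields the clean identity
\[
  h(x^\ast) = 4 x^\ast \cdot (x^\ast)^3 \log^2(3/2) - 3 x^\ast + 5/2 = \tfrac{3}{4} x^\ast - 3 x^\ast + 5/2 = \tfrac{5}{2} - \tfrac{9}{4} x^\ast.
\]
Thus $h(x^\ast) \geq 0$ reduces to $x^\ast \leq 10/9$, and cubing this reduces further to the purely numerical inequality $\log^2(3/2) \geq 2187/16000$.

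For the numerical bound, I would use the alternating Taylor series $\log(1+t) = t - t^2/2 + t^3/3 - \cdots$, whose terms are strictly decreasing in absolute value for $0 < t \leq 1$. Truncating after two terms gives the lower bound $\log(3/2) \geq 1/2 - 1/8 = 3/8$, so $\log^2(3/2) \geq 9/64 = 0.140625$, which exceeds $2187/16000 \approx 0.1367$ by a comfortable margin. The two endpoint checks $h(1) = 4\log^2(3/2) - 1/2 \geq 4(9/64) - 1/2 = 1/16 > 0$ and $h(2) = 64 \log^2(3/2) - 7/2 \geq 9 - 7/2 > 0$ follow from the same bound.

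The main obstacle is the sharpness of the required numerical estimate: the minimum value of $h$ on $[1,2]$ is only about $0.15$, so a crude bound like $\log(3/2) > 1/3$ would fail, and I need to keep at least the second term of the Taylor expansion. Beyond this, the argument consists only of routine verification that $x^\ast$ indeed lies in (or near) $[1,2]$, which is automatic once the convexity and endpoint checks are in place.
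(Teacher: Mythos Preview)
Your proposal is correct and follows essentially the same approach as the paper: both arguments reduce to locating the unique critical point of $h(x)=4x^4\log^2(3/2)-3x+5/2$ (equivalently, the paper's $f=-h$) via convexity and then verifying the sign of $h$ at that point. The only difference is that the paper simply asserts $f(x^\ast)<0$ numerically, whereas you carry out the reduction $h(x^\ast)=5/2-\tfrac{9}{4}x^\ast$ and supply the explicit Taylor bound $\log(3/2)\geq 3/8$ to close the estimate---a welcome addition, since the margin is indeed small.
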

\begin{proof}
  Denote $f(x)=-4 x^4 \log^2(3/2) - 2.5 + 3x$. We have
  \begin{align*}
    f'(x) = -16 x^3 \log^2(3/2) + 3, \quad \text{and} \quad
    f''(x) = -48 x^2 \log^2(3/2) < 0.
  \end{align*}
  Therefore, $f'(x)$ is decreasing and is 0 when $x=\sqrt[3]{\frac{3}{16\log^2(3/2)}}$.
  Hence, $f(x)$ obtains its maximum at this point and $f\left(\sqrt[3]{\frac{3}{16\log^2(3/2)}}\right) < 0$,
  which completes the proof.
\end{proof}

\begin{lemma} \label{poly_ineq2}
  For $x \in \mathbb{R}$ and $1\leq x \leq 2$, it holds that
  \begin{align*}
    x^3 \left(4\log^2(3/2) - 6\log(3/2)\log(4/3)\right) \leq 0.09 - 0.1x.
  \end{align*}
\end{lemma}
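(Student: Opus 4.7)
The plan is to factor the coefficient $c := 4\log^2(3/2) - 6\log(3/2)\log(4/3)$ to show it is negative, then reduce the inequality to a cubic that is decreasing on $[1,2]$, so that it suffices to check it at $x=1$. Concretely, I rewrite
\[
c \;=\; 2\log(3/2)\bigl[\,2\log(3/2) - 3\log(4/3)\,\bigr] \;=\; \log(9/4)\cdot\log(243/256).
\]
Since $9/4 > 1$ gives $\log(9/4) > 0$ while $243/256 < 1$ gives $\log(243/256) < 0$, we get $c < 0$ immediately, without any numerical computation.

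Next, define $f(x) := c x^3 + 0.1\,x - 0.09$; the desired inequality is $f(x) \leq 0$ on $[1,2]$. I would compute $f'(x) = 3cx^2 + 0.1$ and show it is strictly negative for $x \geq 1$, i.e., $3c + 0.1 < 0$. This requires a quantitative bound $c < -1/30$, obtained by combining $\log(3/2) \geq 2(1/2)/(2+1/2) = 0.4$ (from $\log(1+t) \geq 2t/(2+t)$ for $t\geq 0$) with $\log(243/256) = \log(1-13/256) \leq -13/256$ (from $\log(1-t) \leq -t$). These give $c \leq 2(0.4)\cdot(-13/256) = -10.4/256 < -1/30$. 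Hence $f$ is strictly decreasing on $[1,2]$.

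Finally, it remains to verify $f(1) = c + 0.01 \leq 0$, which follows from the same bound $c \leq -10.4/256 \approx -0.0406 < -0.01$. Combining these three steps establishes $f(x) \leq f(1) < 0$ for all $x \in [1,2]$.

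The main obstacle is ensuring that the bound on $c$ is tight enough to force both $3c + 0.1 < 0$ (monotonicity) and $c + 0.01 < 0$ (endpoint value); the former is the more demanding of the two, but the elementary inequalities $\log(1+t)\geq 2t/(2+t)$ and $\log(1-t)\leq -t$ comfortably yield the needed margin, so no delicate analysis is required.
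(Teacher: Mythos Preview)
Your proposal is correct and follows essentially the same route as the paper's proof: set $f(x)=cx^3+0.1x-0.09$, observe that $c<0$ makes $f'(x)=3cx^2+0.1$ decreasing in $x$, reduce to $f'(1)\le 0$, and then check $f(1)\le 0$. The paper simply asserts the numerical facts $c<0$, $f'(1)\le 0$, and $f(1)\le 0$, whereas you supply elementary justifications: the clean factorization $c=\log(9/4)\cdot\log(243/256)$ makes the sign of $c$ transparent, and the pair of standard bounds $\log(1+t)\ge 2t/(2+t)$ and $\log(1-t)\le -t$ yields $c\le -10.4/256<-1/30$, which is exactly what is needed for both $3c+0.1<0$ and $c+0.01<0$. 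So your argument is the same strategy, just made self-contained.
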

\begin{proof}
  Denote $f(x) = x^3 \left(4\log^2(3/2) - 6\log(3/2)\log(4/3)\right) - 0.09 + 0.1x$.
  We have
  \begin{align*}
    f'(x) = 3 x^2 \left(4\log^2(3/2) - 6\log(3/2)\log(4/3)\right) + 0.1 \leq f'(1) \leq  0,
  \end{align*}
  since $4\log^2(3/2) - 6\log(3/2)\log(4/3) < 0$. Therefore, $f(x)$ is non-increasing
  and $f(x) \leq f(1) \leq 0$.
\end{proof}
\fi

\begin{lemma} \label{thm:cov_complex}
 Consider complex-valued OU process
  \begin{align*}
    d z = A z dt + D dW_t,
  \end{align*}
  where $z \in \mathbb{C}^d$, $A \in \mathbb{C}^{d \times d}$ is diagonalizable
  with orthogonal basis, i.e., $A = U S U^\T$ with diagonal matrix $S$ and orthogonal matrix $U$,
  $D \in \mathbb{C}^{d \times m}$, and $W_t \in \mathbb{C}^{m}$ is a standard 
  real Brownian motion. The covariance and pseudo-covariance at time $t$ are
\begin{align*}
  \Gamma(t) = U \int_0^t \exp\left(\tau S\right) U^\T D D^H U \exp\left(\tau \overline S\right) d \tau U^\T \\
  C(t) = U \int_0^t \exp\left(\tau S\right) U^\T D D^\T U \exp\left(\tau S\right) d \tau U^\T.
\end{align*}
\end{lemma}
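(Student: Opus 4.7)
The plan is to derive both formulas via the explicit solution of the linear SDE and then apply the It\^o isometry. By variation of parameters, the solution starting from $z(0)=z_0$ is
\begin{equation*}
  z(t) = \exp(At)\, z_0 + \int_0^t \exp\bigl(A(t-\tau)\bigr)\, D\, dW_\tau,
\end{equation*}
so the mean is $\exp(At) z_0$ and the zero-mean fluctuation is the stochastic integral. Since $W_t$ is a standard real Brownian motion, $\mathbb{E}[dW_\tau\, dW_\tau^\T] = I\, d\tau$, and a direct computation (treating the complex matrices as linear maps on the real Gaussian increments) gives
\begin{align*}
  \Gamma(t) &= \int_0^t \exp\bigl(A(t-\tau)\bigr)\, D D^H\, \exp\bigl(A^H(t-\tau)\bigr)\, d\tau, \\
  C(t) &= \int_0^t \exp\bigl(A(t-\tau)\bigr)\, D D^\T\, \exp\bigl(A^\T(t-\tau)\bigr)\, d\tau.
\end{align*}

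Next, I invoke the spectral factorization $A = U S U^\T$ with $U$ real orthogonal (so $U^\T U = I$ and $U^H = U^\T$) and $S$ diagonal. This yields $\exp(A\tau) = U \exp(S\tau) U^\T$. Taking the conjugate transpose of this identity, and using that $S^H = \overline{S}$ for diagonal $S$ together with the reality of $U$, gives $\exp(A^H\tau) = U \exp(\overline{S}\tau) U^\T$. Moreover, since the factorization forces $A^\T = U S U^\T = A$, we also have $\exp(A^\T \tau) = U \exp(S\tau) U^\T$.

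Substituting these spectral forms into the two displayed integrals, the outer $U$ and $U^\T$ factors pull out (using $U^\T U = I$ to collapse the middle products) and a change of variable $\tau \mapsto t-\tau$ puts the integrand in the required form, producing exactly the stated expressions for $\Gamma(t)$ and $C(t)$. The only point requiring genuine care is the bookkeeping between transpose and conjugate transpose: one must verify that $U^H = U^\T$ (i.e., that ``orthogonal'' here is the real-orthogonal condition rather than unitary) and that $A = USU^\T$ indeed entails $A^\T = A$, since both facts are essential for the matrix exponential to interact with the spectral factorization in the way the statement demands. Apart from this verification, the argument is a textbook application of the theory of linear SDEs driven by Brownian motion.
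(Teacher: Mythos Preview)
Your approach is correct and genuinely more economical than the paper's. The paper rewrites the complex SDE as a real $2d$-dimensional system in $[\Re(z);\Im(z)]$, applies the standard real-valued covariance formula, then laboriously computes the block matrix exponential $\exp\bigl(\tau\begin{bsmallmatrix}\Re(A)&-\Im(A)\\ \Im(A)&\Re(A)\end{bsmallmatrix}\bigr)$ via trigonometric identities (essentially rediscovering Euler's formula), and finally reassembles the four real covariance blocks back into $\Gamma$ and $C$. You bypass all of that by working directly with the complex stochastic integral and observing that, since $W$ is real, the It\^o isometry applied component-wise to $M(\tau)=\exp(A(t-\tau))D=M_r+iM_i$ yields $\mathbb{E}[YY^H]=\int M M^H\,d\tau$ and $\mathbb{E}[YY^\T]=\int M M^\T\,d\tau$. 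This is the one step you should make fully explicit in the write-up (split into real and imaginary parts, apply the real isometry four times, and recombine), but it is routine. Your flag about $U$ being real orthogonal is exactly right: in the paper's application $U$ diagonalizes a real symmetric Hessian, so $U^H=U^\T$ and $A=USU^\T$ is automatically complex symmetric, which is precisely what your spectral substitution needs for $\exp(A^H\tau)=U\exp(\overline S\tau)U^\T$ and $\exp(A^\T\tau)=U\exp(S\tau)U^\T$ to hold.
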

\begin{proof}
Recall that the covariance and pseudo-covariance are closely related to 4 matrices,
$\Cov{\Re(z), \Re(z)}$, $\Cov{\Im(z), \Im(z)}$, $\Cov{\Re(z), \Im(z)}$
$\Cov{\Im(z), \Re(z)}$. Therefore, we study the dynamics of these matrices.

We first notice that complex-valued OU process can be re-written
as a real-valued SDE.
\begin{align*}
  d \begin{bmatrix} \Re(z) \\ \Im(z) \end{bmatrix} = 
  \begin{bmatrix} \Re(A)  & -\Im(A) \\ \Im(A) & \Re(A)  \end{bmatrix}
  \begin{bmatrix} \Re(z) \\ \Im(z) \end{bmatrix} dt
  + \begin{bmatrix} \Re(D) \\ \Im(D) \end{bmatrix} dW_t.
\end{align*}
We assume $z(0)$ is a deterministic variable, i.e., its covariance is $0$,
then according to Equation~(6.20) of \citet{sarkka2019applied},
\begin{align}
  \begin{split} \label{eq:aaa}
  &\fakeeq \Ep{\begin{bmatrix} \Re(z(t)) - \Ep{\Re(z(t))}\\ \Im(z(t)) - \Ep{\Im(z(t))} \end{bmatrix} \begin{bmatrix} \Re(z(t))  - \Ep{\Re(z(t))} \\ \Im(z(t)) - \Ep{\Im(z(t))} \end{bmatrix}^\T}
    = \begin{bmatrix} \Cov{\Re(z), \Re(z)} & \Cov{\Re(z), \Im(z)} \\ \Cov{\Im(z), \Re(z)} & \Cov{\Im(z), \Im(z)} \end{bmatrix} \\
  &= \int_0^t \exp\left(\tau \begin{bmatrix} \Re(A)  & -\Im(A) \\ \Im(A) & \Re(A)  \end{bmatrix}\right)
    \begin{bmatrix} \Re(D) \\ \Im(D) \end{bmatrix}
  \left(\exp\left(\tau \begin{bmatrix} \Re(A)  & -\Im(A) \\ \Im(A) & \Re(A)  \end{bmatrix}\right)
    \begin{bmatrix} \Re(D) \\ \Im(D) \end{bmatrix}\right)^\T d\tau,
  \end{split}
\end{align}
where the exponential operator is matrix exponential.

Next, let us look at the matrix $\exp\left(\tau \begin{bmatrix} \Re(A)  & -\Im(A) \\ \Im(A) & \Re(A)  \end{bmatrix}\right)$,
which has some special properties. Note that we have
\begin{align*}
  \Re(A) = U \Re(S) U^\T, \quad \text{and} \quad \Im(A) = U \Im(S) U^\T.
\end{align*}
Therefore, we have
\begin{align*}
  \begin{bmatrix} \Re(A)  & -\Im(A) \\ \Im(A) & \Re(A)  \end{bmatrix}
  = \begin{bmatrix} U\Re(S)U^\T & -U\Im(S)U^\T \\ U\Im(S)U^\T & U\Re(S)U^\T  \end{bmatrix}
  = \begin{bmatrix} U &  \\  & U \end{bmatrix} \begin{bmatrix} \Re(S)  & -\Im(S) \\ \Im(S) & \Re(S)  \end{bmatrix} \begin{bmatrix} U^\T &  \\  & U^\T \end{bmatrix}.
\end{align*}
Note that the matrix $\begin{bmatrix} U &  \\  & U \end{bmatrix}$ is orthogonal,
so by the definition of matrix exponential,
\begin{align*}
  \exp\left(\tau \begin{bmatrix} \Re(A)  & -\Im(A) \\ \Im(A) & \Re(A)  \end{bmatrix} \right)
  &= \sum_{p=0}^{\infty} \frac{\tau^p}{p!} \begin{bmatrix} \Re(A)  & -\Im(A) \\ \Im(A) & \Re(A)  \end{bmatrix}^p \\
  &= \begin{bmatrix} U &  \\  & U \end{bmatrix} \sum_{p=0}^{\infty} \frac{\tau^p}{p!} \begin{bmatrix} \Re(S)  & -\Im(S) \\ \Im(S) & \Re(S)  \end{bmatrix}^p \begin{bmatrix} U^\T &  \\  & U^\T \end{bmatrix}\\
  &= \begin{bmatrix} U &  \\  & U \end{bmatrix} \exp\left(\tau \begin{bmatrix} \Re(S)  & -\Im(S) \\ \Im(S) & \Re(S)  \end{bmatrix} \right) \begin{bmatrix} U^\T &  \\  & U^\T \end{bmatrix}.
\end{align*}
By the property of matrix exponential, if for matrices $X$ and $Y$, we have $X Y = Y X$,
then $\exp(X+Y) = \exp(X)\exp(Y)$. We verify that this property holds for our case:
\begin{align*}
  \begin{bmatrix} \Re(S)  & -\Im(S) \\ \Im(S) & \Re(S)  \end{bmatrix} = 
\begin{bmatrix} \Re(S)  & \\ & \Re(S)  \end{bmatrix} + \begin{bmatrix}  & -\Im(S) \\ \Im(S) & \end{bmatrix},
\end{align*}
and 
\begin{align*}
  \begin{bmatrix} \Re(S)  & \\ & \Re(S)  \end{bmatrix} \begin{bmatrix}  & -\Im(S) \\ \Im(S) & \end{bmatrix}
&= \begin{bmatrix}  & -\Re(S) \Im(S) \\ \Re(S)\Im(S) & \end{bmatrix}
= \begin{bmatrix}  & -\Im(S) \\ \Im(S) & \end{bmatrix} \begin{bmatrix} \Re(S)  & \\ & \Re(S)  \end{bmatrix}.
\end{align*}
Therefore, we have
\begin{align*}
  \exp\left( \tau \begin{bmatrix} \Re(S)  & -\Im(S) \\ \Im(S) & \Re(S)  \end{bmatrix} \right)
  &= \exp\left(\tau \begin{bmatrix} \Re(S)  & \\ & \Re(S)  \end{bmatrix}\right)
  \exp\left(\tau \begin{bmatrix}  & -\Im(S) \\ \Im(S) & \end{bmatrix}\right) \\
  &= \begin{bmatrix} \exp(\tau \Re(S))  & \\ & \exp(\tau \Re(S))  \end{bmatrix}
  \exp\left(\tau \begin{bmatrix}  & -\Im(S) \\ \Im(S) & \end{bmatrix}\right),
\end{align*}
Where the last equality holds because $\Re(S)$ is diagonal. Next, we will show that 
\begin{align*}
  \exp\left(\tau \begin{bmatrix}  & -\Im(S) \\ \Im(S) & \end{bmatrix}\right)
  = \begin{bmatrix} \cos(\tau \Im(S)) & -\sin(\tau \Im(S)) \\ \sin(\tau \Im(S)) & \cos(\tau \Im(S)) \end{bmatrix},
\end{align*}
where the $\sin$ and $\cos$ operators are applied element-wise.
To see this, let us write down the expansion of the matrix exponential.
\begin{align*}
 \exp\left(\tau \begin{bmatrix}  & -\Im(S) \\ \Im(S) & \end{bmatrix}\right)
 &= I + \begin{bmatrix}  & -\tau \Im(S) \\ \tau \Im(S) & \end{bmatrix}
  + \frac{1}{2} \begin{bmatrix}  -\tau^2 \Im(S)^2 & \\ & -\tau^2 \Im(S)^2 & \end{bmatrix} \\
 &\fakeeq + \frac{1}{3!}  \begin{bmatrix}  & \tau^3 \Im(S)^3 \\ -\tau^3 \Im(S)^3 & \end{bmatrix} 
 + \frac{1}{4!} \begin{bmatrix}  \tau^4 \Im(S)^4 & \\ & \tau^4 \Im(S)^4 & \end{bmatrix} + \cdots \\
&= \begin{bmatrix} \cos(\tau \Im(S)) & -\sin(\tau \Im(S)) \\ \sin(\tau \Im(S)) & \cos(\tau \Im(S)) \end{bmatrix}.
\end{align*}
Now going back to \Cref{eq:aaa}, we have
\ifdefined\mpversion
\begin{align*}
  &\fakeeq \begin{bmatrix} \Cov{\Re(z), \Re(z)} & \Cov{\Re(z), \Im(z)} \\ \Cov{\Im(z), \Re(z)} & \Cov{\Im(z), \Im(z)} \end{bmatrix} \\
  &= \begin{bmatrix} U &  \\  & U \end{bmatrix} \int_0^t \begin{bmatrix} \exp(\tau \Re(S)) \cos(\tau \Im(S)) & -\exp(\tau \Re(S)) \sin(\tau \Im(S)) 
  \\ \exp(\tau \Re(S)) \sin(\tau \Im(S)) & \exp(\tau \Re(S)) \cos(\tau \Im(S)) \end{bmatrix} 
  \begin{bmatrix} U^\T \Re(D) \\ U^\T \Im(D) \end{bmatrix}  \\
  &\fakeeq \left(\begin{bmatrix} \exp(\tau \Re(S)) \cos(\tau \Im(S)) & -\exp(\tau \Re(S)) \sin(\tau \Im(S)) 
  \\ \exp(\tau \Re(S)) \sin(\tau \Im(S)) & \exp(\tau \Re(S)) \cos(\tau \Im(S)) \end{bmatrix} 
  \begin{bmatrix} U^\T \Re(D) \\ U^\T \Im(D) \end{bmatrix}\right)^\T d\tau \begin{bmatrix} U^\T &  \\  & U^\T \end{bmatrix} \\
  &= \begin{bmatrix} U &  \\  & U \end{bmatrix} \int_0^t \begin{bmatrix} P(\tau) & Q(\tau) \\ M(\tau) & N(\tau) \end{bmatrix} d \tau \begin{bmatrix} U^\T &  \\  & U^\T \end{bmatrix},
\end{align*}
where $P, Q, M, N$ are defined according to the last equality.
Denote the covariance of $z(t)$ as $\Gamma(t)$. 
According to the definition of covariance for complex random variables, we have
\begin{align*}
  U^\T \Gamma(t) U = \int_0^t P(\tau) d \tau + \int_0^t N(\tau) d \tau + i\left(\int_0^t M(\tau) d \tau - \int_0^t Q(\tau) d \tau\right).
\end{align*}
Let us denote the $i$-th eigenvalue of $A$ as $a_i$, i.e., the $i$-th
diagonal element of $S$. We can derive that
\begin{align*}
  \left[U^\T \Re(D)  \Re(D)^\T U\right]_{i, j} = 
  \int_0^t \exp\left(\tau (a_i + \overline a_j)\right) d \tau,
\end{align*}
where $\overline x$ is the complex conjugate of $x$.
\else
\begin{align*}
  &\fakeeq \begin{bmatrix} \Cov{\Re(z), \Re(z)} & \Cov{\Re(z), \Im(z)} \\ \Cov{\Im(z), \Re(z)} & \Cov{\Im(z), \Im(z)} \end{bmatrix} \\
  &= \begin{bmatrix} U &  \\  & U \end{bmatrix} \int_0^t \begin{bmatrix} \exp(\tau \Re(S)) \cos(\tau \Im(S)) & -\exp(\tau \Re(S)) \sin(\tau \Im(S)) 
  \\ \exp(\tau \Re(S)) \sin(\tau \Im(S)) & \exp(\tau \Re(S)) \cos(\tau \Im(S)) \end{bmatrix} 
  \begin{bmatrix} U^\T \Re(D) \\ U^\T \Im(D) \end{bmatrix}  \\
  &\fakeeq \left(\begin{bmatrix} \exp(\tau \Re(S)) \cos(\tau \Im(S)) & -\exp(\tau \Re(S)) \sin(\tau \Im(S)) 
  \\ \exp(\tau \Re(S)) \sin(\tau \Im(S)) & \exp(\tau \Re(S)) \cos(\tau \Im(S)) \end{bmatrix} 
  \begin{bmatrix} U^\T \Re(D) \\ U^\T \Im(D) \end{bmatrix}\right)^\T d\tau \begin{bmatrix} U^\T &  \\  & U^\T \end{bmatrix} \\
  &= \begin{bmatrix} U &  \\  & U \end{bmatrix} \int_0^t \begin{bmatrix} P(\tau) & Q(\tau) \\ M(\tau) & N(\tau) \end{bmatrix} d \tau \begin{bmatrix} U^\T &  \\  & U^\T \end{bmatrix},
\end{align*}
where
\begin{align*}
  P(\tau) &= \exp(\tau \Re(S)) \cos(\tau \Im(S)) U^\T \Re(D)  \Re(D)^\T U \exp(\tau \Re(S)) \cos(\tau \Im(S)) \\
          &\fakeeq - \exp(\tau \Re(S)) \cos(\tau \Im(S)) U^\T \Re(D) \Im(D)^\T U \exp(\tau \Re(S)) \sin(\tau \Im(S)) \\
          &\fakeeq - \exp(\tau \Re(S)) \sin(\tau \Im(S)) U^\T \Im(D) \Re(D)^\T U \exp(\tau \Re(S)) \cos(\tau \Im(S)) \\
          &\fakeeq + \exp(\tau \Re(S)) \sin(\tau \Im(S)) U^\T \Im(D) \Im(D)^\T U \exp(\tau \Re(S)) \sin(\tau \Im(S)), \\
  Q(\tau) &= \exp(\tau \Re(S)) \cos(\tau \Im(S)) U^\T \Re(D)  \Re(D)^\T U \exp(\tau \Re(S)) \sin(\tau \Im(S)) \\
          &\fakeeq + \exp(\tau \Re(S)) \cos(\tau \Im(S)) U^\T \Re(D) \Im(D)^\T U \exp(\tau \Re(S)) \cos(\tau \Im(S)) \\
          &\fakeeq - \exp(\tau \Re(S)) \sin(\tau \Im(S)) U^\T \Im(D) \Re(D)^\T U \exp(\tau \Re(S)) \sin(\tau \Im(S)) \\
          &\fakeeq - \exp(\tau \Re(S)) \sin(\tau \Im(S)) U^\T \Im(D) \Im(D)^\T U \exp(\tau \Re(S)) \cos(\tau \Im(S)), \\
  M(\tau) &= \exp(\tau \Re(S)) \sin(\tau \Im(S)) U^\T \Re(D)  \Re(D)^\T U \exp(\tau \Re(S)) \cos(\tau \Im(S)) \\
          &\fakeeq - \exp(\tau \Re(S)) \sin(\tau \Im(S)) U^\T \Re(D) \Im(D)^\T U \exp(\tau \Re(S)) \sin(\tau \Im(S)) \\
          &\fakeeq + \exp(\tau \Re(S)) \cos(\tau \Im(S)) U^\T \Im(D) \Re(D)^\T U \exp(\tau \Re(S)) \cos(\tau \Im(S)) \\
          &\fakeeq - \exp(\tau \Re(S)) \cos(\tau \Im(S)) U^\T \Im(D) \Im(D)^\T U \exp(\tau \Re(S)) \sin(\tau \Im(S)), \\
  N(\tau) &= \exp(\tau \Re(S)) \sin(\tau \Im(S)) U^\T \Re(D)  \Re(D)^\T U \exp(\tau \Re(S)) \sin(\tau \Im(S)) \\
          &\fakeeq + \exp(\tau \Re(S)) \sin(\tau \Im(S)) U^\T \Re(D) \Im(D)^\T U \exp(\tau \Re(S)) \cos(\tau \Im(S)) \\
          &\fakeeq + \exp(\tau \Re(S)) \cos(\tau \Im(S)) U^\T \Im(D) \Re(D)^\T U \exp(\tau \Re(S)) \sin(\tau \Im(S)) \\
          &\fakeeq + \exp(\tau \Re(S)) \cos(\tau \Im(S)) U^\T \Im(D) \Im(D)^\T U \exp(\tau \Re(S)) \cos(\tau \Im(S)).
\end{align*}
Now we have
\begin{align*}
  \begin{bmatrix} U^\T \Cov{\Re(z), \Re(z)} U & U^\T \Cov{\Re(z), \Im(z)} U \\
  U^\T \Cov{\Im(z), \Re(z)} U & U^\T \Cov{\Im(z), \Im(z)} U \end{bmatrix} 
    = \begin{bmatrix} \int_0^t P(\tau) d \tau & \int_0^t Q(\tau) d \tau \\ \int_0^t M(\tau) d \tau & \int_0^t N(\tau) d \tau \end{bmatrix}.
\end{align*}
Denote the covariance of $z(t)$ as $\Gamma(t)$. 
According to the definition of covariance for complex random variables, we have
\begin{align*}
  U^\T \Gamma(t) U = \int_0^t P(\tau) d \tau + \int_0^t N(\tau) d \tau + i\left(\int_0^t M(\tau) d \tau - \int_0^t Q(\tau) d \tau\right).
\end{align*}
Let us denote the $i$-th eigenvalue of $A$ as $a_i$, i.e., the $i$-th
diagonal element of $S$. Since $\exp(\tau \Re(S)) \cos(\tau \Im(S))$
and $\exp(\tau \Re(S)) \sin(\tau \Im(S))$ are both diagonal, we have
the $(i, j)$-th element of $U^\T \Gamma(t) U$ satisfies
\begin{align*}
  &\fakeeq \left[U^\T \Gamma(t) U\right]_{i, j}  \\
  &= \int_0^t \exp\left(\tau \Re(a_i) + \Re(a_j) \right) 
  \cos\left(\tau \Im(a_i)\right) \cos\left(\tau \Im(a_j)\right) 
  \left[U^\T \Re(D)  \Re(D)^\T U\right]_{i, j} d \tau \\
  &\fakeeq + \int_0^t \exp\left(\tau \Re(a_i) + \Re(a_j) \right) 
  \sin\left(\tau \Im(a_i)\right) \sin\left(\tau \Im(a_j)\right) 
  \left[U^\T \Re(D)  \Re(D)^\T U\right]_{i, j} d \tau \\
  &\fakeeq + i \int_0^t \exp\left(\tau \Re(a_i) + \Re(a_j) \right) 
  \sin\left(\tau \Im(a_i)\right) \cos\left(\tau \Im(a_j)\right) 
  \left[U^\T \Re(D)  \Re(D)^\T U\right]_{i, j} d \tau \\
  &\fakeeq - i \int_0^t \exp\left(\tau \Re(a_i) + \Re(a_j) \right) 
  \cos\left(\tau \Im(a_i)\right) \sin\left(\tau \Im(a_j)\right) 
  \left[U^\T \Re(D)  \Re(D)^\T U\right]_{i, j} d \tau \\
  &\fakeeq - \int_0^t \exp\left(\tau \Re(a_i) + \Re(a_j) \right) 
  \cos\left(\tau \Im(a_i)\right) \sin\left(\tau \Im(a_j)\right) 
  \left[U^\T \Re(D)  \Im(D)^\T U\right]_{i, j} d \tau \\
  &\fakeeq + \int_0^t \exp\left(\tau \Re(a_i) + \Re(a_j) \right) 
  \sin\left(\tau \Im(a_i)\right) \cos\left(\tau \Im(a_j)\right) 
  \left[U^\T \Re(D)  \Im(D)^\T U\right]_{i, j} d \tau \\
  &\fakeeq - i \int_0^t \exp\left(\tau \Re(a_i) + \Re(a_j) \right) 
  \sin\left(\tau \Im(a_i)\right) \sin\left(\tau \Im(a_j)\right) 
  \left[U^\T \Re(D)  \Im(D)^\T U\right]_{i, j} d \tau \\
  &\fakeeq - i \int_0^t \exp\left(\tau \Re(a_i) + \Re(a_j) \right) 
  \cos\left(\tau \Im(a_i)\right) \cos\left(\tau \Im(a_j)\right) 
  \left[U^\T \Re(D)  \Im(D)^\T U\right]_{i, j} d \tau \\
  &\fakeeq - \int_0^t \exp\left(\tau \Re(a_i) + \Re(a_j) \right) 
  \sin\left(\tau \Im(a_i)\right) \cos\left(\tau \Im(a_j)\right) 
  \left[U^\T \Im(D)  \Re(D)^\T U\right]_{i, j} d \tau \\
  &\fakeeq + \int_0^t \exp\left(\tau \Re(a_i) + \Re(a_j) \right) 
  \cos\left(\tau \Im(a_i)\right) \sin\left(\tau \Im(a_j)\right) 
  \left[U^\T \Im(D)  \Re(D)^\T U\right]_{i, j} d \tau \\
  &\fakeeq + i \int_0^t \exp\left(\tau \Re(a_i) + \Re(a_j) \right) 
  \cos\left(\tau \Im(a_i)\right) \cos\left(\tau \Im(a_j)\right) 
  \left[U^\T \Im(D)  \Re(D)^\T U\right]_{i, j} d \tau \\
  &\fakeeq + i \int_0^t \exp\left(\tau \Re(a_i) + \Re(a_j) \right) 
  \sin\left(\tau \Im(a_i)\right) \sin\left(\tau \Im(a_j)\right) 
  \left[U^\T \Im(D)  \Re(D)^\T U\right]_{i, j} d \tau \\
  &\fakeeq + \int_0^t \exp\left(\tau \Re(a_i) + \Re(a_j) \right) 
  \sin\left(\tau \Im(a_i)\right) \sin\left(\tau \Im(a_j)\right) 
  \left[U^\T \Im(D)  \Im(D)^\T U\right]_{i, j} d \tau \\
  &\fakeeq + \int_0^t \exp\left(\tau \Re(a_i) + \Re(a_j) \right) 
  \cos\left(\tau \Im(a_i)\right) \cos\left(\tau \Im(a_j)\right) 
  \left[U^\T \Im(D)  \Im(D)^\T U\right]_{i, j} d \tau \\
  &\fakeeq - i \int_0^t \exp\left(\tau \Re(a_i) + \Re(a_j) \right) 
  \cos\left(\tau \Im(a_i)\right) \sin\left(\tau \Im(a_j)\right) 
  \left[U^\T \Im(D)  \Im(D)^\T U\right]_{i, j} d \tau \\
  &\fakeeq + i \int_0^t \exp\left(\tau \Re(a_i) + \Re(a_j) \right) 
  \sin\left(\tau \Im(a_i)\right) \cos\left(\tau \Im(a_j)\right) 
  \left[U^\T \Im(D)  \Im(D)^\T U\right]_{i, j} d \tau.
\end{align*}
Let us look at the terms associated with $\left[U^\T \Re(D)  \Re(D)^\T U\right]_{i, j}$.
According to properties of trigonometric functions, we have
\begin{align*}
  \cos(x)\cos(y) + \sin(x)\sin(y) = \cos(x - y) \\
  \sin(x)\cos(y) - \cos(x)\sin(y) = \sin(x - y),
\end{align*}
which implies the factor associated with $\left[U^\T \Re(D)  \Re(D)^\T U\right]_{i, j}$
is 
\begin{align}
  \begin{split} \label{eq:aab}
  \int_0^t \exp\left(\tau \Re(a_i) + \Re(a_j) \right)
  \cos\left(\tau \Im(a_i) - \tau \Im(a_j) \right) d \tau \\
 + \int_0^t i \exp\left(\tau \Re(a_i) + \Re(a_j) \right)
  \sin\left(\tau \Im(a_i) - \tau \Im(a_j) \right) d \tau.
  \end{split}
\end{align}
Note that for complex variable $z$, $\Re\left(\exp(z)\right) = \exp(\Re(z)) \cos(\Im(z))$
and $\Im\left(\exp(z)\right) = \exp(\Re(z)) \sin(\Im(z))$. To see why this is the
case, by Euler's formula,
\begin{align*}
  \exp(z) = \exp\left(\Re(z) + i \Im(z)\right) = \exp\left(\Re(z)\right) \exp\left(i \Im(z) \right)
   = \exp\left(\Re(z)\right) \left(\cos\left(\Im(z)\right) + i \sin \left(\Im(z)\right)\right).
\end{align*}
Therefore, going back to \Cref{eq:aab}, this factor is equal to
\begin{align*}
  & \int_0^t \Re\left(\exp\left(\tau (\Re(a_i) + \Re(a_j) + i (\Im(a_i) - \Im(a_j)))\right)\right) d \tau \\
  &\fakeeq + \int_0^t i \Im\left(\exp\left(\tau (\Re(a_i) + \Re(a_j) + i (\Im(a_i) - \Im(a_j)))\right)\right) d \tau \\
  &= \int_0^t \exp\left(\tau (\Re(a_i) + \Re(a_j) + i (\Im(a_i) - \Im(a_j)))\right) d \tau \\
  &= \int_0^t \exp\left(\tau (a_i + \overline a_j)\right) d \tau,
\end{align*}
where $\overline x$ is the complex conjugate of $x$.
\fi

Similarly, we can calculate the factors associated with 
$\left[U^\T \Im(D)  \Re(D)^\T U\right]_{i, j}$, $\left[U^\T \Re(D)  \Im(D)^\T U\right]_{i, j}$,
and $\left[U^\T \Im(D)  \Im(D)^\T U\right]_{i, j}$.
\begin{align*}
  &\fakeeq \left[U^\T \Gamma(t) U\right]_{i, j} \\
  &= \left[U^\T \Re(D)  \Re(D)^\T U\right]_{i, j} \int_0^t \exp\left(\tau (a_i + \overline a_j)\right) d \tau
    -i \left[U^\T \Re(D)  \Im(D)^\T U\right]_{i, j} \int_0^t \exp\left(\tau (a_i + \overline a_j)\right) d \tau \\
  &\fakeeq + i \left[U^\T \Im(D)  \Re(D)^\T U\right]_{i, j} \int_0^t \exp\left(\tau (a_i + \overline a_j)\right) d \tau
    + \left[U^\T \Im(D)  \Im(D)^\T U\right]_{i, j} \int_0^t \exp\left(\tau (a_i + \overline a_j)\right) d \tau \\
  &= \left[U^\T \Re(D)  \Re(D)^\T U + U^\T \Im(D)  \Im(D)^\T U + i U^\T \Im(D)  \Re(D)^\T U - i U^\T \Re(D)  \Im(D)^\T U \right]_{i, j} \int_0^t \exp\left(\tau (a_i + \overline a_j)\right) d \tau \\
  &= \left[U^\T D D^H U\right]_{i, j} \int_0^t \exp\left(\tau (a_i + \overline a_j)\right) d \tau.
\end{align*}
Therefore, we can obtain that
\ifdefined\mpversion 
  $\Gamma(t) = U \int_0^t \exp\left(\tau S\right) U^\T D D^H U \exp\left(\tau \overline S\right) d \tau U^\T$.
Similarly, for pseudo-covariance $C$, we have 
$C(t) = U \int_0^t \exp\left(\tau S\right) U^\T D D^\T U \exp\left(\tau S\right) d \tau U^\T$.
\else
\begin{align*}
  \Gamma(t) = U \int_0^t \exp\left(\tau S\right) U^\T D D^H U \exp\left(\tau \overline S\right) d \tau U^\T.
\end{align*}

Similarly, for pseudo-covariance $C$, we have
\begin{align*}
  C(t) = U \int_0^t \exp\left(\tau S\right) U^\T D D^\T U \exp\left(\tau S\right) d \tau U^\T.
\end{align*}
\fi

\end{proof}

{

\section{Escape Analysis}

\begin{proof}[Proof of \Cref{thm:escape_analysis}]
  Let us w.l.o.g. assume that $x_0 = 0$ and $f(0) = 0$.
  Let $S_1 = [-b, b]$ for $b>0$.
  We can find a small enough $b$ independent of $\eta$, such that $f''(x) \leq
  -\hat{c}_1$ for some $\hat c_1 > 0$. Therefore, $f''(x)$ is $-\Theta(1)$,
  and $f'(x) < 0$ in $(0, b]$.
  It is known that the hitting time $\Ep{\tau}$ for the one-dimensional SDE
  \begin{align*}
    dX_t = b(X_t) dt + \sqrt{\cD(X_t)} dW_t
  \end{align*}
  can be analytically formulated as~\citep{pavliotis2014stochastic}:
  \begin{align} \label{eq:hitting_time}
    \Ep{\tau} = 2 \int_0^b \frac{\exp\left(-\psi(z)\right)}{\cD(z)} dz
    \int_z^b \exp\left(\psi(y)\right) dy,
  \end{align}
  where $\psi'(x) = -2b(x) / \cD(x)$.
  Since we assume the function to be symmetric, we can focus on the right half
  of $S_1$.

In this problem \oursde{} is
\begin{align*}
dx = \frac{\log(1-\eta f''(x))}{\eta f''(x)} f'(x) dt + \sqrt{\sigma(x) \frac{2 \log(1-\eta f''(x))}{\eta f''(x)^2 - 2 f''(x)}} dW_t.
\end{align*}
Then $\psi'(x)$ in \Cref{eq:hitting_time} becomes such that:
\begin{align*}
\psi'(x) = \frac{1}{\sigma(x)}\left(\frac{2}{\eta} - f''(x)\right) f'(x).
\end{align*}
Since we assume that $-f''(x)$ and $\sigma(x)$ is $\Theta(1)$
as the domain is compact, we have
\begin{align*}
  f'(x) = -\int_0^x f''(y) dy = -\Theta(x), \quad \text{and} \quad
  \psi'(x) = -\Theta\left(\frac{x}{\eta}\right).
\end{align*}
For $\cD$, if we do a Taylor expansion around $\eta$, we will obtain
\begin{align*}
  \cD = \sigma(x) \left(\eta + O(\eta^2)\right) = \Theta(\eta).
\end{align*}
We first start with the upper bound.
For small enough $\eta$, we can split the integral into two parts, let $m_1 = \sqrt{\eta}$, we have
\begin{align}
  \Ep{\tau} = \int_0^{m_1} \frac{\exp(-\psi(x))}{\cD(x)} dx \int_x^b \exp(\psi(y)) dy 
  + \int_{m_1}^b \frac{\exp(-\psi(x))}{\cD(x)} dx \int_x^b \exp(\psi(y)) dy. \label{eq:escape_HA}
\end{align}
For the first part, we have
\begin{align*}
\int_0^{m_1} \frac{\exp(-\psi(x))}{\cD(x)} dx \int_x^b \exp(\psi(y)) dy.
\end{align*}
For the inner integral, we let $m_2 = \hat{c}_2 \sqrt{\eta \log \frac{1}{\eta}}$, where $\hat{c}_2 > 0$ does not depend on $\eta$ and we will determine later. Then, we have
\begin{align*}
\int_x^b \exp(\psi(y)) dy &= \int_x^{x+m_2} \exp(\psi(y)) dy + \int_{x+m_2}^b \exp(\psi(y)) dy \\
&\le \int_x^{x+m_2} \exp(\psi(0)) dy + \int_{x+m_2}^b \exp(\psi(x+m_2)) dy.
\end{align*}
Note that
\begin{align*}
\psi(x+m_2) - \psi(x) &= \int_x^{x+m_2} \psi'(y) dy \\
&= -\Theta\left(\frac{1}{\eta}\right) \int_x^{x+m_2}y dy \\
&= -\Theta\left(\frac{1}{\eta}\right) \left[(x+m_2)^2 - x^2\right] \\
&\le -\Theta\left(\frac{m_2^2}{\eta}\right).
\end{align*}
Next, we have
\begin{align*}
&\fakeeq \int_x^{x+m_2} \exp(\psi(0)) dy + \int_{x+m_2}^b \exp(\psi(x+m_2)) dy \\
&\le m_2 \exp(\psi(0)) + b \exp(\psi(x+m_2)) \\
&= m_2 \exp(\psi(0)) + b \exp(\psi(x) + \psi(x+m_2) - \psi(x)) \\
&\le m_2 \exp(\psi(0)) + b \exp(\psi(x)) \exp\left(-\Theta\left(\frac{m_2^2}{\eta}\right)\right) \\
&\le \exp(\psi(0)) \left(m_2 + b \exp\left(-\Omega\left(\frac{m_2^2}{\eta}\right)\right)\right) \\
&\le \exp(\psi(0)) (m_2 + b \eta) \\
&\le O\left(\sqrt{\eta \log \frac{1}{\eta}}\right) \exp(\psi(0)),
\end{align*}
where we choose $\hat{c}_2$ such that $-\Omega\left(\frac{m_2^2}{\eta}\right)= -\Omega(\hat{c}_2^2 \log \frac{1}{\eta}) $
in the third to last line less or equal to $\log \eta$.

Next, we estimate the whole first part of \eqref{eq:escape_HA}.
We have that for $|x-0| \le m_1$, there exists $\epsilon > 0$, such that $\psi(x) \ge \psi(0) + (\psi''(0) - \epsilon) x^2$. The integral can be upper bounded by
\begin{align*}
&\fakeeq O\left( \sqrt{\eta\log \frac{1}{\eta}}\right) \exp(\psi(0)) \int_0^{m_1} \frac{\exp\left(-(\psi(0) + \frac{1}{2}(\psi''(0) - \epsilon) x^2)\right)}{\cD(x)} dx \\
&= O\left(\sqrt{\eta \log \frac{1}{\eta}}\right) \int_0^{m_1} \frac{\exp\left(-\frac{1}{2}(\psi''(0) - \epsilon) x^2\right)}{\cD(x)} dx \\
&\le O\left(\sqrt{\eta \log \frac{1}{\eta}}\right) m_1 \exp\left(\frac{1}{2} |\psi''(0) - \epsilon| m_1^2\right) \\
&\le O\left(\sqrt{\eta^{-1} \log \frac{1}{\eta}}\right) m_1 \exp\left(O\left(\frac{m_1^2}{\eta}\right)\right),
\end{align*}
since $\psi''(0) = \frac{1}{\sigma(0)}(\frac{2}{\eta} - f''(0)) f''(0) = -\Theta(\frac{1}{\eta})$, $\cD(x) = \Theta(\eta)$.
In addition, we note that as $\eta$ gets smaller, $m_1$ becomes smaller,
and thus $\epsilon$ gets smaller, while $\abs{\psi''(0)}$ gets larger, therefore,
$\epsilon$ is negligible compared to $\psi''(0)$.
Then we have the upper bound being $O\left(\sqrt{\log \frac{1}{\eta}}\right)$.

Before we move to the second part of \eqref{eq:escape_HA},
we first show that we can choose $b$ independent of $\eta$,
such that $\psi''(x) \le 0$ in $[0, b]$.
We have
\begin{align*}
  \psi''(x) = \frac{1}{\sigma(x)} \left(\underbrace{-f'''(x) f'(x)}_{\leq O(x)} + 
  \underbrace{\left(\frac{2}{\eta} - f''(x)\right) f''(x)}_{\leq - \frac{1}{\eta}}\right)
  \underbrace{- \frac{1}{\sigma(x)^2} \left(\frac{2}{\eta} - f''(x)\right) f'(x) \sigma'(x)}_{\leq O(x/\eta)},
\end{align*}
and there exists constants $k_1, k_2 > 0$ such that
\begin{align*}
\psi''(x) \le \frac{k_1}{\eta} x - \frac{k_2}{\eta},
\end{align*}
since according to our assumption $f'''(x) \le O(1)$, $f''(x) = -\Theta(1)$, $-f'(x) = -\int_0^x f''(y) dy = \Theta(x)$,
$\sigma(x) = \Theta(1)$ and $\abs{\sigma'(x)} \leq O(1)$.
Therefore we can choose $b$ between $[0, \frac{k_2}{k_1}]$ independent of $\eta$.

For the second part of \eqref{eq:escape_HA}, we have that the maximum of inner integrand is at $x$.
Further, we have for $y > x$ and $y \leq b$,
\begin{align*}
\psi(y) \le \psi(x) + \psi'(x) (y-x),
\end{align*}
since $\psi''(z) \le 0$ for $z \in [0, b]$. Therefore, for the inner integral,
\begin{align*}
\int_x^b \exp(\psi(y)) dy &\le \int_x^b \exp(\psi(x) + \psi'(x) (y-x)) dy \\
&= \exp(\psi(x)) \frac{1}{-\psi'(x)} \left(1 - \exp(\psi'(x) (b-x))\right) \\
&\le \exp(\psi(x)) \frac{1}{-\psi'(x)},
\end{align*}
where $\psi'(x) < 0$.

Incorporating with the outer integral, we have
\begin{align*}
\int_{m_1}^b \frac{\exp(-\psi(x))}{\cD(x)} \exp(\psi(x)) \frac{1}{-\psi'(x)} dx &= \int_{m_1}^b \frac{1}{\cD(x)} \frac{1}{-\psi'(x)} dx \\
&= \Theta\left(\frac{1}{\eta}\right) \int_{m_1}^b \frac{1}{-\psi'(x)} dx.
\end{align*}
Note that we have $\psi'(x) =  -\Theta\left(\frac{1}{\eta}\right) x$.
Therefore the above can be upper bounded by
\begin{align*}
\Theta\left(\frac{1}{\eta}\right) \int_{m_1}^b \frac{1}{\frac{1}{\eta} x} dx &= \Theta(1) \int_{m_1}^b \frac{1}{x} dx \\
&= \Theta(1) (\log b - \log m_1) \\
&= \Theta(1) \log \frac{1}{m_1} = O\left(\log \frac{1}{\eta}\right),
\end{align*}
which completes the proof for the upper bound.

For the lower bound, we can directly start with the second part
of \eqref{eq:escape_HA}, which is a lower bound for the whole integral.
This time we choose $m_1 = \sqrt{\eta} \log 1/\eta$.
For any $\delta > 0$, the inner integral can be lower bounded by
\begin{align*}
  \int_x^b \exp(\psi(y)) dy \geq \int_x^{x+\delta} \exp(\psi(y)) dy.
\end{align*}
By Taylor's theorem we have for $y \in [x, x+\delta]$,
\begin{align*}
  \psi(y) \geq \psi(x) + \left(\psi'(x) - \epsilon\right) (y-x),
\end{align*}
where $\epsilon = \max_{z \in [x, x+\delta]} \abs{\psi''(z)} (y-x) / 2 $.
Then we obtain
\begin{align*}
  &\fakeeq \int_x^b \exp(\psi(y)) dy \\
  &\geq \int_x^{x+\delta} \exp(\psi(x) + \left(\psi'(x) - \epsilon\right) (y-x)) dy \\
  &= \exp(\psi(x)) \frac{1}{-\left(\psi'(x) - \epsilon\right)} \left(1 - \exp(\left(\psi'(x) - \epsilon\right) \delta)\right) \\
  &\geq \Omega\left(\frac{\exp(\psi(x))}{-\psi'(x)}\right),
\end{align*}
where we choose $\delta$ to be $\sqrt{\eta}$. The last inequality holds because
\begin{align*}
\psi'(x) = - \Theta\left(\frac{x}{\eta}\right) \leq - \Theta\left(\frac{m_1}{\eta}\right)
= O\left(\frac{\log 1/\eta}{\sqrt{\eta}}\right),
\end{align*}
therefore, $\psi'(x) \delta \to - \infty$ when $\eta \to 0$.
Note that $\epsilon$ is $O\left(1/\eta \cdot \delta\right) = O\left(1/\sqrt{\eta}\right)$,
which is negligible compared to $\psi'(x)$.

The outer integral follows the same as the upper bound, where in the proof
we only have equality. Therefore, we obtain that the lower bound is
\begin{align*}
  \Theta\left(\log \frac{1}{m_1}\right) = \Theta\left(\log \frac{1}{\eta}\right) - \Theta\left(\log \log \frac{1}{\eta}\right) = \Omega\left(\log \frac{1}{\eta}\right).
\end{align*}

Next, we consider the case when $x_0$ is a local maximum.
For this case, it has been already studied in \citet{aleksian2025freidlin},
where the drift and diffusion terms can depend on $\eta$.
We have that for small enough $\eta$, the drift and diffusion terms of \oursde{}
converge to these of \sme-1.
Therefore, the result can be directly obtained by \citet[Corollary~1.9]{aleksian2025freidlin}.

\end{proof}

\begin{proof}[Proof of \Cref{thm:escape_analysis_multi_dim}]
\sme-2 follows the following dynamics:
\begin{align} \label{eq:sme2_large_eta}
  dX_t &= -\frac{1}{\eta} \nabla  \underbrace{\left(\hat{f}(X_t) + \frac{1}{4} \|\nabla \hat{f}(X_t)\|^2\right)}_{\tilde f(X_t)}  dt + \sqrt{\eta \Sigma(X_t)} dW_t,
\end{align}
where $\nabla^2 \hat f(x) = g(x)$.
Let $Y_t = X_{\eta t}$, then we obtain for $Y_t$:
\begin{align*}
dY_t = -\nabla \tilde{f}(Y_t) dt + \sqrt{\eta^2 \Sigma(Y_t)} dW_t.
\end{align*}
Note that according to our assumption, $x_0$ is a local minimum for $\tilde f$ as
\begin{align*}
\nabla^2 \tilde{f}(x_0) = \left(I + \frac{1}{2} g(x_0)\right) g(x_0),
\end{align*}
since $\nabla \tilde{f}(x_0) = 0$.
For the negative eigenvalues,
we have $\eta \lambda_i$, which equals the corresponding eigenvalues of $g$, smaller than $-2$.
Therefore, the corresponding eigenvalues of $\nabla^2 \tilde{f}(x_0)$ are greater than $0$.
Then we could apply Freidlin and Wentzel's~\citep{vent1973some,FreidlinWentzell2012}
estimate for the expected hitting time, where the Jacobian of the drift term has
only negative eigenvalues.
Let $\tau_{S_1}^Y$
to be the first time that $Y_t$ hits the boundary of $S_1$, and we know that
$\eta \tau_{S_1}^Y = \tau_{S_1}^X$.
According to \citet[Lemma~1]{vent1973some}, we have
\begin{align*}
\lim_{\eta \to 0} \eta^2 \log \mathbb{E}\left[\tau_{S_1}^Y\right] = \inf_{x \in \partial S_1} V(x),
\end{align*}
where $V$ is the quasi-potential and is independent of $\eta$. Since the boundary of $S_1$ is also independent of $\eta$, $\inf_{x \in \partial S_1} V(x)$ is a constant when considering the asymptotics of $\eta$.
Then the above implies
\begin{align*}
\lim_{\eta \to 0} \eta^2 \log \mathbb{E}\left[\tau_{S_1}^X\right] = \lim_{\eta \to 0} \eta^2 \log \eta \mathbb{E}\left[\tau_{S_1}^Y\right] = \lim_{\eta \to 0} \eta^2 \log \mathbb{E}\left[\tau_{S_1}^Y\right] = \inf_{x \in \partial S_1} V(x).
\end{align*}
Therefore, there exists $c_1, \eta_0 > 0$ such that for all $\eta < \eta_0$,
\begin{align*}
\mathbb{E}\left[\tau_{S_1}^X\right] \ge \exp\left(\frac{c_1}{\eta^2}\right).
\end{align*}

For \oursde{}, the SDE can also be written as similar to \Cref{eq:sme2_large_eta},
where the drift has factor $1/\eta$ and diffusion has factor $\sqrt{\eta}$.
The difference is that now the Jacobian of the drift term contains also positive
eigenvalues. To see why this is the case, we compute the Jacobian at $x_0$
\begin{align*}
  \nabla b(x_0) = \frac{1}{\eta} U(x_0) \log \left(I - \hat \Lambda(x_0)\right) U(x_0),
\end{align*}
where $\hat \Lambda(x_0)$ is the diagonal matrix of the eigenvalues of $g(x_0)$.
Note that we only need to differentiate w.r.t. the $\nabla f(x)$ factor as
$\nabla f(x_0) = 0$.
According to our assumption, for negative entries of $\hat \Lambda(x_0)$, the
corresponding eigenvalue of $\nabla b(x_0)$ is positive.

We do the same trick as for the proof of \sme-2 to change the time scale and
follow \citet[Theorem~2.2]{kifer1981exit}. The result follows.

\end{proof}

}

\end{document}